\pdfoutput=1
\documentclass[twoside,11pt]{article}

\usepackage[preprint]{jmlr2e}
\usepackage[utf8]{inputenc}
\usepackage{amsmath,amssymb,mathtools}

\usepackage{algorithm}
\usepackage{algpseudocode}
\usepackage{algorithmicx}
\algrenewcommand\algorithmicrequire{\textbf{Input:}}
\algrenewcommand\algorithmicensure{\textbf{Output:}}

\usepackage[math]{cellspace}
\usepackage{enumitem}
\usepackage{array,multirow,diagbox}
\usepackage{siunitx}
\usepackage{xcolor,colortbl}
\usepackage[framemethod=TikZ]{mdframed} 
\usepackage{graphicx}
\graphicspath{{./fig/main/}{./fig/SI/}}
\usepackage{bm}
\usepackage{bbm}
\usepackage{framed}
\usepackage{float}
\usepackage{wrapfig}
\usepackage{tikz-cd}
\usepackage{moreverb}
\usepackage{epstopdf}
\usepackage{multicol}
\usepackage[nonumberlist]{glossaries}
\usepackage[normalem]{ulem}

\usepackage{xspace}

\usepackage{caption}
\usepackage{subcaption}
\usepackage{capt-of} 
\DeclareCaptionLabelFormat{adja-page}{\hrulefill\\#1 #2 \emph{(previous page)}}

\makeatletter
\@ifundefined{proposition}{\newtheorem{proposition}[theorem]{Proposition}}{}
\@ifundefined{corollary}{\newtheorem{corollary}[theorem]{Corollary}}{}
\@ifundefined{conjecture}{}{}
\@ifundefined{myassum}{\newtheorem{myassum}{Assumption}[section]}{}
\makeatother

\allowdisplaybreaks[4]

\numberwithin{equation}{section}

\newenvironment{mat}{\left[\begin{array}{ccccccccccccccc}}{\end{array}\right]}
\newcommand\bcm{\begin{mat}}
\newcommand\ecm{\end{mat}}

\newenvironment{rmat}{\left[\begin{array}{rrrrrrrrrrrrr}}{\end{array}\right]}
\newcommand\brm{\begin{rmat}}
\newcommand\erm{\end{rmat}}
\newcommand\beq{\begin{equation}}
\newcommand\eeq{\end{equation}}

\def\eps{{\epsilon}}

\def\cD{{\mathcal{D}}}
\def\cE{{\mathcal{E}}}

\def\cG{{\mathcal{G}}}

\def\Skill{\ensuremath{\mathit{Skills}}}

\def\MDP{{\mathtt{MDP}}}

\def\MMDP{{\mathtt{MMDP}}}

\def\ends{\mathtt{end}}

\def\emptyvalue{{\mathtt{null}}}
\def\endaction{{a^{\underline{\mathtt{end}}}}}
\def\endactionset{{\cA^{\mathtt{end}}}}
\def\endactionfactor{{{a^{\ends}}}}

\def\diag{{\mathrm{diag}}}
\def\NA{{\mathrm{NA}}}
\def\hint{{\mathrm{hint}}}
\def\bound{{\mathrm{bounds}}}
\def\thresh{{\mathrm{thresh}}}
\def\err{{\mathrm{err}}}
\def\Err{{\mathrm{Err}}}

\def\prev{{\mathtt{prev}}}

\def\cur{s_{\mathtt{cur}}}
\def\curs{\mathtt{cur}}
\def\curnext{s'_{\mathtt{cur}}}
\def\dest{{\mathtt{dest}}}
\def\dests{s_{\dest}}
\def\destnext{s'_{\dest}}
\def\goals{{\mathtt{goal}}}
\def\goal{s_{\goals}}

\def\dir{{\mathtt{dir}}}
\def\means{{\mathtt{means}}}

\def\motor{{\mathtt{mc}}}
\def\car{{\mathtt{car}}}
\def\loc{{\omega}}
\def\Block{{\gridworld_{\mathtt{blocks}}}}
\def\noBlock{{\gridworld^c_{\mathtt{blocks}}}}

\def\noblock{{\noBlock}}

\def\room{{\mathtt{room}}}
\def\agent{{\cur}}
\def\agentnext{{\curnext}}

\def\obstacles{{\mathtt{obstacles}}}

\def\pick{{\mathtt{pick}}}
\def\key{{\mathtt{key}}}
\def\keyvec{{s_{\underline{\mathtt{key}}}}}

\def\keypickvec{{s_{\underline{\mathtt{pick}}}}}
\def\keypickvecnext{{s'_{\underline{\mathtt{pick}}}}}

\def\door{{\mathtt{door}}}
\def\sopen{1}
\def\sclosed{0}
\def\neardoor{\mathcal{N}}
\def\open{{\mathtt{open}}}

\def\doorvec{{s_{\underline{\mathtt{door}}}}}

\def\dooropenvec{{s_{\underline{\mathtt{open}}}}}
\def\dooropenvecnext{{s'_{\underline{\mathtt{open}}}}}

\def\goalpick{{s_{\mathtt{done}}}}
\def\goalpicknext{{s'_{\mathtt{done}}}}

\def\indic{{\mathbbm{1}}}

\def\cA{{\mathcal{A}}}

\def\cS{{\mathcal{S}}}
\def\cSA{{\cS\cA}}
\def\cSAS{{\cS\cA\cS}}

\def\gridworld{{\Omega}}
\def\jam{{\mathtt{jams}}}
\def\jams{{\gridworld_{\mathtt{jams}}}}
\def\djams{{\cD_{\mathtt{jams}}}}
\def\changemeans{{\cD_{\mathtt{pause}}}}
\def\nochangemeans{{\cD^c_{\mathtt{pause}}}}
\def\nojams{{\gridworld^c_{\mathtt{jams}}}}
\def\dnojams{{\cD^c_{\mathtt{jams}}}}

\def\piex{\ensuremath{\overline{\pi}}\xspace}
\def\Sinit{\ensuremath{\cS^{\text{init}}}\xspace}

\def\Sterm{\ensuremath{\cS^{\text{end}}}\xspace}
\def\policysequence{sequence of generator sets}

\def\chng{{\mathtt{pause}}}

\def\r{{\text{r}}}

\def\argmin{{\text{argmin}}}

\def\comp{{\text{comp}}}
\def\decomp{{\text{decomp}}}
\DeclareMathOperator*{\argmax}{arg\,max}

\newcommand{\mult}{\cdot}

\definecolor{Gray}{gray}{0.8}
\definecolor{LightCyan}{rgb}{0.88,1,1}
\definecolor{LightRed}{rgb}{1,0.8,0.8}
\newcolumntype{a}{>{\columncolor{Gray}}c}

\newcommand{\levidx}[1]{^{{#1}}}

\usepackage{lastpage}
\ShortHeadings{Multi-level meta-reinforcement learning}{Yang and Maggioni}
\firstpageno{1}

\usepackage{hyperref}
\usepackage{cleveref}

\begin{document}

\title{Multi-level meta-reinforcement learning with skill-based curriculum}

\author{
\name Sichen Yang\thanks{Corresponding author.} \email syang114@jhu.edu \\
\addr Department of Applied Mathematics \& Statistics \\ Johns Hopkins University, 3400 N. Charles Street, Baltimore, MD 21218, USA
\AND
\name Mauro Maggioni\email mauromaggionijhu@icloud.com \\
\addr Department of Mathematics, Department of Applied Mathematics \& Statistics \\ Johns Hopkins University, 3400 N. Charles Street, Baltimore, MD 21218, USA
}

\maketitle
\begin{abstract}
We consider problems in sequential decision making with natural multi-level structure, where sub-tasks are assembled together to accomplish complex goals. Systematically inferring and leveraging hierarchical structure has remained a longstanding challenge; we describe an efficient multi-level procedure for repeatedly compressing Markov decision processes (MDPs), wherein a parametric family of policies at one level is treated as single actions in the compressed MDPs at higher levels, while preserving the semantic meanings and structure of the original MDP, and mimicking the natural logic to address a complex MDP. Higher-level MDPs are themselves independent MDPs with less stochasticity, and may be solved using existing algorithms. As a byproduct, spatial or temporal scales may be coarsened at higher levels, making it more efficient to find long-term optimal policies. 
The multi-level representation delivered by this procedure decouples sub-tasks from each other and usually greatly reduces unnecessary stochasticity and the policy search space, leading to fewer iterations and computations when solving the MDPs. 
A second fundamental aspect of this work is that these multi-level decompositions plus the factorization of policies into embeddings (problem-specific) and skills (including higher-order functions) yield new transfer opportunities of skills across different problems and different levels.
This whole process is framed within curriculum learning, wherein a teacher organizes the student agent's learning process in a way that gradually increases the difficulty of tasks and ensures the abundance of transfer opportunities across different MDPs and different levels within and across curricula. 
The consistency of this framework and its benefits can be guaranteed under mild assumptions. 
We demonstrate abstraction, transferability, and curriculum learning in some illustrative domains, including MazeBase+, a more complex variant of the MazeBase example. 
\end{abstract}

\begin{keywords}
Multi-level Markov decision processes, hierarchical reinforcement learning, transfer learning, curriculum learning, meta-reinforcement learning, skill, higher-order function, divide-and-conquer, dynamic programming, sparse reward, factored Markov decision processes.
\end{keywords}

\tableofcontents

\section{Introduction}

\label{s:intro}

Discovering and exploiting \emph{multi-level structure} is a longstanding challenge in sequential decision making. Classical hierarchical reinforcement learning (HRL) decomposes tasks into reusable sub-policies—e.g., options and semi-Markov abstractions~\citep{SuttonOptions,DietterichHRL,ParrRussell1997,DayanHinton1993}. 
These constructions are often restricted to one/two levels, or rely on hand-specified subgoals, which can hinder principled planning and transfer at scale \citep{BartoHRLReview}. 
Recent developments in deep HRL automates parts of this pipeline (option-critic, FeUdal, HIRO, HAC, \citep{Bacon2017OptionCritic, Vezhnevets2017FeUdal, Nachum2018HIRO, Levy2019HAC, Dwiel2019GoalSpace}). While powerful, these methods can entangle sub-tasks and propagate unnecessary stochasticity across levels, complicating long-horizon planning.

\noindent\textbf{Our contribution.} We introduce a teacher–student–assistant meta-reinforcement learning (RL) framework that (i) repeatedly compresses MDPs so that parametric families of policies at one level become single, abstract actions, yet having semantic meaning, at the next level, yielding  higher-level MDPs, which are easier to solve  and that can capture the high-level logic needed in complex problems;
(ii) factors policies into embeddings and skills to enable transfer across levels and across MDPs, accelerating solutions and enabling the creation of a dictionary of highly reusable policies, at different levels of abstraction, to be applied as new problems are encountered; and (iii) organizes learning as a skill-based curriculum aligned with these abstractions. 
This unified view yields fewer iterations at lower per-iteration cost when used in conjunction with existing optimization algorithms such as value iteration, scaling in sparse-reward domains where many prior approaches struggle. 

\noindent{\textbf{Context and relationships with related work.}}
Beside the connections briefly mentioned above, a separate thread in HRL learns skills \emph{without} external rewards to build general-purpose primitives \citep{Eysenbach2019DIAYN, Gregor2016VIC}. While broadly useful, na\"ive long-horizon composition can reintroduce stochasticity and tangled credit assignment; our multi-level compression \emph{decouples sub-tasks and reduces stochasticity} at higher levels.
{\em{Abstraction}} further motivates compression, with each compression step \emph{preserves the semantics} of the original MDP while shrinking variance and branching, so solving a long-horizon task reduces to a stack of cleaner subproblems solvable with standard methods \citep{Puterman}. We also exploit natural tensorization of action factors and function composition to induce transferable, high-level behaviors. This is conceptually very different from spectral and spatial techniques~\citep{MahadevanMaggioni:JMLR:07, Machado2017Laplacian, DeanGivan1997, LiWalshLittman2006, RavindranBarto2003}.

\emph{Curriculum learning}~\citep{JMLR:v21:20-212,Bengio2009Curriculum,Matiisen2017TSCL, Florensa2017RCG, Sukhbaatar2017ASP, Sukhbaatar2018HSP} supplies the second pillar. \emph{Many curricula operationalize difficulty by time to solve rather than time to learn, and often restrict the final task to a concatenation of subtasks.} Our \emph{compression-aligned} curriculum instead defines difficulty via natural human logic, mirroring how humans tackle complex tasks and improve optimization and transfer in sparse-reward regimes, with higher levels coarsening space/time while broadening scope as a natural byproduct. 

\emph{Transfer} is the third pillar, providing rapid learning and policy improvement across related tasks~\citep{Barreto2017SF, Barreto2018DeepSF, Andreas2017PolicySketches, Rusu2016ProgressiveNets}. We factor policies into \emph{embeddings} (problem-specific perception/featurization) and \emph{skills} (including reusable higher-order functions) and then compress families of such policies into single abstract actions. This supports \emph{transfer across levels and across MDPs, both within and across curricula}—even when state spaces differ—without relying on rote replay, targeting semantic reuse of skills rather than state memorization, without the need of storing and revisiting states~\citep{Ecoffet2021GoExplore}.

Recent evidence from meta-learning for compositionality shows that optimizing a standard neural network for compositional skills yields human-like systematic generalization~\cite{LakeBaroni2023Nature}. Because multi-level structure is a prior over \emph{families} of problems, our work connects to meta-RL~\citep{Duan2016RL2, Finn2017MAML, Rakelly2019PEARL, Frans2018MLSH}. Our compression and policy factorization provides constructive meta-generalization: higher levels expose slower, more stable dynamics, while lower levels encapsulate fast feedback, enabling cross-task reuse.

Factored Markov decision processes (FMDPs) are \emph{flat} MDPs whose dynamics and rewards admit a compact structured representation: the state is a tuple of variables, transitions are encoded by a two-slice dynamic Bayesian network (DBN), and rewards often decompose additively over small scopes \citep{BoutilierDeardenGoldszmidt1995,BoutilierDeardenGoldszmidt2000SDPFactored,Degris2013FMDPChapter}. 
Importantly for positioning, \emph{``factored'' is not ``hierarchical''}: FMDPs exploit \emph{structural} independence within a single-timescale model, whereas HRL and our multi-level framework primarily exploit \emph{temporal} abstraction. 
Our use of factorization is also different: 
rather than assuming a pre-specified factored (two-slice DBN) model of the environment as in FMDPs, we impose structure on the action/policy side: we decompose \emph{policies} into composable partial policies together with skill/embedding generators, enabling reusable skill extraction and transfer across tasks and abstraction levels.
Moreover, for generalization we allow higher-level action sets to be constructed using \emph{subcomponents} of action factors, and in particular to form a \emph{proper subset} of the full Cartesian product; this is crucial in our multi-level setting because higher-level action sets produced by compression are typically not full products.

Finally, our framework is compatible with inverse reinforcement learning (IRL) and imitation \cite{NgRussell2000IRL,AbbeelNg2004Apprenticeship,Ziebart2008MaxEntIRL,Finn2016GCL,HoErmon2016GAIL,Fu2018AIRL}, and  Hierarchical IRL \cite{Krishnan2016HIRL}. Because each compression step yields an independent, semantically preserved MDP, we can invert the process: estimate rewards or subgoal structures at appropriate levels, then learn skills and curricula consistent with demonstrations, improving sample-efficiency and interpretability relative to flat IRL \cite{Adams2022IRLSurvey}.

{An extended literature review, further connections and comparisons are in Sec.\ref{s:related-work}.}

In a follow-up paper we show how Q-learning can be extended to the multi-level MDPs we introduce here, allowing for efficient multi-level exploration and learning at different levels of abstractions; we will also introduce virtual policies, to represent potential skills yet-to-be-learned, that are learned on-demand, only if useful to accomplish a task at hand, increasing the scalability and transferability of our approach. We will also introduce virtual policies to model potential solutions of yet-to-be-solved sub-problems that may participate in the solution of larger problems, and the possibility of learning multi-level recursive tasks, such as sorting, within an extension of the framework proposed here.

\section{Multi-level Markov decision processes}
\label{s:MMDP}

\subsection{Basic definitions}
\label{s:MMDP-definitions}

A {\bf Markov decision process (MDP)} is a sequential decision making process where the agent interacts with the environment and learns how to maximize the cumulative rewards he could obtain from such interactions. An MDP is modeled as a seven-tuple $\MDP:=(\cS,\Sinit,\Sterm,\cA,P,R,\Gamma)$ of objects in the following list:
\begin{enumerate}[leftmargin=0.5cm,itemsep=0pt]
\item {\bf State space} $\cS$ consisting of all the possible states of the agent. Distinguished subsets include the set of {\bf initial states} $\Sinit\subseteq\cS$  and the set of {\bf terminal states} $\Sterm\subseteq\cS$, at which an episode starts and ends, respectively. We call $\MDP$ episodic if $\Sterm\neq \varnothing$. Without losing generality, we set $\Sinit\subseteq\cS\setminus\Sterm$ to be the set of states starting from which the agent can reach some state in $\Sterm$ if $\Sterm\neq \varnothing$, and set $\Sinit = \cS$ otherwise.

\item {\bf Action set} $\cA:=\cup_{s\in\cS}\cA(s)$ and, for every $s\in\cS$, the set of actions $\cA(s)$ available in $s$. 
We let $\cSA :=\{(s,a):s\in \cS,a\in \cA(s)\}$.  
We assume that $\cA\subseteq\cA_1\times \cdots \times \cA_K$, where $\cA_k$ is called the $k$-th action factor for $k\in [K]:=\{1,\dots,K\}$. 
Without losing generality, we assume each action factor $\cA_k$ contains a special element $\endactionfactor$ that is \textit{universal} across all different MDPs and used to end a policy. 
We let $\endactionset$ 
be the set of actions in which at least one factor is equal to $\endactionfactor$, and assume $\endactionset\subseteq \cA(s)$ for any $s\in\cS$. 
We let $\endaction\in\cA$ be the action whose coordinates are $\endactionfactor$ in all action factors.\footnote{$\endactionfactor$ is underlined to indicate it is a tuple.}
For any $\cA'\subseteq \cA$, we denote $\overline{\cA'} :=\cA'\cup \endactionset$.
We can restrict actions to a subset of factors: for $I\subseteq[K]$, called an active action factor set, we map $a\in\cA$ to $a_I$ by setting to the special value $0$, an ``action'' that cannot be taken, the coordinates of $a$ corresponding to the action factors $\cA_k$ with $k\notin I$, and let $\cA_I$ be the image of $\cA$ under such map.

\item {\bf Transition probabilities} $P:\cSA\times\cS \rightarrow [0,1]$, where $P(s,a,s')$ is the probability of reaching state $s'$ for an agent in state $s$ selecting action $a$. We assume $P(s,a,s') = \indic_{\{s\}}(s')$ for any $a\in \endactionset$, and denote $\cSAS :=\{(s,a,s')\in \cSA\times \cS: P(s,a,s')>0\}$.

\item {\bf Rewards} $R:\cSAS \rightarrow \mathbb{R}$, with $R(s,a,s')$ the reward for an agent in state $s$, selecting action $a$, and reaching state $s'$. 
We will always define $R(s,a,s') = 0$ for any $s\in\Sterm$, $R(s,a,s') = r$ for any $s\notin\Sterm, a\in \endactionset$ and for some $r<0$. 

\item {\bf Discount factors} $\Gamma:\cSAS \rightarrow [0,1]$, with $\Gamma(s,a,s')$ the discount applied to reward $R(s,a,s')$. We always define $\Gamma(s,a,s') = 1$ for any $a\in \endactionset$.
\end{enumerate}

We assume throughout that $\cS,\cA$ are finite sets, albeit the extension to continuous spaces seems rather straightforward (e.g., by using representations on basis functions), and that the reward function $R$ is bounded.
A {\bf policy} $\pi:\cSA\rightarrow[0,1]$ is, for each $(s,a)\in\cSA$, the probability of selecting $a$ when the agent is in state $s$, and satisfies $\sum_{a\in \cA(s)}\pi(s,a)=1$ for any $s\in \cS$. 
Without losing generality, we assume that $\sum_{a\in \endactionset}\pi(s,a)=1$ for any $s\in\Sterm$, since each episode ends when the agent reaches $\Sterm$. 
Most often, and for all of our cases, we will let $\pi(s,a)=\indic_{\{\endaction\}}(a)$ for any $s\in\Sterm$.
This will also be useful later in the construction of multi-level MDPs (MMDPs) and in policy transfer.
The goal of solving $\MDP$ is to learn an optimal policy $\pi_{\ast}$, that maximizes the long-term cumulative rewards, summarized by the {\bf{value function}} $V_\pi(s):=\mathbb{E}_{\tau,S_{1:\tau},A_{0:\tau-1}}[R_{0,\tau}  |S_0=s,A_{0:\tau-1}\sim \pi]$, for any $s\in\Sinit$.
Here, for any two (random) times $T,T'$, $0\le T<T'<\infty (a.s.)$, the random variable $R_{T,T'}$ is the discounted reward accumulated over the interval $[T,T']$: 
\begin{align} \label{e:cumulative-rewards}
R_{T,T'}:=R(S_T,A_{T+1},S_{T+1})+\sum_{t=T+1}^{T'-1}[\Pi_{t'=T}^{t-1}\Gamma(S_{t'},A_{t'+1},S_{t'+1})]\mult R(S_{t},A_{t+1},S_{t+1})\,,
\end{align} 
and $\tau$ being the first time $t$ such that $S_t \in\Sterm$, and $\tau = +\infty$ if such time does not exist and for non-episodic MDPs.
Of course $\pi_\ast$ is independent of the initial state $S_0$ by Markovianity.

We now exploit the tensor product structure underlying the action space to introduce partial policies, that use only a few action factors. They are potentially easier to learn and more transferable, and they can be combined (using the notion of outer products) to obtain general policies. 
We will then introduce partial policy generators to allow the teacher to provide hints to the student on how to potentially restrict the search of an optimal policy to interesting subsets of combinations of partial policies.

For each $I\subset[K]$, a {\bf partial policy} $\pi_I:\cSA_I\rightarrow[0,1]$ is, for each $(s,a_I)\in\cSA_I$, the probability of selecting $a_I\in\cA(s)$ when the agent is in state $s$, and satisfies $\sum_{a\in \cA_I(s)}\pi_I(s,a)=1$ for any $s\in \cS$. 
When $I=[K]$, $\pi_I$ is a policy in the usual sense; otherwise, a partial policy does not prescribe valid/useful actions for the agent, as it prescribes actions with some factors equaling to $0$. 
A {\bf partial policy generator} $g_I$ is a function from a parameter set $\Theta$ to the set of all partial policies $\{\pi_I\}$, yielding a parametric family of partial policies $\{g_I(\theta):\theta\in \Theta\}$. 
Given a finite set $\cG = \{g_{I_1}, g_{I_2}, \dots, g_{I_M}\}$  of partial policy generators (not necessarily sharing the same active action factor set), we define the {\bf set of partial policies generated from $\cG$} as $\widetilde{\Pi}_{\cG}:=\cup_{m=1}^M\{g_{I_m}(\theta):\theta\in \Theta_{m}\}$.
\footnote{Note the following abuse of notation: $I$ in $g_I$ is part of the name of the function $g_I$, but it also indicates that the active action factor set of the function $g_I$ is $I$.
In particular $g_{I_1}$ and $g_{I_2}$ are in general completely unrelated functions (there is no function $g$).}

In order to construct valid policies from partial policies, we appropriately ``combine'' partial policies acting along mutually disjoint action factors that overall ``cover'' all possible action factors.
First of all, given two partial policies $\pi_I, \pi'_{I'}$ with $I\bigcap I'=\varnothing$, we define their {\bf outer-product} as the improper partial policy $\pi_I\otimes \pi'_{I'}:\{(s,(a_I,a_{I'})): s\in\cS, a_I\in\cA_I(s), a_{I'}\in\cA_{I'}(s)\}\rightarrow [0,1]$ satisfying
\beq \label{e:def-otimes}
(\pi_I\otimes \pi'_{I'})(s,(a_I,a_{I'})): = \pi_I(s,a_I)\mult\pi'_{I'}(s,a_{I'})\,.
\eeq 
We call this improper because a (proper) partial policy is defined on $\cSA_{I\cup I'}$, but $(s,(a_I,a_{I'}))$ may not belong to $\cSA_{I\cup I'}$, since $\cA_{I\cup I'}(s)$ is in general not contained in $\cA_I(s)\times \cA_{I'}(s)$. 
However, we can map every improper partial policy to a proper one by restricting it to $\cSA_{I\cup I'}$ and renormalizing it.
This outer product operator, with or without the renormalization, is commutative and associative, so from \eqref{e:def-otimes} it is straightforward to define a unique outer product of multiple partial policies with pairwise disjoint active action factor sets. Hereafter, we may write $(s,a_I,a_{I'})$ instead of $(s,(a_I,a_{I'}))$ for simplicity when there is no ambiguity.

Now, given the set $\widetilde{\Pi}_{\cG}$ of partial policies generated by the finite {\bf{partial policy generator set}} $\cG$ as above, we can define the {\bf{set of polices $\Pi_{\cG}$ generated by $\cG$}} as the union of all policies $\pi$ that are the outer product of finitely many partial policies in $\widetilde{\Pi}_{\cG}$ restricted to $\cA(s)$ and then normalized. 
This means that a policy $\pi$ in $\cG$, before restriction and normalization, can be written as $g_{I_{m_1}}(\theta_{m_1})\otimes \cdots \otimes g_{I_{m_J}}(\theta_{m_J})$ for some $J\in [M], \{m_j:j\in [J]\}\subseteq [M]$, $\theta_{m_j}\in \Theta_{m_j}$ for any $j\in [J]$, and $I_{m_{j_1}}\cap I_{m_{j_2}}=\emptyset$ for $j_1\neq j_2$; it therefore may be represented by a vector with entries $\theta_{m_j}$ for $j\in [J]$ and $\emptyvalue$ for all the other entries, with indices $[M]\setminus\{m_j:j\in [J]\}$. 
Note that $\emptyvalue$ is a special value \textit{universal} across all different MDPs, meaning that the corresponding partial policy generator in $\cG$ is not selected. 
So, all the policies in $\Pi_{\cG}$ could be represented by an element in the product set $(\Theta_1\cup\{\emptyvalue\})\times (\Theta_2\cup\{\emptyvalue\})\times\cdots\times(\Theta_M\cup\{\emptyvalue\})$.

\subsection{Multi-level Markov decision processes (MMDPs)}
\label{s:MMDP-def}

We construct MMDPs starting from a single $\MDP:=(\cS$, $\Sinit$, $\Sterm$, $\cA$, $P$, $R$, $\Gamma)$: it consists of a family of MDPs on the same state space, but with multiple levels that model increasing degrees of ``abstraction'', and enable the agent to enact more and more powerful actions as the level increases.
For instance, actions at level two are constructed as follows. Let $\cG^1$ be a partial policy generator set for the original $\MDP$ (at level one); define the set $\cA^{2}(s):= \overline{\Pi_{\cG^1}}$, and taking action $a\in\cA^2(s)$ at state $s$ will mean running the policy $a\in\Pi_{\cG^1}$ started at $s$ for a certain amount of time as explained below -- with higher levels usually allowing for longer running times. A single action $a\in\cA^2$ corresponds to a whole sequence of actions at level one, determined by one of the policies generated by $\cG^1$. This construction is iterated to construct higher levels.
We show how MMDPs can enable faster solution of an MDP, and transfer across multiple MDPs.

\begin{definition} A {\bf \policysequence} is a sequence of finite partial policy generator sets $\{\cG\levidx{l}\}_{l=1}^\infty$ on an $\MDP$ that satisfies the following: $\cG\levidx{l}$ is defined on $\cSA\levidx{l}$ for any $l\in \mathbb{Z}^+$, where $\cA\levidx{1}:=\cA$ and $\cA\levidx{l+1}(s):=\overline{\Pi_{\cG\levidx{l}}}$ for $l\geq 1$ and all $s\in\cS$. We let $\Pi\levidx{l}:=\Pi_{\cG\levidx{l}}$.
\end{definition}
The inputs for the construction of an MMDP from a given $\MDP$ are the following:
\begin{enumerate}[leftmargin=0.5cm,itemsep=0pt]
\item A {\bf{\policysequence}} $\{\cG\levidx{l}\}$ that yields a sequence of state space-action set pairs $\{\cSA\levidx{l}\}_{l=1}^\infty$. We also assume that the teacher provides another {\bf{sequence of finite partial policy generator sets}} $\{\cG\levidx{l}_{\text{test}}\}_{l=1}^\infty$ with $\cG\levidx{l}_{\text{test}}$ defined on $\cSA\levidx{l}$ for any $l\in \mathbb{Z}^+$ that will be used later to help the student assess the difficulty of $\MDP$ being solved.
\item A {\bf{timescale}} $1\leq t_{\pi}\leq +\infty$, for each policy $\pi\in (\cup_{l=1}^\infty \Pi\levidx{l})\cup(\cup_{l=1}^\infty \Pi_{\cG\levidx{l}_{\text{test}}})$, giving probability $1/t_{\pi}$ of terminating at each time step (if $t_{\pi}= +\infty$, then $1/t_{\pi}:=0$). Optionally, the teacher also provides $t_{\min}$ and $t_{\max}$, a lower bound and an upper bound on the timescale of any policy in a curriculum respectively, which are attributed to $t_{\bound}$ as in Alg.~\ref{algorithm:MMDP-solver}.
\item A {\bf{sequence of negative rewards}} $\{r\levidx{l}\}_{l=1}^\infty$, to penalize choosing an action containing $\endactionfactor$ at a non-terminal state. In particular, $R(s,a,s)=r\levidx{1}$ for $a\in (\cA^1)^{\mathtt{end}}$ and $s\notin \Sterm$. 
\end{enumerate}

We are ready to construct an $\MMDP$ in an inductive manner with the inputs provided above. For the initial step at level one, we let $\MDP\levidx{1}:=(\cS,\Sinit,\Sterm,\cA\levidx{1},P\levidx{1},R\levidx{1},\Gamma\levidx{1})$ to be the given  $\MDP=(\cS,\Sinit,\Sterm,\cA,P,R,\Gamma)$. If the optimal policy $\pi\levidx{1}_{\ast}$ of $\MDP\levidx{1}$ learned by the student is in $\Pi_{\cG\levidx{1}_{\text{test}}}$, then $\MDP\levidx{1}$ is defined to be of difficulty $1$, and we are done. Otherwise, we move to higher levels. 
Inductively, given $\MDP\levidx{l}:=(\cS,\Sinit,\Sterm,\cA\levidx{l},P\levidx{l},R\levidx{l},\Gamma\levidx{l})$, $l\in \mathbb{Z}^+$, we define the $(l+1)$-st level $\MDP\levidx{l+1}$ with:
\begin{enumerate}[leftmargin=0.5cm,itemsep=0pt]
\item {\bf{State space}} $\cS$, set of initial states $\Sinit\subseteq\cS$, and set of terminal states $\Sterm\subseteq\cS$: the same as those of the original $\MDP\levidx{1}$.

\item {\bf{Action set}} $\cA^{l+1}(s):=\overline{\Pi\levidx{l}}$, represented as a subset of $\cA^{l+1}_1\times \cA^{l+1}_2\times\cdots \times \cA^{l+1}_{|\cG\levidx{l}|}$, with $\cA\levidx{l+1}_k := \Theta\levidx{l}_k\cup\{\emptyvalue,\endactionfactor\}$ for $k\in[|\cG\levidx{l}|]$, and $\Theta\levidx{l}_k$ being the domain of the $k$-th generator in $\cG\levidx{l}$. 
Note that $\cA^{l+1}(s)$ does not depend on $s$; this is not as restrictive as it may appear, as a policy in $\overline{\Pi\levidx{l}}$ may choose the action $\endaction$ with probability $1$ at certain states. 

\item {\bf{Transition probabilities}} $P^{l+1}(s,a\levidx{l+1},s')$: when $a^{l+1}\in\Pi\levidx{l}$ is chosen in state $s$, $a^{l+1}$ will be run starting from $s$ till time $\tau$ which is the minimum of the first time an action in $\endactionset$ is chosen and a random stopping time with geometric distribution with parameter $1/{t_{a^{l+1}}}$. \footnote{$t_a=\infty$ is allowed and means that with probability $1$ such time is $\infty$.} Then $P^{l+1}:\cSA\levidx{l+1}\times\cS\rightarrow [0,1]$, for such $a\levidx{l+1}\in \Pi\levidx{l}$, is defined to be the total probability of terminating at the state $s'$:
\begin{align*} 
P^{l+1}(s,a\levidx{l+1},s'):= \mathbb{P}^l_{\tau,S_{1:\tau},A_{0:\tau-1}}[&S_\tau=s'|S_0=s,A_{0:\tau-1}\sim a\levidx{l+1},\\ &\tau = \min\{\tau_0, \min\{\tau':A_{\tau'} \in \endactionset\}\}, \tau_0\sim \mathrm{Geo}({1}/{t_{a\levidx{l+1}}})]\,.
\end{align*} 

\item {\bf{Rewards}} $R^{l+1}: \cSAS\levidx{l+1}\rightarrow \mathbb{R}$ are set to $0$ when  $s\in\Sterm$, and to $r\levidx{l+1}$ when $a\levidx{l+1}\in (\cA\levidx{l+1})^\mathtt{end}$. In all other cases, they are defined to be the expected total discounted reward collected along trajectories associated to $(s,a\levidx{l+1})$, ending at $s'$ at time $\tau$, as described above: 
\beq \label{e:def-coarsened-reward}
R^{l+1}(s,a\levidx{l+1},s'):= \mathbb{E}^l_{\tau,S_{1:\tau},A_{0:\tau-1}}[R\levidx{l}_{0,\tau}  |S_0=s,S_\tau=s',A_{0:\tau-1}\sim a\levidx{l+1}]\,,
\eeq
where, for any two (random) times $T,T'$ satisfying $0\le T<T'<\infty (a.s.)$, the random variable $R\levidx{l}_{T,T'}$ is the discounted reward accumulated over the interval $[T,T']$ in $\MDP\levidx{l}$, defined similarly to \eqref{e:cumulative-rewards}: 
$$R\levidx{l}_{T,T'}:=R\levidx{l}(S_T,A_{T+1},S_{T+1})+\sum_{t=T+1}^{T'-1}[\Pi_{t'=T}^{t-1}\Gamma^l(S_{t'},A_{t'+1},S_{t'+1})]\mult R^l(S_{t},A_{t+1},S_{t+1})\,.$$
The conditional event in \eqref{e:def-coarsened-reward} may happen with probability $0$, in which case $R\levidx{l+1}$ is not well defined for such input triples $(s,a\levidx{l+1}, s')$; this is inconsequential as such values are not needed when solving $\MDP\levidx{l+1}$. This also applies to the discount factor $\Gamma\levidx{l+1}$.

\item {\bf{Discount factors}} $\Gamma^{l+1}: \cSAS\levidx{l+1}\rightarrow (0,1]$ are set, for $a\levidx{l+1}\in \Pi\levidx{l}$, to be the expected product of the discounts applied to rewards along trajectories associated to $(s,a\levidx{l+1})$, ending at $s'$ at time $\tau$, as described above: 
$$\Gamma^{l+1}(s,a\levidx{l+1},s'):= \mathbb{E}^l_{\tau,S_{1:\tau},A_{0:\tau-1}}[\Gamma\levidx{l}_{0,\tau}  |S_0=s,S_\tau=s',A_{0:\tau-1}\sim a\levidx{l+1}]\,,$$
where, for any two (random) times $T,T'$ satisfying $0\le T<T'<\infty (a.s.)$, the random variable $\Gamma\levidx{l}_{T,T'}$ is the cumulative discount applied to trajectories $(S_T,S_{T+1},\cdots,S_{T'})$ over the interval $[T,T']$ in $\MDP\levidx{l}$: $\Gamma\levidx{l}_{T,T'}:=\Pi_{t=T}^{T'-1}\Gamma^l(S_{t},A_{t+1},S_{t+1})$.
\end{enumerate}

With $\MDP\levidx{l+1}:=(\cS,\Sinit,\Sterm,\cA\levidx{l+1},P\levidx{l+1},R\levidx{l+1},\Gamma\levidx{l+1})$ constructed by the student as above, we iterate: if the optimal policy $\pi_{\ast}\levidx{l+1}$ of $\MDP\levidx{l+1}$ learned by the student is in $\Pi_{\cG\levidx{l+1}_{\text{test}}}$, then $\MDP$ is of {\bf{difficulty}} $L:=l+1$, and we are done; otherwise, we move to higher levels. 

This completes the construction of MMDP: given the inputs $\MDP=(\cS,\Sinit,\Sterm,\cA, P,R,\\ \Gamma)$, $\{\cG\levidx{l}\}_{l=1}^\infty$,  $\{\cG\levidx{l}_{\text{test}}\}_{l=1}^\infty, \{t_{\pi}\}_{\pi\in \cup_{l=1}^\infty (\Pi\levidx{l}\cup\Pi_{\cG\levidx{l}_{\text{test}}})}$, and $\{r\levidx{l}\}_{l=1}^\infty$, we have uniquely defined the difficulty $L$ of the original $\MDP=(\cS,\Sinit,\Sterm,\cA, P,R,\Gamma)$, as well as the MMDP $\{\MDP\levidx{l}\}_{l=1}^L$, where from the second level on, the compressed MDP$^{l+1}$ at each level is a self-standing compression and abstraction of the finer MDP$^{l}$, with rewards and discount factors consistent in expectation with the finer MDP. 
The compression process is reminiscent of other instances of conceptually related procedures: coarsening (applied to meshes and PDEs, for example), homogenization (of PDEs or SDEs), and lumping (of Markov chains). 
The general philosophy is to reduce a large problem to a smaller one constructed by ``locally averaging'' parts of the original problem. 
Here this compression is in terms of partial policies, in a way that, crucially, preserves the MDP structure, and maintains the semantic meanings of the original MDP; only as a possible byproduct, it might lead to a coarsening of spatial or temporal scales, in the sense that the effective state space is greatly reduced because we enable the agent to enact more powerful actions, with longer running times, at higher levels.

\subsubsection{Solving an MMDP}
To solve the original MDP utilizing an associated MMDP, we follow a bottom-up procedure followed by a top-bottom procedure. In the bottom-up phase, we construct $\MDP\levidx{l+1}$ from $\MDP\levidx{l}$; in the top-bottom phase we solve the MDP at the highest level $L$, and then iteratively refine the solution all the way down to the first level, the original $\MDP$. 
The refinement step is achieved as follows. 
Given the optimal policy $\pi_{\ast}\levidx{l+1}$ for $\MDP\levidx{l+1}$, we produce an initial policy $\pi\levidx{l}$ for $\MDP\levidx{l}$ by ``unpacking'' $\pi_{\ast}\levidx{l+1}$ in $\MDP\levidx{l}$: each action $a$ in the sequence of actions in $\pi_{\ast}\levidx{l+1}$ is, by the very construction of the MMDP, a policy at level $l$ and corresponds therefore to a sequence of actions at level $l$; concatenating over actions in $\pi_{\ast}\levidx{l+1}$ corresponds to concatenating such policies, and leads to a policy at level $l$, which we denote as $\pi_\ast\levidx{l+1}\circledast \cG\levidx{l}$. 
The ``convolution'' $\pi \circledast \cG$ between a generator set $\cG=\{g_I: \Theta\rightarrow \{\pi_I:\sum_{a\in \cA_I(s)}\pi_I(s,a)=1 \,\mathrm{for}\, \mathrm{any}\, s\in \cS\}\}$ and a function $\pi$ mapping from $\cS\times\overline{\Pi_{\cG}}$, is a policy on $\cSA$ defined by: 
\begin{align}   \label{e: convolution}
(\pi\circledast \cG)(s,a): =\sum_{g_{I_{m_1}},\cdots,g_{I_{m_J}}}\sum_{\theta_{m_1},\cdots,\theta_{m_J}} \pi\big(s,\otimes_{j=1}^J g_{I_{m_j}}(\theta_{m_j})\big)\mult \big[\otimes_{j=1}^J g_{I_{m_j}}(\theta_{m_j})\big](s,a)\,, 
\end{align}
where the outer sum is over all $g_{I_{m_1}},\dots, g_{I_{m_J}}\in \cG$ mapping from $\Theta_{m_1}\cup\{\endactionfactor\},\Theta_{m_2}\cup\{\endactionfactor\},\cdots,\Theta_{m_J}\cup\{\endactionfactor\}$ respectively,\footnote{With a slight abuse of notations, we actually allow $\theta_m=\endactionfactor$ in \eqref{e: convolution}, in which case $g_{I_m}(\theta_m)=\indic_{(\endaction)_{I_m}}(a_{I_m})$. On the other hand, this extra comment is only for the purpose of full mathematical rigor, because for almost all the cases of interest (and for all of our cases), the optimal $\pi$ (and also $\pi\circledast \cG$ defined here) will select actions in $\endactionset$ with probability $0$ except at $s\in \Sterm$, which is not of interest.} such that their active action factor sets $I_{m_1},I_{m_2},\cdots,\\ I_{m_J}$ are a partition of $[K]$, where $K$ is the number of action factors in $\cA$, and the inner sum is over all parameters in their domains.\footnote{Here, we assume the domains of all the partial policy generators in $\cG$ are discrete domains; the generalization to the case when some of the domains are continuous is straightforward.}
Iterating over levels (from top to bottom), the optimal policy $\pi_{\ast}\levidx{L}$ of $\MDP\levidx{L}$ is ``convolved'' with the {\policysequence} $\{\cG\levidx{l}\}_{l=1}^{L-1}$ to a policy $\tilde\pi_\ast$ for the original MDP:
\begin{align}   \label{e:approx-org-policy}
\tilde\pi_{\ast} := [\dots[(\pi_{\ast}\levidx{L}\circledast \cG\levidx{L-1})\circledast \cG\levidx{L-2}]\circledast\cdots]\circledast \cG\levidx{1}\,.
\end{align}
While $\tilde\pi_{\ast}$ will in general not be the optimal policy $\pi_\ast$ for the original MDP, we may hope that it is close to $\pi_\ast$, and a good start for value iteration. In fact, we may perform value iteration at each level, after each ``convolution'', in the top-bottom phase, and ideally only a few iterations will give the optimal policy at each level, which we then refine, as above, to the next finer level via ``convolution'', proceeding iteratively down in this fashion till we obtain the optimal policy for the original MDP.

The difficulty of $\MDP$ can then be viewed as follows: we are simplifying the optimal policy $\pi_\ast$ of the original $\MDP$ into a simple policy at the highest level $L$, and a sequence of ``convolutions'' with simple partial policy generators, with possibly some refinement via value iteration; the difficulty of $\MDP$ depends on how many ``convolution'' operators we need. 
Alternatively, it represents how many levels of abstraction we go through if we started from $\pi_\ast$ and compressed/abstracted/``un-convolved'' it with $\cG\levidx{1}$, $\cG\levidx{2},\dots$ until level $L$.

Perhaps surprisingly, there are explicit formulas for computing the higher-level transition probabilities, rewards, and discount factors, essentially as solutions of linear systems: see App.~\ref{s:MMDP-compress} for details.
This enables, for example, the explicit calculation of all the quantities in $\MDP_{2,2}\levidx{2}$ in the forthcoming MazeBase+ example, see \eqref{e:compressed-MDP22-MazeBase}.

\newcounter{boxcounter}

\subsection{The MazeBase+ example, part I}
\label{box:MDPs}
MazeBase+ is a more challenging version of the ``classical'' MazeBase, a well-known example in HRL, see \cite{Sukhbaatar2018LearningGE,sukhbaatar2018intrinsic} and references therein. 
We solve it with a curriculum containing a (stochastic) MDP of difficulty $3$, showcasing the role of the teacher, the assistant, and the student, and of curricula, abstractions and transfer learning (both within this example and from another example, ``navigation and transportation with traffic jams'', introduced later in Sec.~\ref{s:examples-MMDP-transfer-learning}). We show minimal examples to focus on the key ideas; on larger grid worlds the computational gains of our framework would be even larger due to the multi-scale compression.
At this point of the presentation we do not have quite introduced all the tools for tackling this problem: we give a high-level overview here, and detail its solution and important variations later. As a preview, in the follow-up paper where we introduce virtual policies, we will use an extended MazeBase+ example to illustrate the main ideas behind virtual policies. 

We visualize the MazeBase+ in Fig.~\ref{f:MazeBase}.
The states of the doors, open or not, are shown as they are in the initial state. 
We index doors and keys to show their correspondences. 
The goal of the agent is to travel in a grid world from an initial position to a known final one where it needs to pick up a goal (with large reward).
Because the whole grid world is separated into multiple rooms by blocks and doors, the agent often needs to open (possibly multiple) doors in order to reach the goal, by first picking up the corresponding keys.
We tackle this MDP by mapping it to an MMDP with $3$ levels, and using a curriculum with four MDPs of varying difficulty. 
More specifically, given the geometric configuration detailed in Sec.~\ref{s:MazeBaseGeometry}, the teacher provides a curriculum consisting of $\MDP_{1,1}$, $\MDP_{2,1}$, $\MDP_{2,2}$, and $\MDP_{3,1}$, of difficulties, $1,2,2,$ and $3$, respectively,
represented in the blue inset on the middle right, 
where a double-line arrow means the knowledge learned from one MDP (starting point of the arrow) is utilized when \textit{constructing} another MDP (endpoint of the arrow). 
The structure of the curriculum is reflected by the multi-level MDPs in the black boxes, also connected by corresponding arrows. 
Observe that there are many shared components between the target $\MDP_{3,1}$ and other MDPs, including the skill of navigation through doors, and the concatenation logic behind opening a door.
In order to efficiently solve this family of problems, we exploit the similarities between them to enable potential transfer:
\begin{enumerate}[leftmargin=0.5cm,itemsep=0pt]
\item[$\bullet$] $\MDP_{1,1}$ teaches the student to navigate within a single room; this is the same as $\MDP_{\text{dense}}^{\text{nav}}$ in the forthcoming example of navigation and transportation with traffic jams to be introduced in Sec.~\ref{s:examples-MMDP-transfer-learning}, with blocks and doors mapped to a subset of light traffic jams. 
In that MDP we will learn the skill $\overline\pi^{\mathrm{nav}}_{\obstacles}$ (a.k.a. $\overline\pi^{\mathrm{nav}}_{\text{dense}}$ in the context of $\MDP_{\text{dense}}^{\text{nav}}$) that teaches the student to navigate while avoiding traffic jams as much as possible.
\begin{figure}[t]
\centering
\includegraphics[width=1\textwidth]{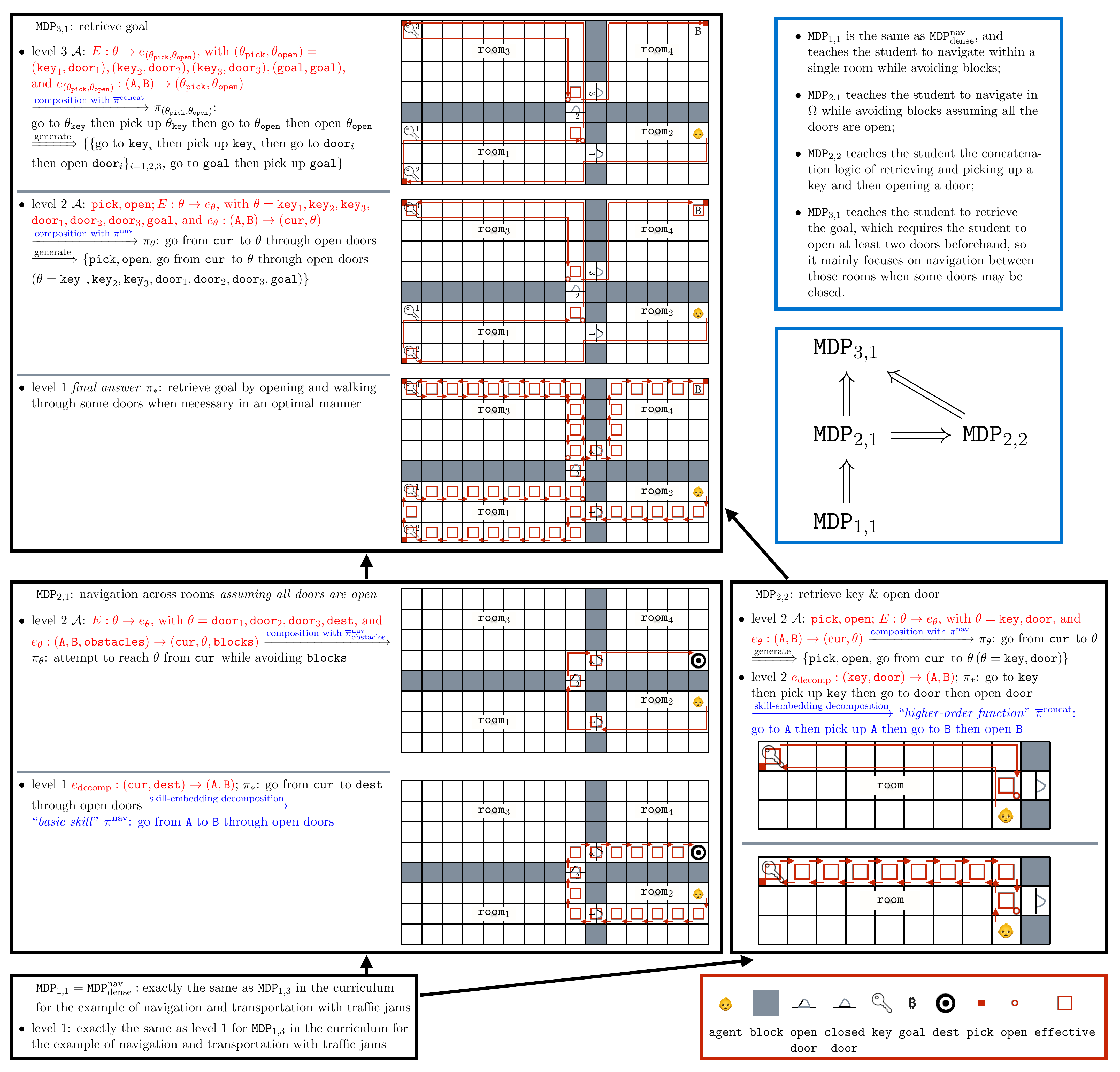}
\caption{A representation of MazeBase+, a more challenging version of the ``classical'' MazeBase, with curriculum in the blue inset (middle right), and the MMDPs in the curriculum, with their various levels, depicted and summarized in the black insets.} 
\label{f:MazeBase} 
\end{figure}
Therefore, we solve $\MDP_{1,1}$ by simply transferring that skill, now ready to be utilized as a single action in $\MDP_{2,1}$.

\item[$\bullet$] $\MDP_{2,1}$ teaches the student to navigate through all of $\gridworld$ while avoiding blocks, assuming all the doors are open. 
The agent will learn the optimal policy for this problem that becomes a higher-order function $\overline{\pi}^{\mathrm{nav}}$, to be utilized in $\MDP_{2,2}$ and $\MDP_{3,1}$.

\item[$\bullet$] $\MDP_{2,2}$ teaches the student the concatenation logic of retrieving and picking up a key and then opening the corresponding door: the optimal policy for $\MDP_{2,2}$ becomes a higher-order function $\overline{\pi}^{\mathrm{concat}}$, to be utilized in $\MDP_{3,1}$.

\item[$\bullet$] $\MDP_{3,1}$ teaches the student to pick up the goal, which may require navigating multiple rooms and opening multiple doors along the way.
\end{enumerate}
Also, in the first experiment we introduce here, $\door_1$ does not need to be opened, so it is just a ``confounder'' to confuse the agent (clearly, an optimal policy avoids picking up $\key_1$ and trying to apply the action of opening $\door_1$). See App.\ref{s:MazeBaseMDPs} for the formal definition of these MDPs.

A single action at higher level corresponds to an entire skill at the finer level beneath it, up to a composition/decomposition with family of maps that we call embeddings (represented by corresponding different types of arrows; this will be detailed later), and therefore leads to a long and complex path (long red arrow).
Roughly speaking, each skill is a parametric family of policies.
The effective state space (red squares) at higher levels, consisting of final states of actions at higher levels, is much smaller than that at level $1$. 
This leads to a significant speed-up in learning MDPs at higher levels, because we reduce the space of policies we search over. 
These actions and policies can be given names with semantic meanings, in the spirit of ``interpretable reinforcement learning''. 
The policy followed from a given initial position of the agent is obtained by following the red arrows we represent in the figure.
Optimal policies starting from certain initial states (which could be at any location) are represented by concatenations of red arrows (single actions at that level) and red squares (effective states at that level), again showing the significant reduction in the effective state space and in the number of actions at higher levels.
For example, in the inset representing level $2$ of $\MDP_{3,1}$ (middle row), the agent navigates from its initial position in $\room_2$ to $\room_1$ to the location of $\key_2$ using $\overline{\pi}^{\mathrm{nav}}$, then picks up $\key_2$ (small filled red square), then navigates to the location near $\door_2$ with $\overline{\pi}^{\mathrm{nav}}$, then opens it with $\key_2$ (small empty red circle) achieving a state that we call $s_2$, and so on.
In the inset representing level $3$ of $\MDP_{3,1}$ (top row), the actions are higher-level: with the first action the agent ends up in the state $s_2$.
This powerful action uses $\overline\pi^{\mathrm{concat}}$; we represent it by having the endpoints of the arrows corresponding to the use of $\overline{\pi}^{\mathrm{nav}}$ {\textit{touching}} the filled red square or the empty red circle.
Note how the effective state space, marked by the larger empty red squares, is reduced as we proceed from the lower to the higher levels of $\MDP_{3,1}$.
We do not plot the heat map for a value function because it is affected by changes in the states of the doors which we cannot represent compactly, as they occur along a trajectory. 
Different colors of text correspond to different roles in our framework, namely teacher (red), assistant (blue), and student (black).
To improve readability of the figure, and to emphasize the semantic meanings, some of the notations in the figure are slightly different from those in the text.

We are going to present three experiments in the MazeBase+ world. 
In the first one (Sec.\ref{s:curriculum}) we aim at solving MazeBase+ with a suitable curriculum. Please see Fig.~\ref{f:MazeBase} for a comprehensive summary of this example and the corresponding {\bf curriculum} (as defined in Sec.\ref{s:curriculum-definition}), which is a key component of our framework. 
A curriculum is a minimal ordered family of MDPs, whose solution helps the student agent develop skills, at different levels of ``abstraction'', useful for efficient solution of an original, difficult MDP.  
In the second one (Sec.\ref{s:MazeBase-transfer}), we apply our framework to perform efficient transfer to MazeBase+ worlds with different configurations of rooms, doors, keys, and the goal. Please see Fig.~\ref{f:MazeBase2} for a summary of this example.
In the third one (Sec.\ref{s:MazeBase-robust}), we consider the situation where the optimal policy at the highest level, when refined to a finer level, requires significant refinement in order to yield the optimal policy, while still outperforming na\"ive value iteration. This demonstrates the robustness of our optimization procedure.

\begin{figure}[t]
\centering
\includegraphics[width=1\textwidth]{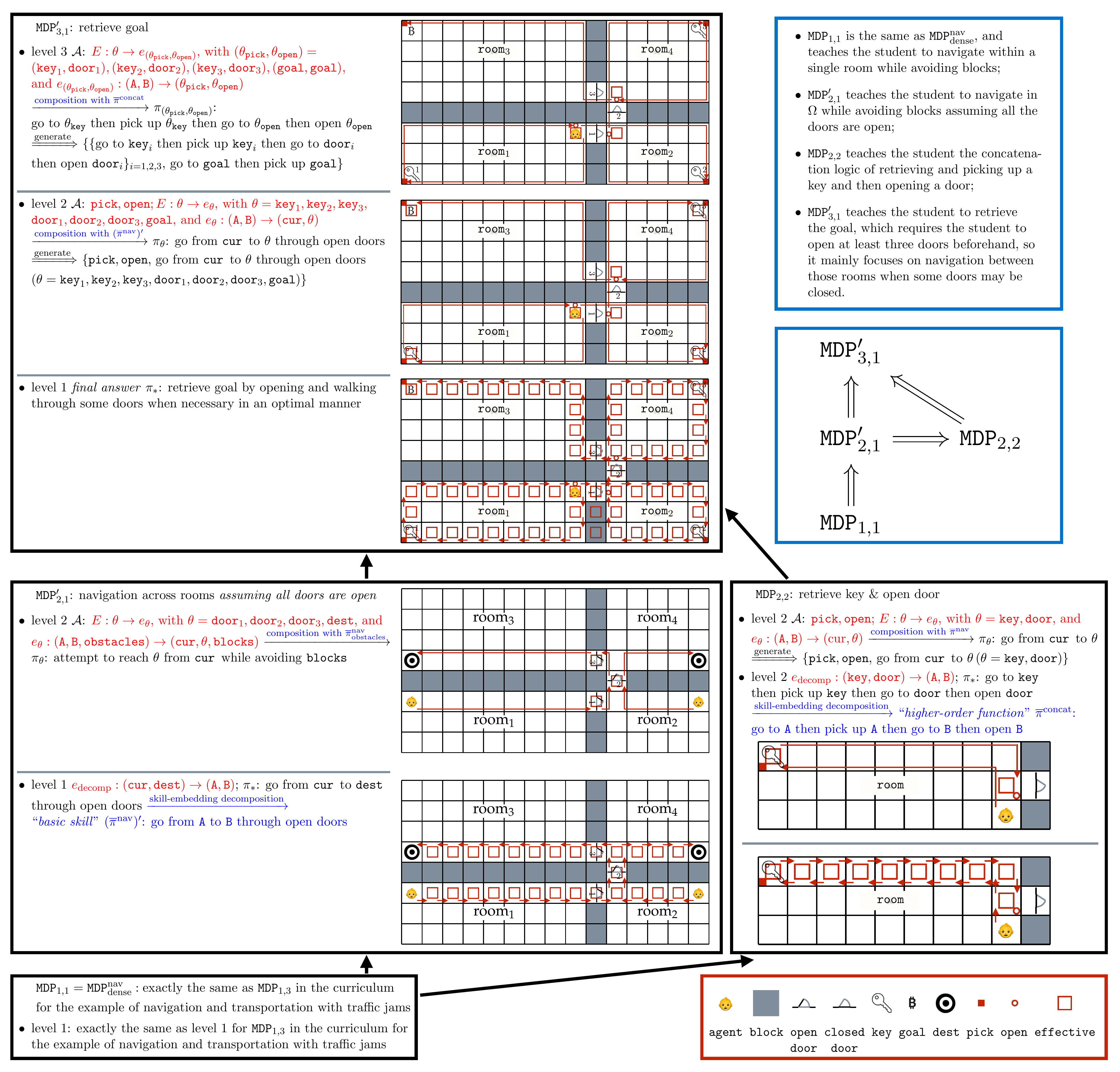}
\caption{A representation of the second experiment of the MazeBase+ example. The representation in this figure is similar to the one for the first experiment (Fig.~\ref{f:MazeBase}). In the second experiment, we apply our framework to perform efficient transfer to MazeBase+ worlds with different configurations of doors, keys, and the goal.}
\label{f:MazeBase2} 
\end{figure}

\subsubsection{Action factors and partial policy generators}
\label{box:ppg}
For some of the MDPs in this example, there is a single action factor, and to simplify the notation, we will sometimes omit the active action factor set $I$ in partial policies or partial policy generators when there is only a single action factor from now on.

As a consequence, most of the partial policy generators in the MazeBase+ example are policy generators, and one instance of policy generators is $(g_{2,2}\levidx{1})_{\alpha}$ used for $\MDP_{2,2}$. It is defined on $(\Theta_{2,2}\levidx{1})_{\alpha}:= \{\pick,\open\}$, indexing the two actions $\pick$ and $\open$, and $(g_{2,2}\levidx{1})_{\alpha}: (\Theta_{2,2}\levidx{1})_{\alpha}\rightarrow \{(\pi_{2,2}\levidx{1})^\alpha_\theta:\theta\in(\Theta_{2,2}\levidx{1})_{\alpha}\}$, mapping $\theta$ to $(\pi_{2,2}\levidx{1})^\alpha_\theta$.
It generates two partial policies $\{(\pi_{2,2}\levidx{1})^\alpha_\theta:\theta\in(\Theta_{2,2}\levidx{1})_{\alpha}\}$, with $(\pi_{2,2}\levidx{1})^\alpha_\theta:\cSA_{2,2}\levidx{1}\rightarrow \{0,1\} \,\,(\theta=\pick,\open)$, formally defined in \eqref{e:pialphatheta}, representing the partial policy of selecting the action $\theta$. 
Here, the role of the lower indices in the names will become apparent later as indicator of which MDP the generator is used on, 
and the role of the upper index $1$ in the names serves as indicator of the ``level'' of these MDPs within an MMDP.

The other partial policy generator $(g_{2,2}\levidx{1})_{\beta}$ used for $\MDP_{2,2}$ is more interesting. It is defined on $(\Theta_{2,2}\levidx{1})_{\beta}:=\{\key,\door\}$, indexing the two objects $\key$ and $\door$, and $(g_{2,2}\levidx{1})_{\beta}: (\Theta_{2,2}\levidx{1})_{\beta}\rightarrow \{(\pi_{2,2}\levidx{1})^\beta_\theta:\theta\in(\Theta_{2,2}\levidx{1})_{\beta}\}$, mapping $\theta$ to $(\pi_{2,2}\levidx{1})^\beta_\theta$. It generates two partial policies $\{(\pi_{2,2}\levidx{1})^\beta_\theta:\theta\in(\Theta_{2,2}\levidx{1})_{\beta}\}$, with
$(\pi_{2,2}\levidx{1})^\beta_\theta:\cSA_{2,2}\levidx{1}\rightarrow [0,1] \,\,(\theta=\key,\door)$ formally defined in \eqref{e:def-generator-mazebase}, representing the partial policy of going from the current location towards possible ``transfer hub'' ($\key$) or the destination ($\door$). A subtle comment here is that there is a little inconsistency between $(\pi_{2,2}\levidx{1})^\beta_{\door}$ and its semantic representation ``go from $\curs$ to $\door$'' in Fig.~\ref{f:MazeBase}, since $(\pi_{2,2}\levidx{1})^\beta_{\door}$ actually stops at $\neardoor(s_\door)$ (i.e., next to $\door$, as defined in App.~\ref{s:MazeBaseGeometry}) instead of stopping at $s_\door$, because the door may be closed initially.  
The way this was realized is that we ruled out the final step of reaching $s_\door$ from $(\pi_{2,2}\levidx{1})^\beta_{\door}$ generated from the policy generator $(g_{2,2}\levidx{1})_{\beta}$, and replaced it by taking the action $\endactionfactor$ with probability one at states satisfying $\agent \in \neardoor(s_\door)$, showcasing the need for ``$\endactionfactor$''. This also applies to similar ``going to a door'' policies throughout this example.

\subsubsection{Partial policy generator set}
\label{box:ppgs}
Take $\MDP_{2,2}$ for instance. 
For the first level, $\MDP_{2,2}\levidx{1}:=(\cS_{2,2}\levidx{1}$, $(\Sinit_{2,2})\levidx{1}$, $(\Sterm_{2,2})\levidx{1}$, $\cA_{2,2}\levidx{1}$, $P_{2,2}\levidx{1}$, $R_{2,2}\levidx{1}$, $\Gamma_{2,2}\levidx{1})$=$(\cS_{2,2}$, $\Sinit_{2,2}$, $\Sterm_{2,2},\cA_{2,2},P_{2,2},R_{2,2},\Gamma_{2,2})$. 
The teacher provides the partial policy generator set
$
\cG_{2,2}\levidx{1}:=\{(g_{2,2}\levidx{1})_{\alpha},(g_{2,2}\levidx{1})_{\beta}\}\,,
$
whose elements are defined in Sec.~\ref{box:ppg}, which generates the set of partial policies
$\widetilde{\Pi}_{\cG_{2,2}\levidx{1}} = 
\{(\pi_{2,2}\levidx{1})^\alpha_\theta:\theta\in (\Theta_{2,2}\levidx{1})_\alpha\}\cup\{(\pi_{2,2}\levidx{1})^\beta_\theta:\theta\in(\Theta_{2,2}\levidx{1})_\beta\}
$.
The student constructs the set of policies $\Pi_{2,2}\levidx{1}$ generated from the set of partial policies
$\widetilde{\Pi}_{\cG_{2,2}\levidx{1}}\,,
$
which is exactly the same as $\widetilde{\Pi}_{\cG_{2,2}\levidx{1}}$ in this scenario because there is only a single action factor in this example. More interesting examples utilizing $\eqref{e:def-otimes}$ can be seen in $\MDP_{3,1}$. 
Note that each policy in $\Pi_{2,2}\levidx{1}$ is represented by an element in the product set $((\Theta_{2,2}\levidx{1})_\alpha\cup\{\endactionfactor,\emptyvalue\})\times ((\Theta_{2,2}\levidx{1})_\beta\cup\{\endactionfactor,\emptyvalue\})$. 
For instance, $(\pi_{2,2}\levidx{1})^\alpha_{\pick}$ is represented by $(\pick,\emptyvalue)$. So, $\Pi_{2,2}\levidx{1}$ has two action factors.

These partial policy generators are ripe for being used to build higher-level actions for MMDPs, as we discuss next.

\subsubsection{Inputs for the construction of MMDPs}
\label{box:inputs}
The provided {\bf \policysequence} $\{\cG\levidx{l}_{2,2}\}_{l=1}^\infty$ for $\MDP_{2,2}$ in this example consists of $\cG\levidx{1}_{2,2}$ defined as in Sec.~\ref{box:ppgs}, and $\cG\levidx{l}_{2,2} := \varnothing$ for $l\geq 2$. There are multiple options for $\{\cG\levidx{l}_{2,2,\text{test}}\}_{l=1}^\infty$ here, with these restrictions: (1) $\pi_{2,2,\ast}\notin \cG\levidx{1}_{2,2,\text{test}}$; (2) $\pi\levidx{2}_{2,2,\ast}\in \cG\levidx{2}_{2,2,\text{test}}$, with $\pi_{2,2,\ast}$, $\pi\levidx{2}_{2,2,\ast}$ the optimal policies of $\MDP_{2,2}$, $\MDP\levidx{2}_{2,2}$ and provided in \eqref{e:first-level-policy-eqn-mazebase}, \eqref{e:second-level-policy-eqn_mazebase} respectively. These two conditions together guarantee that $\MDP_{2,2}$ is all of difficulty $2$ (See Sec.~\ref{s:MMDP-def}).
The timescale of $(g_{2,2}\levidx{1})_{\alpha}$ is $1$, and the timescale of $(g_{2,2}\levidx{1})_{\beta}$ is $+\infty$. We let $r\levidx{1} = -10$.

The student then constructs the second-level $\MDP\levidx{2}_{2,2}:=(\cS_{2,2},\Sinit_{2,2},\Sterm_{2,2},\overline{\Pi\levidx{1}_{2,2}},P\levidx{2}_{2,2}$, $R\levidx{2}_{2,2}$, $\Gamma\levidx{2}_{2,2})$, using the inputs above and the procedures we described in Sec.~\ref{s:MMDP-def}.

\subsubsection{Unpacking compressed policies}
\label{box:unpack}
Once the second-level MDP $\MDP\levidx{2}_{2,2}=(\cS_{2,2},\Sinit_{2,2},\Sterm_{2,2},\overline{\Pi\levidx{1}_{2,2}},P\levidx{2}_{2,2},R\levidx{2}_{2,2},\Gamma\levidx{2}_{2,2})$ is constructed, it can be solved to find the optimal policy $\pi_{2,2,\ast}\levidx{2}$. 
We construct stochastic trajectories starting from each state $s\in \cS$ by ``gluing'' together the actions in $\{(\pi_{2,2}\levidx{1})^\alpha_\theta:\theta\in (\Theta_{2,2}\levidx{1})_\alpha\}\cup\{(\pi_{2,2}\levidx{1})^\beta_\theta:\theta\in(\Theta_{2,2}\levidx{1})_\beta\}$ (defined in Sec.~\ref{box:ppg}) following an order of the form $(\pi_{2,2}\levidx{1})^{\lambda_0}_{\theta_0}, \cdots, (\pi_{2,2}\levidx{1})^{\lambda_t}_{\theta_t}, \cdots$: the agent starts from $S_0=s$ in $\MDP\levidx{2}_{2,2}$, and chooses actions in $A_t = (\pi_{2,2}\levidx{1})^{\lambda_t}_{\theta_t}$ for any $0\leq t\leq \tau-1$ with $\tau$ being the first time $t$ such that $S_t \in\Sterm$ (if such event does not occur, we set $\tau=+\infty$). 
For each state $s$, the sequence of actions needs to be optimized to maximize the expected cumulative rewards along the stochastic trajectories. 
Following the description here, the value $\pi_{2,2,\ast}\levidx{2}(s,a)$, for each $s=(\agent,s_\pick,s_\open)$, $a=(\pi_{2,2}\levidx{1})^{\lambda}_{\theta}\in \{(\pi_{2,2}\levidx{1})^\alpha_\theta:\theta\in (\Theta_{2,2}\levidx{1})_\alpha\}\cup\{(\pi_{2,2}\levidx{1})^\beta_\theta:\theta\in(\Theta_{2,2}\levidx{1})_\beta\}$, equals \eqref{e:second-level-policy-eqn-optimization-mazebase},
where recall that for any two (random) times $0\le T<T'<\infty$ (a.s.), $(R_{2,2}\levidx{2})_{T,T'}$ is defined as in \eqref{e:cumulative-rewards} for $\MDP_{2,2}\levidx{2}$. 
Solving this optimization problem (see \eqref{e:second-level-policy-eqn-optimization-second-mazebase}), the student derives the optimal policy $\pi_{2,2,\ast}\levidx{2}$ (see \eqref{e:second-level-policy-eqn_mazebase}),
representing the concatenation logic that the agent will go to the key, pick it up, go the door, and then open the door. 
The agent focuses on learning this higher-order function, which is the aim of $\MDP_{2,2}$, because the details of going from A to B are encapsulated in the navigation skill $\piex^{\mathrm{nav}}$. 

$\MDP_{2,2}\levidx{2}$ has at least two key advantages compared to the level-1 MDP: first, it has much shorter time horizon, as a single action moves the agent by multiple steps, until achieving a small subgoal such as going to $\key$, or arriving next to $\door$, or picking up $\key$ or opening $\door$; second, the stochasticity is greatly reduced as it is absorbed into each higher-level navigation policy, further simplifying the optimization to obtain the optimal policy.
Note that it is crucial here that the navigation policies used at this level terminate, by choosing $\endactionfactor$ at very precise times and locations, instead of relying on random stopping times.
 
Finally, the student solves the original $\MDP_{2,2}$, using the ``convolution'' in \eqref{e: convolution} to pass the optimal policy of $\MDP\levidx{2}_{2,2}$ down to level one and refine it, resulting in the policy $\pi_{2,2}$, detailed in \eqref{e:first-level-policy-eqn-mazebase}.
In this particular example, $\pi_{2,2}$ is in fact the optimal policy $\pi_{2,2,\ast}$ of the original $\MDP_{2,2}$, requiring no additional refinement by value iteration.

\section{Transfer learning with skills and embeddings}
\label{s:transfer learning}

\subsection{Basic definitions}

\label{s:transfer learning-definitions}

The objective of transfer learning is to reuse knowledge acquired in the solution of one learning problem to another new problem. 
In our context, knowledge is represented by policies, and we wish to transfer policies learned from one MDP to another new MDP, in such a way that the solution of the latter is significantly sped up.
Our MMDP and curriculum-based framework is particularly well-suited for transfer, as it provides the possibility of transferring not only the whole initial MDP, but any of the MDPs in the curriculum, or even just some of the levels of some of the MDP in a curriculum.

The key concept that will enable learning is that of a skill-embedding decomposition, which factors (in the sense of composition of functions) a policy into an embedding, which ``abstracts'' certain aspects of the state-action space on which the policy is defined, and a skill, which acts on the ``abstract'' output of the embedding, and becomes highly transferable. Indeed, several (new) MDPs may have embeddings that create ``abstracted'' state-action spaces to which the same skill may be applied. In this work, embeddings will be provided by the teacher, but it is of course of great interest to, at least partially, learn these skill-embedding decompositions from a family of MDPs.

\begin{definition} \label{def:pp-decompose}
$(\overline{\pi},e)$ is a {\bf skill-embedding decomposition} of a partial policy $\pi_I:\cSA_I\rightarrow [0,1]$ on $\cD$ if $\pi_I(s,a)=\overline{\pi}(e(s,a))$ for any $(s,a)\in \cD$, where $\overline{\pi}:\cE\rightarrow[0,1]$ is the {\bf skill}, $e:\cD\rightarrow\cE$ is the {\bf embedding}, with $\cD\subseteq \cSA_I$. When applicable, the timescale of the skill $\overline{\pi}$ is the same as the timescale of the original partial policy $\pi_I$.
\end{definition}

This definition aims at reducing the semantic and sample complexity of $\pi_I$, by introducing an embedding function $e$ that extracts ``features'' of the state-action space that are ``sufficient'' for $\pi_I$, up to possibly restricting the partial policy domain $\cSA_I$ to a subset $\cD$.\footnote{While using the subset $\cD$ here is not strictly necessary, it does simplify the demonstration of skills, embeddings, and embedding generators in both examples. This comment applies also to other definitions in this section.}
The skill $\overline\pi$ is a higher-order function,\footnote{a higher-order function is a function who has at least one input that is a function.} taking $e$ as an input; different $e$'s in different problems, defined on different state-action (sub)sets, may be provided as inputs to the same skill $\overline\pi$, yielding transferability: a skill represents a transferable abstraction of a policy. 
See Fig.~\ref{fig:commdiagskillembeddingdecomp} for the diagram representing the skill-embedding decomposition.

Now, we reverse the definition above to composition: 
\begin{definition} \label{def:pp-compose}
Given a partial policy domain $\cSA_I$, $\cD\subseteq \cSA_I$, an embedding $e:\cD\rightarrow\cE$, and a skill $\overline{\pi}:\cE\rightarrow[0,1]$, 
we define their {\bf composite partial policy} $\pi_I: \cSA_I\rightarrow [0,1]$ by function composition and normalization starting from $e$, $\overline{\pi}$:
\begin{equation*}
\begin{aligned}
\mathrm{composition:}\qquad
\widehat{\pi}_I(s,a)&:=
\begin{cases}   
\overline{\pi}(e(s,a))\, &, (s,a)\in \cD \\
0\, &,\text{otherwise,}
\end{cases}
\\
\mathrm{normalization:}\qquad
\pi_I(s,a)&:=
\begin{cases}   
\frac{\widehat{\pi}_I(s,a)}{\sum_{a\in \cA_I(s)}\widehat{\pi}_I(s,a)}\, &, \sum_{a\in \cA_I(s)}\widehat{\pi}_I(s,a)\neq 0 \\
\indic_{\{\endaction\}}(a)\, &,\text{otherwise,}
\end{cases}
\end{aligned}
\end{equation*} 
and the timescale of the composite partial policy $\pi_I$ is the same as the timescale of $\overline{\pi}$.
\end{definition}
The intuition behind the normalization in Def.~\ref{def:pp-compose} is that the action $\endaction$ provides the option to terminate when the agent reaches states completely new to it. With a slight abuse of notations, we will use $\overline{\pi}\circ e$ to represent the composite partial policy coming from the skill $\overline{\pi}$ and the embedding $e$, which by definition is not exactly the output of function composition, because there is a second extra step of normalization.
The diagram representing the composite partial policy is the same as for the skill-embedding decomposition in Fig.~\ref{fig:commdiagskillembeddingdecomp}, except that here $e$ and $\overline\pi$ are given, and $\pi_I$ is constructed; in the skill-embedding decomposition, $\pi_I$ is given, and $e$ and $\overline\pi$ are constructed so that the diagrams commute.
The generalization of Def.~\ref{def:pp-compose} to the composite partial policy generator is straightforward: 

\begin{definition} \label{def:ppg-compose}
Given a partial policy domain $\cSA_I$, an embedding generator $E:\Theta\rightarrow \{e:\cD_{e}\subseteq\cSA_I\rightarrow e(\cD_{e})\}$, and a skill $\overline{g}:\cE\rightarrow \overline{g}(\cE)$ satisfying $\cup_{e\in E(\Theta)}e(\cD_{e})\subseteq \cE$, we define their {\bf composite partial policy generator} $g_I: \Theta\rightarrow \{\pi_I:\sum_{a\in \cA_I(s)}\pi_I(s,a)=1 \,\mathrm{for}\, \mathrm{any}\, s\in \cS\}$ by function composition and normalization starting from $E,\overline{g}$: 
\begin{equation*}
\begin{aligned}
    \mathrm{composition}: \qquad
\widehat{g}_I(\theta)(s,a)&:= 
\begin{cases}
\overline{g}(E(\theta)(s,a))\, &, (s,a)\in \cD_{E(\theta)}  \\
0\, &,\text{otherwise,}
 \end{cases}\\
\mathrm{normalization}:\qquad
g_I(\theta)(s,a)&:= 
\begin{cases}
\frac{\widehat{g}_I(\theta)(s,a)}{\sum_{a\in \cA_I(s)}\widehat{g}_I(\theta)(s,a)}\, &, \sum_{a\in \cA_I(s)}\widehat{g}_I(\theta)(s,a)\neq 0 \\
 \indic_{\{\endaction\}}(a)\, &,\text{otherwise.}
 \end{cases}
\end{aligned}
\end{equation*} 
\end{definition}

Skills, embeddings, and embedding generators allow the simplification and abstraction of the functions appearing in \eqref{e:approx-org-policy} through operators such as function composition, allowing transfer learning of skills across different levels of different MDPs, with possibly different difficulties. 
The degree of abstraction of a skill increases with the level where it appears in the MMDP: the higher the level, the more abstract it is; if it appears at the first level, then it represents a {\bf basic skill}, otherwise, it is some {\bf higher-order function}. 
This incorporates important features of functional programming, including pure functions, function composition, and higher-order functions: when we finish a difficult programming task, we would decompose the original task into many subtasks, and focus on only one at each time (level). 
Then, if similar programming patterns arise across different subtasks, we abstract them into a function (skill), whose input parameters are absorbed into the embedding generators in our framework.

The transfer learning between different MDPs and the bridging between different compressed levels within the same MDP are done in the unit of policies, with extra ingredients such as embeddings, embedding generators, skills, or unpacking compressed policies. This is desired: the knowledge we acquire from each MDP consists of policies, rather than, and somewhat independently of, environments or problem settings. In this way, we avoid ``rote learning''.

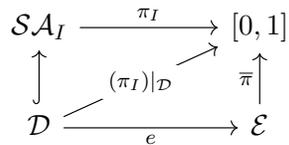
\begin{figure}[t]
\centering
\begin{tikzcd}
	{\mathcal{S}\mathcal{A}_I} && {[0,1]} \\
	{\mathcal{D}} && {\mathcal{E}}
	\arrow["{\pi_I}", from=1-1, to=1-3]
	\arrow[hook, from=2-1, to=1-1]
	\arrow["{(\pi_I)|_\mathcal{D}}"{description}, from=2-1, to=1-3]
	\arrow["e"', from=2-1, to=2-3]
	\arrow["{\overline{\pi}}", from=2-3, to=1-3]
\end{tikzcd}
\caption{Commuting diagram for a skill-embedding decomposition of $\pi_I$.}
\label{fig:commdiagskillembeddingdecomp}
\end{figure}

\subsection{The MazeBase+ example, part II}

\label{box:skill-compose}

\subsubsection{Skills and embeddings}
\label{box:skill} 
One instance of semantic meanings of skills is ``walking from A to B'' with inputs the starting location A and the destination B, which is some basic ``navigation'' skill $\overline{\pi}^{\mathrm{nav}}$, as appearing in the definition of $(\pi_{2,2}\levidx{1})^\beta_\theta$ in \eqref{e:def-generator-mazebase}. 

Alternatively, a skill could mean ``repeating a policy for multiple times'', which is a higher-order function $\overline{\pi}^{\mathrm{concat}}$ with $(\overline{\pi}^{\mathrm{concat}},(e_\decomp)_{2,2}\levidx{2})$ a skill-embedding decomposition of the (partial) policy $\pi_{2,2,\ast}\levidx{2}$, where  
the embedding $(e_\decomp)_{2,2}\levidx{2}: \cSA_{2,2}\levidx{2}\rightarrow \{(e_{\agent=s_\key}, $ $e_{\|\agent-s_\door\|_1=1}, s_\pick,s_\open,a)\}\subseteq$ $\{0,1\}^4\times\{\pi_{\pick},\pi_{\open},\pi_{\key},\pi_{\door},\endaction\}$, defined as in \eqref{e:embedding-concat}, takes out the important conditioning information, including whether the agent is at the location of $\key$, whether the agent is in $\neardoor(s_\door)$ with $\key$ in hand, whether $\key$ has been picked up, whether $\door$ has been open, as well as the semantic meaning of the subgoal;  
and the concatenation skill $\overline{\pi}^{\mathrm{concat}}:\{0,1\}^4\times\{\pi_{\pick}, \pi_{\open},\pi_{\key},\pi_{\door},\endaction\}\rightarrow [0,1]$ is defined as in \eqref{e:skill-concat}.
In general, $\overline{\pi}^{\mathrm{concat}}$ is ``repeating twice'', or in other words concatenation of two similar policies, such as $\pi_{\key}$ followed by $\pi_{\pick}$, and $\pi_{\door}$ followed by $\pi_{\open}$ here. In particular, it is a higher-order function providing a general logic, independent of the exact location of the agent, key, or door and only depending on the partial information of their relative locations and the states of objects: if the door has been open, the agent will stop; otherwise, if the key has not been picked up, the agent will pick up the key if it is already at the location of the key, and go to the key if not; otherwise, the agent will open the door if it is next to the door, and go towards the door if not. This demonstrates the advantages of our framework: we do not need the whole grid world in order to learn to open all the four doors, but only restrict to a much smaller domain $\gridworld_{\room_1}$ and learn to open a single door ($\door_1$) in order to learn the general logic behind opening a door, before applying it to opening more doors as in the following.

\subsubsection{Composing partial policy generators}
\label{box:compose}
In the MazeBase+ example, we now explain how the (partial) policy generators $(g_{2,2}\levidx{1})_{\alpha}$ and $(g_{2,2}\levidx{1})_{\beta}$ in $\cG\levidx{1}_{2,2}$ can be constructed by composing skills with two embedding generators $(E_{2,2}\levidx{1})_{\alpha}$ and $(E_{2,2}\levidx{1})_{\beta}$.

$(g_{2,2}\levidx{1})_{\alpha}$ is the composition of the degenerate skill $\mathrm{id}$, which is the identity map on $[0,1]$, and the embedding generator $(E_{2,2}\levidx{1})_{\alpha}$ defined on $(\Theta_{2,2}\levidx{1})_{\alpha}= \{\pick,\open\}$, indexing the two actions $\pick$ and $\open$, and $(E_{2,2}\levidx{1})_{\alpha}: (\Theta_{2,2}\levidx{1})_{\alpha}\rightarrow \{(e_{2,2}\levidx{1})^\alpha_\theta:\theta\in(\Theta_{2,2}\levidx{1})_{\alpha}\}$, mapping $\theta$ to $(e_{2,2}\levidx{1})^\alpha_\theta$.
It generates two embeddings $\{(e_{2,2}\levidx{1})^\alpha_\theta:\theta\in(\Theta_{2,2}\levidx{1})_{\alpha}\}$, with $(e_{2,2}\levidx{1})^\alpha_\theta:\cSA_{2,2}\levidx{1}\rightarrow \{0,1\} \,\,(\theta=\pick,\open)$ being defined as in \eqref{e:ealphatheta}, representing the embedding of selecting the action $\theta$, which is the $(\pi_{2,2}\levidx{1})^\alpha_\theta$ we introduced in \eqref{e:pialphatheta}.

$(g_{2,2}\levidx{1})_{\beta}$ is the composition of $\overline{\pi}^{\mathrm{nav}}$, and the embedding generator $(E_{2,2}\levidx{1})_{\beta}$ defined on $(\Theta_{2,2}\levidx{1})_{\beta}=\{\key,\door\}$, indexing the two objects $\key$ and $\door$, and
$(E_{2,2}\levidx{1})_{\beta}:(\Theta_{2,2}\levidx{1})_{\beta}\rightarrow \{(e_{2,2}\levidx{1})^\beta_\theta:\theta\in(\Theta_{2,2}\levidx{1})_{\beta}\}$, mapping $\theta$ to $(e_{2,2}\levidx{1})^\beta_\theta$. It generates two  embeddings $\{(e_{2,2}\levidx{1})^\beta_\theta:\theta\in(\Theta_{2,2}\levidx{1})_{\beta}\}$, with
$(e_{2,2}\levidx{1})^\beta_{\theta}:\{((\agent,s_\pick,s_\open),a)\in\cS_{2,2}\times(\cA_{\dir}\cup\{\endactionfactor\}):\agent+a\neq s_\door\}\subseteq \cSA_{2,2}\levidx{1}\rightarrow \{(\cur,\goal,a):\cur,\goal\in\noBlock,a\in\cA_{\dir}\cup\{\endactionfactor\}\} \,\,(\theta\in(\Theta_{2,2}\levidx{1})_{\beta})$ being defined as in \eqref{e:ebetatheta}, representing the embedding of taking out from the state-action pair the information of the current location of the agent, the location of the current subgoal, and the action considered. 
In particular, we ruled out the final step of reaching $s_\door$ from the domain of the embedding $(e^1_{2,2})^\beta_\door$ generated from $(E^1_{2,2})_{\beta}$, and replaced it by $\endactionfactor$, so that the policy generator $(g^1_{2,2})_{\beta}$, composed from $\piex^{\mathrm{nav}}$ and $(E^1_{2,2})_{\beta}$, generated the policy $(\pi_{2,2}\levidx{1})^\beta_{\door}$ stopping at $\neardoor(s_\door)$ instead of stopping at $s_\door$, because the door may be closed initially.

\section{Learning algorithms}
\subsection{Curricula}
\label{s:curriculum-definition}
A {\bf curriculum} is an ordered set of MDPs $\{\{\MDP_{L,n}\}_{n=1}^{n_L}\}_{L=1}^{L_{\max}}$ with $\MDP_{L,n}$ of difficulty $L$ as in \eqref{e:curriculum}:
\begin{equation}  \label{e:curriculum}
\begin{tikzcd}
	{\MDP_{L_{\max },1}} & \dots & {\MDP_{L_{\max },n_{L_{\max }}}} \\
	\vdots & \vdots & \vdots \\
	{\MDP_{2,1}} & {\MDP_{2,2}} & \dots & {\MDP_{2,n_2}} \\
	{\MDP_{1,1}} & {\MDP_{1,2}} & \dots & \dots & \dots & {\MDP_{1,n_1}}
	\arrow[from=1-1, to=1-2]
	\arrow[from=1-2, to=1-3]
	\arrow[dashed, from=2-3, to=1-1]
	\arrow[from=3-1, to=3-2]
	\arrow[from=3-2, to=3-3]
	\arrow[from=3-3, to=3-4]
	\arrow[dashed, from=3-4, to=2-1]
	\arrow[from=4-1, to=4-2]
	\arrow[from=4-2, to=4-3]
	\arrow[dashed, from=4-3, to=4-4]
	\arrow[dashed, from=4-4, to=4-5]
	\arrow[from=4-5, to=4-6]
	\arrow[from=4-6, to=3-1]
\end{tikzcd}
\end{equation}
We have strict lexicographic order ``$<$'' on the MDPs in a curriculum: for any $L,L'\in [L_{\max}], n\in [n_{L}], n'\in [n_{L'}]$, $(L,n)<(L',n')$ if and only if one of the following occurs: (1) $L<L'$; (2) $L=L',n<n'$. This is the order following which the student solves the MDPs.

\subsection{Teacher-student-assistant three-way cooperation}
\label{s:transfer-learning-algorithm}
To enable transfer learning and truly multi-level learning, we introduce 3 roles: a first actor, called {\textit{teacher}}, providing a curriculum of MDPs, each of which is an MMDP, and associated additional information, for example about transfer learning opportunities between these MDPs via common skills/embeddings/embedding generators; an agent, called {\textit{student}}, constructing and solving MDPs provided by the teacher following the given order; and a second actor, called {\textit{assistant}}, who extracts and records useful information from previously-solved MDPs using \textit{skill-embedding decompositions}.
These three roles use the curriculum to perform learning as we now describe, and as implemented in Algs.~\ref{algorithm:learn-MDPs}--\ref{algorithm:learn-MDP} in App.\ref{s:appendix:transferlearning}.

Having difficulty $L$, each $\MDP_{L,n}$ is an MMDP with $L$ levels; the teacher provides information for its solution to the student, as for an MMDP above, with the following caveats:
\begin{enumerate}[leftmargin=0.5cm,itemsep=0pt]
\item[$\bullet$] the teacher provides the  {\policysequence} $\{\cG_{L,n}\levidx{l}\}_{l=1}^{L-1}$, instead of $\{\Pi_{L,n}\levidx{l}\}_{l=1}^{L-1}$; each generator in $\cG_{L,n}\levidx{l}$ is provided directly or as a skill-embedding generator pair (Def.~\ref{def:ppg-compose});
\item[$\bullet$]  the initial policy for $\MDP_{L,n}\levidx{L}$ (the most compressed MDP) may be provided either directly or as a skill-embedding pair (Def.~\ref{def:pp-compose}).
\end{enumerate}
The embeddings or embedding generators are provided by the teacher, yielding an opportunity for transfer; skills are either directly provided by the teacher or hinted at by the teacher by providing the level and a previous MDP at that level from whose optimal policies the skill is extracted by the assistant, with the latter case yielding a direct transfer of the skill. 

The student solves $\MDP_{L,n} (1\leq L\leq L_{\max},1\leq n\leq n_L)$ using methods similar to Alg.~\ref{algorithm:MMDP-solver}, where the student first constructs the MDPs at higher levels in a bottom-up fashion, using the hints on the actions provided by the teacher: this is the first transfer opportunity, wherein those actions are essentially coming from the composition between skills, either directly provided by the teacher or extracted by the assistant, and embedding generators provided by the teacher. 
See lines~1--11 of Alg.~\ref{algorithm:learn-MDP}. 
The student then uses Props.~\ref{p:compress-transition}--\ref{p:compress-discount} for compression, described in Sec.~\ref{s:MMDP-compress}, to compute the transition probabilities, rewards, and discount factors respectively for those MDPs; see line~4 of Alg.~\ref{algorithm:MMDP-solver} using Alg.~\ref{algorithm:generate}.

Next, the student tries to solve these MDPs in a top-bottom fashion: the initial policy of the most compressed MDP (lines 12--16 of Alg.~\ref{algorithm:learn-MDP}), is either set as the degenerate \textit{diffusive policy}, with uniform distribution over actions at each state, or as some more informed policy, by composing a skill, either directly provided by the teacher or extracted by the assistant, with an embedding provided by the teacher. This is the second transfer opportunity, after which the student could solve those finer MDPs more easily thanks to the good initializations derived from the optimal policies at higher-levels, similar to warm starts in the domain of optimization. This is achieved in line 17 of Alg.~\ref{algorithm:learn-MDP}. 
During this process, we usually set the error tolerances at higher levels larger than those at lower levels, so that the higher-levels are providing more rough/general guidance before lower-levels fine-tune those compressed policies learned at higher-levels.  

Next, the assistant comes in: for each level $l\in [L]$, the student has learned an optimal policy $\pi_{L,n,\ast}\levidx{l}$ by solving $\MDP_{L,n}\levidx{l}$; some of these policies may be useful for later MDPs, as discussed above regarding the two transfer opportunities. 
More specifically, if the teacher has provided an embedding for that policy $\pi_{L,n,\ast}\levidx{l}$, then the assistant will implement a \textit{skill-embedding decomposition} according to Def.~\ref{def:pp-decompose}, to extract out a skill ${skill}\levidx{l}_{L,n}$ which is added to a set of public skills, called $\Skill$, that the assistant has acquired and the student can choose to use at any time. 
This is achieved as in lines 18--22 of Alg.~\ref{algorithm:learn-MDP}. 
The teacher does not know these skills, but sees the set $\Skill$ and can refer to its elements, for example to provide the students with hints about using a certain skill in an MDP.
The algorithms implementing the above are collected in App.~\ref{s:appendix:transferlearning}.

Our general framework can be combined with any existing MDP solvers, and without losing generality, we use value iteration in the above when we know the MDP, otherwise we use Q-learning to explore the environment: the adaptation of Q-learning to MMDPs is nontrivial and requires an extensive discussion, which we postpone to the follow-up paper.

\subsection{The MazeBase+ example, part III}
\label{s:curriculum}
In the MazeBase+ example, the teacher provides a curriculum containing four MDPs in the following order: $\MDP_{1,1}$, $\MDP_{2,1}$, $\MDP_{2,2}$, and $\MDP_{3,1}$, of difficulties, $1,2,2$, and $3$, respectively. See Sec.~\ref{box:MDPs} for the description of these MDPs and see App.\ref{s:MazeBaseMDPs} for the formal definition of these MDPs.

\subsubsection{Analytical run of the algorithm}
\label{s:curriculum}
The following goes through how Algs.~\ref{algorithm:learn-MDPs}--\ref{algorithm:learn-MDP} solve these MDPs analytically by constructing MMDPs, where $t_{\min} = t_{\max} = +\infty$. See App.~\ref{s:all-MazeBase} for the equations needed here as well as the mathematical notations mentioned here. 
\begin{enumerate}[leftmargin=0.5cm,itemsep=0pt]
\item[$\bullet$] $\MDP_{1,1}$ is $\MDP_{\text{dense}}^{\text{nav}}$ in the example of navigation and transportation with traffic jams (Sec.~\ref{s:examples-MMDP-transfer-learning}). 
We use the navigation skill $\overline{\pi}^{\mathrm{nav}}_{\text{dense}}$ extracted there (which we assume to be deterministic to simplify the calculations of compressed MDPs and better illustrate how the MMDP is constructed). This skill is then added to the set of public skills $\Skill$.

\item[$\bullet$] $\MDP_{2,1}:=(\cS_{2,1},\Sinit_{2,1},\Sterm_{2,1},\cA_{2,1},P_{2,1},R_{2,1},\Gamma_{2,1})$ models navigation through $\gridworld$ while avoiding blocks, assuming all the doors are open. See the definition in \eqref{e:MDP21}.
Since $\MDP_{2,1}$ is of difficulty $2$, the student constructs a two-level MMDP to solve it. 
For the first level, $\MDP_{2,1}\levidx{1}:=\MDP_{2,1}$.

For the second-level, the teacher provides $\cG_{2,1}\levidx{1}:=$ $\{(\overline{\pi}^{\mathrm{nav}}_{\text{dense}},$ $E_{2,1}\levidx{1})\}$,
where $E_{2,1}\levidx{1}$ is defined in \eqref{e:E211-mazebase}, generating all the embeddings extracting the current location $\agent$ and possible ``transfer hubs'' ($\door_1, \door_2, \door_3$) or the destination $\dest$.
For clarification, the teacher does not know the skill $\overline{\pi}^{\mathrm{nav}}_{\text{dense}}$, but sees the set $\Skill$ and can refer to its elements, for example here to provide the students with hints about using $\overline{\pi}^{\mathrm{nav}}_{\text{dense}}$ for constructing $\Pi_{2,1}\levidx{1}$. Similar considerations apply to all examples in this paper.

The student constructs the policy generators $g_{2,1}\levidx{1}$ (defined in \eqref{e:g21-mazebase}) from $(\overline{\pi}^{\mathrm{nav}}_{\text{dense}},E_{2,1}\levidx{1})$,
with timescale $t_{g_{2,1}\levidx{1}}=t_{\overline{\pi}^{\mathrm{nav}}_{\text{dense}}}=+\infty$, generating policies $(\pi_{2,1}\levidx{1})_\theta$ representing going from $\agent$ towards $\door_i$ or $\dest$, with timescales $t_{(\pi_{2,1}\levidx{1})_\theta}=t_{g_{2,1}\levidx{1}} =+\infty$. 
$(\pi_{2,1}\levidx{1})_\theta$ takes the action $\endactionfactor$ with probability 1 if $\agent = s_{\theta}$, telling the agent it could stop the current policy because it has reached the current subgoal and should move on to the next one; this shows why forcing a policy $\pi$ to satisfy $\sum_{a\in\endactionset}\pi(s,a)=1$ for any $s\in\Sterm$ is important in the construction of MMDPs and in policy transfer. 
With the help of the teacher, the student has concluded that $\Pi_{2,1}\levidx{1}=\{(\pi_{2,1}\levidx{1})_\theta:\theta\in \Theta_{2,1}\levidx{1}\}$,
where $\Theta_{2,1}\levidx{1}:=\{\door_1,\door_2,\door_3,\dest\}$ indexes the four objects $\door_1,\door_2,\door_3$, and $\door$, and $(\pi_{2,1}\levidx{1})_\theta$, $\theta\in \Theta_{2,1}\levidx{1}$, all have timescales $+\infty$.

The student then constructs the second level MDP $\MDP_{2,1}\levidx{2}=(\cS_{2,1},\Sinit_{2,1},\Sterm_{2,1},\overline{\Pi_{2,1}\levidx{1}}, P_{2,1}\levidx{2},$ $R_{2,1}\levidx{2},\Gamma_{2,1}\levidx{2})$, with $P_{2,1}\levidx{2}$, $R_{2,1}\levidx{2}$, $\Gamma_{2,1}\levidx{2}$ as in \eqref{e:compressed-MDP21-MazeBase}. If $\overline{\pi}^{\mathrm{nav}}_{\text{dense}}$ were not deterministic, the sequence defined here is stochastic, complicating explicit calculations.
The student solves $\MDP_{2,1}\levidx{2}$ and finds its optimal policy $\pi_{2,1,\ast}\levidx{2}$ with timescale $t_{\pi_{2,1,\ast}\levidx{2}}=+\infty$. 
The student typically needs to walk across several rooms when going from its initial location to the target location by avoiding the blocks around the route, but the navigation skill $\piex^{\mathrm{nav}}_{\text{dense}}$ only teaches the student to navigate within a single room while avoiding blocks and doors, so the student may need to stitch $\piex^{\mathrm{nav}}_{\text{dense}}$ multiple times, with the doors as intermediate locations: this is what $\pi_{2,1,\ast}\levidx{2}$ does. 
At this level the student can focus on learning it, easily, because the details of going from A to B within a single room while avoiding blocks and doors have been encapsulated in the navigation skill $\piex^{\mathrm{nav}}_{\text{dense}}$. 
The student derives the optimal policy $\pi_{2,1,\ast}=\pi_{2,1,\ast}\levidx{1}$ for $\MDP_{2,1}$ without any more iterations, since $\Block\cup\{\door_i:1\leq i\leq 3\}$ here is contained in $\Omega'_{\mathtt{jams}}$ in $\MDP_{\text{dense}}^{\text{nav}}$ (per our assumptions), and the light traffic will repel the student to not touch $\Block\cup\{\door_i:1\leq i\leq 3\}$ whenever the total distance the student needs to travel does not increase.

Next, if the teacher provides the embedding $(e_{\decomp})_{2,1}\levidx{1}$ as defined in \eqref{e:E211-decomp-mazebase}, which is essentially just an identity map, then the assistant extracts $\overline{\pi}^{\mathrm{nav}}$, a skill of navigation through $\gridworld$ while avoiding blocks assuming all the doors are open, as in \eqref{e:pi-nav-mazebase},
with timescale $t_{\overline{\pi}^{\mathrm{nav}}}=t_{\pi_{2,1,\ast}\levidx{1}}=+\infty$. Similar to $\overline{\pi}^{\mathrm{nav}}_{\text{dense}}$, we assume hereafter $\overline{\pi}^{\mathrm{nav}}$ is deterministic to simplify the calculations of compressed MDPs and better illustrate how the MMDP is constructed.

\item[$\bullet$] We move to $\MDP_{2,2}$, which represents picking up a key and opening the door, already discussed thoroughly in Secs.~\ref{box:MDPs} and~\ref{box:skill-compose}; from it, the assistant extracts a concatenation skill $\overline{\pi}^{\mathrm{concat}}$.

\item[$\bullet$] Now we consider the target MDP of difficulty $3$.  $\MDP_{3,1}:=(\cS_{3,1}$, $ \Sinit_{3,1}$, $\Sterm_{3,1}$, $\cA_{3,1}$, $P_{3,1}$, $R_{3,1}$, $\Gamma_{3,1})$ represents the task of picking up the goal, see the definition in \eqref{e:MDP31-mazebase}.
Since $\MDP_{3,1}$ is of difficulty $3$, the student constructs a three-level MMDP to solve it, and for the first level, $\MDP_{3,1}\levidx{1}:=\MDP_{3,1}$. To construct the second-level MDP, the teacher provides
\begin{equation*} 
\cG_{3,1}\levidx{1}:=\{(\mathrm{id},(E_{3,1}\levidx{1})_{\alpha}),(\overline{\pi}^{\mathrm{nav}},(E_{3,1}\levidx{1})_{\beta})\}\,,
\end{equation*}
where $(E_{3,1}\levidx{1})_{\alpha}$, defined in \eqref{e:E311alpha}, generates embeddings taking out action information, and $(E_{3,1}\levidx{1})_\beta$, defined in \eqref{e:E311beta} and similar to $E_{2,1}\levidx{1}$, generates all the embeddings extracting the current location as well as possible ``transfer hubs'' or the destination, together with the action information. 
The student then composes the policy generators $(g_{3,1}\levidx{1})_\alpha$, $(g_{3,1}\levidx{1})_\beta$ (defined in \eqref{e:g311alpha}, \eqref{e:g311beta}) coming from $(\mathrm{id}$, $(E_{3,1}\levidx{1})_\alpha)$, $(\overline{\pi}^{\mathrm{nav}},(E_{3,1}\levidx{1})_\beta)$ respectively. $(g_{3,1}\levidx{1})_\alpha$, defined in \eqref{e:g311alpha}, with timescale $t_{(g_{3,1}\levidx{1})_\alpha}=t_{\mathrm{id}}=1$ (given by the teacher), generates actions $\pick, \open$ as represented by $(\pi_{3,1}\levidx{1})^\alpha_\theta$,  $\theta\in (\Theta_{3,1}\levidx{1})_\alpha$,
with timescales $t_{(\pi_{3,1}\levidx{1})^\alpha_\theta}=t_{(g_{3,1}\levidx{1})_{\alpha}}=1$. Here, $ (\Theta_{3,1}\levidx{1})_\alpha := \{\pick,\open\}$ indexes the two actions $\pick$ and $\open$.
$(g_{3,1}\levidx{1})_\beta$, defined in \eqref{e:g311beta}, with timescale $t_{(g_{3,1}\levidx{1})_\beta}=t_{\overline{\pi}^{\mathrm{nav}}}=+\infty$, generates policies of going from the current location towards possible ``transfer hubs'' or the destination, as represented by $(\pi_{3,1}\levidx{1})^\beta_{\theta}$, $\theta\in (\Theta_{3,1}\levidx{1})_\beta$, with timescales $t_{(\pi_{3,1}\levidx{1})^\beta_{\theta}}=t_{\overline{\pi}^{\mathrm{nav}}}=+\infty$. Here, $(\Theta_{3,1}\levidx{1})_\beta :=\{\key_1,\key_2,\key_3,\door_1,\door_2,\door_3,\goals\}$ indexes the seven objects $\key_1,\key_2,\key_3,\door_1,\door_2,\door_3$, and $\goals$. As a reminder, here we use $\overline{\pi}^{\mathrm{nav}}\circ(e_{3,1}^1)^\beta_\theta$ to represent the composite partial policy coming from the skill $\overline{\pi}^{\mathrm{nav}}$ and the embedding $(e_{3,1}^1)^\beta_\theta$, which is a slight abuse of notations, because according to Def.~\ref{def:pp-compose}, it is not exactly the output of function composition: there is a second extra step of normalization. Similar notations apply hereafter.

To summarize, with the help of the teacher, the student concludes that 
\begin{align*} 
\Pi_{3,1}\levidx{1}=\{(\pi_{3,1}\levidx{1})^\alpha_{\theta}:\theta\in(\Theta_{3,1}\levidx{1})_\alpha\}\cup\{(\pi_{3,1}\levidx{1})^\beta_{\theta}:\theta\in(\Theta_{3,1}\levidx{1})_\beta\}\,,  
\end{align*}
where $(\pi_{3,1}\levidx{1})^\alpha_{\theta}\,\, (\theta\in(\Theta_{3,1}\levidx{1})_\alpha)$ are two degenerate extended policies with timescales $1$, and $(\pi_{3,1}\levidx{1})^\beta_{\theta}\,\, (\theta\in(\Theta_{3,1}\levidx{1})_\beta)$ are seven extended policies with timescales $+\infty$. In addition, all the policies in $\Pi_{3,1}\levidx{1}$ could be represented by an element in the product set $((\Theta_{3,1}\levidx{1})_\alpha\cup\{\endactionfactor,\emptyvalue\})\times ((\Theta_{3,1}\levidx{1})_\beta\cup\{\endactionfactor,\emptyvalue\})$: for instance, $(\pi_{3,1}\levidx{1})^\alpha_{\pick}$ is represented by $(\pick,\emptyvalue)$. So, $\Pi_{3,1}\levidx{1}$ has two action factors, which for simplicity we index with $\alpha, \beta$ respectively.

Then, the student constructs the second level MDP $\MDP_{3,1}\levidx{2}=(\cS_{3,1}$, $\Sinit_{3,1}$, $\Sterm_{3,1}$, $\overline{\Pi_{3,1}\levidx{1}}$, $P_{3,1}\levidx{2}$, $R_{3,1}\levidx{2}$, $\Gamma_{3,1}\levidx{2})$, where $P_{3,1}\levidx{2}$, $R_{3,1}\levidx{2}$, and $\Gamma_{3,1}\levidx{2}$ are as in \eqref{e:compressed-MDP312-MazeBase}.
The teacher provides $\cG_{3,1}\levidx{2}:=\{(\overline{\pi}^{\mathrm{concat}},E_{3,1}\levidx{2})\}$,
where $E_{3,1}\levidx{2}$ is as defined in \eqref{e:E312}, generating embeddings taking out the progress information needed for opening $\door_1$, $\door_2$, $\door_3$ or picking up $\goals$ respectively.
The student then derives that the policy generator $(g_{3,1}\levidx{2})_{\{\alpha, \beta\}}$ from $(\overline{\pi}^{\mathrm{concat}}$, $E_{3,1}\levidx{2})$, where $(g_{3,1}\levidx{2})_{\{\alpha, \beta\}}$ is as defined in \eqref{e:g312}, with timescale $t_{(g_{3,1}\levidx{2})_{\{\alpha, \beta\}}}=t_{\overline{\pi}^{\mathrm{concat}}}=+\infty$, generating policies for opening $\door_i$, $i\in[3]$, and picking up $\goals$, represented by $((\pi_{3,1}\levidx{2})_\theta)_{\{\alpha, \beta\}}$, and here for all $\theta\in\Theta_{3,1}\levidx{2}$, the timescales $t_{((\pi_{3,1}\levidx{2})_\theta)_{\{\alpha, \beta\}}}=t_{\overline{\pi}^{\mathrm{concat}}}=+\infty$. Here, $\Theta_{3,1}\levidx{2}:=\{(\key_1,\door_1),(\key_2,\door_2),(\key_3,\door_3),(\goals,\goals)\}$ indexes the objects needed for opening $\door_1$, $\door_2$, $\door_3$ or picking up $\goals$ respectively.

To summarize, with the help of the teacher, the student concludes that 
$
\Pi_{3,1}\levidx{2}=$\\$\{((\pi_{3,1}\levidx{2})_{\theta})_{\{\alpha,\beta\}}:\theta\in\Theta_{3,1}\levidx{2}\}  
$,
wherein all the four policies have timescales $+\infty$. Here, we use the two skills, the navigation skill $\overline{\pi}^{\mathrm{nav}}$ and the concatenation logic $\overline{\pi}^{\mathrm{concat}}$ originally used to learn the process of opening a door, to generate by their combinatorial combinations many new policies: this is the fundamental role of the third level.

The student now constructs the third-level MDP $\MDP_{3,1}\levidx{3}=(\cS_{3,1},\Sinit_{3,1}$, $\Sterm_{3,1}$, $\overline{\Pi_{3,1}\levidx{2}}$, $P_{3,1}\levidx{3}$, $R_{3,1}\levidx{3}$, $\Gamma_{3,1}\levidx{3})$, where $P_{3,1}\levidx{3},R_{3,1}\levidx{3},\Gamma_{3,1}\levidx{3}$ are as in \eqref{e:compressed-MDP313-MazeBase}, and finds its optimal policy $\pi_{3,1,\ast}\levidx{3}$ (see \eqref{e:pi313}), with timescale $+\infty$. This policy is a concatenation of opening $\door_i$'s and picking up $\goals$, depending on the agent's current locations. For instance, the agent will first open $\door_2$, then open $\door_3$ and eventually pick up $\goals$ if it starts from $\room_2$. 
For simplicity, we do not guarantee the functions (such as policies, and more generally, partial policies) written down are correct at states never explored unless specified, and in particular the normalization property may not be satisfied at those states. This applies to both $\pi_{3,1,\ast}\levidx{3}$ here and all the following such functions. 
Consequently, the student derives $\pi_{3,1,\ast}\levidx{2}$ as in \eqref{e:pi312}, which ``refines'' the actions in $\pi_{3,1,\ast}\levidx{3}$ to finer details about the concatenation logic of navigation policies or simple actions of $\pick, \open$.
Thus, the student concludes $\pi_{3,1,\ast}$ as in \eqref{e:pi311}, which further ``expands'' the actions in $\pi_{3,1,\ast}\levidx{2}$ to finer details about the concatenation logic of simple actions in $\cA_{\dir}$ and $\pick, \open$. 
\end{enumerate}

Merits of this are threefold. First, just like for a human, focusing on one problem at each time is more efficient than thinking about several problems all at once. 
Second, it can speed up the solution of the MDPs by leveraging transfer of skills within the curriculum, see Fig.~\ref{f:MazeBase_ours}, which generally comes from extracting the ``repetitions''/repeated patterns (shared skills/components) and learning them for only once, or extracting the patterns that could be merged together into a single one and learning them in parallel. In addition, more computational savings could be achieved by scaling up our current examples, which is not our main focus, since our aim is to use the simplest set of examples to convey all the important conceptual ingredients in this work. 
Third, it promotes further opportunities of transfer to/from other curricula; for instance, the next example of navigation and transportation with traffic jams provides one of the navigation skills utilized in this example. 

\subsubsection{Numerical run of the algorithm}
\paragraph{Numerical results.} 
In Fig.~\ref{f:MazeBase_ours} we report on the computational cost of our algorithm for the MazeBase+ example. 
These extra iterations correspond to learning how to stitch different steps including navigating to different objects, $\pick$ successfully or $\open$ successfully, well separated in semantic meanings, which contains the turning points at which the student may need to switch to a different strategy. This is illustrated by the forthcoming Thm.~\ref{t:complexity}.
The cost of solving the $\MDP_{2,n}$'s is amortized over solving just a single $\MDP_{3,1}$, showcasing the power of multi-level structure and transfer of skills in our framework. This is illustrated by the forthcoming Thm.~\ref{t:transfer-2}. 
One small extra comment is that when solving $\MDP_{2,2}$, albeit under the optimal policy each episode should be of length no more than $4$ at level $2$, there are slightly more iterations in blue using value iteration. 
 The reason is that the student may need to open $\door$ for multiple times before succeeding (recall $p_s=0.9$), so there is a self-loop on updating the value $V(\neardoor(\door))$ of the state at $\neardoor(\door)$: at each iteration, from the second one onwards, $V(\neardoor(\door))$  is updated to $V(\neardoor(\door))+(1-p_s)V(\neardoor(\door))$, until convergence. From the figure, we see near convergence is attained after 3 iterations.
Finally, note that cost per iteration in our algorithm is no larger than that in classical value iteration; it is in fact often much smaller because the compressed MDPs have typically a smaller action sets and effective state spaces. 
Similar considerations will apply to numerical experiments later on.

\begin{figure}[t]
\centering
\includegraphics[width=0.5\textwidth]{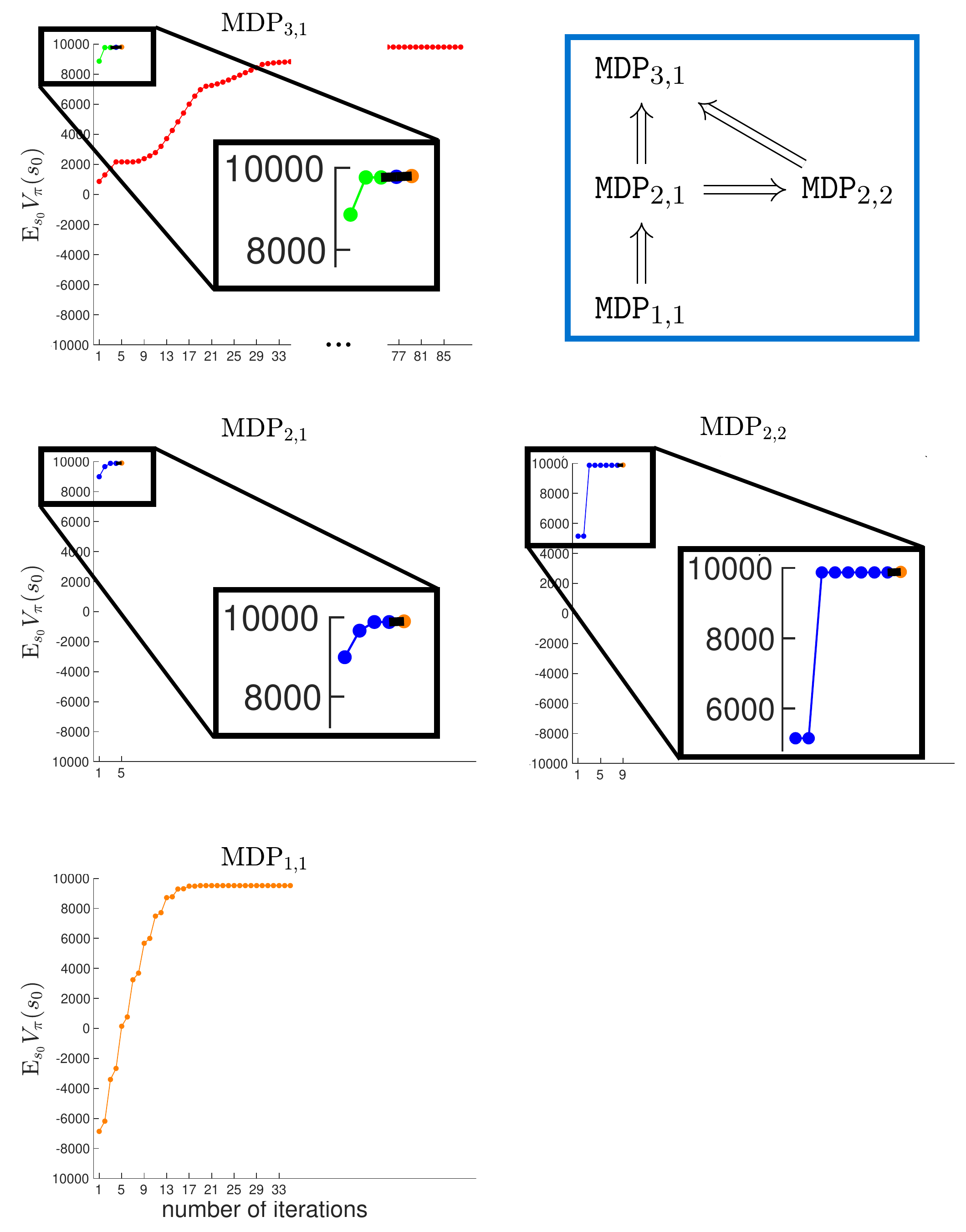}
\caption{We display $\mathbb{E}_{s_0}V_\pi(s_0):=\sum_{s\in\Sinit}V(s)/|\Sinit|$, the average (over all initial conditions $s_0$) of the $V_{\pi}(s_0)$, where $V_\pi$ is the value function for $\MDP_{1,1}$, $\MDP_{2,1}$, $\MDP_{2,2}$, and $\MDP_{3,1}$ respectively, as $\pi$ is optimized during iterations of classical value iteration (in red) and of value iteration within our algorithm, with iterations within $\MDP_{1,1}=\MDP_{\text{dense}}^{\text{nav}}$ in orange, iterations within $\MDP_{2,n} (1\leq n\leq 2)$ at level $2$ in blue followed by iterations at level $1$ in orange, and iterations within $\MDP_{3,1}$ at level $3$ in green followed by iterations at level $2$ in blue followed by iterations at level $1$ in orange. Although we spend extra effort in solving the $\MDP_{2,n}$'s (recall that $\MDP_{1,1}$'s comes from the example of navigation and transportation with traffic jams, so we do not spend extra effort there), they prepare us well enough so that we only need a few more iterations for solving the target $\MDP_{3,1}$ (green+blue+orange), much fewer than if we solved it from scratch using classical value iteration (red). 
}
\label{f:MazeBase_ours} 
\end{figure}

\paragraph{Why we do not report extensive wall-clock comparisons.}
We do not include broad running-time comparisons against many existing baselines because prior work naturally falls into two groups, and for the first group \emph{such comparisons are not meaningful, obviously dominated by incompatible overhead, or unfair/too expensive to compare.} Many HRL methods cap abstraction at one--two levels or tie abstraction to environment-specific goal heuristics, making them difficult to apply directly to our multi-level MazeBase+ and cross-geometry tasks without substantial redesign \citep{SuttonOptions, DietterichHRL, ParrRussell1997, DayanHinton1993, BartoHRLReview, Bacon2017OptionCritic, Vezhnevets2017FeUdal, Nachum2018HIRO, Levy2019HAC, Dwiel2019GoalSpace}. Likewise, much of the skill-discovery literature targets general-purpose primitives but does not construct higher-level compressed problems with reduced stochasticity or explicit cross-geometry transfer mechanisms \citep{Eysenbach2019DIAYN, Gregor2016VIC}. Even when a code path exists, these systems typically (i) do not compress state/action spaces into independent higher-level MDPs, (ii) intermix high-level decisions with low-level stochastic rollouts at every step, and (iii) lack mechanisms for transfer across different geometries---so the wall-clock cost scales with fine-level simulation rather than with the abstract problem. In such cases, any ``runtime'' head-to-head would mainly benchmark infrastructure and rollout budgets rather than the intended algorithmic question (safe compression and cross-level/cross-task reuse), and would not be an apples-to-apples comparison. We therefore emphasize qualitative comparisons 
that isolate how multi-level compression changes the effective planning and learning problem.

\paragraph{A comparator we hope to include, and why it is costly to extend.}
A closer conceptual neighbor is \citet{Sukhbaatar2018LearningGE}, where self-play learns goal embeddings to guide hierarchical behavior. Our MazeBase+ example realizes a similar high-level logic but generalizes it to many levels with \emph{independent, compressed MDPs} at higher scales. Extending \citet{Sukhbaatar2018LearningGE} na\"ively beyond two levels would (a) forfeit independence across levels---higher-level learning would still require traversing fine levels, so each abstract update pays for low-level exploration---and (b) remain unsupervised, whereas our construction is target-task--supervised. Moreover, their embeddings condition primarily on the goal (keeping the high-dimensional current state essentially fixed), whereas our embeddings take \emph{current state, action coordinates, and goal}, enabling stronger dimension reduction and better policy-structure reuse. Combined with partial policy/embedding generators and reusable skills, our compression lets higher levels avoid re-learning repeated navigation-like subtasks and yields deterministic planners at the top; even with Q-learning, we only estimate induced transitions/rewards once and then reuse them across tasks and levels, as demonstrated in the follow-up paper. For these reasons, a faithful multi-level extension (or optimized reimplementation) of \citet{Sukhbaatar2018LearningGE} would likely be substantially more expensive on our benchmarks today; we therefore defer direct wall-clock comparisons to future work, while providing detailed qualitative/ablation evidence and an extended methodological comparison in Appendix~\ref{s: MazeBase-literature}. We will also include connections to reinforcement-programming views of sorting \citep{5586457} in the follow-up paper, when we realize merge sort using our framework as a novel example of recursion.

\subsubsection{Transfer learning to a new problem}
\label{s:MazeBase-transfer}

Now we show how the decomposition of $\MDP_{3,1}$ into multiple levels and extraction of useful skills (both the basic skill $\overline\pi^{\mathrm{nav}}_{\text{dense}}$ about navigation within a single room while avoiding blocks and the higher-order function $\overline{\pi}^{\mathrm{concat}}$ about the concatenation logic of picking up a key and opening the door) help in learning new difficult problems much faster. 
We introduce a new problem $\MDP'_{3,1}$, similar to $\MDP_{3,1}$, with the main difference lying in the geometric configuration of the objects:
first of all, $\door_2$ is now connecting between $\room_2$ and $\room_4$ rather than $\room_1$ and $\room_3$ (i.e., the constraints for $s_{\door_2}$ change to $s_{\door_2}(1) > s_{\door_1}(1), s_{\door_1}(2)<s_{\door_2}(2)<s_{\door_3}(2)$), and secondly, initially, $s_{\key_2}\in\gridworld_{\room_2}, s_{\key_3}\in\gridworld_{\room_4}, \goal\in \gridworld_{\room_3}, (\cS^{\text{init}}_{3,1})'=\{(\agent,\keypickvec,\dooropenvec,\goalpick)\in \cS'_{3,1}: \goalpick=0\}$. 
The other objects and notations are defined as above, taking into consideration the updated configuration just described. 
Fig.~\ref{f:MazeBase2} depicts the new geometric configurations of rooms, doors, keys, and the goal.

In $\MDP'_{3,1}$, $\door_1$ is indeed useful in order to reach $\goals$, and even though $\goals$, $\door_2$, $\key_2$, and $\key_3$ now have significantly different positions, and all the other objects could have other fixed locations within the given constraints, in order to solve $\MDP'_{3,1}$ we only need a few iterations of value iteration inside our algorithm, with no need to re-solve $\MDP^{\text{nav}}_{\text{dense}}$ and $\MDP_{2,2}$. The reason is that we do not change the overall geometry, so the skill $\overline\pi^{\mathrm{nav}}_{\text{dense}}$ about navigation within a single room while avoiding blocks still applies; the logic of how to open a door is ubiquitous and universally applies everywhere; the only parts that need to be relearned include: ($\romannumeral1$) navigation through $\gridworld$ while avoiding blocks assuming all the doors are open (since the doors have significantly changed positions); ($\romannumeral2$) the policy of how to navigate between rooms when some doors may be closed at level $3$ (since doors and keys have significantly changed positions). 
For ($\romannumeral1$), the teacher just needs to construct a new MDP $\MDP'_{2,1}$, similar to $\MDP_{2,1}$; the student solves it and the assistant extracts a new skill $(\overline{\pi}^{\mathrm{nav}})'$ about navigation through the whole grid world $\gridworld$ while avoiding blocks assuming all the doors are open, by stitching together several policies coming from $\overline\pi^{\mathrm{nav}}_{\text{dense}}$. 
Only a few iterations of value iteration will suffice to achieve this, see Fig.~\ref{f:MazeBase_ours23} (left).
For ($\romannumeral2$), after the student constructs $(\MDP_{3,1}\levidx{3})'$, the student needs to solve it as if it is a completely new problem, but of tiny size. 
The optimal policy $(\pi_{3,1,\ast}\levidx{3})'$ for $(\MDP_{3,1}\levidx{3})'$ has length $4$, longer than the previous $\pi_{3,1,\ast}\levidx{3}$ for $\MDP_{3,1}\levidx{3}$, and can be discovered in very few iterations of value iteration. $(\pi_{3,1,\ast}\levidx{3})'$ is given by \eqref{e:policy-transfer-mazebase}, which is different from $\pi_{3,1,\ast}\levidx{3}$ because of the new geometry. For instance, the agent will first open $\door_1$, then open $\door_2$, then open $\door_3$ and eventually pick up $\goals$ if it starts from $\room_1$.

So, even if $(\overline{\pi}^{\mathrm{nav}})'$ and $(\pi_{3,1,\ast}\levidx{3})'$ need to be relearned, we still achieve a significant reduction in the effort to learn and in computations thanks to transfer learning and compression, achieving ``few-shot learning'', because ($\romannumeral1$) $\overline\pi^{\mathrm{nav}}_{\text{dense}}$ is still usable when building $(\MDP_{2,1}\levidx{2})'$, level $2$ of $\MDP'_{2,1}$, and thus essentially the process of traveling between any two locations within a single room is encapsulated in a single action. 
Given that in ``grid world type'' examples, the number of iterations in value iteration is proportional to the average number of actions needed before reaching the goal state, working at level $2$ of $\MDP'_{2,1}$ where each step/action is at level of a whole room, only a few iterations (proportional to the number of rooms the student needs to walk through before reaching $\dests$) are needed for solving $\MDP'_{2,1}$. ($\romannumeral2$) 
Similarly, at level $3$ for $\MDP'_{3,1}$, all the actions within a single room are encapsulated in a single action, so we only need a few iterations (proportional to the number of rooms the student needs to walk through before picking up $\goals$) for solving $\MDP'_{3,1}$. 

Without transfer and without our framework, solving $\MDP'_{3,1}$ directly would be significantly more expensive, requiring a number of iterations proportional to the average diameter of the rooms the student needs to walk through before picking up $\goals$ multiplied by the number of rooms the student needs to walk through before picking up $\goals$. 

Last but not least, this variant $\MDP'_{3,1}$ also further justifies why we need three levels in this MazeBase+ example: if we stopped at level $2$, we would still need at least three times more iterations when learning new problems, because we would not have encapsulated the learned knowledge of $\overline{\pi}^{\mathrm{concat}}$, so we would need to let the student learn the process represented by $\overline{\pi}^{\mathrm{concat}}$ repeatedly, while at the same time finding the correct navigation between rooms. See Fig.~\ref{f:MazeBase_ours23} (left) for the comparison and more detailed analysis.

\begin{figure}[t] 
    \centering
    \begin{subfigure}[t]{0.44\linewidth} 
        \centering
        \includegraphics[width=\textwidth]{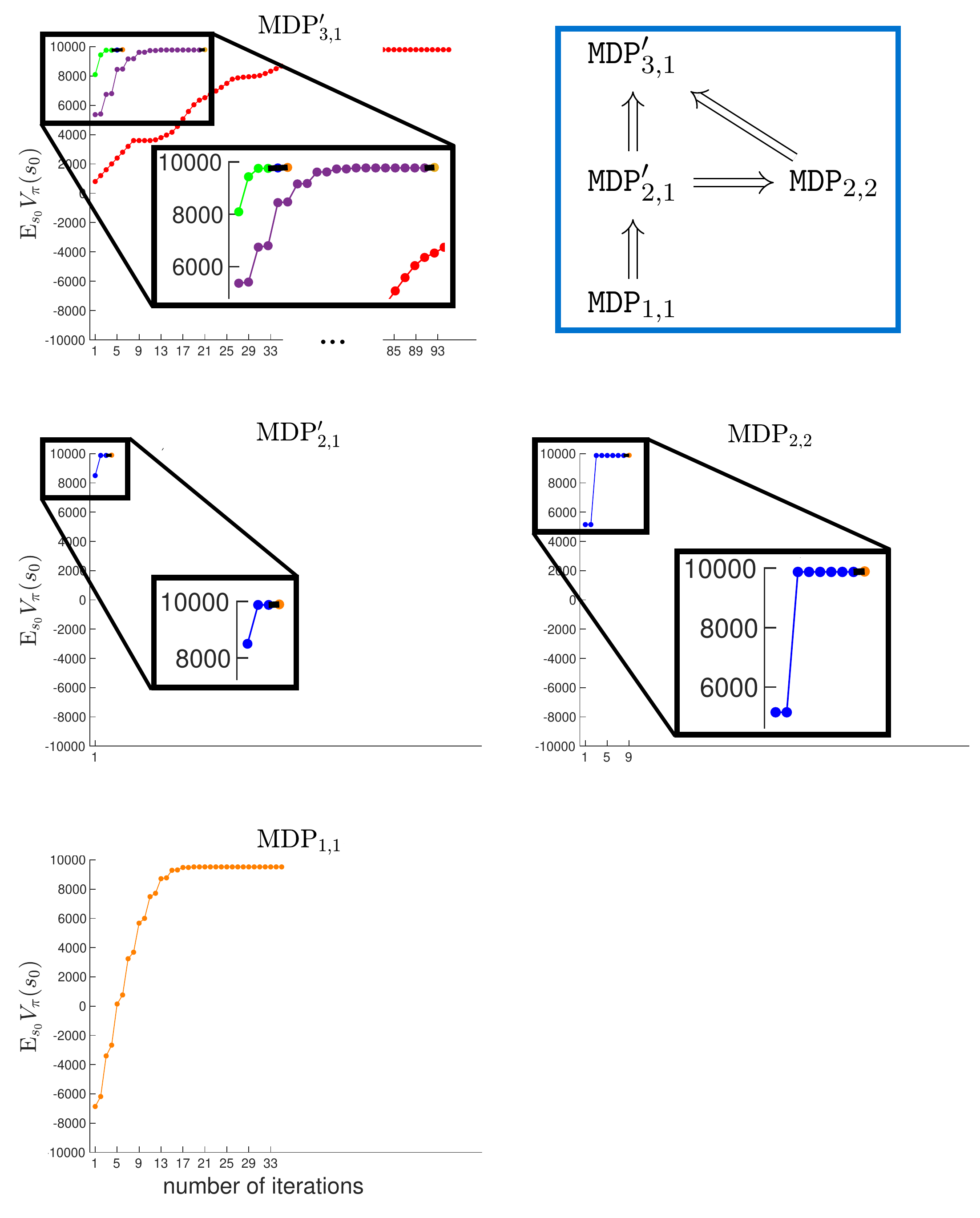} 
    \end{subfigure}
    \hfill 
    \begin{subfigure}[t]{0.49\linewidth}
        \centering
        \includegraphics[width=\textwidth]{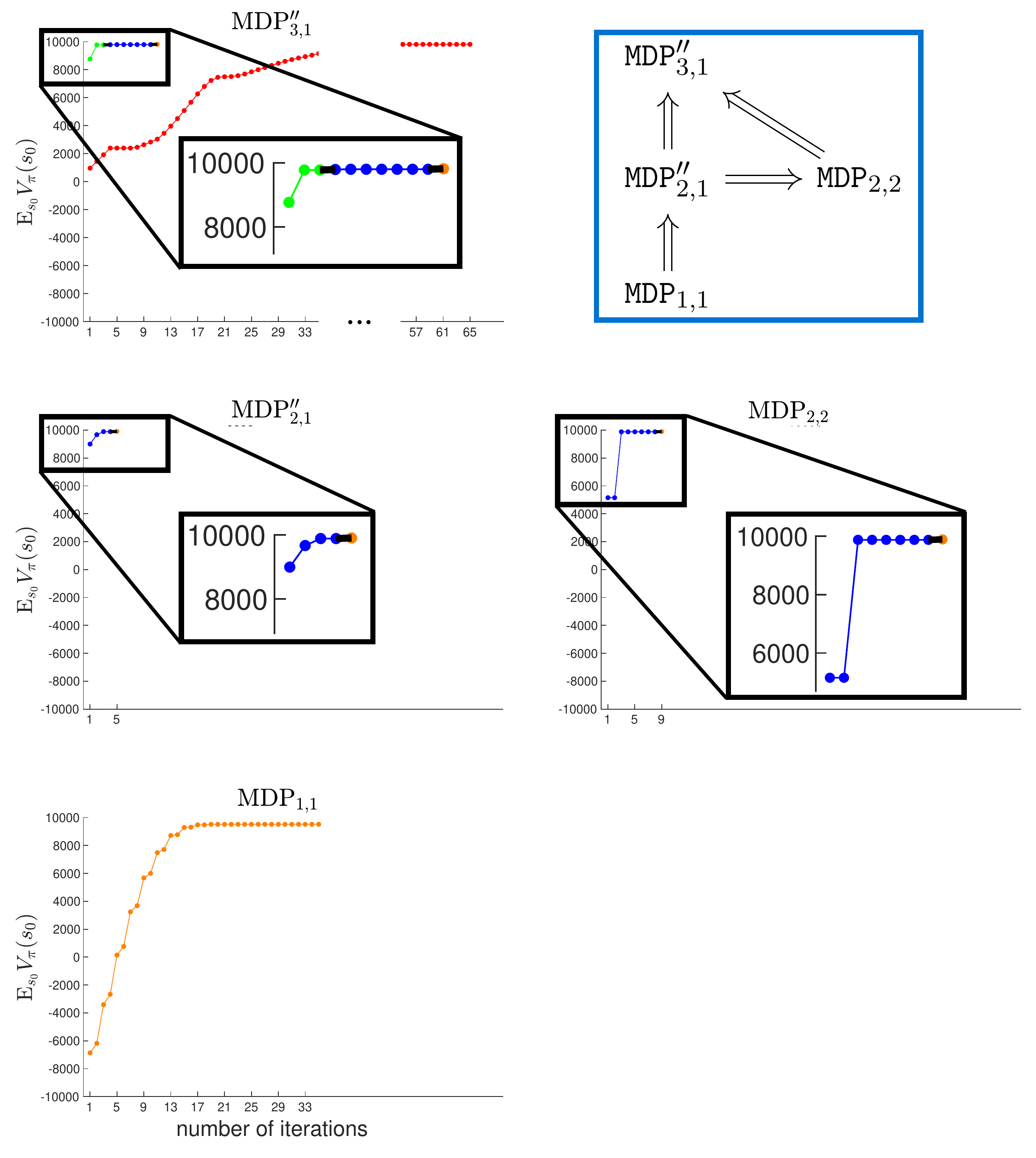}
    \end{subfigure}
    \caption{Left: we display $\mathbb{E}_{s_0}V_\pi(s_0)$, where $V_\pi$ is the value function for $\MDP_{1,1}$, $\MDP'_{2,1}$, $\MDP_{2,2}$, and $\MDP'_{3,1}$ respectively, as $\pi$ is optimized during iterations of classical value iteration and of value iteration within our algorithm. See Fig.~\ref{f:MazeBase_ours} for the detailed explanation, and Fig.~\ref{f:MazeBase2} for the representation of this second experiment of the MazeBase+ example. 
One addition here compared with Fig.~\ref{f:MazeBase_ours} is that here we also plot iterations within $\MDP'_{3,1}$ at level $2$ in purple followed by iterations at level $1$ in yellow, assuming the student did not solve $\MDP_{2,2}$ and consequently the assistant did not extract $\overline{\pi}^{\mathrm{concat}}$. 
This corresponds to treating $\MDP'_{3,1}$ as an MDP of difficulty $2$. 
The extra effort here (purple+yellow) compared with the original case (green+blue+orange) demonstrates the advantage of treating $\MDP'_{3,1}$ as an MDP of difficulty $3$ and of extracting the higher-order function $\overline{\pi}^{\mathrm{concat}}$. Right: we display $\mathbb{E}_{s_0}V_\pi(s_0)$, where $V_\pi$ is the value function for $\MDP_{1,1}$, $\MDP''_{2,1}$, $\MDP_{2,2}$, and $\MDP''_{3,1}$ respectively, as $\pi$ is optimized during iterations of classical value iteration and of value iteration within our algorithm. See Fig.~\ref{f:MazeBase_ours} for the detailed explanation, and see Fig.~\ref{f:MazeBase3} for the representation of this third experiment of the MazeBase+ example.}
    \label{f:MazeBase_ours23}
\end{figure}

\begin{figure}[t]
\centering
\includegraphics[width=1\textwidth]{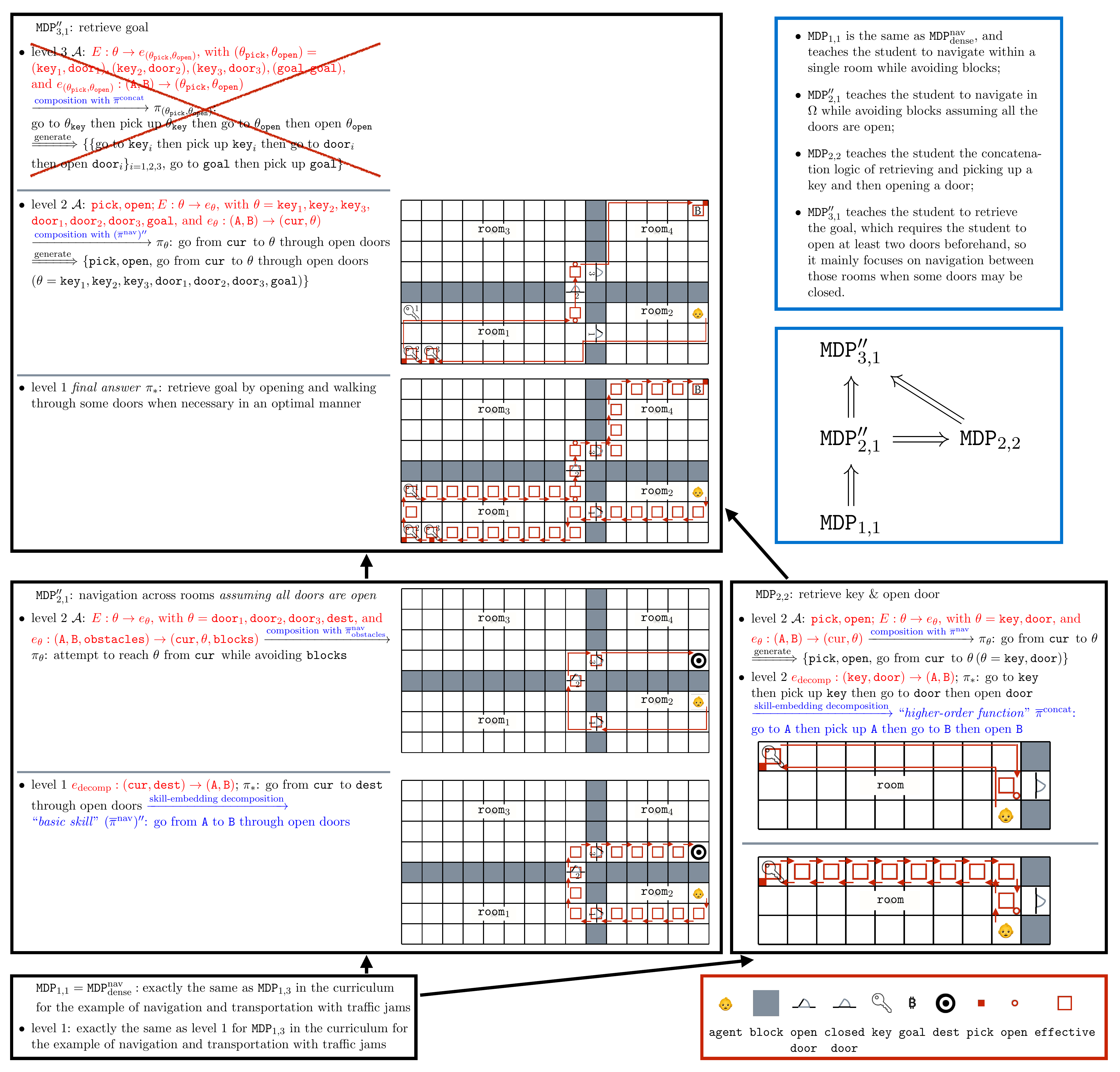}
\caption{
A representation of the third experiment of the MazeBase+ example. The representation in this figure is similar to the one for the first experiment (Fig.~\ref{f:MazeBase}). In the third experiment, we consider the situation where the optimal policy at the highest level, when refined to a finer level, yields a suboptimal policy where the student collects $\key_2$, opens $\door_2$, and then collects $\key_3$ (very close to $\key_2$) to then open $\door_3$. Within our algorithm, this suboptimal policy gets refined in order to yield the optimal policy, demonstrating the robustness of our optimization procedure; in this case it is also the case that our algorithm still outperforms na\"ive value iteration (see Fig.~\ref{f:MazeBase_ours23} (right)). 
}
\label{f:MazeBase3} 
\end{figure}

\subsubsection{Online learning}
\label{box:MazeBase-online}

When new MDPs come into the curriculum, such as $\MDP'_{2,1}$ and $\MDP'_{3,1}$ here, the student can solve them following the strict lexicographic order defined on the MDPs as in \eqref{e:curriculum}, while the assistant extracts out new skills, such as $(\overline{\pi}^{\mathrm{nav}})'$ and adds to the public skill set $\Skill$. The student can also utilize all the skills in the skill set, both the ones already there and ones newly added, when solving the MDPs. For instance, the student utilizes $\overline{\pi}^{\mathrm{concat}}$ which are already there in the skill set, and $(\overline{\pi}^{\mathrm{nav}})'$, which is newly added when solving $\MDP'_{3,1}$, in order to solve it faster. In this way, all the current learned knowledge does not need to be relearned again, so this whole process is in a fully online fashion.

\subsubsection{Robustness in the refinement procedure}
\label{s:MazeBase-robust}
We introduce a new problem $\MDP''_{3,1}$, similar to $\MDP_{3,1}$, with the main difference lying in the geometric configuration of the objects. We omit the details of the changes, but hope to emphasize that one main change is that now both $\key_2$ and $\key_3$ are in $\room_1$, and they are next to each other, so the optimal policy is to pick up both keys, before opening $\door_2$, rather than pick up $\key_2$ then open $\door_2$ and go back to pick up $\door_3$ then open $\door_3$ as hinted by $\overline{\pi}^{\mathrm{concat}}$. 
In this situation, the optimal policy at the highest level, when refined to a finer level, is suboptimal and requires, within our algorithm, significant refinement in order to yield the optimal policy, demonstrating the robustness of our optimization procedure: see Fig.~\ref{f:MazeBase3}; Fig.~\ref{f:MazeBase_ours23} (right) shows that even in this setting our algorithm outperforms na\"ive value iteration.

\section{Navigation and transportation with traffic jams: an example MazeBase+ is based on with multiple action factors}
\label{s:examples-MMDP-transfer-learning}

In this section, we provide another example curricula: navigation and transportation with traffic jams, an example, with multiple action factors, that the first example MazeBase+ is based on. 

\subsection{The example of navigation and transportation with traffic jams, part I}

\label{box:MDPs-traffic}

\begin{figure}[t]
\begin{center}
\includegraphics[width=\textwidth]{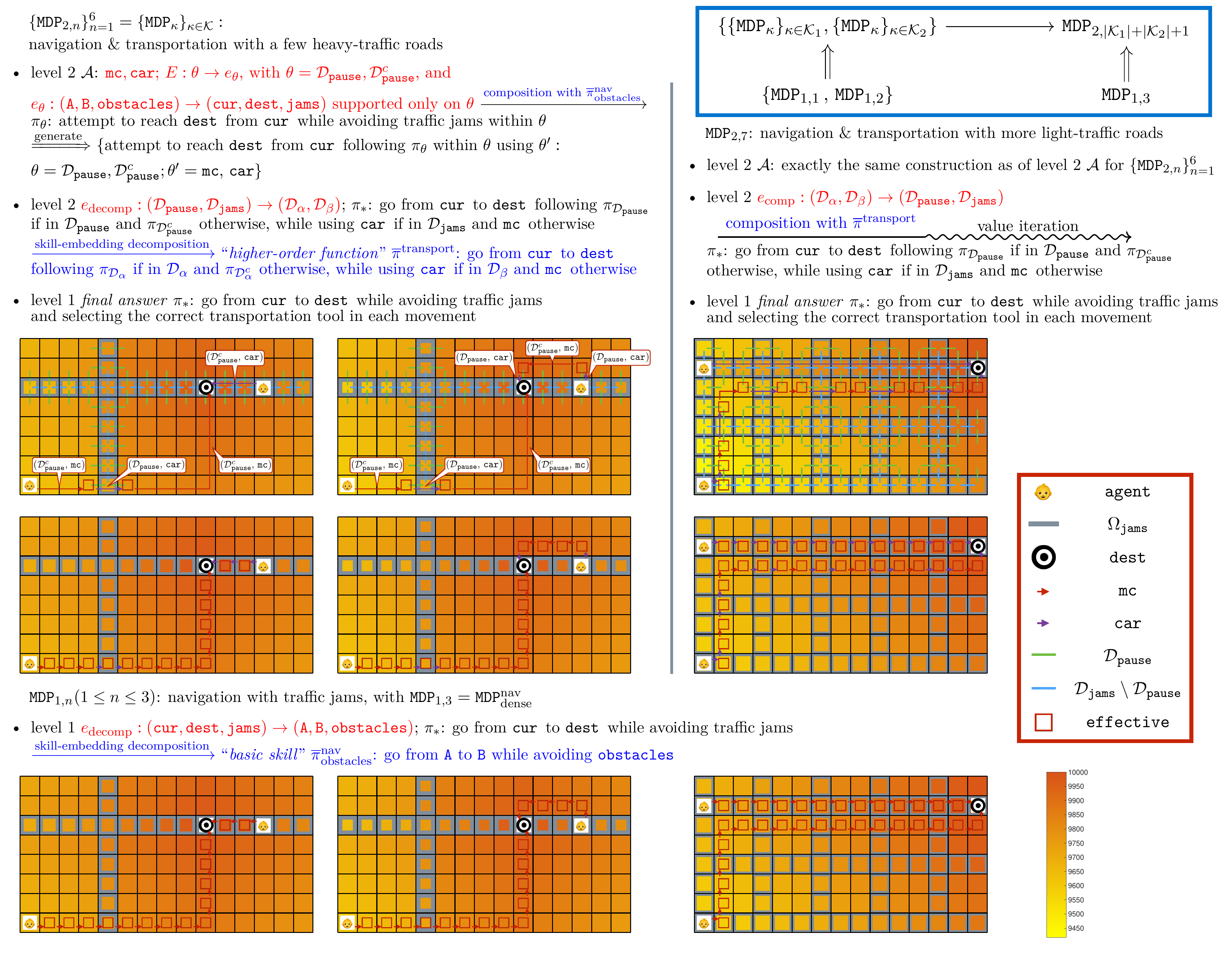}
\captionof{figure}{A representation of the example of navigation and transportation with traffic jams, with curriculum in the blue inset (top right), and the MMDPs in the curriculum, with their various levels, depicted and summarized in the black insets. The representation in this figure is similar to the one for the first experiment (Fig.~\ref{f:MazeBase}). 
The goal of the agent is to learn optimal policies to travel from an initial position in a grid world to a known final state in any position in the grid world.
Some regions, $\jams$, in the grid world have traffic jams (of different degrees of severity). 
The agent will have at its disposal two means of transportation, $\motor$ and $\car$, with $\car$ having different velocities in $\jams$ and $\nojams$.
}
\label{f:traffic} 
\end{center}
\end{figure}

We now introduce this pedagogical ``navigation and transportation with traffic jams'' example to one again walk the reader through the concepts and notations we introduce. Please see Fig.~\ref{f:traffic} for a comprehensive summary of this example and the corresponding curriculum, 
represented in the blue inset on the top right, 
where a double-line arrow means the knowledge learned from one MDP (starting point of the arrow) is utilized when \textit{constructing} solving another MDP (endpoint of the arrow), and a single-line arrow means the knowledge learned from one MDP (starting point of the arrow) is utilized when \textit{solving} another MDP (endpoint of the arrow). 
The structure of the curriculum is reflected by the multi-level MDPs in the black boxes, also connected by corresponding arrows. 
The curriculum has two sets of problems, corresponding to two difficulties.
The problems $\{\MDP_{1,n}\}_{n=1}^3$ of difficulty $1$, at the bottom of Fig.~\ref{f:traffic}, are designed to teach the agent how to navigate, without a choice of means of transportation, and only through basic up/down/left/right moves (red arrows).
Different $n$'s correspond to different velocities in the $\jams$ regions with $\car$, and, for $n=3$, a different choice of $\jams$.
The agent will learn optimal policies for these problems that become basic skills $\overline{\pi}^{\mathrm{nav}}_{\obstacles}$, to be utilized in higher level MDPs (upper part of the figure).
The problems $\{\MDP_{\kappa}\}_{\kappa\in\mathcal{K}}$ of difficulty $2$ involve both navigation and means of transportation; different $\kappa$'s correspond to different velocities in the $\jams$ regions with $\car$, separated into two classes $\mathcal{K}_1$ (first column) and $\mathcal{K}_2$ (second column), requiring two different optimal navigation rules because of the different velocities in $\jams$ vs. $\nojams$. 
A single action at level $2$ corresponds to an entire skill at level $1$, up to a composition/decomposition with family of maps that we call embeddings (represented by corresponding different types of arrows), and therefore leads to a long and complex path (long red arrow).
Roughly speaking, each skill is a parametric family of policies, and in some of the figures above we display the actual parameter for each action (at level $2$); for example $(\nochangemeans,\motor)$ denotes the action at level $2$ consisting of navigating with $\motor$ in a region not requiring changes in the means of transportation; $(\changemeans,\car)$ denotes the action at level 2 consisting of navigating with $\car$ in a region that does require a change in the means of transportation. 
The effective state space (red squares) at level $2$, consisting of final states of level $2$ actions, is much smaller than at level $1$. 
This leads to a significant speed-up in learning MDPs at level $2$, because we reduce the space of policies we search over. 
  The gray lines between adjacent grid points represent that the roads between them have certain traffic conditions: the thicker the lines, the heavier the traffic. 
The value functions are represented by heat maps; 
optimal policies starting from certain initial states (which could be at any location) are represented by concatenations of red arrows (single actions at that level) and red squares (effective states at that level), again showing the significant reduction in the effective state space and in the number of actions at level $2$.
The third column shows an MDP with a different configuration of traffic jam regions, often requiring many changes in the means of transportation, increasing the difficulty of learning: we will use transfer learning of higher-order functions to speed up the learning process.

Now we detail the target family of MDPs we consider. Given a two-dimensional grid world $\gridworld:=[x_1,x_2]\times [y_1,y_2]\subseteq\mathbb{N}\times \mathbb{N}$, a first action factor $\cA_1:=\cA_{\dir}\cup\{\endactionfactor\}$ with the set of moves in different directions given by $\cA_{\dir}:=\{(1,0),(0,1),(-1,0),(0,-1)\}$, and a second action factor $\cA_2:=\cA_{\means}\cup\{\endactionfactor\}$, with the set of uses of different means of transportation given by $\cA_\means:=\{\motor,\car\}$, motorcycle and car respectively, having different velocities (and corresponding rewards, which we detail momentarily). We try to teach an agent (student) how to travel from point $s$ to point $s'$ while selecting the most efficient means of transportation for each step along the way.
To mimic a variety of road conditions, we add two roads of heavy traffic: $[x_1,x_2]\times \{y_3\}$ and $\{x_3\}\times[y_1,y_2]$, where $x_1<x_3<x_2, y_1<y_3<y_2$ (here we take $x_1=1,x_2=15,x_3=5,y_1=1,y_2=8,y_3=6$), and we denote their union as $\jams$. 
The traffic rules are: generally, the agent could use either the motorcycle or the car, with corresponding speeds $v_{\motor},v_{\car}$, satisfying $v_{\motor}=1> v_{\car}=0.6$; if the agent is traveling along, entering, or leaving any of these two roads with $\motor$, it will receive a large negative penalty in the reward; with $\car$ its speed will be $\kappa$ for some $\kappa< v_{\car}$. The reward at each step is $\frac{r_0}{v}$, with $r_0<0$ (set to $-10$ here), and $v$ the agent's current speed, i.e., $v=v_{\motor}$ if the agent has chosen $\motor$ in the current action, $v=\kappa$ if the agent is traveling along, entering, or leaving any of the two roads of heavy traffic with $\car$, and otherwise $v=v_{\car}$.

We consider a family $\{\MDP_\kappa\}_{\kappa\in\mathcal{K}}$ of MDPs as above, with varying $\kappa$, which can and will affect the optimal policy. 
The formal definition of these MDPs is postponed to \eqref{e:MDPs-traffic} in App.~\ref{s:MDPs-traffic}; here we specify the state space and action set as follows:
\begin{equation*} 
\cS:=\{(\cur,\dests):\cur,\dests\in\gridworld\}=\gridworld\times\gridworld\quad,\quad
\cA:=(\cA_{\dir}\cup\{\endactionfactor\})\times (\cA_{\means}\cup\{\endactionfactor\})\,.
\end{equation*}
Here the teacher has the role to set all the parameters of the problems, including the reward at $\dest$ or the negative reward for traveling along/entering/leaving $\jams$ using $\motor$. 
The value of $\kappa$ affects the speed of the car in traffic regions.

\subsubsection{Action factors and partial policy generators}

\label{box:ppg-traffic}

For the previously defined $\MDP_\kappa$, there are two action factors $\cA_\dir\cup\{\endactionfactor\}$ and $\cA_\means\cup\{\endactionfactor\}$, with active action factor sets $I_1:=\dir$ and $I_2:=\means$, respectively.

Observe that there will be entanglement between the two action factors, modeling navigation and means of transportation, along typical optimal routes.
In order to efficiently solve this target family of problems, we exploit the similarities between them to enable potential transfer; we start by disentangling the action factors and introducing partial policies used in this example, and we will then combine these partial policies across action factors to obtain full optimal policies.

We define the transition region $\changemeans:=\{((\cur,\dests), (a_\dir,0))\in\cSA_\dir:\cur\in\jams \text{ and } \cur+a_\dir\in\nojams, \text{ or } \cur\in\nojams \text{ and } \cur+a_\dir\in\jams\}$ as the set of all the state-action pairs that either enter or leave the region $\jams$ with traffic jams, $\nochangemeans := \cSA_\dir-\changemeans$.  
See Fig.~\ref{f:traffic} for an illustration of these important subsets of $\cSA_\dir = \{(\cur,\dests),(a_{\dir},0))\}$, where we use an undirected edge to represent two directed elements, for the sake of not cluttering the image.

Now, we will have two types of partial policy generators, corresponding to the two action factors $\cA_\dir\cup\{\endactionfactor\}$ and $\cA_\means\cup\{\endactionfactor\}$, which we {\it{name}} $(g\levidx{1}_\kappa)_{\dir} (\kappa\in\mathcal{K})$ and $(g\levidx{1})_{\means}$, with active action factor sets $I_1:=\dir$ and $I_2:=\means$, respectively.

For each $\kappa\in\mathcal{K}$, $(g\levidx{1}_\kappa)_{\dir}: \Theta_\dir\rightarrow \{(\pi^{\mathrm{nav}_{\kappa}}_{\theta})_{\dir}\}_{\theta\in\Theta_\dir}$ is defined by $(g\levidx{1}_\kappa)_{\dir}(\theta):=(\pi^{\mathrm{nav}_{\kappa}}_{\theta})_{\dir}$, where $\Theta_{\dir}:=\{\changemeans,\nochangemeans\}$ indexes partial policies $(\pi^{\mathrm{nav}_{\kappa}}_{\theta})_{\dir}:\cSA\levidx{1}_\dir \rightarrow [0,1]$, such that each $(\pi^{\mathrm{nav}_{\kappa}}_{\theta})_{\dir}$ takes positive values only in $\nochangemeans$ or only in $\changemeans\cup\{(\cur,\goal),(\endactionfactor,0)\}$.
This partial policy is oblivious of the means of transportation, and represents only navigation along paths.
This partial policy generator conveys information: $\changemeans$ suggests where the agent may change the means of transportation, and $\nochangemeans$ suggests that there might not be a benefit in changing the means of transportation, so the agent may focus on navigation only.

The other partial policy generator $(g\levidx{1})_{\means}$ is simpler. It is defined on $\Theta_\means:=\cA_\means$,
and $(g\levidx{1})_{\means}:\cA_{\means}\rightarrow \{(\pi_{\theta'})_{\means}\}_{\theta'\in\cA_{\means}}$, mapping $\theta'$ to $(\pi_{\theta'})_{\means}$.
It generates two partial policies $\{(\pi_{\theta'})_{\means}:\theta'\in\cA_{\means}\}$, with $(\pi_{\theta'})_{\means}:\cSA^{1}_{\means} \rightarrow \{0,1\}$ (defined in \eqref{eq:pithetameans}) representing the partial policy of choosing a fixed means of transportation $\theta'$.

\subsubsection{Partial policy generator set}
\label{box:ppgs-traffic}
For the first level, $\MDP\levidx{1}_{\kappa}:=(\cS\levidx{1},(\Sinit)\levidx{1},(\Sterm)\levidx{1},\cA\levidx{1},P\levidx{1},R\levidx{1}_{\kappa},\Gamma\levidx{1})=(\cS,{\Sinit},{\Sterm},\cA,P,R_{\kappa},\Gamma)$. 
The teacher provides 
\begin{equation}  \label{e:G1-traffic}
\cG\levidx{1}:=\{\{(g\levidx{1}_\kappa)_{\dir}\}_{\kappa\in\mathcal{K}},(g\levidx{1})_{\means}\}\,,
\end{equation}
whose elements were defined as in Sec.~\ref{box:ppg-traffic}. Notice that it may happen that $(g\levidx{1}_\kappa)_{\dir}=(g\levidx{1}_{\kappa'})_{\dir}$ for $\kappa\neq\kappa'$.
The student constructs the set of policies $\Pi\levidx{1}$ generated from the set of partial policies $\widetilde{\Pi}_{\cG\levidx{1}}=\{(\pi^{\mathrm{nav}_{\kappa}}_{\theta})_{\dir}\}_{\theta\in\Theta_{\dir},\kappa\in\mathcal{K}}\cup \{(\pi_{\theta'})_{\means}\}_{\theta'\in\cA_{\means}}$, which itself is generated from $\cG\levidx{1}$: 
\begin{align*} 
\Pi\levidx{1}=\{(\pi^{\mathrm{nav}_{\kappa}}_{\theta})_{\dir}\otimes (\pi_{\theta'})_{\means}:\theta\in\Theta_{\dir}, \theta'\in\cA_{\means}, \kappa\in \mathcal{K}\}\,, 
\end{align*}
where, as in \eqref{e:def-otimes}, $(\pi^{\mathrm{nav}_{\kappa}}_{\theta})_{\dir}\otimes (\pi_{\theta'})_{\means}:\cSA\levidx{1}\rightarrow [0,1]$ is, using the definition of $(\pi_{\theta'})_\means$ in \eqref{eq:pithetameans},
\begin{align*} 
&((\pi^{\mathrm{nav}_{\kappa}}_{\theta})_{\dir}\otimes (\pi_{\theta'})_{\means})((\cur,\dests),(a_\dir,a_\means))\\=&(\pi^{\mathrm{nav}_{\kappa}}_{\theta})_{\dir}((\cur,\dests),(a_\dir,0))\mult\indic_{\{\theta'\}}(a_\means)\,.
\end{align*}
The right hand side, once $(\pi^{\mathrm{nav}_{\kappa}}_{\theta})_{\dir}$ will be learned, represents a policy for navigating from $\cur$ to $\dests$ within the domain $\mathrm{dom}_{\theta}\in\Theta_{\dir}$ using only $a_\means=\theta'$.
Note that each policy in $\Pi\levidx{1}$ is represented by an element in the product set $(\Theta_1\cup\{\emptyvalue\})^{|\mathcal{K}|}\times (\Theta_2\cup\{\emptyvalue\})$, with $\Theta_1 := \Theta_{\dir}\cup\{\endactionfactor\}$, and $\Theta_2 :=\cA_{\means}\cup\{\endactionfactor\}$. 
For instance, $(\pi^{\mathrm{nav}_{\kappa}}_{\theta})_{\dir}\otimes (\pi_{\car})_{\means}$ is represented by a vector with $\theta$ for the entry corresponding to $\kappa$ and $\car$ for the last entry, and $\emptyvalue$ for all the other $|\mathcal{K}|-1$ entries; it corresponds to a policy of navigating inside $\mathrm{dom}_\theta$ from $\cur$ to $\dests$ with $\car$; if $\mathrm{dom}_{\theta}=\nochangemeans$, note that this policy will lead to $\dests$ only when $\cur$ and $\dests$ are not separated by $\jams$.

These partial policy generators are ripe for being used to build higher-level actions for MMDPs, as we now discuss.

\subsubsection{Inputs for the construction of MMDPs}
\label{box:inputs-traffic}
The provided {\bf \policysequence} $\{\cG\levidx{l}\}_{l=1}^\infty$ (same for any $\MDP_{\kappa}$) for this example consists of $\cG\levidx{1}$ defined as in \eqref{e:G1-traffic}, and $\cG\levidx{l} := \varnothing$ for $l\geq 2$. There are multiple options for $\{\cG\levidx{l}_{\kappa,\text{test}}\}_{l=1}^\infty$ here, with these restrictions: (1) $\pi_{\kappa,\ast}\notin \cG\levidx{1}_{\kappa,\text{test}}$; (2) $\pi\levidx{2}_{\kappa,\ast}\in \cG\levidx{2}_{\kappa,\text{test}}$, with $\pi_{\kappa,\ast}=\pi_{\kappa}$, $\pi\levidx{2}_{\kappa,\ast}$ the optimal policies of $\MDP_{\kappa}$, $\MDP\levidx{2}_{\kappa}$ and provided in \eqref{e:first-level-policy-eqn}, \eqref{e:second-level-policy-eqn} respectively. These two conditions together guarantee that MDPs in $\{\MDP_{\kappa}\}_{\kappa\in\mathcal{K}}$ are all of difficulty $2$ (See Sec.~\ref{s:MMDP-def}).
The timescales of both $(g\levidx{1}_\kappa)_{\dir}$ and $(g\levidx{1})_{\means}$ are $+\infty$. We let $r\levidx{1} = -10$.

The student then constructs the second-level $\MDP\levidx{2}_{\kappa}:=(\cS,\Sinit,\Sterm,\overline{\Pi\levidx{1}},P\levidx{2},R\levidx{2}_{\kappa},\Gamma\levidx{2})$, using the inputs above and the procedures we described in Sec.~\ref{s:MMDP-def}.

\subsubsection{Unpacking compressed policies}
\label{box:unpack-traffic}
Once the second-level MDP $\MDP\levidx{2}_{\kappa}=(\cS,\Sinit,\Sterm,\overline{\Pi\levidx{1}},P\levidx{2},R\levidx{2}_{\kappa},\Gamma\levidx{2})$, for each $\kappa\in\mathcal{K}$, is constructed, it can be solved to find the optimal policy $\pi_{\kappa,\ast}\levidx{2}$. 
We construct stochastic trajectories starting from each state $s\in \cS$ by ``gluing'' together the actions $(\pi^{\mathrm{nav}_{\kappa'}}_{\theta})_{\dir}\otimes (\pi_{\theta'})_{\means}$ (defined in Sec.~\ref{box:ppgs-traffic}) following an order of the form $(\pi^{\mathrm{nav}_{\kappa_0}}_{\theta_0})_{\dir}\otimes (\pi_{\theta'_0})_{\means}, \cdots, (\pi^{\mathrm{nav}_{\kappa_t}}_{\theta_t})_{\dir}\otimes (\pi_{\theta'_t})_{\means}, \cdots$: the agent starts from $S_0=s$ in $\MDP_{\kappa}\levidx{2}$, and chooses actions in $A_t = (\pi^{\mathrm{nav}_{\kappa_t}}_{\theta_t})_{\dir}\otimes (\pi_{\theta'_t})_{\means}$ for any $0\leq t\leq \tau-1$ with $\tau$ being the first time $t$ such that $S_t \in\Sterm$ (if such event does not occur, we set $\tau=+\infty$). 
For each state $s$, the sequence of actions needs to be optimized to maximize the expected cumulative rewards along the stochastic trajectories. 
Following the description here, the value $\pi_{\kappa,\ast}\levidx{2}(s,(\pi^{\mathrm{nav}_{\kappa}}_{\theta})_{\dir}\otimes (\pi_{\theta'})_{\means})$, for each $\theta\in\Theta_\dir, \theta'\in\cA_{\means}$, $s=(\cur,\dests)$, equals \eqref{e:second-level-policy-eqn}.

Note that the minimization of $\theta'_t$ is trivial at any state $s$.
Here $\djams:=\{(s,(\dir,0))\in\cSA_{\dir}:\cur\in\jams \text{ or } \cur+a_\dir\in\jams\}$ is the set of state-action pairs such that either the agent's current location is in $\jams$, or the agent intends to move to $\jams$; $\dnojams := \cSA_{\dir}\setminus\djams$. See Fig.~\ref{f:traffic} for an illustration of these important subsets of $\cSA_{\dir} = \{(\cur,\dests),(a_{\dir},0))\}$ appearing in this higher-order policy $\pi_{\kappa,\ast}\levidx{2}$, where again we use an undirected edge to represent two directed elements, for the sake of not cluttering the image. In addition, recall that for any two (random) times $0\le T<T'<\infty$ (a.s.), $(R_{\kappa}\levidx{2})_{T,T'}$ is defined as in \eqref{e:cumulative-rewards} for $\MDP_{\kappa}\levidx{2}$. 
$\MDP_{\kappa}\levidx{2}$ has at least two key advantages compared to the level 1 MDP: first, it has much shorter time horizon, as a single action moves the agent by multiple steps all within a connected region of $\changemeans$ or $\nochangemeans\setminus\{(\cur,\dests),(\endactionfactor,0)\}$; second, the stochasticity is greatly reduced as it is absorbed into each higher-level navigation policy, further simplifying the optimization in \eqref{e:second-level-policy}.

Note how \eqref{e:second-level-policy-eqn} and \eqref{e:second-level-policy} cause an interplay between the tensor-product structure of $A_t$ and the geometry of $\jams$: the student needs to stop and reconsider which means of transportation to use every time it enters or leaves $\jams$.
Also note that it is crucial here that the navigation policies used at this level terminate, by choosing $\endactionfactor$ at very precise times and locations, instead of relying on random stopping times.
 
Finally, the student solves the original MDPs, by using \eqref{e: convolution} to pass the optimal policy of each $\MDP\levidx{2}_{\kappa}$ down to level one and refine it, resulting in the policy $\pi_{\kappa}$ as in \eqref{e:first-level-policy-eqn}, where $({\pi}^{\mathrm{nav}_{\kappa}})_\dir$ is defined in the forthcoming Sec.~\ref{box:compose-traffic}.
In this particular example, $\pi_{\kappa}$ is in fact the optimal policy $\pi_{\kappa,\ast}$ of the original MDP, requiring no additional refinement by value iteration.

\subsection{The example of navigation and transportation with traffic jams, part II}

\label{box:compose-traffic}
In the example of navigation and transportation with traffic jams, we now explain how the partial policy generators $(g\levidx{1}_\kappa)_{\dir} (\kappa\in\mathcal{K})$, $(g\levidx{1})_{\means}$ in $\cG\levidx{1}$ can be constructed by composing skills with two embedding generators $E\levidx{1}_\alpha$ and $E\levidx{1}_\beta$ respectively.

For each $\kappa\in\mathcal{K}$, $(g\levidx{1}_\kappa)_{\dir}$ is the composition of $\overline{\pi}^{\mathrm{nav}_{\kappa}}_{\mathrm{obstacles}}$ and $E\levidx{1}_{\alpha}$, which we now define. 
We let the navigation skill $\overline{\pi}^{\mathrm{nav}_{\kappa}}_{\mathrm{obstacles}}:\{(\cur,\dests,a_{\dir}):\cur,\dests\in\gridworld,a_{\dir}\in\cA_{\dir}\cup\{\endactionfactor\}\}\rightarrow [0,1]$, with timescale $+\infty$, be essentially the same as $({\pi}^{\mathrm{nav}_{\kappa}})_\dir$: $\overline{\pi}^{\mathrm{nav}_{\kappa}}_{\mathrm{obstacles}}(\cur,\dests,a_\dir):=({\pi}^{\mathrm{nav}_{\kappa}})_\dir((\cur,\dests),(a_\dir,0))$, which coincides with $({\pi}_{\changemeans}^{\mathrm{nav}_{\kappa}})_\dir$ on $\changemeans$ and $({\pi}_{\nochangemeans}^{\mathrm{nav}_{\kappa}})_\dir$ on $\nochangemeans$, introduced in Sec.~\ref{box:ppg-traffic}.
Such a skill $\overline{\pi}^{\mathrm{nav}_{\kappa}}_{\mathrm{obstacles}}$ will be learned from some other auxiliary MDPs focusing on navigation only in a curriculum as discussed momentarily. 

$E\levidx{1}_{\alpha}$ is defined as in \eqref{e:E1alpha}, generating embeddings $(e^1)\levidx{\dir}_{\theta}$. While $(e^1)\levidx{\dir}_{\theta}$ may look like a glorified identity, the crucial point here is that $E\levidx{1}_\alpha$ lets the teacher provide the student with the information that $\changemeans$ and $\nochangemeans$ is a partition of $\cSA\levidx{1}_\dir$ important for learning an optimal policy, since $(e^1)\levidx{\dir}_{\theta}$ is supported on $\theta$, where $\theta\in\Theta_{\dir}$.

$(g\levidx{1})_{\means}$ is the composition of the degenerate skill, which is the identity map on $[0,1]$, and $E\levidx{1}_{\beta}$ as defined in \eqref{e:E1beta}, generating embeddings taking out action information of the means of transportation.

\subsection{The example of navigation and transportation with traffic jams, part III}

\label{s:curriculum-traffic}
In the example of navigation and transportation with traffic jams, the teacher provides a curriculum containing two types of MDPs: $\MDP_{1,n} (1\leq n\leq n_1:=2)$ and $\MDP_{2,n} (1\leq n\leq n_2:=|\mathcal{K}|=6)$. 
$\MDP_{1,n}$, of difficulty $1$ and with detailed definition in \eqref{e:MDP-basic-traffic}, teaches the student to navigate through $\gridworld$, with no choice of means of transportation, but with $n_1=2$ different values of the parameter $\kappa$, corresponding to light and heavy traffic jams, and inducing different optimal policies. The MDPs $\{\MDP_{2,n}\}_{1\leq n\leq n_2}$, of difficulty $2$, are $\{\MDP_{\kappa}\}_{\kappa\in \mathcal{K}}$ as previously mentioned (with detailed definition in \eqref{e:MDPs-traffic}), with $n_2=|\mathcal{K}|=6$; these are our main objectives and require combining navigation with transportation.

\subsubsection{Analytical run of the algorithm}
\label{s:analytical-traffic}
The following goes through how Algs.~\ref{algorithm:learn-MDPs}--\ref{algorithm:learn-MDP} solve these MDPs analytically by constructing MMDPs, where $t_{\min} = t_{\max} = +\infty$. See App.~\ref{s:all-traffic} for the equations needed here as well as the mathematical notations mentioned here.

\begin{enumerate}[leftmargin=0.5cm,itemsep=0pt]
\item[$\bullet$] The student finds an optimal policy $\pi_{1,n,\ast}\levidx{1}$ for $\MDP_{1,n}\levidx{1}:=\MDP_{1,n} (1\leq n\leq n_1)$,
and when the teacher provides the trivial embedding $(e_{\decomp})_{1}\levidx{1}$ as in \eqref{e:e11decomp-traffic}, the assistant extracts two navigation skills 
$\overline{\pi}^{\mathrm{nav}_{n}}_{\mathrm{obstacles}}$ as in \eqref{e:navskill-traffic}, with timescales $+\infty$.
Both navigation skills are about finding the shortest path between $\cur$ and $\dests$, where each grid point is a vertex in the graph, and there are edges between two grid points if the agent can use one step in $\cA_{\dir}$ to move between them, with edge weights reflecting the time the agent needs to take for that single step as shown in the reward functions. 

\item[$\bullet$] Let $\MDP_{2,n} := \MDP_{\kappa_n}$, $1\leq n \leq n_2$, $\kappa_n\in\mathcal{K}$, be an MDP of difficulty $2$, which we have discussed thoroughly in the previous sections. 
The student constructs, for each $\MDP_{2,n}$ a two-level MMDP, as in Sec.~\ref{box:inputs-traffic}, to solve it; for $\theta\in\Theta_{\dir}, \theta'\in\cA_{\means}$, the timescales for $(\pi^{\mathrm{nav}_{\kappa}}_{\theta})_{\dir}\otimes (\pi_{\theta'})_{\means}$ in $\Pi\levidx{1}$ are $t_{(\pi^{\mathrm{nav}_{\kappa}}_{\theta})_{\dir}\otimes (\pi_{\theta'})_{\means}}=\min\{t_{(\pi^{\mathrm{nav}_{\kappa}}_{\theta})_{\dir}},t_{(\pi_{\theta'})_{\means}}\}=+\infty$. Then, the student solves level 2 $\MDP_{2,n}$ and finally solves level 1 of $\MDP_{2,n}$ (See Sec.~\ref{box:unpack-traffic} for details).
\end{enumerate}

\subsubsection{Merits of action factors and numerical run of the algorithm}
\label{box:Merits-action-factors-traffic}

In each $\MDP_{2,n}$, the action set consists of two action factors $\cA_{\dir}\cup\{\endactionfactor\}$, $\cA_{\means}\cup\{\endactionfactor\}$ independent of each other, which takes charge of navigation and transportation respectively. 
This tensor product structure enables transfer to $\MDP_{2,n}$, for $1\le n\leq n_2$, of the skills $\overline{\pi}^{\mathrm{nav}_{n}}_{\mathrm{obstacles}}(1\leq n\leq n_1)$ in the action factor $\cA_{\dir}\cup\{\endactionfactor\}$, with $\{\overline{\pi}^{\mathrm{nav}_{n}}_{\mathrm{obstacles}}:1\leq n\leq n_1\}$ exactly the same as $\{\overline{\pi}^{\mathrm{nav}_{\kappa}}_{\mathrm{obstacles}}:\kappa\in\mathcal{K}\}$, the set of skills we need for deriving $\cG\levidx{1}$ defined as in \eqref{e:G1-traffic} and constructing $\MDP\levidx{2}_{2,n}$. 
The optimization over $\cA_{\means}$ occurs at level 2 of $\MDP_{2,n}$, and the combination with $\cA_{\dir}$ is optimized at level 1 of $\MDP_{2,n}$.
In words: before going to a certain destination, the student first solves $\MDP_{1,n}$ to find virtual routes to the destination, then solving level 2 $\MDP_{2,n}$ yields the optimal means for those routes, based on traffic conditions, and finally solving level 1 of $\MDP_{2,n}$ yields the optimal combination of routes and means of transportation. 
The embedding generator $E_\alpha^1$, provided by the teacher, allows the student to break routes at locations in $\changemeans$, where it can then optimally change the means of transportation.

Reflecting on what we accomplished in this example using action factors, we have again achieved the three merits discussed in the end of Sec.~\ref{s:curriculum}. In particular, it can speed up the solution of the MDPs by leveraging transfer of skills within the curriculum, see Fig.~\ref{f:navigation_ours}. These extra iterations correspond to learning how to stitch different pieces of routes separated by $\changemeans$, which contains the turning points at which the student may need to switch the means of transportation. This is illustrated by the forthcoming Thm.~\ref{t:complexity}.
Of course, the cost of solving the $\MDP_{1,n}$'s (which for our algorithm is the same as for classical value iteration) is amortized over solving possibly many $\MDP_{2,n}$'s, showcasing the power of transfer in our framework. This is also analyzed in general by the forthcoming Thm.~\ref{t:transfer-2}. 

\begin{figure}[t]
\begin{center}
\includegraphics[width=\textwidth]{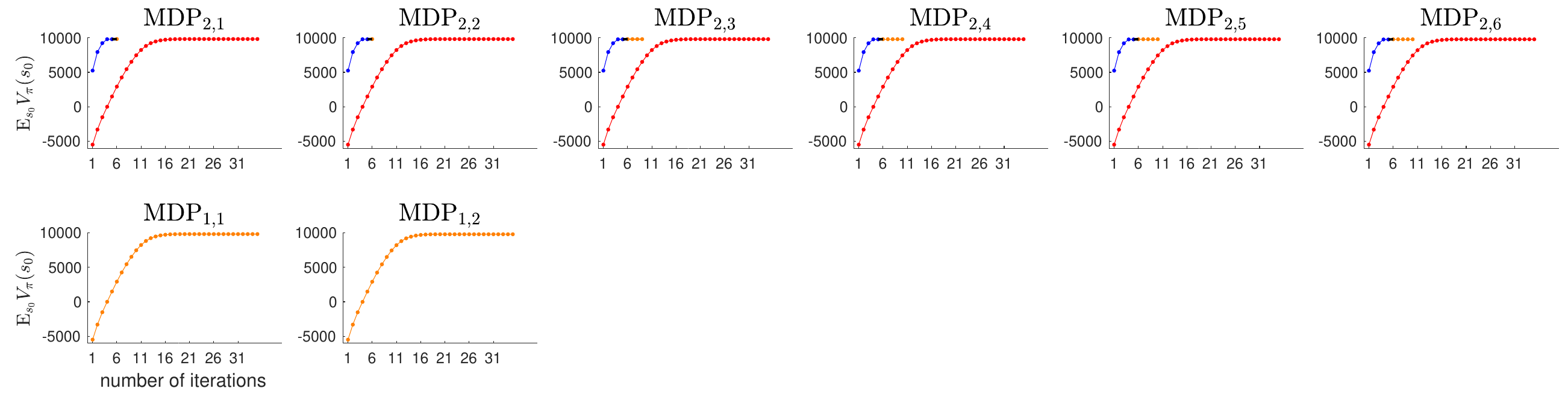}
\caption{Similar to Fig.~\ref{f:MazeBase_ours}, we display $\mathbb{E}_{s_0}V_\pi(s_0)$, where $V_\pi$ is the value function for $\MDP_{1,n}$ and $\MDP_{2,n}$, as $\pi$ is optimized during iterations of classical value iteration (in red) and of value iteration within our algorithm, 
with iterations within $\MDP_{1,n}$ in orange, and iterations within $\MDP_{2,n}$ at level $2$ in blue followed by iterations at level $1$ in orange. 
Although we spend extra effort in solving the $\MDP_{1,n}$'s, they prepare us well-enough so that we only need a few more iterations for solving the $\MDP_{2,n}$'s (blue+orange) -- much fewer than if we solved them from scratch using classical value iteration (red). 
}
\label{f:navigation_ours} 
\end{center}
\end{figure}

\subsubsection{Transfer learning of the higher-order function}
\label{box:transfer-traffic}
When there are many roads of traffic (say, $\Omega'_{\mathtt{jams}}:=[x_1,x_2]\times \{y_1,y_1+3,\cdots\}\cup\{x_1,x_1+3,\cdots\}\times [y_1,y_2]$), the agent if forced to switch the means of transportation every few steps. 
This problem can be resolved in our framework by using transfer learning to greatly speed up the learning of the second-level policy. 

The main idea is that if we take out one element of $\mathcal{K}$, say $\kappa_1$, we observe that the second-level policy $\pi\levidx{2}_{\kappa_1,\ast}$, given by \eqref{e:second-level-policy-eqn}, depends on the exact choice of $\Omega_{\mathtt{jams}}$, but only mildly, and such dependence could be decomposed using an embedding. Semantically, $\pi\levidx{2}_{\kappa_1,\ast}$ selects the index of the navigation partial policy, which determines both the support of the navigation policy and which navigation policy, and the means of transportation, independently.
(Here for simplicity we restrict the embedding in the decomposition only to a single navigation policy and focus on selecting onto which support it is restricted.)  
We use the following logic: the agent selects the index $\theta$ if $((\cur,\dests),(a_\dir,0))\in \theta$ for some $\theta\in \Theta_\dir$, and selects the means of transportation $\theta'=\car$ if $((\cur,\dests),(a_\dir,0))\in \djams$ and $\theta'=\motor$ otherwise. Here, $\cur$, $\dests$ are the agent's current location and destination respectively, and $a_\dir$ is the intended direction according to $({\pi}^{\mathrm{nav}_{\kappa_1}})_\dir$, as introduced in Sec.~\ref{box:ppg-traffic}. Notice here $\djams$ or $\changemeans$ are different when we change $\Omega_{\mathtt{jams}}$ to $\Omega'_{\mathtt{jams}}$, so we need an embedding to encapsulate the ``if-conditions'' in the logic above, after which a higher-order function could be extracted from the second-level policy $\pi\levidx{2}_{\kappa_1,\ast}$, which is purely about the ``if-then'' logic, the core part in $\pi\levidx{2}_{\kappa_1,\ast}$ of interest to us, and also transferrable across different geometries, in some sense achieving ``few-shot learning''.

Formally, after learning $\pi\levidx{2}_{\kappa_1,\ast}$, if the teacher provides the embedding, defined as in \eqref{e:embedding-transfer-traffic}, taking out from the state-action pair the information of which $\theta$-region $((\cur,\dests),(a_\dir,0))$ lies in, whether $((\cur,\dests),(a_\dir,0))$ is in $\djams$ or not, the agent's intended direction according to $({\pi}^{\mathrm{nav}_{\kappa_1}})_\dir$, as well as the means of transportation considered, then the assistant extracts a higher-order function for selecting the index of the navigation policy and the means of transportation $\overline{\pi}^{\mathrm{transport}}$ based on the ``if-then'' logic described above, defined in \eqref{e:transport-skill} and with timescale $t_{\overline{\pi}^{\mathrm{transport}}}=+\infty$.

Next, we show how the transfer learning of this higher-order function could be achieved in this example. Given the new region of traffic jams $\Omega'_{\mathtt{jams}}=[x_1,x_2]\times \{y_1,y_1+3,\cdots\}\cup \{x_1,x_1+3,\cdots\}\times [y_1,y_2]$ containing many more roads, displayed in the insets on the right column in Fig.~\ref{f:traffic}, the teacher adds to the curriculum $\MDP_{1,n_1+1}$, similar to $\MDP_{1,n}$, and $\MDP_{2,n_2+1}$, similar to $\MDP_{2,n}$, except that $1/\kappa=1.1$, and correspondingly $1/v_{\car} = 1.05$, for both these new MDPs, describing a new scenario where the traffic jams are light. Sometimes we may also refer to $\MDP_{1,n_1+1}$ as $\MDP_{\text{dense}}^{\text{nav}}$.

The process of solving the two new MDPs in the curriculum uses the same Algs.~\ref{algorithm:learn-MDPs}--\ref{algorithm:learn-MDP}, but we also extract a navigation skill $\overline{\pi}^{\mathrm{nav}}_\mathrm{dense}$, and we also have in hand the higher-order function $\overline{\pi}^{\mathrm{transport}}$ extracted previously for selecting the means of transportation. Therefore, after the student constructs $\MDP_{2,n_2+1}\levidx{2}$, the teacher provides the hint to use the skill $\overline{\pi}^{\mathrm{transport}}$ and the embedding $(e_{\comp})_{2,n_2+1}$ for solving it, where $(e_{\comp})_{2,n_2+1}$ is defined similarly to $(e_{\decomp})_{2,1}\levidx{2}$. Then, the student uses the composition of skill $\overline{\pi}^{\mathrm{transport}}$ and $(e_{\comp})_{2,n_2+1}$, which is similar to $\pi_{2,1,\ast}\levidx{2}$, as the initial policy for solving $\MDP_{2,n_2+1}\levidx{2}$. This leads to a very fast learning of $\MDP_{2,n_2+1}\levidx{2}$.
On the other hand, if we do not utilize this opportunity of transfer learning when following our framework to solve $\MDP_{2,n_2+1}\levidx{2}$, we will need many more iterations, because there are many pieces of routes separated by $\cD'_{\chng}$ when selecting means of transportation given that the roads with traffic are much more densely-distributed in $\gridworld$ now. See Fig.~\ref{f:navigation_ours_denser_logic} for a comparison between the two options.

\subsubsection{Online learning}
\label{box:traffic-online}

Another comment is that the extra experiment described in Sec.~\ref{box:transfer-traffic} also shows that the student can achieve online learning in the current framework. When new MDPs come into the curriculum, such as $\MDP_{1,n_1+1}\levidx{1}$ and $\MDP_{2,n_2+1}\levidx{2}$ here, the student can solve them following the strict lexicographic order defined on the MDPs as in \eqref{e:curriculum}, while the assistant extracts out new skills, such as the new navigation skill $\overline{\pi}^{\mathrm{nav}}_\mathrm{dense}$, and adds to the public skill set $\Skill$. The student can also utilize all the skills in the skill set, both the ones already there and ones newly added, when solving the MDPs. For instance, in order to solve $\MDP_{2,n_2+1}\levidx{2}$ faster, the student utilizes the higher-order function $\overline{\pi}^{\mathrm{transport}}$ for selecting the means of transportation, which is already there in the skill set, and $\overline{\pi}^{\mathrm{nav}}_\mathrm{dense}$, which is the navigation skill newly added when solving $\MDP_{1,n_1+1}\levidx{1}$. In this way, all the current learned knowledge does not need to be relearned again, so this whole process is in a fully online fashion.

\begin{figure}[t]
\centering
\includegraphics[width=0.4\textwidth]{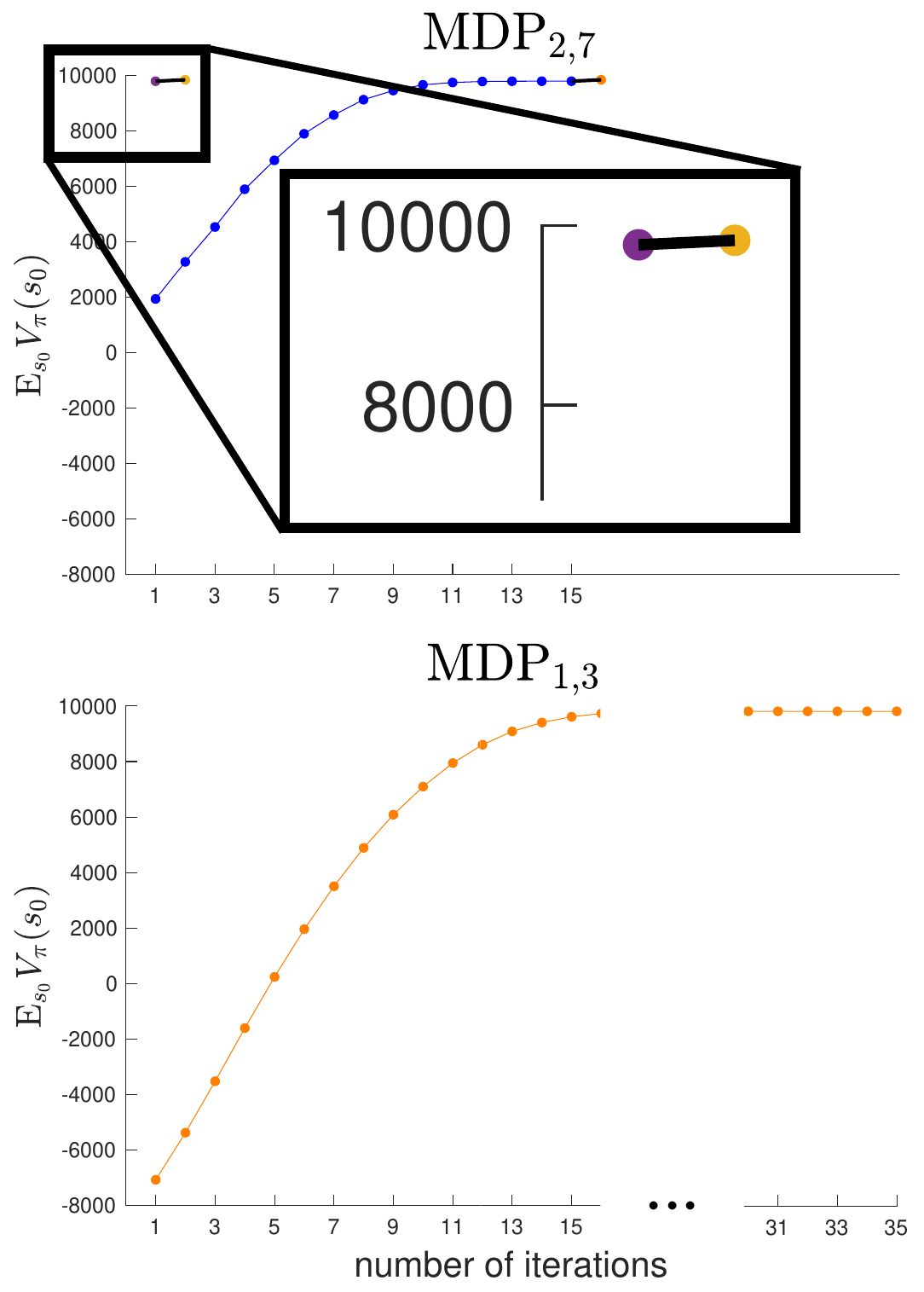}
\caption{Similar to Fig.~\ref{f:navigation_ours}, we display $\mathbb{E}_{s_0}V_\pi(s_0)$, where $V_\pi$ is the value function for $\MDP_{1,n_1+1}$ (in orange) and $\MDP_{2,n_2+1}$, as $\pi$ is optimized during iterations of value iteration within our algorithm. More specifically, within $\MDP_{2,n_2+1}$ we have iterations at level $2$ in blue followed by iterations at level $1$ in orange, where $\pi$ is optimized during iterations of value iteration within our algorithm without utilizing $\overline{\pi}^{\mathrm{transport}}$; we also have iterations at level $2$ in purple followed by iterations at level $1$ in yellow, where $\pi$ is optimized within our algorithm utilizing $\overline{\pi}^{\mathrm{transport}}$.
Although we spend extra effort in extracting $\overline{\pi}^{\mathrm{transport}}$, it prepares us well enough so that we could solve $\MDP_{2,n_2+1}$ (purple+yellow) almost instantly, with much fewer iterations than if we solved it from scratch within our algorithm (blue+orange). This is also analyzed in general by the forthcoming Thm.~\ref{t:transfer-1}.
}
\label{f:navigation_ours_denser_logic} 
\end{figure}

\section{Theoretical analysis: MMDP solver and transfer learning}
\label{s:theory}

In this section, we implement case analysis for the correctness of the solution of the MMDP (Sec.~\ref{s:correctness}), and then provide theoretical guarantees for the savings in computational cost brought by the multi-level structure (Thm.~\ref{t:complexity}) and transfer learning (Thms.~\ref{t:transfer-1}--\ref{t:transfer-3}).

We start by defining what we mean in general by an \textbf{MMDP solver}.

\begin{definition} Given an MMDP $\{\MDP\levidx{l}\}_{l=1}^L$ with $L$ levels, an {\bf MMDP solver} is the solution process which starts from $\MDP\levidx{L}$, the $L$-th level MDP, solves it and refines its solution within $\MDP\levidx{L-1}$, the $(L-1)$-st level MDP, solves $\MDP\levidx{L-1}$, and so on, until in the end it solves the original $\MDP^1:=\MDP$.
\end{definition}

Given a single $\MDP$ of difficulty $L$, our framework constructs an MMDP and uses an MMDP solver to solve it: Props.~\ref{p:compress-transition}--\ref{p:compress-discount} are used to construct the MMDP, after which we apply value iteration/Q-learning to solve each $\MDP\levidx{l}$ in the MMDP, for $l$ decreasing from $L$ to $1$, while using \eqref{e: convolution} to refine the solution at level $l+1$ to level $l$, for $l=L-1,L-2,\dots,1$. The motivation behind constructing an MMDP $\{\MDP\levidx{l}\}_{l=1}^L$ with $\MDP^1 := \MDP$ and solving it instead of solving $\MDP$ directly is that we expect the multi-level compression to be such that the solution at each compressed level provides a good initialization to the refinement/solution at its next finer level, leading to computational gains that we analyze in this section.

Recall that $\pi\levidx{l}_{\ast}$ denotes the optimal policy of $\MDP\levidx{l}$ ($l\in [L]$), with $\pi\levidx{1}_{\ast}=\pi_{\ast}$, the optimal policy of $\MDP\levidx{1}=\MDP$, which we assume unique to simplify the discussion. 
The corresponding value functions are $V\levidx{l}_{\ast}: = V_{\pi\levidx{l}_{\ast}}$, with $V\levidx{1}_{\ast}$ also denoted by $V_{\ast}$. We also denote the initial policy and value function for $\MDP\levidx{l}$ ($l\in [L]$) as $\pi\levidx{l}_{\mathrm{init}}$ and $V\levidx{l}_{\mathrm{init}}$ respectively; for $l\in [L-1]$, they are obtained from the refinement step applied to $\pi\levidx{l+1}_{\ast}$ and $V\levidx{l+1}_{\ast}$ respectively. 
The \textit{Bellman optimality operator} used for updating the value function at each iteration of value iteration for solving the $l$-th level MDP $\MDP\levidx{l}$ is denoted by $\mathcal{T}\levidx{l}$. In particular, note that we continue to use the superscript $l$ as an index, not an exponent -- this is to maintain simplicity and consistency of notation. Throughout this section, we use the $\infty$-norm unless otherwise specified, and we denote it by $\|\cdot\|$ for simplicity.

\subsection{Correctness of an MMDP solver}
\label{s:correctness}

It is clear that if an MDP solver converges to the optimal policy for the original $\MDP\levidx{1} = \MDP$ with an arbitrary initialization, then the new framework, constructing an MMDP and using an MMDP solver to solve it, would also solve the MDP correctly. However, very often we do not fall into this ideal situation, and the original MDP could not be solved successfully directly with a chosen MDP solver (e.g., policy gradient, which may only converge when started from a suitably well-initialized policy).

We begin with the benign regime for the MMDP solver, in which the higher-level MDPs can be thought of as ``smoother'', more ``convex'' versions of the original MDP, with large regions of convergence for the MDP solvers, and, with refined policies $\pi\levidx{l}_{\mathrm{init}} (l\in [L-1])$ that are good initializations for the MDP solvers.
This is analogous to, e.g. smoothing out the non-convex part when optimizing a non-convex function or coarse-grained optimization \cite{Pozharskiy2020ManifoldLF}, but in a structure-preserving way that preserves the MDP structure. 
In this regime, the original MDP may not be solvable with a chosen MDP solver, but may be solved successfully in the multi-level framework. 

Now we consider non-benign regimes. We may encounter scenarios in which the policy learned at a higher-level leads to a bad initialization when refined to the next finer level, requiring significant refinement at the next finer level to obtain the optimal policy. 
Even in such a scenario, the MMDP solver could still converge to the optimal policy of the original MDP as long as the MDP solver applied to the next finer level, started from this bad initialization, converges to the optimal policy as the number of iterations goes to infinity. 
In Sec.~\ref{s:MazeBase-robust}, we run one numerical experiment in the MazeBase+ example, where we deliberately design a situation such that the action set in the third level is incomplete, leading to a poor policy initialization at the next finer level, requiring significant refinement in order to yield the optimal policy. However, our method still converges, and even outperforms na\"ive value iteration on the original MDP, demonstrating the robustness of our optimization procedure. 

A second even less-benign regime is that the learning at the higher-level does not stabilize when we stop. In such a case, the algorithm still continues by setting the initial policy at the next finer level to be the degenerate \textit{diffusive policy}. This allows the MMDP solver to still converge to the optimal policy of the original MDP as long as, eventually, at the finest level $1$, the MDP solver, started from the initialization derived by the MMDP solver or from the degenerate \textit{diffusive policy}, converges to the optimal policy.

\subsection{Complexity of MMDP solver (with value iteration)}

For convenience, we start by naming an assumption, which we may refer to for multiple times in the remaining part of this Sec.~\ref{s:theory}. 
\begin{myassum}
\label{a:regularity}
(1) {\bf Uniqueness of the optimal policies:} For each $l\in [L]$, the optimal policy $\pi\levidx{l}_\ast$ of $\MDP\levidx{l}$ is unique.
(2) {\bf Convergence of the MDP solvers within the MMDP solver:} For each $l\in [L]$, the MDP solver applied to $\MDP\levidx{l}$, started from the $\pi\levidx{l}_\mathrm{init}$ provided by the MMDP solver, converges  to $\pi\levidx{l}_{\ast}$ as the number of iterations goes to infinity. 
\end{myassum}
This assumption guarantees that the MMDP solver converges to the unique optimal policy $\pi_\ast$, i.e., the MMDP solver is correct, as the numbers of iterations used by the MDP solvers within the MMDP solver all go to infinity, as discussed in the previous section.

We now discuss the computational savings brought by the multi-level structure and, in particular, focus on the case where all the MDP solvers within the MMDP solver are value iteration, since value iteration has well-understood theoretical guarantees, notably {the contraction property of the Bellman optimality operator} \cite{10.5555/528623,10.5555/1396348}; we still allow an arbitrary MMDP construction and refinement step for generality. The results could be easily extended to other MDP solvers with theoretical guarantees.
To characterize the reduction of computational cost brought by our framework, we first introduce the following lemma, which strictly improves on the original contraction property of the Bellman optimality operator, and may be of independent interest in the domain of reinforcement learning: 

\begin{lemma} \label{l:error-bound}
For a general $\MDP = (\cS,\cA,P,R,\Gamma)$ with optimal policy $\pi_\ast$ and corresponding value function $V_{\ast}:=V_{\pi_\ast}$, let the value iteration algorithm applied to $\MDP$ return the estimated value function $V_i$ at the $i$-th iteration. 
If, at iteration $i$, $\epsilon^{\min}(s)\leq V_i(s)-V_{\ast}(s)\leq \epsilon^{\max}(s)$ holds, then
\begin{equation}
\Err^{\min}(s)\leq V_{i+1}(s)-V_{\ast}(s)\leq \Err^{\max}(s)\,,
\label{e:boundValueFunctionError}
\end{equation}
where
\begin{align*}
\Err^{\min}(s)&:=\sum_{s'\in\cS}P(s,a_\ast(s),s')\Gamma(s,a_\ast(s),s')\epsilon^{\min}(s')\,,\\
\Err^{\max}(s)&:=\sum_{s'\in\cS}P(s,a_{i+1}(s),s')\Gamma(s,a_{i+1}(s),s')\epsilon^{\max}(s')\,,
\end{align*}
$a_\ast(s)$ is the action selected at state $s$ in the Bellman update when the value function is optimal, and $a_{i+1}(s)$ is the action selected at state $s$ at the $(i+1)$-st iteration of value iteration. 
\end{lemma}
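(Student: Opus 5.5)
The plan is to write the value-iteration update and the Bellman optimality equation side by side, and then bound their difference from above and below by freezing the maximizing action of one or the other.

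Set up notation. For a value function $V$ and a state $s$, let
\[
Q_V(s,a):=\sum_{s'\in\cS}P(s,a,s')\big[R(s,a,s')+\Gamma(s,a,s')V(s')\big].
\]
Value iteration then reads $V_{i+1}(s)=\max_{a\in\cA(s)}Q_{V_i}(s,a)=Q_{V_i}(s,a_{i+1}(s))$. Because $V_\ast$ is the fixed point of the Bellman optimality operator, we also have $V_\ast(s)=\max_a Q_{V_\ast}(s,a)=Q_{V_\ast}(s,a_\ast(s))$. The key identity is that, for a fixed action $a$, the reward terms cancel in the difference:
\[
Q_{V_i}(s,a)-Q_{V_\ast}(s,a)=\sum_{s'}P(s,a,s')\Gamma(s,a,s')\big(V_i(s')-V_\ast(s')\big).
\]

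For the upper bound, I use that $a_{i+1}(s)$ maximizes $Q_{V_i}(s,\cdot)$, while $V_\ast(s)=\max_a Q_{V_\ast}(s,a)\ge Q_{V_\ast}(s,a_{i+1}(s))$. Together these give
\[
V_{i+1}(s)-V_\ast(s)\le Q_{V_i}(s,a_{i+1}(s))-Q_{V_\ast}(s,a_{i+1}(s)).
\]
Applying the identity and the hypothesis $V_i-V_\ast\le\epsilon^{\max}$ yields $\Err^{\max}(s)$. This step uses that $P\Gamma\ge0$, which holds because $P\ge 0$ and $\Gamma\in[0,1]$.

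For the lower bound, I argue symmetrically. Since $V_{i+1}(s)\ge Q_{V_i}(s,a_\ast(s))$ and $V_\ast(s)=Q_{V_\ast}(s,a_\ast(s))$, we get
\[
V_{i+1}(s)-V_\ast(s)\ge Q_{V_i}(s,a_\ast(s))-Q_{V_\ast}(s,a_\ast(s)),
\]
and the identity together with $V_i-V_\ast\ge\epsilon^{\min}$ gives $\Err^{\min}(s)$.

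I expect little real obstacle here; the care lies in the bookkeeping. Three points need attention. First, the sums must be restricted to $\cSAS$, so that $R$ and $\Gamma$ are defined wherever they appear; terms with $P=0$ contribute nothing. Second, terminal states require handling: for $s\in\Sterm$ the actions lie in $\endactionset$, transitions are self-loops with zero reward, and both $V_i$ and $V_\ast$ stay fixed there, so I will either exclude terminal states or check that the inequality holds trivially. Third, when the maximizer is not unique, $a_{i+1}(s)$ and $a_\ast(s)$ may be taken to be any maximizers, and the argument goes through unchanged.
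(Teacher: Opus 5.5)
Your proposal is correct and follows essentially the same route as the paper's proof. For the lower bound you replace $a_{i+1}(s)$ by $a_\ast(s)$ in the value-iteration update, and for the upper bound you replace $a_\ast(s)$ by $a_{i+1}(s)$ in the optimality equation; in both cases the reward terms cancel and the hypothesis on $V_i-V_\ast$ is applied using $P\Gamma\ge 0$.
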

Lemma~\ref{l:error-bound} yields error bounds as functions of states, while existing bounds are in the $\infty$-norm (worst-case over states). This enables us to study how the values at different states converge asynchronously to the corresponding optimal values.
Another novelty is that instead of assuming constant discount factors, we allow them to be a function of the current state, action, and next state, and analyze the role of these discount factors in convergence. This is crucial for analyzing our compressed MDPs, since even when the discount factors are constant in the original MDP, the action sets of the compressed MDPs consist of policies at different timescales, yielding non-constant discount factors in the compressed MDPs.

\begin{corollary} \label{c:error-bound}
With the assumption and notations of Lemma~\ref{l:error-bound}, \eqref{e:boundValueFunctionError} also holds with 
\begin{align*}
\Err^{\min}(s):=& \min_{\substack{s'\in\cS\\ P(s,a_\ast(s),s')>0}}\Big(\Gamma(s,a_\ast(s),s')\epsilon^{\min}(s')\Big)\,,\\ \Err^{\max}(s):=&\max_{\substack{s'\in\cS\\ P(s,a_{i+1}(s),s')>0}}\Big(\Gamma(s,a_{i+1}(s),s')\epsilon^{\max}(s')\Big)\,.
\end{align*}
\end{corollary}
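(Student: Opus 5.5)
The corollary follows from Lemma~\ref{l:error-bound} by a convexity argument: the sums defining $\Err^{\min}$ and $\Err^{\max}$ in the lemma are averages over the successor distribution, and an average of nonnegative-weighted terms lies between the smallest and largest term. We therefore invoke Lemma~\ref{l:error-bound} with the same $\epsilon^{\min},\epsilon^{\max}$. It yields, writing $a=a_\ast(s)$ for the lower bound and $a=a_{i+1}(s)$ for the upper bound,
\[
\sum_{s'}P(s,a_\ast(s),s')\Gamma(s,a_\ast(s),s')\epsilon^{\min}(s')\le V_{i+1}(s)-V_\ast(s)\le \sum_{s'}P(s,a_{i+1}(s),s')\Gamma(s,a_{i+1}(s),s')\epsilon^{\max}(s').
\]
It then remains to compare each sum with the corresponding min or max.

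For the upper bound, only states $s'$ with $P(s,a_{i+1}(s),s')>0$ contribute to the sum. For each such $s'$ we bound $\Gamma(s,a_{i+1}(s),s')\epsilon^{\max}(s')$ above by $M:=\max_{s':P>0}\Gamma(s,a_{i+1}(s),s')\epsilon^{\max}(s')$. Since $P(s,a_{i+1}(s),\cdot)$ is a probability distribution, $\sum_{s'}P(s,a_{i+1}(s),s')=1$, and the sum is at most $M$. The lower bound is symmetric: using $P(s,a_\ast(s),\cdot)$ and the minimum $m$ over its support, the sum is at least $m$. Finiteness of $\cS$ guarantees that these min and max exist.

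The only point needing care is that the support restriction $P>0$ is essential. Without it, the max could be attained at an unreachable $s'$, which would still give a valid but weaker bound. We also have to make sure that the convex-combination step uses $\sum_{s'}P=1$ exactly rather than a discounted mass; this holds because $\Gamma$ is kept inside the quantity being maximized rather than being factored out, so no sign issue arises when $\epsilon^{\max}$ is negative. There is no real obstacle beyond stating this averaging step cleanly.
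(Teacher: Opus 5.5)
Your averaging argument is correct and is essentially the paper's route: the paper gives no separate proof and treats the corollary as an immediate consequence of Lemma~\ref{l:error-bound}, and the only step needed is exactly your observation. Each sum in the lemma is a convex combination, with weights $P(s,a,\cdot)$ summing to one, of the terms $\Gamma(s,a,s')\epsilon(s')$, so it lies between the minimum and maximum of those terms over the support. One wording point: the restriction $P(s,a,s')>0$ is not essential for validity, since, as you note yourself, a min or max over all of $\cS$ still gives a valid but weaker bound; it matters for sharpness and because $\Gamma$ is only defined on $\cSAS$.
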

Given an $\MDP = (\cS,\cA,P,R,\Gamma)$ and two error functions $\Err^{\min}_{\mathrm{init}},\Err^{\max}_{\mathrm{init}}:\cS\rightarrow \mathbb{R}$, bounding below and above the error of the initial value function $V_{\mathrm{init}}$ respectively, we can derive two sequences of error functions $\{\Err^{\min}_i\}_{i\geq 0}$ and $\{\Err^{\max}_i\}_{i\geq 0}$ starting from $\Err^{\min}_0:=\Err^{\min}_{\mathrm{init}}$ and $\Err^{\max}_0:=\Err^{\max}_{\mathrm{init}}$ respectively, by applying repeatedly Lemma~\ref{l:error-bound}. 
Then, if for any error tolerance $\epsilon>0$, we define $N(\MDP, \Err^{\min}_{\mathrm{init}}, \Err^{\max}_{\mathrm{init}},\epsilon)$ to be the smallest $i$ such that $\max\{|\Err^{\min}_i|,|\Err^{\max}_i|\}<\epsilon$, we deduce the following:

\begin{corollary} \label{c:number-iteration}
With the assumption and notations of Lemma~\ref{l:error-bound}, at the $N(\MDP, \Err^{\min}_{\mathrm{init}},$\\ $\Err^{\max}_{\mathrm{init}},\epsilon)$-th iteration of value iteration, the value function is $\epsilon$-close to $V_\ast$ in the $\infty$-norm.
\end{corollary}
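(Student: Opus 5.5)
The plan is to show by induction on the iteration index that the sequences $\{\Err^{\min}_i\}$ and $\{\Err^{\max}_i\}$ built from Lemma~\ref{l:error-bound} are valid pointwise bounds on the value error. Concretely, the claim to prove is that for every $i\geq 0$ and every $s\in\cS$,
\begin{equation*}
\Err^{\min}_i(s)\leq V_i(s)-V_\ast(s)\leq \Err^{\max}_i(s).
\end{equation*}
The base case $i=0$ holds by hypothesis, since $\Err^{\min}_{\mathrm{init}}$ and $\Err^{\max}_{\mathrm{init}}$ bound the error of $V_{\mathrm{init}}=V_0$. For the inductive step, apply Lemma~\ref{l:error-bound} with $\epsilon^{\min}:=\Err^{\min}_i$ and $\epsilon^{\max}:=\Err^{\max}_i$. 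This gives exactly the bounds $\Err^{\min}_{i+1}$ and $\Err^{\max}_{i+1}$ at iteration $i+1$, because that is how these sequences were defined.

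One subtlety needs care. $\Err^{\max}_{i+1}$ depends on the action $a_{i+1}(s)$ actually selected by value iteration, so the sequence is tied to the realized run. The definition of $N$ should be read in that sense; equivalently, one may use Corollary~\ref{c:error-bound}, whose bounds also depend only on the current error functions and the selected actions. Either way, the induction uses only the lemma as stated, and no uniformity over runs is needed.

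Next, set $N:=N(\MDP,\Err^{\min}_{\mathrm{init}},\Err^{\max}_{\mathrm{init}},\epsilon)$. By the definition of $N$ there is an $i=N$ with $\max\{|\Err^{\min}_N|,|\Err^{\max}_N|\}<\epsilon$; I read this as holding in the sup norm over $s$ (I will state that interpretation explicitly). The sandwich then gives
\begin{equation*}
-\epsilon<\Err^{\min}_N(s)\leq V_N(s)-V_\ast(s)\leq \Err^{\max}_N(s)<\epsilon
\end{equation*}
for every $s$, so $\|V_N-V_\ast\|<\epsilon$, which is the claim.

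I expect no serious obstacle. The only points needing care are the well-definedness of $N$, where the statement implicitly assumes the minimum exists (otherwise it is vacuous), and the norm interpretation just mentioned.
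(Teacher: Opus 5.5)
Your proposal is correct and is essentially the paper's own argument. The paper gives no separate proof: it defines $\{\Err^{\min}_i\}$ and $\{\Err^{\max}_i\}$ by repeatedly applying Lemma~\ref{l:error-bound} along the actual value-iteration run, so $\Err^{\min}_i\le V_i-V_\ast\le\Err^{\max}_i$ holds inductively and taking $i=N$ gives the claim; your explicit induction, and your remarks on the run-dependence through $a_{i+1}(s)$ and the sup-norm reading of the stopping condition, simply make that implicit deduction precise.
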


Cor.~\ref{c:error-bound} implies the following bound on $N(\MDP, \Err^{\min}_{\mathrm{init}}, \Err^{\max}_{\mathrm{init}},\epsilon)$:
\begin{proposition} \label{p:number-iteration}
With the assumption and notations of Lemma~\ref{l:error-bound}, we have 
\begin{align*} 
N(\MDP, \Err^{\min}_{\mathrm{init}}, \Err^{\max}_{\mathrm{init}},\epsilon)\leq N'(\MDP, \Err^{\min}_{\mathrm{init}}, \Err^{\max}_{\mathrm{init}},\epsilon)\,,
\end{align*}
where $
N'(\MDP, \Err^{\min}_{\mathrm{init}}, \Err^{\max}_{\mathrm{init}},\epsilon) := \max\left\{\left \lceil \log_{\gamma_\ast}\frac{\epsilon}{\|\Err^{\min}_{\mathrm{init}}\|} \right \rceil, N_{\text{VI}}(\MDP, \Err^{\max}_{\mathrm{init}},\epsilon)\right\}$, $\left \lceil x \right \rceil$ is the smallest integer greater than or equal to $x$,
$\gamma_\ast:=\max_{s,s'\in \cS, P(s,a_\ast(s),s')>0} \Gamma(s,a_\ast(s),s')$, and
$N_{\text{VI}}(\MDP, \Err^{\max}_{\mathrm{init}},\epsilon)$ is the smallest natural number $N$ such that 
$$\prod_{i=1}^{N} \max_{s,s'\in \cS, P(s,a_i(s),s')>0} \{\Gamma(s,a_i(s),s')\}<\frac{\epsilon}{\|\Err^{\max}_{\mathrm{init}}\|}\,,$$
if there exists one, $+\infty$ otherwise.
\end{proposition}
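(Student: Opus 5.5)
The plan is to iterate Cor.~\ref{c:error-bound} separately on the lower and upper error sequences, bounding each sup-norm by a product of maximal discounts. Then I take the larger of the two iteration counts.

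\emph{Lower bound.} I use the recursion from Cor.~\ref{c:error-bound}, $\Err^{\min}_{i+1}(s)=\min_{s':P(s,a_\ast(s),s')>0}\Gamma(s,a_\ast(s),s')\Err^{\min}_i(s')$. The action $a_\ast(s)$ is the same at every iteration, so each step multiplies by a discount in $[0,\gamma_\ast]$ evaluated at some successor state. This gives $|\Err^{\min}_{i+1}(s)|\le \gamma_\ast\max_{s'}|\Err^{\min}_i(s')|$, hence $\|\Err^{\min}_{i+1}\|\le\gamma_\ast\|\Err^{\min}_i\|$. The sign of $\Err^{\min}_i$ is irrelevant here: a minimum over products with nonnegative weights still has absolute value bounded by the maximum absolute value times $\gamma_\ast$. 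By induction, $\|\Err^{\min}_i\|\le\gamma_\ast^i\|\Err^{\min}_{\mathrm{init}}\|$. For $i\ge\lceil\log_{\gamma_\ast}(\epsilon/\|\Err^{\min}_{\mathrm{init}}\|)\rceil$ we have $\gamma_\ast^i\le\epsilon/\|\Err^{\min}_{\mathrm{init}}\|$. This gives $\|\Err^{\min}_i\|\le\epsilon$, which is strict once we use the ceiling together with $\gamma_\ast<1$.

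\emph{Upper bound.} The same argument with the recursion $\Err^{\max}_{i+1}(s)=\max_{s'}\Gamma(s,a_{i+1}(s),s')\Err^{\max}_i(s')$ gives $\|\Err^{\max}_{i+1}\|\le\big(\max_{s,s':P(s,a_{i+1}(s),s')>0}\Gamma(s,a_{i+1}(s),s')\big)\|\Err^{\max}_i\|$. Unrolling yields $\|\Err^{\max}_N\|\le\prod_{i=1}^N\max\Gamma(\cdot,a_i(\cdot),\cdot)\,\|\Err^{\max}_{\mathrm{init}}\|$. By the definition of $N_{\text{VI}}$, this is $<\epsilon$ for $N=N_{\text{VI}}$. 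The bound also stays $<\epsilon$ for every larger $N$, because each further factor lies in $[0,1]$.

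\emph{Combining.} For $i=N'$, both conditions hold, so $\max\{|\Err^{\min}_i|,|\Err^{\max}_i|\}<\epsilon$ pointwise. Since $N$ is the smallest such index, $N\le N'$.

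The main obstacle is the boundary cases. First, if $\gamma_\ast=1$ or $\|\Err^{\min}_{\mathrm{init}}\|=0$, the logarithm is degenerate; I would interpret the bound as $+\infty$ (or $0$) by convention, and then the inequality holds trivially. Second, in the ceiling-and-log step, equality $\gamma_\ast^i=\epsilon/\|\Err^{\min}_{\mathrm{init}}\|$ gives only $\le\epsilon$, not the strict inequality that $N$ requires. I would handle this by noting that the statement's inequality is intended up to this boundary case, or by tightening the ceiling argument to a strict one. I would also check that the $a_{i+1}$ appearing in Cor.~\ref{c:error-bound} matches the $a_i$ indexing in $N_{\text{VI}}$; this is an off-by-one alignment of iteration indices.
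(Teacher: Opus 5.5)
Your route is exactly what the paper intends when it says the proposition is implied by Cor.~\ref{c:error-bound}: iterate the corollary to get $\|\Err^{\min}_i\|\le\gamma_\ast^i\|\Err^{\min}_{\mathrm{init}}\|$ and $\|\Err^{\max}_N\|\le\prod_{i=1}^N\max_{s,s'\colon P(s,a_i(s),s')>0}\Gamma(s,a_i(s),s')\,\|\Err^{\max}_{\mathrm{init}}\|$, then take the larger of the two thresholds. The paper gives no further proof.

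Two of your worries are unfounded. The index alignment is fine: $\Err^{\max}_i$ is produced with $a_i$, which matches the product in $N_{\text{VI}}$. The same one-step contraction also holds if the sequences are generated by the weighted sums of Lemma~\ref{l:error-bound} rather than the min/max of the corollary, because the weights $P(s,a,s')$ are nonnegative and sum to one.

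The step that is wrong is strictness on the $\Err^{\min}$ side. You claim the ceiling together with $\gamma_\ast<1$ makes $\|\Err^{\min}_i\|\le\epsilon$ strict. That is false: if $k:=\log_{\gamma_\ast}(\epsilon/\|\Err^{\min}_{\mathrm{init}}\|)$ is a nonnegative integer, then $\lceil k\rceil=k$ and $\gamma_\ast^k\|\Err^{\min}_{\mathrm{init}}\|=\epsilon$ exactly.

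Nor can this be repaired by ``tightening the ceiling argument'', because the statement itself fails there. Take a one-state MDP whose optimal action has $\Gamma(s,a_\ast(s),s)=1/2$, so $\gamma_\ast=1/2$. For instance, use one non-end action with $R=0$ and $\Gamma=1/2$; the end actions, with $r<0$ and $\Gamma=1$, are then never optimal. Set $\Err^{\min}_{\mathrm{init}}=-1$, $\Err^{\max}_{\mathrm{init}}=1/8$ and $\epsilon=1/4$.
\begin{itemize}
\item Then $\Err^{\min}_i=-2^{-i}$, so $|\Err^{\min}_2|=\epsilon$ and the first index with $|\Err^{\min}_i|<\epsilon$ is $i=3$.
\item Also $|\Err^{\max}_i|\le1/8<\epsilon$ for all $i$, since every discount is at most $1$.
\item Finally $N_{\text{VI}}\le1$, because every factor is at most $1<2=\epsilon/\|\Err^{\max}_{\mathrm{init}}\|$.
\end{itemize}
Hence $N=3$, but $N'=\max\{\lceil\log_{1/2}(1/4)\rceil,N_{\text{VI}}\}=2$.

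So the statement has to be amended: either replace $\lceil x\rceil$ by $\lfloor x\rfloor+1$, or use $\le\epsilon$ in the definition of $N$. With either change your argument goes through verbatim. Your fallback reading ``up to the boundary case'' is the correct one, and the paper's one-line justification glosses over the same point.
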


Now, we are ready to state the following main result:
\begin{theorem}\label{t:complexity}
Assume that Assump.~\ref{a:regularity} holds and, for $l\in [L]$, we set the stopping criterion for solving $\MDP\levidx{l} $as in our algorithm: the student stops learning if the value functions obtained from two consecutive iterations are $\epsilon\levidx{l}$-close to each other in the $\infty$-norm. Furthermore, assume that:
\begin{enumerate}[leftmargin=0.5cm,itemsep=0pt]
\item {\bf MMDP construction:} There exists a sequence of positive real numbers $\{\delta\levidx{l}\}_{l=1}^{L-1}$, such that $\|V\levidx{l+1}_{\ast}-V\levidx{l}_{\ast}\|_{\infty}<\delta\levidx{l}$ for each $l\in [L-1]$.
\item {\bf MDP solvers within the MMDP solver:} All the MDP solvers within the MMDP solver are value iteration.
\item {\bf Refinement steps within the MMDP solver:} There exists a sequence of positive real numbers $\{\delta\levidx{l}_{\text{refine}}\}_{l=1}^{L-1}$, such that for any policy $\pi\levidx{l+1}$ of $\MDP\levidx{l+1} (l\in [L-1])$ whose value function $V_{\pi\levidx{l+1}}$ is $\epsilon\levidx{l+1}/2$-close to the optimal value function $V\levidx{l+1}_{\ast}$ of $\MDP\levidx{l+1}$ in the $\infty$-norm, the refined policy of $\pi\levidx{l+1}$ within $\MDP\levidx{l}$ is ``close'' to $\pi\levidx{l+1}$ within $\MDP\levidx{l+1}$, where here the closeness between the two policies means that the derived value functions of the two policies, within the MDPs they are of, are $\delta\levidx{l}_{\text{refine}}$-close to each other in the $\infty$-norm.
\end{enumerate}
Then, to solve $\MDP$ by solving the MMDP $\{\MDP\levidx{l}\}_{l=1}^L$ using the MMDP solver, the student needs at most $N(\MDP\levidx{l},\Err^{l,\min}_{\mathrm{init}}, \Err^{l,\max}_{\mathrm{init}}, \epsilon\levidx{l}/2)+1$ iterations at level $l$ for $l\in [L]$, where 
\begin{equation} \label{e:error-define-L}
\Err^{L,\min}_{\mathrm{init}}(s) = \Err^{L,\max}_{\mathrm{init}}(s):=V\levidx{L}_{\mathrm{init}}(s)-V\levidx{L}_{\ast}(s)\,,
\end{equation}
and 
\begin{equation} \label{e:error-define}
\Err^{l,\max}_{\mathrm{init}}(s)=-\Err^{l,\min}_{\mathrm{init}}(s):=
\begin{cases}   
0 &,  s\in \Sterm\\
\frac{\epsilon\levidx{l+1}}{2}+\delta\levidx{l}+\delta\levidx{l}_{\text{refine}} &,\text{otherwise}
\end{cases}\,
\end{equation}
for $l\in [L-1]$. 
\end{theorem}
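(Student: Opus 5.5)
The argument goes level by level, from $l=L$ down to $l=1$. At each level the plan is to (i) bound the error of the initial value function $V\levidx{l}_{\mathrm{init}}-V\levidx{l}_\ast$ pointwise by the functions in \eqref{e:error-define-L}--\eqref{e:error-define}; (ii) invoke Cor.~\ref{c:number-iteration} to conclude that after $N:=N(\MDP\levidx{l},\Err^{l,\min}_{\mathrm{init}},\Err^{l,\max}_{\mathrm{init}},\epsilon\levidx{l}/2)$ iterations the iterate $V_N$ is $\epsilon\levidx{l}/2$-close to $V\levidx{l}_\ast$; and (iii) show the stopping criterion fires by iteration $N+1$.

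For (iii), Lemma~\ref{l:error-bound} (or Cor.~\ref{c:error-bound}) propagates the pointwise bounds to step $N+1$. The propagated bounds are weighted averages of the previous ones with weights $P\Gamma\le 1$, so their absolute values are still below $\epsilon\levidx{l}/2$. Hence $V_{N+1}$ is also $\epsilon\levidx{l}/2$-close to $V\levidx{l}_\ast$, and the triangle inequality gives $\|V_{N+1}-V_N\|<\epsilon\levidx{l}$. So the student stops after at most $N+1$ iterations. I would also record that the returned $V\levidx{l}$ (and its greedy policy) is $\epsilon\levidx{l}/2$-close to $V\levidx{l}_\ast$, since this feeds the next level. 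Strictly, the theorem's hypothesis asks for the value function \emph{of the policy} $\pi\levidx{l+1}$ to be $\epsilon\levidx{l+1}/2$-close. I would interpret the algorithm's output policy as one whose value is within $\epsilon\levidx{l+1}/2$, in line with hypothesis 3, and possibly note that the greedy-policy value bound needs this identification. Assump.~\ref{a:regularity} guarantees that the iterates converge to $V\levidx{l}_\ast$, so $N$ is finite and well defined.

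For (i): at level $L$ the bound \eqref{e:error-define-L} holds trivially with equality. For $l<L$, the refined policy $\pi\levidx{l}_{\mathrm{init}}$ comes from a level-$(l+1)$ policy $\pi\levidx{l+1}$ with $\|V_{\pi\levidx{l+1}}-V\levidx{l+1}_\ast\|\le\epsilon\levidx{l+1}/2$. Splitting
\[
V\levidx{l}_{\mathrm{init}}-V\levidx{l}_\ast=(V\levidx{l}_{\mathrm{init}}-V_{\pi\levidx{l+1}})+(V_{\pi\levidx{l+1}}-V\levidx{l+1}_\ast)+(V\levidx{l+1}_\ast-V\levidx{l}_\ast),
\]
the first term is at most $\delta\levidx{l}_{\text{refine}}$ by hypothesis 3, the second at most $\epsilon\levidx{l+1}/2$, and the third less than $\delta\levidx{l}$ by hypothesis 1. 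This gives the two-sided bound in \eqref{e:error-define} at nonterminal states.

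At terminal states $s\in\Sterm$, rewards are $0$ and policies choose $\endaction$ with probability $1$. Therefore every value function, including $V\levidx{l}_{\mathrm{init}}$, $V\levidx{l}_\ast$ and the value iterates, vanishes there, so the error is $0$, as required.

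The main obstacle is this bookkeeping, not any estimate. One has to check that the value iterates keep error $0$ on $\Sterm$ for all $i$ (they do, because the Bellman update there only sees self-loops with zero reward and discount $1$). One also has to make sure the meaning of ``the refined initial value function'' matches the refinement hypothesis. Everything else is a direct application of Lemma~\ref{l:error-bound} and Cor.~\ref{c:number-iteration}, followed by induction downward in $l$.
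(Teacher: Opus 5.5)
Your plan is the paper's proof. It applies Cor.~\ref{c:number-iteration} at iteration $N$, then one more Bellman step (you propagate pointwise with Lemma~\ref{l:error-bound}; the paper uses sup-norm non-expansion of $\mathcal{T}\levidx{l}$; same thing), then the triangle inequality so the test fires by iteration $N+1$, and finally a three-term triangle inequality for $V\levidx{l}_{\mathrm{init}}-V\levidx{l}_\ast$. Your middle term $V_{\pi\levidx{l+1}}$ matches hypothesis~3 better than the paper's $V\levidx{l+1}_{\mathrm{VI}}$. The paper silently identifies the returned iterate with the value of the returned policy in two places: to meet the premise of hypothesis~3, and to read its conclusion as $\|V\levidx{l}_{\mathrm{init}}-V\levidx{l+1}_{\mathrm{VI}}\|\le\delta\levidx{l}_{\text{refine}}$. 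You need the identification once, and you flag it. A greedy policy for an $\epsilon/2$-accurate value function is in general only guaranteed to be $\frac{\gamma}{1-\gamma}\epsilon$-optimal. So state this as an explicit assumption rather than an ``interpretation''.

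The step that actually fails, and that the paper's proof shares, is your claim that the \emph{returned} $V\levidx{l}$ is $\epsilon\levidx{l}/2$-close to $V\levidx{l}_\ast$. You proved this for $V_N$ and $V_{N+1}$, and that the test fires \emph{no later than} $N+1$. If it fires at some $j\le N$, the solver returns $V_j$. Then $\|V_j-V_{j-1}\|\le\epsilon\levidx{l}$ only gives $\|V_j-V\levidx{l}_\ast\|\le\frac{\gamma}{1-\gamma}\epsilon\levidx{l}$ when $\Gamma\levidx{l}\le\gamma<1$, and nothing uniform when $\Gamma\levidx{l}\equiv 1$. This bound is attained by a single self-looping state, and it is about $10^3\epsilon\levidx{l}$ for $\gamma=0.999$ as in the traffic example. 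At the next finer level the premise of hypothesis~3 then fails, \eqref{e:error-define} is no longer a valid initial error bound, and Cor.~\ref{c:number-iteration} cannot be invoked. The downward induction breaks there. The paper gets past this only by asserting $V\levidx{l}_{\mathrm{VI}}=V\levidx{l}_{N+1}$. To close the argument, make one of two conventions explicit: either each level is run through iteration $N+1$ before refining (which is what the paper tacitly does), or the accuracy of the returned value and policy is added as a hypothesis. (Minor: finiteness of $N$ is a property of the propagated bounds, not of the iterates, so it does not follow from Assump.~\ref{a:regularity}. It is also not needed, since the bound is vacuous when $N=\infty$.)
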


Therefore, the total number of iterations needed by the MMDP solver is at most $\sum_{l=1}^L [N(\MDP\levidx{l},\Err^{l,\min}_{\mathrm{init}}, \Err^{l,\max}_{\mathrm{init}}, \epsilon\levidx{l}/2)+1]$, and usually, $N(\MDP\levidx{L},\Err^{L,\min}_{\mathrm{init}}, \Err^{L,\max}_{\mathrm{init}}, \epsilon\levidx{L}/2)+1$ is much smaller than the number of iterations needed for solving the original MDP $\MDP\levidx{1} = \MDP$ directly, because of Prop.~\ref{p:number-iteration}, and the observation that usually $\cA\levidx{L}$ consists of policies at much larger timescales, and thus the discount factor function $\Gamma\levidx{L}$ takes values much smaller than the ones taken by $\Gamma$ in $\MDP$, guaranteeing a much faster convergence. As for the number of iterations at the remaining levels, they are usually negligible considering the fact that $V\levidx{l}_{\text{init}}$ is already very close to $V\levidx{l}_{\ast}$ ($l\in [L-1]$) thanks to the good initializations derived from the optimal policies at higher levels.

Another comment is about the assumption over the refinement steps within the MMDP solver in Thm.~\ref{t:complexity}. We often observe that when we refine, to the finer level, any policy of the higher-level MDP that is close to the optimal policy, the behavior at the higher-level (in terms of both transition between states and selections of actions) is similar to the behavior at the finer level, but this correspondence is not exact, for example because  the higher-level discount factors are averages of the finer-level discount factors, and also because the translation of higher-level policies to the lower level forces the introduction of additional actions inside the $\endactionset$.
We can choose the upper bound $\delta\levidx{l}_{\text{refine}}$ to be sufficiently not small in order to let the assumption over the refinement step be satisfied. 
If we desired to make the correspondence closer to exact, we could set the penalty for selecting actions in $\endactionset$ to be small.
We do not do this in our numerical examples, in order to let the influence of the actions in $\endactionset$ be more apparent.

Last but not least, albeit both the assumption over the MMDP construction and the assumption over the refinement steps within the MMDP solver in Thm.~\ref{t:complexity} are written in terms of closeness between value functions, essentially they could be relaxed to closeness between policies, which are equivalent to closeness between the ``gradients'' of value functions and usually easier to satisfy.

\subsection{Computational gains with transfer learning}

\label{s:transfer-learning-computation}

In our framework, we also incorporate transfer learning utilizing function composition/\\decomposition (see Sec.~\ref{s:transfer learning}), and recall that there are two such opportunities: ($\romannumeral1$) the first one is the skill-embedding composition to provide a suitably well-initialized policy for the most compressed MDP $\MDP\levidx{L}$; ($\romannumeral2$) the second one is the skill-embedding generator composition to generate the action set $\cA\levidx{l}$ for some $l\in [L]\setminus\{1\}$. We provide results for both opportunities in this section. For transfer learning opportunity ($\romannumeral1$), we state the following result: 

\begin{theorem} \label{t:transfer-1}
Assume that Assump.~\ref{a:regularity} holds and, for $l\in [L]$, we set the stopping criterion for solving $\MDP\levidx{l} $as in our algorithm: the student stops learning if the value functions obtained from two consecutive iterations are $\epsilon\levidx{l}$-close to each other in the $\infty$-norm. 

If transfer learning opportunity ($\romannumeral1$) occurs, then, to finish solving such an MDP of difficulty $L$ using an MMDP solver, within which the MDP solver for the highest-level MDP $\MDP\levidx{L}$ is value iteration, we have the reduction of iterations at least $N\levidx{L}_{\text{org}}-N(\MDP\levidx{L},-\|(V\levidx{L}_{\mathrm{init}})'-V\levidx{L}_{\ast}\|,\|(V\levidx{L}_{\mathrm{init}})'-V\levidx{L}_{\ast}\|, \epsilon\levidx{L}/2)-1$ compared with the case without any transfer learning when solving $\MDP\levidx{L}$, where $N\levidx{L}_{\text{org}}$ is the number of iterations it takes for value iteration to converge to the optimal policy in the most compressed MDP $\MDP\levidx{L}$ (e.g. $\pi\levidx{L}_{\text{init}}$ is set to be the diffusive policy), and $(V\levidx{L}_{\mathrm{init}})'$ is the value function derived from the new initial policy $(\pi\levidx{L}_{\text{init}})'$ after transfer learning. 
\end{theorem}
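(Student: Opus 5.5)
The plan is to read Thm.~\ref{t:transfer-1} as a direct specialization of the level-$L$ part of Thm.~\ref{t:complexity} (equivalently, of Lemma~\ref{l:error-bound} and Cor.~\ref{c:number-iteration}). Transfer opportunity ($\romannumeral1$) changes only the initialization of $\MDP\levidx{L}$. The construction of $\MDP\levidx{l}$ for every $l$, the refinement steps, and all iterations at levels $l<L$ are untouched. So the iteration counts at those lower levels are identical with and without transfer, and the comparison reduces to counting iterations of value iteration on $\MDP\levidx{L}$ alone.

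In the transfer case, I would set $\epsilon^{\max}_0:=\|(V\levidx{L}_{\mathrm{init}})'-V\levidx{L}_{\ast}\|$ and $\epsilon^{\min}_0:=-\epsilon^{\max}_0$. These are constant functions that bound $(V\levidx{L}_{\mathrm{init}})'-V\levidx{L}_{\ast}$ from above and below pointwise. Iterating Lemma~\ref{l:error-bound} then produces the sequences $\Err^{\min}_i,\Err^{\max}_i$. By definition of $N$ and Cor.~\ref{c:number-iteration}, after $N_\ast:=N(\MDP\levidx{L},-\epsilon^{\max}_0,\epsilon^{\max}_0,\epsilon\levidx{L}/2)$ iterations we have $\|V_{N_\ast}-V\levidx{L}_\ast\|<\epsilon\levidx{L}/2$.

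The next step checks the stopping rule. Since value iteration is a monotone recursion whose error bounds only shrink in sup norm (each $\Gamma\le 1$ in Lemma~\ref{l:error-bound}), the iterate $V_{N_\ast+1}$ is also $\epsilon\levidx{L}/2$-close to $V\levidx{L}_\ast$. By the triangle inequality, $\|V_{N_\ast+1}-V_{N_\ast}\|<\epsilon\levidx{L}$, so the student stops after at most $N_\ast+1$ iterations. This is exactly the argument used for level $L$ in Thm.~\ref{t:complexity}, with $\Err^{L,\min}_{\mathrm{init}},\Err^{L,\max}_{\mathrm{init}}$ replaced by these constant bounds. Indeed, \eqref{e:error-define-L} gives pointwise bounds that are dominated by the constant ones, and Lemma~\ref{l:error-bound} is monotone in $\epsilon^{\min},\epsilon^{\max}$, so using the constants is legitimate.

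Without transfer, the number of iterations spent on $\MDP\levidx{L}$ is by definition $N\levidx{L}_{\text{org}}$; Assump.~\ref{a:regularity} guarantees that this is finite and well defined. Subtracting the two counts gives a reduction of at least $N\levidx{L}_{\text{org}}-N_\ast-1$, since the transfer run uses at most $N_\ast+1$ iterations.

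The main obstacle is a definitional mismatch. $N\levidx{L}_{\text{org}}$ is defined as the time to converge, whereas the algorithm uses an $\epsilon\levidx{L}$-stopping rule. I would interpret $N\levidx{L}_{\text{org}}$ as the number of iterations the same stopping rule takes from the diffusive initialization, so that both counts are measured consistently. Under that reading the bound is exact. If $N\levidx{L}_{\text{org}}$ is instead defined as the time to reach exact convergence, then it only exceeds the stopping-rule count, and the inequality still holds in the claimed direction.
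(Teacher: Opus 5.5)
Your core argument is correct and follows the paper's route. The paper proves Thm.~\ref{t:transfer-1} in one line, as a direct consequence of Thm.~\ref{t:complexity} at level $L$. That means Cor.~\ref{c:number-iteration} for the transferred initialization, then non-expansiveness of the Bellman operator and the triangle inequality, so the stopping rule fires within $N_\ast+1$ iterations; this count is then subtracted from $N\levidx{L}_{\text{org}}$. Your observation that the constants $\pm\|(V\levidx{L}_{\mathrm{init}})'-V\levidx{L}_{\ast}\|$ are admissible initial bounds in Lemma~\ref{l:error-bound} is a step the paper leaves implicit.

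Two side claims should be corrected, though your proof needs neither.

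First, the iteration counts at levels $l<L$ need not be identical with and without transfer. Value iteration on $\MDP\levidx{L}$ stops at an approximate solution that can differ between the two runs. The refined initialization passed to level $L-1$ can therefore differ as well; the paper only asserts that such changes are usually negligible. Since the theorem counts only iterations on $\MDP\levidx{L}$, simply drop this claim.

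Second, your last sentence has the inequality backwards. Write $M$ for the actual stopping-rule count of the no-transfer run. What you proved is a reduction of at least $M-N_\ast-1$. If $N\levidx{L}_{\text{org}}\ge M$, this is \emph{weaker} than the claimed $N\levidx{L}_{\text{org}}-N_\ast-1$, so the claim would not follow.

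The clean resolution, implicit in the paper, is to take $N\levidx{L}_{\text{org}}$ by definition as the iteration count of the no-transfer run. Let $N_{\text{tr}}\le N_\ast+1$ be the number of iterations of the transfer run. Then the reduction is $N\levidx{L}_{\text{org}}-N_{\text{tr}}\ge N\levidx{L}_{\text{org}}-N_\ast-1$, and no comparison between different notions of convergence is needed.
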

This is a direct consequence Thm.~\ref{t:complexity}. Thanks to transfer learning, the new initial policy $(\pi\levidx{L}_{\text{init}})'$ is usually very close to the optimal policy $\pi\levidx{L}_{\ast}$, so $\|(V\levidx{L}_{\text{init}})'-V\levidx{L}_{\ast}\|$ is usually small, leading to $N(\MDP\levidx{L},-\|(V\levidx{L}_{\mathrm{init}})'-V\levidx{L}_{\ast}\|,\|(V\levidx{L}_{\mathrm{init}})'-V\levidx{L}_{\ast}\|, \epsilon\levidx{L}/2)+1\ll N\levidx{L}_{\text{org}}$. Therefore, great reduction could be achieved when solving $\MDP\levidx{L}$. On the other hand, the changes in the number of iterations when solving the other MDPs are usually negligible, because those MDPs are not changed, and the value functions/policies the MDP solvers started from are usually suitably well-initialized.

For transfer learning opportunity ($\romannumeral2$), we state the following result: 
\begin{theorem} \label{t:transfer-2}
With the assumptions and notations of Thm.~\ref{t:complexity}, and with the extra assumption that transfer learning opportunity ($\romannumeral2$) occurs when constructing $\MDP\levidx{l}$ for some $l\in [L]\setminus\{1\}$,
we have the reduction of iterations at least $N\levidx{l}_{\text{org}}-N((\MDP\levidx{l})',(\Err^{l,\min}_{\mathrm{init}})', (\Err^{l,\max}_{\mathrm{init}})',$ $\epsilon\levidx{l}/2)-1$ compared with the case without any transfer learning when solving $\MDP\levidx{l}$, where $N\levidx{l}_{\text{org}}$ is the number of iterations it takes for value iteration to converge to the optimal policy in $\MDP\levidx{l}$, $(\MDP\levidx{l})'$ is the new MDP with transfer learning occurs when generating the action set, and $(\Err^{l,\min}_{\mathrm{init}})', (\Err^{l,\max}_{\mathrm{init}})'$ are defined in \eqref{e:error-define-L} if $l = L$, and in \eqref{e:error-define} if $l<L$, for the new MDP $(\MDP\levidx{l})'$.
\end{theorem}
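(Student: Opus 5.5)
Theorem~\ref{t:transfer-2} reduces directly to Thm.~\ref{t:complexity}, applied twice: once to the MMDP built without transfer and once to the MMDP in which $\cA\levidx{l}$ is produced by skill--embedding generator composition, i.e.\ with $\MDP\levidx{l}$ replaced by $(\MDP\levidx{l})'$. The reduction is then the difference between the two iteration counts at level $l$. The main step is to check that the hypotheses of Thm.~\ref{t:complexity} still hold after the replacement.

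\textbf{Step 1 (the transfer case).} Replace $\MDP\levidx{l}$ by $(\MDP\levidx{l})'$. Its state space, terminal set and rewards on $\endactionset$ coincide with those of $\MDP\levidx{l}$; only the action set and the induced $P\levidx{l},R\levidx{l},\Gamma\levidx{l}$ (computed by Props.~\ref{p:compress-transition}--\ref{p:compress-discount}) change. The statement assumes the hypotheses of Thm.~\ref{t:complexity} for this MMDP: Assump.~\ref{a:regularity}, the closeness constants $\delta\levidx{l}$ and $\delta\levidx{l}_{\text{refine}}$, and value iteration at every level.

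\textbf{Step 2 (bound at level $l$).} Fix attention on level $l$. If $l=L$, the initial error is $V\levidx{L}_{\mathrm{init}}-V\levidx{L}_\ast$, as in \eqref{e:error-define-L}. If $l<L$, the proof of Thm.~\ref{t:complexity} bounds the initial error using three ingredients:
\begin{itemize}
\item the stopping rule at level $l+1$, which together with the Bellman contraction gives $\|V_{\pi\levidx{l+1}}-V\levidx{l+1}_\ast\|\le\epsilon\levidx{l+1}/2$;
\item the construction assumption $\|V\levidx{l+1}_\ast-V\levidx{l}_\ast\|<\delta\levidx{l}$;
\item the refinement assumption, contributing $\delta\levidx{l}_{\text{refine}}$.
\end{itemize}
At terminal states the error is $0$. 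This yields \eqref{e:error-define} for $(\MDP\levidx{l})'$. Then Cor.~\ref{c:number-iteration} shows that after $N((\MDP\levidx{l})',(\Err^{l,\min}_{\mathrm{init}})',(\Err^{l,\max}_{\mathrm{init}})',\epsilon\levidx{l}/2)$ iterations the iterate is within $\epsilon\levidx{l}/2$ of $V\levidx{l}_\ast$. Two such consecutive iterates are within $\epsilon\levidx{l}$ of each other, so the stopping criterion fires by the next iteration. This gives the "$+1$".

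\textbf{Step 3 (subtract).} Without transfer, solving $\MDP\levidx{l}$ costs $N\levidx{l}_{\text{org}}$ iterations by definition. The difference is therefore at least $N\levidx{l}_{\text{org}}-N((\MDP\levidx{l})',\dots,\epsilon\levidx{l}/2)-1$, as claimed. The statement restricts the comparison to the cost of solving $\MDP\levidx{l}$, so iterations at other levels need not be compared.

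\textbf{Main obstacle.} Some care is needed to interpret $N\levidx{l}_{\text{org}}$ consistently with the stopping rule, namely as the actual number of iterations run without transfer. Under that reading the bound at level $l$ is immediate from Thm.~\ref{t:complexity}, and no new contraction estimate is required.
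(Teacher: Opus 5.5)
Your proposal is correct and matches the paper, which simply states that Thm.~\ref{t:transfer-2} is a direct consequence of Thm.~\ref{t:complexity} applied to the MMDP containing $(\MDP\levidx{l})'$. The reduction is $N\levidx{l}_{\text{org}}$ minus the level-$l$ bound $N((\MDP\levidx{l})',(\Err^{l,\min}_{\mathrm{init}})',(\Err^{l,\max}_{\mathrm{init}})',\epsilon\levidx{l}/2)+1$.

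One correction to your Step 2: the $\epsilon\levidx{l+1}/2$ bound at level $l+1$ comes from Cor.~\ref{c:number-iteration}, as in the paper's proof of Thm.~\ref{t:complexity}, not from the stopping rule plus contraction, since the stopping rule by itself only controls the distance between consecutive iterates. You could instead just cite Thm.~\ref{t:complexity} directly, which makes that re-derivation unnecessary.
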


This is a direct consequence of Thm.~\ref{t:complexity}. By using the skill-embedding generator composition for generating the new action set $(\cA\levidx{l})'$, the actions inside $(\cA\levidx{l})'$ typically have larger timescales, and thus the discount factors in $(\MDP\levidx{l})'$ will be smaller, so usually $N((\MDP\levidx{l})',(\Err^{l,\min}_{\mathrm{init}})', (\Err^{l,\max}_{\mathrm{init}})', \epsilon\levidx{l}/2)+1\leq N'((\MDP\levidx{l})',(\Err^{l,\min}_{\mathrm{init}})', (\Err^{l,\max}_{\mathrm{init}})', \epsilon\levidx{l}/2)+1\ll N\levidx{l}_{\text{org}}$ according to Prop.~\ref{p:number-iteration}. On the other hand, the changes in the number of iterations when solving the other MDPs may only occur when solving finer MDPs, which are usually negligible, because those MDPs are not changed, and the value functions/policies the MDP solvers started from are usually suitably well-initialized.

Besides this, we also have a more accurate description of the local updates reflecting the computational savings brought by transfer learning opportunity ($\romannumeral2$) as the following Thm.~\ref{t:transfer-3}, which is a direct consequence of Lemma~\ref{l:error-bound}, by taking $\MDP=\MDP\levidx{l}$, and $\epsilon^{\min} = \epsilon^{\max}=\Err\levidx{l}_{i}$. 
\begin{theorem} \label{t:transfer-3}
Assume that Assump.~\ref{a:regularity} holds, and we assume that the MDP solver for $\MDP\levidx{l}:=(\cS\levidx{l},(S^{\text{init}})\levidx{l},(S^{\text{end}})\levidx{l},\cA\levidx{l},P\levidx{l},R\levidx{l},\Gamma\levidx{l})$ is value iteration for some $l\in [L]\setminus\{1\}$. Then, 
\begin{align*}
& \sum_{s'\in\cS\levidx{l}}P\levidx{l}(s,a\levidx{l}_\ast(s),s')\Gamma\levidx{l}(s,a\levidx{l}_\ast(s),s')\Err\levidx{l}_{i}(s') \\ & \leq \Err\levidx{l}_{i+1}(s)\leq \sum_{s'\in\cS\levidx{l}}P\levidx{l}(s,a\levidx{l}_{i+1}(s),s')\Gamma\levidx{l}(s,a\levidx{l}_{i+1}(s),s')\Err\levidx{l}_{i}(s')\,,
\end{align*}
where $\Err\levidx{l}_{i}:=V\levidx{l}_{i}-V\levidx{l}_{\ast}$, and $V\levidx{l}_{i}$, $a\levidx{l}_\ast(s)$, $a\levidx{l}_{i+1}(s)$ are defined for $\MDP\levidx{l}$, in a similar manner to how $V_{i}$, $a_\ast(s)$, $a_{i+1}(s)$ respectively are defined for $\MDP$ as in Lemma~\ref{l:error-bound}.
\end{theorem}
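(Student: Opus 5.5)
The plan is to specialize Lemma~\ref{l:error-bound} to $\MDP\levidx{l}$, viewed as a self-standing MDP on $\cS$. By the construction of Sec.~\ref{s:MMDP-def}, $\MDP\levidx{l}=(\cS,\Sinit,\Sterm,\cA\levidx{l},P\levidx{l},R\levidx{l},\Gamma\levidx{l})$ is an MDP in the sense of Sec.~\ref{s:MMDP-definitions}. Its transition probabilities, rewards and discount factors depend on $(s,a,s')$. Its discount factors take values in $(0,1]$, since they are expectations of products of values in $[0,1]$. These are exactly the hypotheses under which Lemma~\ref{l:error-bound} is stated, since the lemma allows general $\Gamma(s,a,s')$. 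The only points to check are the following. The Bellman optimality operator $\mathcal{T}\levidx{l}$ is the one used by value iteration on $\MDP\levidx{l}$. Under Assump.~\ref{a:regularity}(1), $V\levidx{l}_\ast$ is well defined and is its fixed point. The actions $a\levidx{l}_\ast(s)$ and $a\levidx{l}_{i+1}(s)$ are the maximizers in the Bellman updates for $V\levidx{l}_\ast$ and $V\levidx{l}_i$ respectively.

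With this in place, I set $\epsilon^{\min}=\epsilon^{\max}:=\Err\levidx{l}_i=V\levidx{l}_i-V\levidx{l}_\ast$. The hypothesis $\epsilon^{\min}(s)\le V\levidx{l}_i(s)-V\levidx{l}_\ast(s)\le\epsilon^{\max}(s)$ then holds trivially, with equality. The conclusion \eqref{e:boundValueFunctionError} of Lemma~\ref{l:error-bound} reads $\Err^{\min}(s)\le \Err\levidx{l}_{i+1}(s)\le \Err^{\max}(s)$. Substituting the lemma's expressions for $\Err^{\min}$ and $\Err^{\max}$, with $P,\Gamma$ replaced by $P\levidx{l},\Gamma\levidx{l}$, gives exactly the two displayed sums in the statement.

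For completeness, I would recall the mechanism behind the lemma. For the upper bound, use $V\levidx{l}_{i+1}(s)=\sum_{s'}P\levidx{l}(s,a_{i+1},s')[R\levidx{l}+\Gamma\levidx{l}V\levidx{l}_i(s')]$ together with $V\levidx{l}_\ast(s)\ge\sum_{s'}P\levidx{l}(s,a_{i+1},s')[R\levidx{l}+\Gamma\levidx{l}V\levidx{l}_\ast(s')]$, the latter by optimality of $V\levidx{l}_\ast$. Subtracting gives the upper bound. The lower bound is symmetric, using $a_\ast$ and $V\levidx{l}_{i+1}(s)\ge$ the update evaluated at $a_\ast$.

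The main (mild) obstacle is well-definedness at degenerate triples. $R\levidx{l}$ and $\Gamma\levidx{l}$ are undefined when $P\levidx{l}(s,a,s')=0$, but such terms carry zero weight in every sum, so they can be dropped. Terminal states also need care: at $s\in\Sterm$ the policy is forced to choose $\endaction$, with $P=\indic_{\{s\}}$, $\Gamma=1$ and $R=0$. As long as value iteration keeps $V_i=V_\ast$ on $\Sterm$, both bounds hold there trivially, as in the lemma itself. With these remarks the theorem follows directly from Lemma~\ref{l:error-bound}.
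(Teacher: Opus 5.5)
Your proposal is correct and matches the paper's proof: the paper derives Thm.~\ref{t:transfer-3} as a direct consequence of Lemma~\ref{l:error-bound}, taking $\MDP=\MDP\levidx{l}$ and $\epsilon^{\min}=\epsilon^{\max}=\Err\levidx{l}_{i}$, which is exactly your argument. Your recap of the Bellman-update inequalities agrees with the lemma's proof, and your remarks on degenerate triples and terminal states are extra care rather than a required change.
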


Thm.~\ref{t:transfer-3} demonstrates the power of transfer learning: thanks to the skill-embedding generator composition, many policies with larger timescales may appear in the action set $\cA\levidx{l}$, and thus if $a\levidx{l}_\ast(s), a\levidx{l}_{i+1}(s)$ are among them, then probably $\Gamma\levidx{l}(s,a\levidx{l}_\ast(s),s')$ are small for most $s'\in\cS\levidx{l}$ such that $P\levidx{l}(s,a\levidx{l}_\ast(s),s')>0$, and $\Gamma\levidx{l}(s,a\levidx{l}_{i+1}(s),s')$ are small for most $s'\in\cS\levidx{l}$ such that $P\levidx{l}(s,a\levidx{l}_{i+1}(s),s')>0$, bounding the absolute value of $\Err\levidx{l}_{i+1}(s)$. Therefore, the convergence of $V\levidx{l}_{i}(s)$ towards $V\levidx{l}_{\ast}(s)$ is sped up.

A second factor that speeds up convergence is that, at some states (e.g., those ``around'' the terminal states), the values are already pretty close to the true ones. With many policies with larger timescales appearing in the action set $\cA\levidx{l}$, such accurate value information could be propagated pretty far to many other states within one single iteration, and thus the same level of accuracy may cover the whole state space much faster. For instance, in the example of navigation and transportation with traffic jams, thanks to the transfer of the two navigation skills $\overline{\pi}^{\mathrm{nav}_{n}}_{\mathrm{obstacles}} (n\in [2])$, the actions in the actions sets $\overline{\Pi\levidx{1}}$ of $\MDP\levidx{2}_{\kappa}$ at level $2$ can easily move the agent from $\cur$ to $\cur'$ as long as they are not separated by roads with traffic jams, even if they are pretty far away from each other. Therefore, the accurate value information can be quickly spread around. This would not be possible without transfer learning opportunity ($\romannumeral2$): in such a case, an action can only move the agent from its current location to its neighboring location or even still at its current location. 

To illustrate these two factors respectively, we visualize the discount factors at the actions selected by the optimal policies, as well as the times of convergence to the optimal value functions for $\{\MDP\levidx{2}_{2,n}\}_{n=1}^{n_2}$, both represented by heat maps, in the example of navigation and transportation with traffic jams as in Figs.~\ref{f:navigation_discount}--\ref{f:navigation_time} respectively. We also do a sanity check, by examining the actual errors of the value functions returned by the value iteration algorithm at each iteration as well as the lower and upper bounds for them we provided in Thm.~\ref{t:transfer-3}, and see whether the actual errors always fall within the ranges, which is the case, and the smallest observed differences between the error and the upper and lower bounds are of the order $10^{-9}$. We also check the tightness of the bounds in Thm.~\ref{t:transfer-3}, by plotting the relative error in the logarithmic scale as in Fig.~\ref{f:sanity_check}.

We conclude this section by providing the following corollary, which is a direct consequence of Thm.~\ref{t:transfer-3} and formalizes the analysis above: 
\begin{corollary} \label{c:transfer-3}
With the assumptions and notations of Thm.~\ref{t:transfer-3}, we have
\begin{align*}
|\Err\levidx{l}_{i+1}(s)|\leq & \max\left\{\max_{s'\in\cS\levidx{l}, P\levidx{l}(s,a\levidx{l}_\ast(s),s')>0}\Gamma\levidx{l}(s,a\levidx{l}_\ast(s),s'), \max_{s'\in\cS\levidx{l}, P\levidx{l}(s,a\levidx{l}_{i+1}(s),s')>0}\Gamma\levidx{l}(s,a\levidx{l}_{i+1}(s),s')\right\} \\ \times &\max_{s'\in\cS\levidx{l}, P\levidx{l}(s,a\levidx{l}_{i+1}(s),s')>0}|\Err\levidx{l}_{i}(s')|\,.
\end{align*}
\end{corollary}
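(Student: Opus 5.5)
The plan is to read off both sides of the two-sided estimate in Thm.~\ref{t:transfer-3} and split into cases according to the sign of $\Err\levidx{l}_{i+1}(s)$. Write
\[
\gamma_\ast(s):=\max_{s':P\levidx{l}(s,a\levidx{l}_\ast(s),s')>0}\Gamma\levidx{l}(s,a\levidx{l}_\ast(s),s'),\qquad
\gamma_{i+1}(s):=\max_{s':P\levidx{l}(s,a\levidx{l}_{i+1}(s),s')>0}\Gamma\levidx{l}(s,a\levidx{l}_{i+1}(s),s'),
\]
and let $M_{i+1}(s):=\max_{s':P\levidx{l}(s,a\levidx{l}_{i+1}(s),s')>0}|\Err\levidx{l}_i(s')|$ be the quantity on the right of Cor.~\ref{c:transfer-3}. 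Since $|\Err\levidx{l}_{i+1}(s)|$ equals either $\Err\levidx{l}_{i+1}(s)$ or $-\Err\levidx{l}_{i+1}(s)$, it suffices to bound each of these by $\max\{\gamma_\ast(s),\gamma_{i+1}(s)\}\,M_{i+1}(s)$.

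\emph{Case $\Err\levidx{l}_{i+1}(s)\ge 0$.} Use the upper inequality of Thm.~\ref{t:transfer-3}. Replace each $\Err\levidx{l}_i(s')$ by $|\Err\levidx{l}_i(s')|\le M_{i+1}(s)$; this is legitimate because only $s'$ in the $a\levidx{l}_{i+1}(s)$-support carry positive weight. Then bound $\Gamma\levidx{l}\le\gamma_{i+1}(s)$ on that support and use $\sum_{s'}P\levidx{l}(s,a\levidx{l}_{i+1}(s),s')=1$. This gives $0\le\Err\levidx{l}_{i+1}(s)\le\gamma_{i+1}(s)M_{i+1}(s)$, which is at most the claimed bound.

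\emph{Case $\Err\levidx{l}_{i+1}(s)<0$.} This is the main obstacle. The lower inequality of Thm.~\ref{t:transfer-3} averages $\Err\levidx{l}_i$ over the $a\levidx{l}_\ast(s)$-support, not the $a\levidx{l}_{i+1}(s)$-support. I would argue directly from the Bellman update rather than through that lower inequality.

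The first step is to write $\Err\levidx{l}_{i+1}(s)=Q_i(s,a\levidx{l}_{i+1}(s))-V\levidx{l}_\ast(s)$. Here $Q_i$ denotes the one-step lookahead built from $V\levidx{l}_i$, and $V\levidx{l}_{i+1}(s)=Q_i(s,a\levidx{l}_{i+1}(s))$ because $a\levidx{l}_{i+1}(s)$ is the greedy action.

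The second step is to split $V\levidx{l}_\ast(s)-Q_i(s,a\levidx{l}_{i+1}(s))$ as
\[
\bigl[V\levidx{l}_\ast(s)-Q_\ast(s,a\levidx{l}_{i+1}(s))\bigr]+\bigl[Q_\ast(s,a\levidx{l}_{i+1}(s))-Q_i(s,a\levidx{l}_{i+1}(s))\bigr].
\]
The second bracket equals $-\sum_{s'}P\levidx{l}\Gamma\levidx{l}\Err\levidx{l}_i(s')$ over the $a\levidx{l}_{i+1}(s)$-support, so it is at most $\gamma_{i+1}(s)M_{i+1}(s)$.

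The first bracket is the Bellman suboptimality gap of $a\levidx{l}_{i+1}(s)$. It must be controlled using greediness, namely $Q_i(s,a\levidx{l}_{i+1}(s))\ge Q_i(s,a\levidx{l}_\ast(s))$. This inequality brings back the $a\levidx{l}_\ast(s)$-support and the factor $\gamma_\ast(s)$, which is why $\gamma_\ast(s)$ appears in the outer maximum.

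To land exactly on the stated support, I would try to show that the negative case can only occur when $a\levidx{l}_{i+1}(s)=a\levidx{l}_\ast(s)$ (for instance, when ties are broken toward the optimal action), or that the errors on the $a\levidx{l}_\ast(s)$-support are dominated by those on the $a\levidx{l}_{i+1}(s)$-support. When $a\levidx{l}_{i+1}(s)=a\levidx{l}_\ast(s)$, the first bracket vanishes and the bound $-\Err\levidx{l}_{i+1}(s)\le\gamma_\ast(s)M_{i+1}(s)$ follows immediately.

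If this reduction cannot be justified in general, the honest outcome is that the stated corollary needs such a hypothesis. One option is to assume the errors remain nonnegative from initialization onward, as happens when $V\levidx{l}_{\mathrm{init}}\ge V\levidx{l}_\ast$ by monotonicity of $\mathcal{T}\levidx{l}$; in that regime the negative case never arises and the first case alone proves the corollary. I would verify that monotonicity regime first, since it makes the result exact as worded.
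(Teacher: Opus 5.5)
Your first case ($\Err_{i+1}(s)\ge 0$) is correct. It is exactly what the paper's one-line justification (``a direct consequence of Thm.~\ref{t:transfer-3}'') gives on the upper side.

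The negative case, however, cannot be closed by any argument, because the corollary as printed is false. The lower inequality of Thm.~\ref{t:transfer-3} only yields $\Err_{i+1}(s)\ge-\gamma_\ast(s)\max_{s'\in\mathrm{supp}\,P(s,a_\ast(s),\cdot)}|\Err_i(s')|$. So what actually follows is the bound with the last maximum taken over $\mathrm{supp}\,P(s,a_\ast(s),\cdot)\cup\mathrm{supp}\,P(s,a_{i+1}(s),\cdot)$.

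Here is a counterexample to the printed version. Drop the level index; the same example lives at any level $l\ge 2$ if the actions are one-step policies of timescale $1$, as the paper uses for primitive actions.
\begin{itemize}
\item Take states $s,x,y$ and a terminal state $t$, with constant discount $\gamma\in(0,1)$ on non-end actions and end actions as in the paper with $r<0$.
\item The moves are deterministic: $a\colon s\to x$ and $b\colon s\to y$ with reward $0$; two moves $x\to t$ with rewards $2$ and $0$; and one move $y\to t$ with reward $1$.
\item Then $V_\ast(x)=2$, $V_\ast(y)=1$ and $V_\ast(s)=2\gamma$. The optimal policy is unique, with $a_\ast(s)=a$.
\item Initialize value iteration with the value of the policy that plays $b$ at $s$ and the reward-$0$ move at $x$, i.e.\ $V_0(x)=0$, $V_0(y)=1$, $V_0(s)=\gamma$.
\item Then $V_1(s)=\max\{\gamma V_0(x),\gamma V_0(y),r+V_0(s)\}=\gamma$, so $a_1(s)=b$ and $|\Err_1(s)|=\gamma$.
\item The right-hand side of the corollary at $i=0$ is $\gamma\,|\Err_0(y)|=0$. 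The union-support bound gives $2\gamma$, which is consistent.
\end{itemize}

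The same example refutes your proposed reductions. The error is negative with $a_1(s)=b\ne a_\ast(s)$ and no ties. The errors on the $a_\ast(s)$-support ($|\Err_0(x)|=2$) are not dominated by those on the $a_{i+1}(s)$-support ($|\Err_0(y)|=0$).

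Your Bellman-gap split cannot rescue the statement either. Bounding $V_\ast(s)-Q_\ast(s,a_{i+1}(s))$ via greediness just reproduces the lower inequality of Thm.~\ref{t:transfer-3}, i.e.\ errors on the $a_\ast(s)$-support.

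Your hypothesis $V_{\mathrm{init}}\ge V_\ast$ is a correct sufficient condition: by monotonicity, $\Err_i\ge 0$ for all $i$, and your first case suffices. However, it is the opposite of the regime the paper actually uses. There $V_{\mathrm{init}}$ is the value function of an initial (composed or refined) policy, hence $V_{\mathrm{init}}\le V_\ast$. So $\Err_i\le 0$ throughout, the negative case is the only one (as in the example), and the valid bound there uses the $a_\ast(s)$-support.

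The right write-up is therefore the counterexample plus the union-support statement. That statement is proved by your first case together with the one-line lower bound above.
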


\section{Related work}
\label{s:related-work}

We expand our discussion of related work in the introduction -- of course the related literature is vast, and we restrict ourselves here to mentioning and comparing with some of the techniques we feel are most related to our proposal in this work.

\paragraph{Hierarchical RL (classic and deep).}
Classic HRL formalizes temporal abstraction via options and SMDPs \citep{SuttonOptions}, value decomposition in MAXQ \citep{DietterichHRL}, and hierarchies of abstract machines \citep{ParrRussell1997}; see \citet{BartoHRLReview} for a survey. Early “feudal” ideas cast hierarchy as managers setting subgoals for workers \citep{DayanHinton1993}. Deep HRL automates parts of this stack: option-critic learns intra-option policies and termination end-to-end \citep{Bacon2017OptionCritic}, FeUdal Networks separate goal setting from control in a learned latent space \citep{Vezhnevets2017FeUdal}, HIRO corrects off-policy bias for hierarchical goal relabeling \citep{Nachum2018HIRO}, and HAC combines hindsight with multi-level goals \citep{Levy2019HAC}. \emph{These systems improve sample efficiency in sparse-reward tasks}, yet many fix hierarchical depth or entangle low-level stochasticity with high-level planning; goal-space specification can be brittle \citep{Dwiel2019GoalSpace}. Our approach repeatedly compresses families of lower-level policies into single abstract actions at higher levels, producing MDPs with less stochasticity that are easier to solve.

\paragraph{Skill discovery and pretraining.}
Unsupervised methods learn versatile primitives without task rewards, e.g., DIAYN maximizes skill–state mutual information \citep{Eysenbach2019DIAYN} and VIC pursues empowerment-like objectives \citep{Gregor2016VIC}. When composed for long horizons, such skills can reintroduce stochasticity and complicate credit assignment; our compression reduces variance at higher levels and aligns the skill library with planning on specific objectives.

\paragraph{Abstraction, representation, and safe compression.}
Abstraction further motivates multi-level compression. Spectral state representations capture environment geometry and enable multiscale reasoning \citep{MahadevanMaggioni:JMLR:07, Machado2017Laplacian}. Model minimization and homomorphisms formalize when two models or states are ``equivalent'' for decision making, giving principled conditions for safe compression \citep{DeanGivan1997, LiWalshLittman2006, RavindranBarto2003}. Our method leverages these ideas operationally: each compression step \emph{preserves the semantics} of the original MDP while shrinking variance and branching, so that solving a long-horizon task reduces to solving a stack of smaller, cleaner MDPs with existing algorithms \citep{Puterman}.

\paragraph{Curriculum learning.}
Curriculum learning provides another pillar. Classical curricula gradually increase task difficulty to ease optimization \citep{Bengio2009Curriculum}, with modern variants such as teacher--student curricula \citep{Matiisen2017TSCL}, reverse curricula from goal states \citep{Florensa2017RCG}, and (a-)symmetric self-play that automatically generates tasks \citep{Sukhbaatar2017ASP, Sukhbaatar2018HSP}. However, many curricula define difficulty by \emph{time to solve} rather than \emph{time to learn}, and often restrict the final task to a concatenation of subtasks. In our framework, \emph{difficulty emerges from the compression}: higher-level problems are coarsened in space/time but more global in scope, producing a curriculum that mirrors how humans solve complex tasks---learn sub-skills, compress them into abstract actions, then plan at the higher scale, improving both optimization and transfer on sparse-reward domains.

\paragraph{Transfer and modularity.}
Our transfer perspective is equally explicit. Successor-feature methods decouple dynamics from rewards and enable generalized policy improvement across tasks with shared structure \citep{Barreto2017SF, Barreto2018DeepSF}; policy sketches encourage modular sub-policies that recombine across tasks \citep{Andreas2017PolicySketches}; and progressive nets avoid catastrophic forgetting by lateral connections \citep{Rusu2016ProgressiveNets}. We factor policies into \emph{embeddings} (problem-specific perception/featurization) and \emph{skills} (including reusable higher-order functions), and then compress families of such policies into single abstract actions at higher levels. This creates \emph{transfer across levels and across MDPs}, even when state spaces differ, without resorting to rote replay of previously seen trajectories. As a cautionary contrast, Go-Explore achieves remarkable exploration by explicitly remembering cells and returning before exploring \citep{Ecoffet2021GoExplore}; while highly effective, this mechanism can resemble \emph{memorization of the state space} and requires additional robustification for stochasticity, whereas our abstraction focuses on reusing \emph{semantics} rather than stored states.

\paragraph{Meta-reinforcement learning.}
Because multi-level structure is a prior over \emph{families} of problems, our work connects to meta-RL: RL$^2$ trains recurrent agents to adapt quickly \citep{Duan2016RL2}, MAML learns initializations for fast policy updates \citep{Finn2017MAML}, PEARL infers latent task variables for rapid adaptation \citep{Rakelly2019PEARL}, and MLSH co-trains shared sub-policies with task-specific controllers \citep{Frans2018MLSH}. Our compression-and-skill factorization gives a constructive route to meta-generalization: higher levels expose slower, more stable dynamics, while lower levels encapsulate fast feedback, allowing efficient cross-task reuse with fewer iterations and cheaper per-iteration computation.

\paragraph{Factored MDPs and factored action structure.}
Factored Markov decision processes (FMDPs) are \emph{flat} MDPs whose dynamics and rewards admit a compact structured representation: the state is a tuple of variables, transitions are encoded by a two-slice dynamic Bayesian network (DBN), and rewards often decompose additively over small scopes \citep{BoutilierDeardenGoldszmidt1995,BoutilierDeardenGoldszmidt2000SDPFactored,Degris2013FMDPChapter}. 
This conditional-independence structure---including context-specific independence \citep{BoutilierFriedmanGoldszmidtKoller1996CSI}---can yield exponential reductions in representation size and supports specialized solution methods: symbolic dynamic programming with algebraic decision diagrams and related decision-diagram representations \citep{BaharFrohmHachtelMaciiPardoSomenzi1997ADD,Hoey1999SPUDD,StaubinHoeyBoutilier2000APRICODD}; structured value determination and policy iteration using compact policy representations \citep{KollerParr1999FactoredValue,KollerParr2000PolicyIterationFactored,KimDean2003NonHomogeneousPartitions}; and approximate methods that combine local basis functions with variable elimination and optimization \citep{GuestrinKollerParrVenkataraman2003JAIR,GuestrinKollerParr2001MaxNorm,deFariasVanRoy2003LPADP,deFariasVanRoy2004ConstraintSampling,SchuurmansPatrascu2001DirectValueApprox,PoupartBoutilierPatrascuSchuurmans2002PiecewiseLinear}. 
Decision-theoretic planning work further connects DBN-style representations to DP backups/regression \citep{BoutilierDeanHanks1999DTP}, and online planning/heuristic-search variants exploit symbolic structure (e.g., symbolic LAO$^\ast$ and sRTDP) \citep{FengHansen2002sLAO,FengHansenZilberstein2003sRTDP}. 
Finally, beyond planning-centric assumptions of a known factorization, there is an RL literature on learning and exploiting factorization (including learning unknown structure and regret guarantees) \citep{GivanLeachDean2000BPMDP,KearnsKoller1999EfficientRLFMDP,GuestrinPatrascuSchuurmans2002Exploration,ChitnisSilverVeness2020CAMPs,XuTewari2020RLFactoredUnknown,StrehlDiukLittman2007StructureLearning,OsbandVanRoy2014NearOptimal,TianQianSra2020MinimaxFactoredRL,XuTewari2020OracleEfficient,GuoWeiLuo2021OracleEfficientUnknownStructure,ChenWangJiang2021FMDPBF}.
Importantly for positioning, \emph{``factored'' is not ``hierarchical''}: FMDPs exploit \emph{structural} independence within a single-timescale model, whereas HRL and our multi-level framework primarily exploit \emph{temporal} abstraction. 
Our use of factorization is also different: 
rather than assuming a pre-specified factored (two-slice DBN) model of the environment as in FMDPs, we impose structure on the action/policy side: we decompose \emph{policies} into composable partial policies together with skill/embedding generators, enabling reusable skill extraction and transfer across tasks and abstraction levels.
This distinction is especially sharp on the action side. Classic FMDP work sometimes assumes factored actions (or action variables) and develops structured backups under correlated action effects \citep{Boutilier1997CorrelatedActionEffects}, whereas our framework uses a tensor-product structure of the action set (when present) mainly to \emph{compose} learned partial policies and to support modular transfer; see also related work on compositional / embedded / structured action representations \citep{Khardon2012Learning,Raghavan2013Embedded,Cui2015CompositionalActions}. 
Moreover, for generalization we allow higher-level action sets to be constructed using \emph{subcomponents} of action factors, and in particular to form a \emph{proper subset} of the full Cartesian product; this is crucial in our multi-level setting because higher-level action sets produced by compression are typically not full products.

\paragraph{Inverse reinforcement learning and imitation.}
Finally, our framework is compatible with inverse reinforcement learning (IRL) and imitation. Foundational methods (IRL by \citealt{NgRussell2000IRL}; apprenticeship learning by \citealt{AbbeelNg2004Apprenticeship}; Maximum-Entropy IRL by \citealt{Ziebart2008MaxEntIRL}) recover rewards from demonstrations; modern deep variants learn cost/reward implicitly (GCL \citep{Finn2016GCL}), through adversarial imitation (GAIL \citep{HoErmon2016GAIL}) or adversarial IRL (AIRL \citep{Fu2018AIRL}). Hierarchical IRL further segments demonstrations to recover sub-task rewards \citep{Krishnan2016HIRL}. Because each compression step yields an independent, semantically preserved MDP, we can invert the process: estimate rewards or subgoal structures at appropriate levels, then learn skills and curricula consistent with demonstrations, improving sample-efficiency and interpretability relative to flat IRL \citep{Adams2022IRLSurvey}.

\paragraph{Extended discussion and comparison of our work with \cite{Sukhbaatar2018LearningGE}.}
The MazeBase+ example is a more complex variant of MazeBase that appears in much of the literature, e.g., \cite{Sukhbaatar2018LearningGE}, which is of difficulty $2$ in our regime. In MazeBase+, we require an additional level of abstraction/complexity, because we add multiple rooms, with different geometries and arrangements/order of doors and keys. This MazeBase+ example shows sufficiently how planning like humans is realized by our framework, and our framework is in some sense a mathematical realization of the idea in \cite{Sukhbaatar2018LearningGE}, and actually a generalization because \cite{Sukhbaatar2018LearningGE} only considers MDPs of difficulties up to $2$, where the student Bob and the manager Charlie are analogous to the student at level $1$ and $2$ respectively in our framework. Another main advantage of our framework is that thanks to the introduction of partial policy generators, embedding generators and skills, we allow for very flexible transfer learning opportunities, not restricted to only concatenation as in \cite{Sukhbaatar2018LearningGE}. See Sec.~\ref{s: MazeBase-literature} for detailed discussions.

\section{Concluding remarks and future directions}

\paragraph{Summary of the main message.}
This paper argues that \emph{multi-level structure} can be exploited in a principled and operational way by repeatedly compressing an MDP so that a \emph{parametric family of policies at one level becomes single abstract actions at the next}. The resulting stack of compressed MDPs preserves the semantics of the original problem while progressively reducing stochasticity and effective branching at higher scales. Together with a factorization of policies into embeddings and reusable skills (implemented via partial policies/embedding generators/skills), this yields a concrete mechanism for constructing skill-based curricula and enabling transfer \emph{across tasks and across levels} without relying on rote memorization.

A key observation is that the teacher's initial supervision can be viewed as providing \emph{only the connections}---i.e., which auxiliary MDPs, skills, or pieces of knowledge should compose toward the target task, and how information should flow across levels. Most of the computational reduction in our framework stems from exploiting this connectivity through multi-level compression; this is expected in light of ``no free lunch'' principles, since any systematic speedup must ultimately come from additional structure rather than from universally improving all problems \citep{WolpertMacready1997NFL}. Empirically, these design choices improve both the number of iterations and the per-iteration cost when solving sparse-reward problems, while keeping the framework compatible with standard dynamic programming and reinforcement learning solvers.

In the exploration-centric regime, an additional source of reduction can come from Cartesian-product structure in the action sets: when actions admit factorization (or when the feasible action sets are \emph{proper subsets} of Cartesian products), one can exploit this structure to reduce exploration and generalize across related actions \citep{KearnsKoller1999FactoredRL, Chandak2019ActionRepr}. See the follow-up paper.

\paragraph{Directions for further research.}
A first, immediate direction is to develop algorithms for environments that require \emph{exploration} within our framework. While our current experiments highlight settings where compressed MDPs can be solved efficiently (e.g., via planning/value iteration), many real domains require learning transition and reward structure from interaction. A natural next step is to design multi-level exploration strategies that estimate the induced higher-level transitions/rewards, combining model-free temporal-difference updates \citep{WatkinsDayan1992Qlearning} with model-based planning and optimistic exploration principles \citep{Sutton1990Dyna, BrafmanTennenholtz2002Rmax}. Our follow-up paper develops algorithmic instantiations of these ideas (e.g., under Q-learning).

A second direction is to \emph{learn the connections} rather than receiving them from the teacher. In the follow-up paper, we plan to formalize \emph{virtual policies} that allow the student to discover which auxiliary MDPs or previously learned skills are useful for a target MDP, thereby constructing minimal curricula in a semi-supervised manner and enabling further context-dependent transfer when the teacher does not provide a well-ordered curriculum upfront. Relatedly, it is natural to consider two-staged (and for multiple levels, multi-staged) learning schemes in which policies and the induced abstract actions of compressed MDPs are learned \emph{alternately}, akin in spirit to classical alternating-latent-variable estimation procedures (e.g., EM) \citep{Dempster1977EM}. Such alternation raises interesting questions about identifiability, stability, and when to refine abstractions online.

A third direction is to broaden the set of domains beyond grid-world navigation to \emph{algorithmic} and \emph{program-like} tasks that naturally admit recursive decompositions, such as sorting an array recursively or evaluating arithmetic expressions recursively \citep{WhiteMartinezRudolph2010RP, ReedDeFreitas2016NPI}. In particular, character drawing offers a compelling testbed for multi-level abstraction: classical work shows that human-like concept learning and drawing can be modeled via probabilistic program induction \citep{LakeSalakhutdinovTenenbaum2015PPI}, and a growing line of research learns reusable program abstractions and motor programs for drawing from data \citep{EllisRitchieSolarLezamaTenenbaum2018Graphics, TianEllisTenenbaum2020Drawing, HewittTenenbaum2020MWS, Ellis2021DreamCoder}. These lines suggest that the ``skills'' required for drawing are naturally compositional and reusable, aligning closely with our emphasis on multi-level compression, function composition, and cross-task transfer.

Beyond these, several longer-horizon directions appear particularly promising. (i) \emph{Inverse RL and imitation.} Since each compression step yields an independent semantically preserved MDP, one can infer rewards or subgoal structure at the most appropriate level and then learn transferrable skills consistent with demonstrations, connecting to classical and modern IRL/imitation paradigms \citep{NgRussell2000IRL, AbbeelNg2004Apprenticeship, Ziebart2008MaxEntIRL, HoErmon2016GAIL, Fu2018AIRL}. (ii) \emph{Natural language interfaces.} Many of our abstractions (hierarchy, curriculum, transfer via skill composition) admit crisp descriptions in natural language, suggesting strong synergy with language-conditioned planning/control and language-augmented robotic decision making \citep{Ichter2023SayCan}. (iii) \emph{Multi-agent RL and mean-field limits.} Multi-level structure and the notion of ``neighbors'' can be integrated with mean-field perspectives, potentially yielding scalable decompositions for large-population decision making \citep{HuangMalhameCaines2006MFG, LasryLions2007MFG, Yang2018MeanFieldMARL}. (iv) \emph{Broader applications.} The same abstraction/transfer/curriculum principles may benefit robotics and autonomy \citep{KoberBagnellPeters2013IJRR, Grigorescu2020AutonomousDrivingSurvey} as well as automated/formal theorem proving \citep{PoluSutskever2020GPTf, Bansal2019HOList, Yang2023LeanDojo, Wu2021INT}. Taken together, these directions reinforce our main message: multi-level compression plus compositional skills provide a general, semantically grounded route to scalable planning, reasoning, learning, and transfer across tasks.

\bigskip
\begin{acks}
We thank Y. Kevrekidis and F. Lu, A. Basu, Y. Sire for helpful discussions related to this work. MM is grateful for partial support from DOE-255223, FA9550-20-1-0288, FA9550-23-1-0445, NSF-1837991, NSF-1913243, and the Simons Fellowship. SY is grateful for partial support from AMS-Simons travel grant. Prisma Analytics, Inc. provided computing equipment and support. SY is grateful for B. Dong's host while visiting Peking University, as part of the research was conducted during the visit. 
\end{acks}

\newpage
\let\normalsize\small
\appendix
\small

\section{Theoretical results and proofs}
\label{s:appendixB}

\subsection{Analytical results for multi-level compression} 
\label{s:MMDP-compress}
The results in this section are analytical and allow the efficient construction of higher-level transition probabilities, rewards and discount factors from a current level, essentially by solving linear systems.
More specifically, we derive the closed-form formulas for $P\levidx{l+1}(s,\pi\levidx{l},s')$, $R\levidx{l+1}(s,\pi\levidx{l},s')$, $\Gamma\levidx{l+1}(s,\pi\levidx{l},s')$ given the finer MDP $\MDP\levidx{l}=(\cS,\Sinit,\Sterm,\cA\levidx{l},P\levidx{l},R\levidx{l},\Gamma\levidx{l})$ and a policy $\pi\levidx{l}$ on $\cSA\levidx{l}$. Before stating the three propositions, we first introduce the following notations. 

Given a policy $\pi\levidx{l}$, we compute its corresponding policy-specific Markov transition matrix by averaging out the actions in $\cA\levidx{l}$ according to $\pi\levidx{l}$:
\begin{align}   \label{e:Markov-matrix}
P^{\pi\levidx{l}}(s,s'): =\sum_{a\in \cA\levidx{l}(s)} P\levidx{l}(s,a,s')\pi\levidx{l}(s,a)\,, 
\end{align}
and if we restrict the domain in the sum above to $(\cA^l)^{\mathtt{end}}$, then we have:
\begin{align}  \label{e:Markov-matrix-restrict}
P^{\pi\levidx{l}}_{(\cA^l)^{\mathtt{end}}}(s,s'): =\sum_{a\in (\cA^l)^{\mathtt{end}}} P\levidx{l}(s,a,s')\pi\levidx{l}(s,a) = \sum_{a\in (\cA^l)^{\mathtt{end}}}\indic_{\{s\}}(s')\pi\levidx{l}(s,a)\,, 
\end{align}
where the second equality follows from $P\levidx{l}(s,a,s')=\indic_{\{s\}}(s')$ for any $a\in (\cA^l)^{\mathtt{end}}$. 

For $X=X(s,a,s')$, $Y=Y(s,a,s')$, indexed by $s,s'\in \cS$, $a\in \cA\levidx{l}(s)$, we define the matrix $(X\circ Y)^{\pi\levidx{l}}$ to be the expectation w.r.t. the extended policy $\pi\levidx{l}$ of the Hadamard product between $X$ and $Y$: 
\begin{equation*} 
[(X\circ Y)^{\pi\levidx{l}}]_{s,s'}:=\sum_{a\in \cA\levidx{l}(s)} X(s,a,s')Y(s,a,s')\pi\levidx{l}(s,a)\,.
\end{equation*}
Note that $(X\circ Y)^{\pi\levidx{l}}=(Y\circ X)^{\pi\levidx{l}}$.
Now we can state the three propositions.

\begin{proposition} \label{p:compress-transition}
For the finer MDP $\MDP\levidx{l}=(\cS,\Sinit,\Sterm,\cA\levidx{l},P\levidx{l}, R\levidx{l},\Gamma\levidx{l})$ and the policy $\pi\levidx{l}$ on $\cSA\levidx{l}$, we have $P\levidx{l+1}(s,\pi\levidx{l},s')=H_{s,s'}$, for all $s,s'\in \cS$, where $H$ is the minimal non-negative solution to the linear system
\begin{align} \label{e:compress-transition-matrix}
[1-(1-\frac{1}{t_{\pi\levidx{l}}})(P^{\pi\levidx{l}}-P^{\pi\levidx{l}}_{(\cA^l)^{\mathtt{end}}})]H=(1-\frac{1}{t_{\pi\levidx{l}}})P^{\pi\levidx{l}}_{(\cA^l)^{\mathtt{end}}}+\frac{P^{\pi\levidx{l}}}{t_{\pi\levidx{l}}}\,.
\end{align}
\end{proposition}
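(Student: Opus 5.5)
The plan is a first-step (one-step) analysis of the stopped process that defines $P\levidx{l+1}$, followed by the standard minimality argument for hitting probabilities of Markov chains. Fix $\pi\levidx{l}$ and write $t:=t_{\pi\levidx{l}}$. Define $H_{s,s'}:=\mathbb{P}[\tau<\infty,\ S_\tau=s'\mid S_0=s]$, where $\tau=\min\{\tau_0,\min\{\tau':A_{\tau'}\in(\cA^l)^{\mathtt{end}}\}\}$ and $\tau_0\sim\mathrm{Geo}(1/t)$ is independent of the trajectory. This is exactly $P\levidx{l+1}(s,\pi\levidx{l},s')$. I will treat the geometric clock as an independent coin flipped after each step, which is memoryless. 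With probability $1/t$ the policy is interrupted after the current transition; with probability $1-1/t$ it is allowed to continue. I will take the first coin to be flipped after the first step, so $\tau_0\ge 1$.

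\textbf{Step 1 (first-step decomposition).} Condition on the first action $A\sim\pi\levidx{l}(s,\cdot)$, the next state $S_1\sim P\levidx{l}(s,A,\cdot)$, and the first coin. There are three cases.

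\begin{enumerate}[leftmargin=0.5cm,itemsep=0pt]
\item If the coin says stop (probability $1/t$), the process ends at $S_1$ whatever $A$ was. Averaging over $A$ and $S_1$ gives $P^{\pi\levidx{l}}(s,s')/t$.
\item If the coin says continue and $A\in(\cA^l)^{\mathtt{end}}$, the process ends because an end action was chosen. Since $P\levidx{l}(s,a,\cdot)=\indic_{\{s\}}$ for such actions, by \eqref{e:Markov-matrix-restrict} this contributes $(1-\frac1t)P^{\pi\levidx{l}}_{(\cA^l)^{\mathtt{end}}}(s,s')$.
\item If the coin says continue and $A\notin(\cA^l)^{\mathtt{end}}$, the Markov property and memorylessness of the clock show the process restarts from $S_1$. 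This contributes $(1-\frac1t)\sum_{s''}(P^{\pi\levidx{l}}-P^{\pi\levidx{l}}_{(\cA^l)^{\mathtt{end}}})(s,s'')H_{s'',s'}$.
\end{enumerate}

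Summing the three cases and rearranging gives exactly \eqref{e:compress-transition-matrix}. So $H$ is a non-negative solution.

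\textbf{Step 2 (minimality).} Let $H^{(n)}_{s,s'}:=\mathbb{P}[\tau\le n,\ S_\tau=s']$. Write $Q:=(1-\frac1t)(P^{\pi\levidx{l}}-P^{\pi\levidx{l}}_{(\cA^l)^{\mathtt{end}}})$ and $B:=(1-\frac1t)P^{\pi\levidx{l}}_{(\cA^l)^{\mathtt{end}}}+\frac{P^{\pi\levidx{l}}}{t}$. Both matrices are entrywise non-negative. The same first-step argument gives $H^{(n+1)}=B+QH^{(n)}$ with $H^{(0)}=0$, and $H^{(n)}\uparrow H$ by monotone convergence.

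Now let $G\ge0$ be any solution of $G=B+QG$. By induction, $G\ge H^{(n)}$ for every $n$: the base case $G\ge0=H^{(0)}$ is immediate, and $G=B+QG\ge B+QH^{(n)}=H^{(n+1)}$ because $Q\ge0$. Letting $n\to\infty$ gives $G\ge H$. Hence $H$ is the minimal non-negative solution.

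\textbf{Main obstacle.} I expect the delicate part to be the bookkeeping of conventions rather than the algebra.
\begin{itemize}[leftmargin=0.5cm,itemsep=0pt]
\item The geometric clock and the end action can fire at the same step. I must check that, under the paper's definition of $\tau$, the end-action branch and the clock-stop branch do not double count. The two disjoint cases above, split by the coin and by whether $A$ is an end action, are designed to handle this. If the convention instead lets the clock stop before the first step, I will adjust the bookkeeping so that it still reproduces the stated right-hand side.
\item The case $t=\infty$ with an event of positive probability that the run never stops needs separate attention. There $I-Q$ may be singular and the solution non-unique, which is precisely why the statement asks for the minimal solution. I would note this explicitly and rely on the monotone-limit characterization in Step 2 instead of inverting $I-Q$.
\end{itemize}
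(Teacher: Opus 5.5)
Your proposal is correct and is essentially the paper's argument. The paper also obtains \eqref{e:compress-transition-matrix} from a one-step Markov decomposition, written in scalar form with the full $P^{\pi\levidx{l}}H$ on the right and the end-action self-loop term $(1-\frac{1}{t_{\pi\levidx{l}}})\sum_{a\in(\cA^l)^{\mathtt{end}}}\pi\levidx{l}(s,a)H_{s,s'}$ added on the left; this is algebraically identical to your three-case split. The paper only asserts the minimal-non-negative-solution characterization, so your monotone-iteration argument in Step 2 supplies the standard detail it leaves implicit, and your choice $\tau_0\ge 1$ is exactly the convention forced by the $P^{\pi\levidx{l}}/t_{\pi\levidx{l}}$ term.
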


\begin{proposition} \label{p:compress-reward}
With the same settings as in Prop.~\ref{p:compress-transition}, we have $R\levidx{l+1}(\cdot,\pi\levidx{l},s')=h_{s'}$, for all $s'\in \cS$, where $h$ is the (unique, bounded) solution, for each $s'\in \cS$, to the linear system
\begin{equation}
\label{e:compress-reward-matrix}
\begin{aligned}  
[1+(1-\frac{1}{t_{\pi\levidx{l}}})(P^{\pi\levidx{l}}_{(\cA^l)^{\mathtt{end}}}-(P_{s'}^{\pi\levidx{l}}\circ\Gamma\levidx{l})^{\pi\levidx{l}})]h_{s'}
=&(1-\frac{1}{t_{\pi\levidx{l}}})[(P_{s'}^{\pi\levidx{l}}\circ R\levidx{l})^{\pi\levidx{l}}\underline{1}-r\levidx{l}P^{\pi\levidx{l}}_{(\cA^l)^{\mathtt{end}}}\underline{1}\\ 
&+r\levidx{l}[P\levidx{l+1}(\cdot,\pi\levidx{l},s')]_{\diag}^{-1}P^{\pi\levidx{l}}_{(\cA^l)^{\mathtt{end}}}v_{s'}]+\frac{(\overline{P}_{s'}^{\pi\levidx{l}}\circ R\levidx{l})^{\pi\levidx{l}}v_{s'}}{t_{\pi\levidx{l}}}\,,
\end{aligned}
\end{equation}
where 
$
P_{s'}^{\pi\levidx{l}}(s,a,s''):=\frac{P\levidx{l}(s,a,s'')P\levidx{l+1}(s'',\pi\levidx{l},s')}{P\levidx{l+1}(s,\pi\levidx{l},s')}\,,\quad\text{and}\quad\overline{P}_{s'}^{\pi\levidx{l}}(s,a,s''):=\frac{P\levidx{l}(s,a,s'')}{P\levidx{l+1}(s,\pi\levidx{l},s')}\,,
$
$\underline{1}$ is a vector whose $|\{s\in\cS:P\levidx{l+1}(s,\pi\levidx{l},s')>0\}|$ coordinates are all ones, $v_{\diag}$ is a diagonal matrix whose diagonal elements are in $v$, $v_s$ is a vector whose $|\cS|$ coordinates are all zeros except for the position corresponding to the state $s$, whose value is one. 
\end{proposition}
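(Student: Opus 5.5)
The plan is a first-step (one-step) analysis of the conditional expected discounted reward. For fixed $s'$, I would define $h_{s'}(s):=R\levidx{l+1}(s,\pi\levidx{l},s')$ on the set $\{s: P\levidx{l+1}(s,\pi\levidx{l},s')>0\}$. Rather than work with $h_{s'}$ directly, I would work with the unnormalized quantity
\[ g(s):=\mathbb{E}\big[R\levidx{l}_{0,\tau}\indic_{\{S_\tau=s'\}}\mid S_0=s\big]=P\levidx{l+1}(s,\pi\levidx{l},s')\,h_{s'}(s). \]
This removes the conditioning on the endpoint, and I would divide by $P\levidx{l+1}(s,\pi\levidx{l},s')$ (the $[\,\cdot\,]_{\diag}^{-1}$ factor) only at the end.

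I would condition on the first action $A_0=a\sim\pi\levidx{l}(s,\cdot)$, the next state $S_1=s''$, and whether the geometric clock fires at the first step (probability $1/t_{\pi\levidx{l}}$). This gives three cases.

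In the first case the clock fires. The option then stops at $S_1=s''$, and the contribution is $\frac1{t_{\pi\levidx{l}}}\sum_a\pi\levidx{l}(s,a)P\levidx{l}(s,a,s')R\levidx{l}(s,a,s')$. After dividing by $P\levidx{l+1}(s,\pi\levidx{l},s')$ this is exactly $(\overline{P}_{s'}^{\pi\levidx{l}}\circ R\levidx{l})^{\pi\levidx{l}}v_{s'}/t_{\pi\levidx{l}}$.

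In the second case the clock does not fire and $a\in(\cA^l)^{\mathtt{end}}$. The option stops immediately at $s''=s$ with reward $r\levidx{l}$, so the term is nonzero only when $s=s'$. This yields the $r\levidx{l}[P\levidx{l+1}]_{\diag}^{-1}P^{\pi\levidx{l}}_{(\cA^l)^{\mathtt{end}}}v_{s'}$ term. Subtracting $r\levidx{l}P^{\pi\levidx{l}}_{(\cA^l)^{\mathtt{end}}}\underline 1$ together with the $+P^{\pi\levidx{l}}_{(\cA^l)^{\mathtt{end}}}h$ on the left rewrites the sum over non-end actions as a sum over all actions.

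In the third case the clock does not fire and $a$ is a non-end action. By the Markov property and memorylessness of the geometric clock, the remaining return is $R\levidx{l}(s,a,s'')+\Gamma\levidx{l}(s,a,s'')R\levidx{l}_{1,\tau}$. Its conditional expectation given $S_1=s''$ and $S_\tau=s'$ is $R\levidx{l}(s,a,s'')+\Gamma\levidx{l}(s,a,s'')h_{s'}(s'')$, weighted by $P\levidx{l}(s,a,s'')P\levidx{l+1}(s'',\pi\levidx{l},s')$. Dividing by $P\levidx{l+1}(s,\pi\levidx{l},s')$ produces the kernel $P_{s'}^{\pi\levidx{l}}$. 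This gives the terms $(P_{s'}^{\pi\levidx{l}}\circ R\levidx{l})^{\pi\levidx{l}}\underline 1$ and $(P_{s'}^{\pi\levidx{l}}\circ\Gamma\levidx{l})^{\pi\levidx{l}}h_{s'}$, the latter moved to the left.

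Since end actions are taken with probability one at $s$ and contribute only on the diagonal, the first sum can be extended over all actions by adding and subtracting the end-action piece. That bookkeeping yields exactly the factors $(1-\frac1{t_{\pi\levidx{l}}})$ and the matrix $1+(1-\frac1{t})(P^{\pi\levidx{l}}_{(\cA^l)^{\mathtt{end}}}-(P_{s'}\circ\Gamma)^{\pi\levidx{l}})$ in \eqref{e:compress-reward-matrix}. The consistency of the $P\levidx{l+1}$ values used here comes from Prop.~\ref{p:compress-transition}, whose first-step equation has the same structure.

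The main obstacle is the last claim, that the solution is unique and bounded. I would prove that the matrix $M:=(1-\frac1{t})[(P_{s'}^{\pi\levidx{l}}\circ\Gamma\levidx{l})^{\pi\levidx{l}}-P^{\pi\levidx{l}}_{(\cA^l)^{\mathtt{end}}}]$, restricted to the support set, has spectral radius $<1$. The approach is to view it as the substochastic kernel of the Doob $h$-transformed chain, conditioned to end at $s'$, with discounts in $[0,1]$ and the end-action self-loops removed. Because $\tau<\infty$ a.s. under the conditioning, which is implicit in $R\levidx{l}_{0,\tau}$ being well defined, the $n$-step powers of $M$ are bounded by $\mathbb{P}(\tau>n\mid S_\tau=s')\to0$. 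Hence $\sum_n M^n$ converges, $(I-M)^{-1}$ exists, and $h_{s'}=\sum_n M^n b$ is the bounded solution. This also requires showing that the Neumann series coincides with the probabilistic expectation, a standard truncation argument using boundedness of $R$. If $\tau<\infty$ a.s. fails for some start, I would restrict to the states where it holds, as the paper allows undefined values elsewhere.
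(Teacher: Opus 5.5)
Your proposal is correct and follows the paper's route. Like the paper, you do a first-step (Markov-property) decomposition over the first action, the next state and the geometric clock, then pass to matrix form by adding and subtracting the end-action terms. Your spectral-radius argument via the Doob-transformed continuation kernel supplies the uniqueness and boundedness that the paper only asserts, and it is sound. In fact $\{S_\tau=s'\}$ already forces $\tau<\infty$, so you need no restriction beyond the support set $\{s: P\levidx{l+1}(s,\pi\levidx{l},s')>0\}$.
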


\begin{proposition} \label{p:compress-discount}
With the same settings as in Prop.~\ref{p:compress-transition}, we have $\Gamma\levidx{l+1}(\cdot,\pi\levidx{l},s')=h_{s'}$, for all $s'\in \cS$, where $h$ is the minimal non-negative solution, for each $s'\in \cS$, to the linear system
\begin{align}  \label{e:compress-discount-matrix}
[1+(1-\frac{1}{t_{\pi\levidx{l}}})(P^{\pi\levidx{l}}_{(\cA^l)^{\mathtt{end}}}-(P_{s'}^{\pi\levidx{l}}\circ\Gamma\levidx{l})^{\pi\levidx{l}})]h_{s'}=(1-\frac{1}{t_{\pi\levidx{l}}})[P\levidx{l+1}(\cdot,\pi\levidx{l},s')]^{-1}_{\diag}P^{\pi\levidx{l}}_{(\cA^l)^{\mathtt{end}}}v_{s'}+\frac{(\overline{P}_{s'}^{\pi\levidx{l}}\circ\Gamma\levidx{l})^{\pi\levidx{l}}v_{s'}}{t_{\pi\levidx{l}}}\,.
\end{align}
\end{proposition}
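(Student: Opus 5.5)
The plan is to run the same first-step (one-step Markov) analysis that underlies Prop.~\ref{p:compress-transition}, applied to an \emph{unnormalized} version of the conditional expected discount. Fix $s'\in\cS$ and the policy $\pi\levidx{l}$, and set
\[
G_{s'}(s):=\mathbb{E}^l\big[\Gamma\levidx{l}_{0,\tau}\,\indic_{\{S_\tau=s'\}}\,\big|\,S_0=s\big].
\]
By the definition of $\Gamma\levidx{l+1}$ as a conditional expectation, $\Gamma\levidx{l+1}(s,\pi\levidx{l},s')=G_{s'}(s)/P\levidx{l+1}(s,\pi\levidx{l},s')$ whenever the denominator is positive. The other states do not matter, consistent with the paper's remark that $\Gamma\levidx{l+1}$ is only needed where the conditioning event has positive probability. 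So the proof reduces to deriving a linear equation for $G_{s'}$ and then dividing by $P\levidx{l+1}(\cdot,\pi\levidx{l},s')$ row by row.

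\emph{Step 1: one-step decomposition.} Condition on the first action $a\sim\pi\levidx{l}(s,\cdot)$, the first transition $s''\sim P\levidx{l}(s,a,\cdot)$, and whether the geometric clock rings, which happens with probability $1/t_{\pi\levidx{l}}$. This splits $G_{s'}(s)$ into three parts.
\begin{itemize}
\item If the clock rings, the contribution is $\sum_a\pi\levidx{l}(s,a)P\levidx{l}(s,a,s')\Gamma\levidx{l}(s,a,s')$, weighted by $1/t_{\pi\levidx{l}}$.
\item If the clock does not ring and $a\in(\cA^l)^{\mathtt{end}}$, the process stops at $s$ with empty discount product $1$. This contributes $(1-1/t_{\pi\levidx{l}})\,[P^{\pi\levidx{l}}_{(\cA^l)^{\mathtt{end}}}v_{s'}](s)$.
\item If the clock does not ring and $a\notin(\cA^l)^{\mathtt{end}}$, strong Markovianity makes the process restart at $s''$ with an accumulated factor $\Gamma\levidx{l}(s,a,s'')$. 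This contributes $(1-1/t_{\pi\levidx{l}})\sum_{a\notin\mathtt{end}}\pi\levidx{l}(s,a)\sum_{s''}P\levidx{l}(s,a,s'')\Gamma\levidx{l}(s,a,s'')G_{s'}(s'')$.
\end{itemize}
The bookkeeping of the end actions and the timing of the clock must follow exactly the convention already used for Prop.~\ref{p:compress-transition}, so that the right-hand side matches \eqref{e:compress-discount-matrix}.

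\emph{Step 2: normalize.} Substitute $G_{s'}(s'')=P\levidx{l+1}(s'',\pi\levidx{l},s')h_{s'}(s'')$ and divide the equation at $s$ by $P\levidx{l+1}(s,\pi\levidx{l},s')$.
\begin{itemize}
\item The recursive term becomes $\sum_{a\notin\mathtt{end}}\pi\levidx{l}\sum_{s''}P_{s'}^{\pi\levidx{l}}(s,a,s'')\Gamma\levidx{l}(s,a,s'')h_{s'}(s'')$.
\item To write this sum over \emph{all} actions, use the end-action identities $P\levidx{l}(s,a,s'')=\indic_{\{s\}}(s'')$ and $\Gamma\levidx{l}=1$. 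They give $P_{s'}^{\pi\levidx{l}}(s,a,s)=1$ for $a\in(\cA^l)^{\mathtt{end}}$, so the end-action part of $(P_{s'}^{\pi\levidx{l}}\circ\Gamma\levidx{l})^{\pi\levidx{l}}h_{s'}$ is exactly $P^{\pi\levidx{l}}_{(\cA^l)^{\mathtt{end}}}h_{s'}$. Adding and subtracting this term yields the operator $1+(1-1/t)(P^{\pi\levidx{l}}_{(\cA^l)^{\mathtt{end}}}-(P_{s'}^{\pi\levidx{l}}\circ\Gamma\levidx{l})^{\pi\levidx{l}})$.
\item The two source terms become $(1-1/t)[P\levidx{l+1}(\cdot,\pi\levidx{l},s')]^{-1}_{\diag}P^{\pi\levidx{l}}_{(\cA^l)^{\mathtt{end}}}v_{s'}$ and $(\overline{P}_{s'}^{\pi\levidx{l}}\circ\Gamma\levidx{l})^{\pi\levidx{l}}v_{s'}/t$.
\end{itemize}

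\emph{Step 3: minimality.} This is the step I expect to be the main obstacle, because $t_{\pi\levidx{l}}=\infty$ is allowed and $\Gamma\levidx{l}$ may equal $1$. The system then need not have a unique solution, since it admits spurious mass on trajectories that never stop. I would follow the classical argument for hitting probabilities, as in Prop.~\ref{p:compress-transition}. Define $G^{(n)}_{s'}$ as the same expectation restricted to the event $\{\tau\le n\}$. Then:
\begin{itemize}
\item $G^{(n)}_{s'}$ satisfies the recursion with $G^{(n-1)}_{s'}$ on the right-hand side.
\item $G^{(n)}_{s'}$ increases monotonically to $G_{s'}$ by monotone convergence, since all terms are nonnegative.
\item By induction on $n$, any nonnegative solution $\tilde G$ satisfies $\tilde G\ge G^{(n)}_{s'}$ for all $n$.
\end{itemize}
So $G_{s'}$ is the minimal nonnegative solution. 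Dividing by the positive diagonal entries preserves the ordering, so $h_{s'}$ is likewise the minimal nonnegative solution of \eqref{e:compress-discount-matrix} on the states where $P\levidx{l+1}(s,\pi\levidx{l},s')>0$.
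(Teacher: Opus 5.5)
Your proposal is correct and is essentially the paper's argument: a first-step (Markov property) decomposition according to whether the geometric clock rings and whether the sampled action lies in $(\cA^l)^{\mathtt{end}}$, followed by rewriting the resulting recursion in matrix form. The differences are only in presentation. The paper writes the recursion directly for the conditional quantity $\Gamma^{l+1}(\cdot,\pi^l,s')$ using the conditioned kernel $P^{\pi^l}_{s'}$ and simply asserts minimality, whereas you derive it from the unnormalized $G_{s'}$ and add the standard monotone truncation argument for minimality, a step the paper leaves implicit.
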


\noindent\textbf{Proof of Prop.~\ref{p:compress-transition}.} 
First, according to the Markov property, we have $P\levidx{l+1}(s,\pi\levidx{l},s')=H_{s,s'}$, for all $s,s'\in \cS$, where $H$ is the minimal non-negative solution, for each $s'\in \cS$, to the linear system
\begin{align*} 
&[(1-\frac{1}{t_{\pi\levidx{l}}})\sum_{a\in(\cA^l)^{\mathtt{end}}}\pi\levidx{l}(s,a)+1]H_{s,s'}\\&=\quad(1-\frac{1}{t_{\pi\levidx{l}}})\left(\sum_{s''\in \cS}P^{\pi\levidx{l}}(s,s'')H_{s'',s'}+\sum_{a\in(\cA^l)^{\mathtt{end}}}\pi\levidx{l}(s,a)\indic_{\{s\}}(s')\right)+\frac{P^{\pi\levidx{l}}(s,s')}{t_{\pi\levidx{l}}}\,.
\end{align*}
Transforming this to matrix-vector form, we obtain \eqref{e:compress-transition-matrix}.

\noindent\textbf{Proof of Prop.~\ref{p:compress-reward}.} 
First, according to the Markov property, we have $R\levidx{l+1}(s,\pi\levidx{l},s')=H_{s,s'}$, for all $s,s'\in \cS$, where $H$ is the (unique, bounded) solution, for each $s'\in \cS$, to the linear system
\begin{align*} \nonumber
&[1+(1-\frac{1}{t_{\pi\levidx{l}}})\sum_{a\in (\cA^l)^{\mathtt{end}}}\pi\levidx{l}(s,a)]H_{s,s'}\\ \label{compress-reward-scalar}&=(1-\frac{1}{t_{\pi\levidx{l}}})[\sum_{s''\in\cS}\sum_{a\in\cA\levidx{l}(s)}P_{s'}^{\pi\levidx{l}}(s,a,s'')\Gamma\levidx{l}(s,a,s'')\pi\levidx{l}(s,a)H_{s'',s'}\\&+\sum_{s''\in\cS}\sum_{a\in\cA\levidx{l}(s)}P_{s'}^{\pi\levidx{l}}(s,a,s'')R\levidx{l}(s,a,s'')\pi\levidx{l}(s,a)\\&-r\levidx{l}\sum_{a\in(\cA^l)^{\mathtt{end}}}\pi\levidx{l}(s,a)+\frac{\r\levidx{l}}{P\levidx{l+1}(s,\pi\levidx{l},s')}\sum_{a\in(\cA^l)^{\mathtt{end}}}\pi\levidx{l}(s,a)\indic_{\{s\}}(s')]\\&+\frac{1}{t_{\pi\levidx{l}}}\sum_{a\in\cA\levidx{l}(s)}\overline{P}_{s'}^{\pi\levidx{l}}(s,a,s')R\levidx{l}(s,a,s')\pi\levidx{l}(s,a)\,.
\end{align*}
Transforming this to matrix-vector form, we have \eqref{e:compress-reward-matrix}.

\noindent\textbf{Proof of Prop.~\ref{p:compress-discount}.} 
First, according to the Markov property, we have $\Gamma\levidx{l+1}(s,\pi\levidx{l},s')=H_{s,s'}$, for all $s,s'\in \cS$, where $H$ is the minimal non-negative solution, for each $s'\in \cS$, to the linear system
\begin{align*} 
&[1+(1-\frac{1}{t_{\pi\levidx{l}}})\sum_{a\in (\cA^l)^{\mathtt{end}}}\pi\levidx{l}(s,a)]H_{s,s'}\\ &\nonumber=(1-\frac{1}{t_{\pi\levidx{l}}})[\sum_{s''\in\cS}\sum_{a\in\cA\levidx{l}(s)}P_{s'}^{\pi\levidx{l}}(s,a,s'')\Gamma\levidx{l}(s,a,s'')\pi\levidx{l}(s,a)H_{s'',s'}\\&+\frac{1}{P\levidx{l+1}(s,\pi\levidx{l},s')}\sum_{a\in(\cA^l)^{\mathtt{end}}}\pi\levidx{l}(s,a)\indic_{\{s\}}(s')]\\&+\frac{1}{t_{\pi\levidx{l}}}\sum_{a\in\cA\levidx{l}(s)}\overline{P}_{s'}^{\pi\levidx{l}}(s,a,s')\Gamma\levidx{l}(s,a,s')\pi\levidx{l}(s,a)\,.
\end{align*}
Transforming this to matrix-vector form, we have \eqref{e:compress-discount-matrix}.

\subsection{Proofs of the theoretical results in Sec.~\ref{s:theory}}
This section proves the theoretical results of in Sec.~\ref{s:theory}, including Lemma~\ref{l:error-bound} and Thm.~\ref{t:complexity}.

\noindent\textbf{Proof of Lemma~\ref{l:error-bound}.} 
The proof follows the main idea behind proving the contraction property of the Bellman optimality operator \cite{10.5555/528623,10.5555/1396348}:
\begin{align*} 
V_{i+1}(s)-V_{\ast}(s)
=&\sum_{s'\in\cS}P(s,a_{i+1}(s),s')[R(s,a_{i+1}(s),s')+\Gamma\levidx{l}(s,a_{i+1}(s),s')V_i(s')]\\
&-\sum_{s'\in\cS}P(s,a_\ast(s),s')[R(s,a_\ast(s),s')+\Gamma\levidx{l}(s,a_\ast(s),s')V_\ast(s')]\\ 
\geq & \sum_{s'\in\cS}P(s,a_\ast(s),s')[R(s,a_\ast(s),s')+\Gamma\levidx{l}(s,a_\ast(s),s')V_i(s')]\\
&-\sum_{s'\in\cS}P(s,a_\ast(s),s')[R(s,a_\ast(s),s')+\Gamma\levidx{l}(s,a_\ast(s),s')V_\ast(s')]\\ 
\geq &\sum_{s'\in\cS}P(s,a_\ast(s),s')\Gamma(s,a_\ast(s),s')\epsilon^{\min}(s') = \Err^{\min}(s)\,,
\end{align*}
where the first equality follows from the Bellman updates behind the value iteration and the optimal value function respectively, and the next two inequalities follow from the optimality of the Bellman update used in value iteration and the definition of $\Err^{\min}$ respectively.
Similarly, we prove the other side of the inequality:
\begin{align*} 
V_{i+1}(s)-V_{\ast}(s)
=&\sum_{s'\in\cS}P(s,a_{i+1}(s),s')[R(s,a_{i+1}(s),s')+\Gamma\levidx{l}(s,a_{i+1}(s),s')V_i(s')]\\
&-\sum_{s'\in\cS}P(s,a_\ast(s),s')[R(s,a_\ast(s),s')+\Gamma\levidx{l}(s,a_\ast(s),s')V_\ast(s')]\\ 
\leq & \sum_{s'\in\cS}P(s,a_{i+1}(s),s')[R(s,a_{i+1}(s),s')+\Gamma\levidx{l}(s,a_{i+1}(s),s')V_i(s')]\\
&-\sum_{s'\in\cS}P(s,a_{i+1}(s),s')[R(s,a_{i+1}(s),s')+\Gamma\levidx{l}(s,a_{i+1}(s),s')V_\ast(s')]\\ 
\leq &\sum_{s'\in\cS}P(s,a_{i+1}(s),s')\Gamma(s,a_{i+1}(s),s')\epsilon^{\max}(s') = \Err^{\max}(s)\,.
\end{align*}

\noindent\textbf{Proof of Thm.~\ref{t:complexity}.}
The proof follows directly from Cor.~\ref{c:number-iteration}, as well as from applying the triangular inequalities to bound above $\|V\levidx{l}_{i+1}-V\levidx{l}_{i}\|$ for $l\in [L]$ and $i=N(\MDP\levidx{l},\Err^{l,\min}_{\mathrm{init}}, \Err^{l,\max}_{\mathrm{init}}, \epsilon\levidx{l}/2)$, and $\|V\levidx{l}_{\text{init}}-V\levidx{l}_{\ast}\|$ for $l\in [L-1]$ respectively:

For $l\in [L]$,
\begin{align*}
\|V\levidx{l}_{i+1}-V\levidx{l}_{i}\| \leq & \|V\levidx{l}_{i+1}-V\levidx{l}_{\ast}\| + \|V\levidx{l}_{i}-V\levidx{l}_{\ast}\| =  \|\mathcal{T}\levidx{l}(V\levidx{l}_{i})-\mathcal{T}\levidx{l}(V\levidx{l}_{\ast})\| + \|V\levidx{l}_{i}-V\levidx{l}_{\ast}\|\\ \leq & \|V\levidx{l}_{i}-V\levidx{l}_{\ast}\| + \|V\levidx{l}_{i}-V\levidx{l}_{\ast}\| \leq \frac{\epsilon\levidx{l}}{2}+\frac{\epsilon\levidx{l}}{2} = \epsilon\levidx{l}\,,
\end{align*}
where $V\levidx{l}_{i}$ is the value function at the $i$-th iteration of the value iteration when solving $\MDP\levidx{l}$, the second inequality follows from generalized proofs of the contraction property of the Bellman optimality operator \cite{10.5555/528623,10.5555/1396348} (generalized in the sense that $\Gamma\levidx{l}$ is not a constant function), and the last inequality follows from Cor.~\ref{c:number-iteration}.

Similarly, for $l\in [L-1]$, 
\begin{align*}
\|V\levidx{l}_{\text{init}}-V\levidx{l}_{\ast}\| \leq & \|V\levidx{l}_{\text{init}}-V\levidx{l+1}_{\text{VI}}\|+\|V\levidx{l+1}_{\text{VI}}-V\levidx{l+1}_{\ast}\|+\|V\levidx{l+1}_{\ast}-V\levidx{l}_{\ast}\| \leq \delta\levidx{l}_{\text{refine}}+\frac{\epsilon\levidx{l+1}}{2}+\delta\levidx{l}\,,
\end{align*}
where $V\levidx{l}_{\text{VI}}$ is the value function obtained from the value iteration when solving $\MDP\levidx{l}$, and in our setting it equals $V\levidx{l}_{i}$ with $i = N(\MDP\levidx{l},\Err^{l,\min}_{\mathrm{init}}, \Err^{l,\max}_{\mathrm{init}}, \epsilon\levidx{l}/2)+1$, and the second inequality follows from the new assumptions (the first and third ones) listed in this theorem plus Cor.~\ref{c:number-iteration}.

\section{Algorithmic realization}

\subsection{Learning with MMDPs}
\label{s:MMDP-algorithm}

We have the following Alg.~\ref{algorithm:MMDP-solver} as well as the auxiliary Algs.~\ref{algorithm:generate}--\ref{algorithm:MDP-solver}. 
The inputs of Alg.~\ref{algorithm:MMDP-solver} are mostly as discussed in Sec.~\ref{s:MMDP-def}. There are two exceptions: (1) for simplification, the teacher does not provide the sequence of finite partial policy generator sets $\{\cG\levidx{l}_{\text{test}}\}_{l=1}^\infty$, and provides instead the difficulty $L$ of $\MDP$ directly, which is set as an extra property of $\MDP$. Consequently, the teacher only needs to provide $\{\cG\levidx{l}\}_{l=1}^{L-1}$, the first $L-1$ elements of the infinite sequence $\{\cG\levidx{l}\}_{l=1}^\infty$. (2) The timescales of policies could be reduced to the timescales of partial policies and then further reduced to partial policy generators by assuming: ($\romannumeral1$), the timescale of a policy equals the minimum of the timescales of partial policies producing it using outer-product as in \eqref{e:def-otimes}; ($\romannumeral2$) the timescale of a partial policy equals the timescale of the partial policy generator generating it. In this way, the timescales of partial policy generators are provided instead, which are set as an extra property of partial policy generators. 
Optionally, the teacher also provides $t_{\min}$ and $t_{\max}$, a lower bound and an upper bound of such timescales in a curriculum respectively, which are attributed to $t_{\bound}$ as in Alg.~\ref{algorithm:MMDP-solver}.

The algorithm consists of two main parts: constructing the MMDP following the recursive definition in Sec.~\ref{s:MMDP-def} and solving each MDP in it from the top most compressed MDP down til the bottom finest original MDP. 

For completeness, we also add the error detection mechanism in Alg.~\ref{algorithm:MMDP-solver} according to the criteria $\{\thresh\levidx{l}\}_{l=1}\levidx{L} = \{(N_{\max}\levidx{l},T_{\max}\levidx{l},v_{\min}\levidx{l})\}_{l=1}\levidx{L}$, where for the $l$-th level MDP $\MDP\levidx{l}$, $N_{\max}\levidx{l}$ is the upper bound on the number of allowed iterations before convergence; $T_{\max}\levidx{l}$ is the maximum number of steps per episode upon convergence, averaged across episodes upon convergence from possibly multiple initial states, and $T_{\max}\levidx{l}$ is set to $+\infty$ for non-episodic $\MDP\levidx{l}$ by default; $v_{\min}\levidx{l}$ is the lower bound for the value of initial states, possibly averaged across multiple initial states. Meanwhile, we record the corresponding actual values as $\mathrm{stats}\levidx{l}= (N\levidx{l},T\levidx{l},v\levidx{l})$ when the student solves $\MDP\levidx{l}$ for $l\in[L]$. If for some $l\in [L]$, one of the actual values are out of the three thresholds in $\thresh\levidx{l}$, then the output error message $\err$ is $\{\mathrm{exist}:1,\mathrm{level}:l\}$, indicating that there is an error occurring at level $l$, so the algorithm stops there. Otherwise, all levels of MDPs are solved successfully, and the output error message $\err$ is $\{\mathrm{exist}:0,\mathrm{level}:1\}$, indicating that there is no error across all levels and the algorithm finishes at level one.

\begin{algorithm}
\begin{algorithmic}[1]
\Require $\MDP$: the original MDP of difficulty $L$; $\{\cG\levidx{l}\}_{l=1}^{L-1}$: the first $L-1$ elements of {\policysequence}; $t_{\bound}$: bounds for the timescale of any policy; $\{r\levidx{l}\}_{l=1}^{L}$: negative rewards on choosing actions in $(\cA^l)^{\mathtt{end}}$ for $l\in[L]$; $\mathrm{solver}\_\mathrm{init}\levidx{L}$: initial value function and policy for the most compressed MDP $\MDP\levidx{L}$; $\{\mathrm{solver}\_\mathrm{end}\levidx{l}\}_{l=1}\levidx{L}$: stopping criteria for solver of $\MDP\levidx{l}$ for $l\in[L]$, possibly including error tolerance of solved value function or maximum number of iterations; $\{\thresh\levidx{l}\}_{l=1}\levidx{L}$: thresholds for error detection on $\MDP\levidx{l}$ for $l\in[L]$.
\Ensure $\{\MDP\levidx{l}\}_{l=1}^{L},\{\mathrm{solution}\levidx{l}\}_{l=1}^{L}$, $\{\mathrm{stats}\levidx{l}\}_{l=1}\levidx{L}$: the MMDP, the solutions of the MDPs in it, and summary of statistics when solving the MDPs; $\err$: error information.
\Statex
\State \textbf{Initialize:} $\MDP\levidx{1} = \MDP$, $\mathrm{solution}\levidx{l} = \NA$ for $l\in [L]$, $\mathrm{stats}\levidx{l} = \NA$ for $l\in [L]$
\For{$l=1,2,\cdots, L-1$}
    \State $\cA\levidx{l+1}=\mathtt{generate}(\MDP\levidx{l},\cG\levidx{l})$
    \State $\MDP\levidx{l+1}=\mathtt{compress}(\MDP\levidx{l},\cA\levidx{l+1},r\levidx{l+1})$
\EndFor
\For{$l=L,L-1,\cdots, 1$}
    \State $(\mathrm{solution}\levidx{l},\mathrm{stats}\levidx{l},\err\levidx{l}) = \mathtt{Solve\_MDP}(\MDP\levidx{l},t_{\bound},\mathrm{solver}\_\mathrm{init}\levidx{l}, \mathrm{solver}\_\mathrm{end}\levidx{l},\thresh\levidx{l})$
    \If{$\err\levidx{l}.\mathrm{exist} = 1$}
        \State \Return $(\{\MDP\levidx{l'}\}_{l'=1}^{L},\{\mathrm{solution}\levidx{l'}\}_{l'=1}^{L},\{\mathrm{stats}\levidx{l'}\}_{l'=1}\levidx{L}, \{\mathrm{exist}:1,\mathrm{level}:l\})$
    \EndIf
    \If{$l>1$}
        \State Compute $\mathrm{solver}\_\mathrm{init}\levidx{l-1}$ from $\mathrm{solution}\levidx{l}$ and $\cG\levidx{l-1}$ using \eqref{e: convolution}
    \EndIf
\EndFor
\State \Return $(\{\MDP\levidx{l}\}_{l=1}^{L},\{\mathrm{solution}\levidx{l}\}_{l=1}^{L},\{\mathrm{stats}\levidx{l}\}_{l=1}\levidx{L}, \{\mathrm{exist}:0,\mathrm{level}:1\})$
\end{algorithmic}
\caption[Solve an MDP from top to bottom]{Solve an MDP from top to bottom:\,\, $(\{\MDP\levidx{l}\}_{l=1}^{L},\{\mathrm{solution}\levidx{l}\}_{l=1}^{L},\{\mathrm{stats}\levidx{l}\}_{l=1}\levidx{L},\err)=\mathtt{solve\_MMDP}(\MDP,\{\cG\levidx{l}\}_{l=1}^{L-1}, t_{\bound}, \{r\levidx{l}\}_{l=1}\levidx{L}, \mathrm{solver}\_\mathrm{init}\levidx{L},\{\mathrm{solver}\_\mathrm{end}\levidx{l}\}_{l=1}\levidx{L},\{\thresh\levidx{l}\}_{l=1}\levidx{L})$}
\label{algorithm:MMDP-solver}
\end{algorithm}

\begin{algorithm}
\begin{algorithmic}[1]
\Require $\MDP$: the MDP; $\cG$: the partial policy generator set, with domains of generators in it being $\Theta_1,\Theta_2,\cdots, \Theta_M$.
\Ensure $\widetilde{\cA}$: the action set.
\Statex
\If{$\cG = \varnothing$}
    \State $\cG = \MDP.\cA$
\EndIf
\State $\mathrm{partitions\_list}=\mathtt{list\_all\_partitions}(\MDP.\cA, \cG)$
\State $\Pi = \varnothing$
\For{each $\mathrm{partition}\in\mathrm{partitions\_list}$}
    \State Generate policies $\Pi_{\mathrm{new}}$ using outer product as in \eqref{e:def-otimes}
    \State Set timescales of policies in $\Pi_{\mathrm{new}}$ as $\min_{g|_I\in \mathrm{partition}}\{g|_I.\mathrm{timescale}\}$
    \State Add $\Pi_{\mathrm{new}}$ to $\Pi$
\EndFor
\State $\widetilde{\cA} = \Pi\cup\big(\Pi_{m=1}^M (\Theta_m\cup \endactionfactor)-\Pi_{m=1}^M \Theta_m\big)$
\end{algorithmic}
\caption{Generate the action set $\widetilde{\cA}=\mathtt{generate}(\MDP,\cG)$}
\label{algorithm:generate}
\end{algorithm}

\begin{algorithm}
\begin{algorithmic}[1]
\Require $\MDP$: the finer MDP; $\widetilde{\cA}$: the action set for the compressed MDP; $\widetilde{r}$: negative reward for choosing actions in $(\widetilde{\cA})^{\mathtt{end}}$ within the compressed MDP.
\Ensure $\widetilde{\MDP}$: the compressed MDP.
\Statex
\State $\widetilde{\MDP}.\cS=\MDP.\cS$
\State $\widetilde{\MDP}.\Sinit=\MDP.\Sinit$
\State $\widetilde{\MDP}.\Sterm=\MDP.\Sterm$
\State $\widetilde{\MDP}.\cA=\widetilde{\cA}$
\State $\widetilde{\MDP}.\cSA=\widetilde{\MDP}.\cS\times\widetilde{\cA}$
\For{each $\pi\in\widetilde{\cA}-(\widetilde{\cA})^{\mathtt{end}}$}
    \State Compute matrix $P^{\pi}$,$P^{\pi}_{\endactionset}$ using \eqref{e:Markov-matrix}, \eqref{e:Markov-matrix-restrict}
    \State Solve for matrix $\widetilde{\MDP}.P(\cdot,\pi,\cdot)$ using \eqref{e:compress-transition-matrix}
    \For{each $s'\in\cS$}
        \State Solve for vector $\widetilde{\MDP}.R(\cdot,\pi,s')$ using \eqref{e:compress-reward-matrix}, with $r\levidx{l} = \MDP.R(\cdot,a,\cdot)$ for any $a\in\endactionset$
        \State Solve for vector $\widetilde{\MDP}.\Gamma(\cdot,\pi,s')$ using \eqref{e:compress-discount-matrix}
    \EndFor
\EndFor
\State Set $P(s,a,s') = \indic_{\{s\}}(s'), R(s,a,s') = \widetilde{r}, \Gamma(s,a,s') = 1$ for \textbf{any} $a\in(\widetilde{\cA})^{\mathtt{end}}$
\end{algorithmic}
\caption{Compress an MDP $\widetilde{\MDP}=\mathtt{compress}(\MDP,\widetilde{\cA},\widetilde{r})$}
\label{algorithm:compress}
\end{algorithm}

\begin{algorithm}
\begin{algorithmic}[1]
\Require $\MDP$: the MDP to be solved; $t_{\bound}$: bounds for the timescale of any policy; $\mathrm{solver}\_\mathrm{init}$: initial value function and policy for $\MDP$; $\mathrm{solver}\_\mathrm{end}$: stopping criteria for solver of $\MDP$; $\thresh$: thresholds for error detection on $\MDP$.
\Ensure $\mathrm{solution}$: the solution of $\MDP$, possibly containing a learned policy and a value function associated to a learned policy; $\mathrm{stats}$: summary of statistics when solving $\MDP$; $\err$: error information.
\Statex
\State \textbf{Initialize}: $N=0,\mathrm{solution}=\mathrm{solver}\_\mathrm{init},\mathrm{solution}_{\prev}=\mathrm{solver}\_\mathrm{init},\err.\mathrm{exist}=1$ 
\While{$0<N<\mathrm{solver}\_\mathrm{end}.N_{\max}$ \textbf{and} $\|\mathrm{solution}.V-\mathrm{solution}_{\prev}.V\|_{\infty}>\mathrm{solver}\_\mathrm{end}.\eps$}
    \State $\mathrm{solution}_{\prev}=\mathrm{solution}$
    \State $\mathrm{solution}=\mathtt{MDP\_solve\_update}(\MDP,\mathrm{solution})$
    \State $N = N+1$
\EndWhile
\State \textbf{Update} the distributions of $\mathrm{solution}.\pi$ at $\MDP.\Sterm$ to be $\indic_{\{\endaction\}}(\cdot)$
\State $\mathrm{stats} = \{N:N, T:+\infty, v:\frac{\sum_{s\in\Sinit}(\mathrm{solution}.V)(s)}{|\Sinit|}\}$
\If{$N\leq\mathrm{thresh}.N_{\max}$ \textbf{and} $\|(\mathrm{solution}.V-\mathrm{solution}_{\prev}.V)\mid_{\Sinit}\|_{\infty}\leq\mathrm{solver}\_\mathrm{end}.\eps$ \textbf{and} $\mathrm{stats}.V\geq \thresh.v_{\min}$ \textbf{and} $t_{\bound}.{\max} > t_{\bound}.{\min}$}
    \State $((\mathrm{solution}.\pi).T,\err)=\mathtt{compute\_timescale}(\MDP,\mathrm{solution}.\pi,t_{\bound}, \thresh.T_{\max})$
    \State $\mathrm{stats}.T = (\mathrm{solution}.\pi).T$
\EndIf
\If{$\err.\mathrm{exist} = 1$}
    \State $\mathrm{solution} = \NA$
\EndIf
\end{algorithmic}
\caption{Solve an MDP $(\mathrm{solution},\mathrm{stats},\err) =$\\$ \mathtt{Solve\_MDP}(\MDP,t_{\bound},\mathrm{solver}\_\mathrm{init},\mathrm{solver}\_\mathrm{end},\thresh)$}
\label{algorithm:MDP-solver}
\end{algorithm}

\subsection{Transfer learning}
\label{s:appendix:transferlearning}
We collect here Algs.~\ref{algorithm:learn-MDPs}--\ref{algorithm:learn-MDP} described at high level in Sec.~\ref{s:transfer-learning-algorithm}, where the teacher, student and assistant cooperate in solving an MMDP.

\begin{algorithm}
\begin{algorithmic}[1]
\Require $\{\MDP_{L,n}\}_{L=1,n=1}^{L_{\max},n_L}$, $\{\hint_{L,n}\}_{L=1,n=1}^{L_{\max},n_L}$: a series of MDPs ordered by difficulty as well as correspondingly a series of hints provided by the teacher with detailed descriptions as given in Alg.~\ref{algorithm:learn-MDP}; $t_{\bound}$: bounds for the timescale of any policy and skill; $\{\underline{\mathrm{solver\_end}_{L,n}}\}_{L=1,n=1}^{L_{\max},n_L}$: a series of stopping criteria for solvers of MDPs in the MMDP constructed from $\MDP_{L,n}$ for $L\in [L_{\max}], n\in [n_L]$; $\{\underline{\mathrm{thresh}_{L,n}}\}_{L=1,n=1}^{L_{\max},n_L}$: a series of thresholds for error detection on MDPs in the MMDP constructed from $\MDP_{L,n}$ for $L\in [L_{\max}], n\in [n_L]$.
\Ensure $\{\underline{\mathrm{solution}_{L,n}}\}_{L=1,n=1}^{L_{\max},n_L},\{\err_{L,n}\}_{L=1,n=1}^{L_{\max},n_L}$: the solutions of the MDPs in the series of MMDPs constructed from $\{\MDP_{L,n}\}_{L=1,n=1}^{L_{\max},n_L}$ and the series of error information.
\Statex
\State \textbf{Initialize:} $\Skill=\{\mathrm{id}\}$
\For{$L=1,2,\cdots, L_{\max}$}
    \For{$n=1,2,\cdots, n_L$}
        \State $(\underline{\mathrm{solution}_{L,n}},\err_{L,n})$
        \Statex \hspace*{3em}$=\mathtt{learn\_MDP}(\MDP_{L,n},\hint_{L,n},t_{\bound},\underline{\mathrm{solver\_end}_{L,n}},\underline{\mathrm{thresh}_{L,n}})$
    \EndFor
\EndFor
\end{algorithmic}
\caption[Learn a curriculum]{Learn a curriculum $(\{\underline{\mathrm{solution}_{L,n}}\}_{L=1,n=1}^{L_{\max},n_L},\{\err_{L,n}\}_{L=1,n=1}^{L_{\max},n_L})=\mathtt{learn\_curriculum}\\(\{\MDP_{L,n}\}_{L=1,n=1}^{L_{\max},n_L},\{\hint_{L,n}\}_{L=1,n=1}^{L_{\max},n_L},t_{\bound},\{\underline{\mathrm{solver\_end}_{L,n}}\}_{L=1,n=1}^{L_{\max},n_L}, \{\underline{\mathrm{thresh}_{L,n}}\}_{L=1,n=1}^{L_{\max},n_L})$}
\label{algorithm:learn-MDPs}
\end{algorithm}

\begin{algorithm}[t]
\begin{algorithmic}[1]
\Require $\MDP$: the original MDP of difficulty $L$ and number $n$; $\hint$: hints provided by the teacher with five fields: the first field $\cG$ of $\hint$ helps the student derive action sets in compressed MDPs, which contains a sequence with length $L-1$ of skill-embedding generator pair sets; a second field $r$ contains a sequence with length $L$ of negative rewards for choosing actions in $(\cA^l)^{\mathtt{end}}$ for $l\in [L]$; a third field skill $\overline{\pi}$ and a fourth field embedding $e_{\comp}$ help the student compose the initial policy for the most compressed MDP $\MDP\levidx{L}$; the last field $e_{\decomp}$ helps the assistant extract out new skills from the optimal policies for the MDPs, which contains a sequence with length $L$ of embeddings; $t_{\bound}$: bounds for the timescale of any skill; $\{\mathrm{solver\_end}\levidx{l}\}_{l=1}^{L}$: stopping criteria for solver of $\MDP\levidx{l}$ for $l\in[L]$; $\{\thresh\levidx{l}\}_{l=1}^{L}$: thresholds for error detection on $\MDP\levidx{l}$ for $l\in[L]$.
\Ensure the solutions $\{\mathrm{solution}\}_{l=1}^{L}$ of the MDPs in the MMDP constructed from the original MDP and the error information $\err$.
\Statex
\State \textbf{Initialize:} $\MDP\levidx{1}=\MDP$
\For{$l=1,2,\cdots, L-1$}
    \State $\cG\levidx{l}=\varnothing$
    \For{each $(\overline{g},E)\in \hint.\cG(l)$}
        \If{$\overline{g} \neq \NA$}
            \State $\cG\levidx{l}=\cG\levidx{l}\cup\mathtt{compose}((\overline{g},E),\MDP\levidx{l}.\cSA)$
        \EndIf
    \EndFor
    \State $\cA\levidx{l+1}= \mathtt{generate}(\MDP\levidx{l},\cG\levidx{l})$
    \State $\MDP\levidx{l+1}.\cSA=\MDP.\cS\times \cA\levidx{l+1}$, $\MDP\levidx{l+1}.\mathrm{difficulty}=L-l$
\EndFor
\If{$\hint.\overline{\pi} \neq \NA$ \textbf{and} $\hint.e_{\comp} \neq \NA$}
    \State $\mathrm{solver\_init}.\pi = \mathtt{compose}(\hint.\overline{\pi},\hint.e_{\comp},\MDP\levidx{L}.\cSA)$
\Else
    \State \textbf{Set} $\mathrm{solver\_init}.\pi$ to be the diffusive policy with uniform distribution at each state
\EndIf
\State $(\{\mathrm{solution}\}_{l=1}^{L},\sim,\sim,\err)$
\Statex $=\mathtt{solve\_MMDP}(\MDP,\{\cG\levidx{l}\}_{l=1}^{L-1},\hint.r, \mathrm{solver\_init},\{\mathrm{solver\_end}\levidx{l}\}_{l=1}^{L},\{\thresh\levidx{l}\}_{l=1}^{L})$
\For{$l=1,2,\cdots, L$}
    \If{$(\hint.e_{\decomp})(l)\neq \NA$ \textbf{and} $\mathrm{solution}\levidx{l}\neq\NA$}
        \State $\Skill = \Skill\cup\{\mathtt{decompose}(\mathrm{solution}\levidx{l}.\pi,(\hint.e_{\decomp})(l),t_{\bound})\}$
    \EndIf
\EndFor
\end{algorithmic}
\caption[Learn an MDP]{Learn an MDP $ (\{\mathrm{solution}\}_{l=1}^{L},\err)=\\ \mathtt{learn\_MDP}(\MDP,\hint,t_{\bound},\{\mathrm{solver\_end}\levidx{l}\}_{l=1}^{L},\{\thresh\levidx{l}\}_{l=1}^{L})$}
\label{algorithm:learn-MDP}
\end{algorithm}

\section{Analytical realization of algorithms applied to our examples}
\label{s:appendixB}

In this section, we provide the analytical realization of Algs.\ref{algorithm:learn-MDPs}--\ref{algorithm:learn-MDP} collected in App.~\ref{s:appendix:transferlearning} applied to our examples. We organize this section in the following way: for each example, we first provide inputs (problem settings and MDP definitions etc.) and how different objects (policies, embeddings, skills, etc.) should be constructed when using Algs.\ref{algorithm:learn-MDPs}--\ref{algorithm:learn-MDP}. This section serves as a complement to Sec.~\ref{s:curriculum} and Sec.~\ref{s:analytical-traffic}, in which we describe how the algorithms construct and discover these objects as the algorithms progress, demonstrating their correctness.

\subsection{MazeBase+}
\label{s:all-MazeBase} 
\subsubsection{Geometric configuration and object states}
\label{s:MazeBaseGeometry}
We start with the geometric configuration of this example. With notations as in the example of navigation and transportation with traffic jams, we have a two-dimensional grid world $\gridworld=[x_1,x_2]\times [y_1,y_2]\subseteq\mathbb{N}\times \mathbb{N}$ and a set of actions in different directions, $\cA_{\dir}=$ $\{(1,0)$, $(0,1)$, $(-1,0)$, $(0,-1)\}$. 
This grid world contains blocks, three doors, three keys, and a goal. 
To show the effects of transfer learning in Sec.~\ref{s:MazeBase-transfer} and Fig.~\ref{f:MazeBase2}, we assume the initial locations of all the objects (including the blocks, keys, doors, and the goal) are fixed to certain locations, satisfying the following constraints. 
All the state spaces of the MDPs in Sec.~\ref{s:MazeBase-transfer} and Sec.~\ref{s:MazeBase-robust} also satisfy the constraints given here unless specified:
\begin{enumerate}[leftmargin=0.25cm,itemsep=0pt]
\item[$\cdot$] $s_{\door_1},s_{\door_2},s_{\door_3}\in \gridworld$, the locations of $\door_1$, $\door_2$, and $\door_3$ respectively, satisfy the following conditions: $s_{\door_2}(1)<s_{\door_1}(1)=s_{\door_3}(1),s_{\door_1}(2)<s_{\door_2}(2)<s_{\door_3}(2)$, and $ s_{\door_1}(1) \in\{x_1,x_1+3,\cdots\},s_{\door_2}(2) \in \{y_1,y_1+3,\cdots\}$. 
\item[$\cdot$] the set of locations of the blocks is $\Block=(([x_1,x_2]\times\{s_{\door_2}(2)\})\cup (\{s_{\door_1}(1)\}\times[y_1,y_2]))\setminus\{s_{\door_1},\dots,s_{\door_3}\}$. 
\item[$\cdot$] the blocks together with the doors separate the remaining grid points $\gridworld\setminus(\Block\cup\cup_{i=1}^3 s_{\door_i})$ into four rooms: $\room_1$, $\room_2$, $\cdots$, $\room_4$, with corresponding regions 
$\gridworld_{\room_1}:=\{\loc\in\gridworld:\loc(1)<s_{\door_1}(1),\loc(2)<s_{\door_2}(2)\},
\gridworld_{\room_2}:=\{\loc\in\gridworld:\loc(1)>s_{\door_1}(1),\loc(2)<s_{\door_2}(2)\},
\gridworld_{\room_3}:=\{\loc\in\gridworld:\loc(1)<s_{\door_1}(1),\loc(2)>s_{\door_2}(2)\},
\gridworld_{\room_4}:=\{\loc\in\gridworld:\loc(1)>s_{\door_1}(1),\loc(2)>s_{\door_2}(2)\}$.
\item[$\cdot$] $\agent\in\noblock:=\Omega\setminus\Block$ is the current location of the agent; the agent cannot be at the location of a door unless the door is open. 
\item[$\cdot$] $s_{\key_1},s_{\key_2},s_{\key_3}\in\noblock$ are the locations of $\key_1$, $\key_2$, and $\key_3$ respectively; $\key_i$ opens only $\door_i$, for $i=1,2,3$. 
\item[$\cdot$] $\goal\in\noblock\setminus\cup_{i=1}^3 s_{\door_i}$ is the location of the goal object. 
\end{enumerate}
Our method would allow the locations of the keys and goals to vary, within the above constraints, using an embedding to extract a more general skill which abstracts out the logic of navigation within a single room while avoiding obstacles. 
Even the geometry of the grid world or the locations of blocks could vary; the only necessary and sufficient condition is that each room contains at least one shortest path between any two points. 
We are considering here a simpler setting which suffices to convey our message.

For $i\in [3]$, $s_{{\open}_i}\in \{\sclosed,\sopen\}$ indicates if $\door_i$ is open; $s_{{\pick}_i}\in \{0,1\}$ indicates if the agent has $\key_i$; $\goalpick\in \{0,1\}$ indicates if the agent has in hand the goal object. 
In particular, if $s_{{\pick}_i} = 1$, then $\cur=s_{\key_i}$, because once the agent has picked up the key, the agent will always have the key in hand; similarly, if $\goalpick= 1$, then $\cur=\goal$. 
For the objects in $\{\key_1,\key_2,\key_3, \goals\}$ not initially picked up by the agent, we may assume initially $s_{\key_1},s_{\key_2}\in \gridworld_{\room_1}, s_{\key_3}\in \gridworld_{\room_3}$, and $\goal\in\gridworld_{\room_4}$.

The agent can open $\door_i$ with the action $\open$ if the agent has in hand $\key_i$ and the agent is next to $\door_i$, i.e., $\cur\in\neardoor(s_{\door_i}):=\{s:||s-s_{\door_i}||_1=1\}$; the agent can pick up $\key_i$ with the action $\pick$ if $\cur=s_{\key_i}$; the agent can pick up the goal with the action $\pick$ if $\cur=\goal$; the agent can pick up multiple objects in $\{\key_1,\key_2,\key_3, \goals\}$ in a single time step. 

For ease of notation, we let $\keyvec := (s_{\key_1},s_{\key_2},s_{\key_3})$, $\doorvec := (s_{\door_1},s_{\door_2},s_{\door_3})$, $\keypickvec := (s_{{\pick}_1},s_{{\pick}_2},s_{{\pick}_3})$, $\dooropenvec := (s_{{\open}_1},s_{{\open}_2},s_{{\open}_3})$,
$\gridworld_{\room}:=\gridworld_{\room_1}$, $s_{\key}:= s_{\key_1}$, $s_{\door} := s_{\door_1}$, $s_{\pick} := s_{{\pick}_1}$, $s_{\open} := s_{{\open}_1}$.
In the definition of $\MDP_{3,1}$ in \eqref{e:MDP31-mazebase}, we let $\Omega(s)$ denote the set of all the possible locations of the agent given the objects' locations and states determined by the current state $s$, i.e., for $s=(\agent,\keypickvec,\dooropenvec,\goalpick)\in \cS_{3,1}$, $\Omega(s):=\noBlock\setminus\{s_{\door_i}:s_{\open_i}=0, 1\leq i\leq 3\}$, so that $\agent\in\Omega(s)$ incorporates the restriction that the agent cannot be at the location of a closed door (nor of a block, of course).  
Table~\ref{t:param-MazeBase} summarizes all the parameter values in the highest-difficulty MDPs throughout this example; $\MDP_{1,1}$ is exactly the same as $\MDP_{\text{dense}}^{\text{nav}}$ in the example of navigation and transportation with traffic jams (Sec.\ref{s:examples-MMDP-transfer-learning}); all the other MDPs use the same parameter values as in the corresponding highest-difficulty MDP: for instance, $\MDP_{2,1}$ and $\MDP_{2,2}$ use the same parameter values as in $\MDP_{3,1}$. 
Fig.~\ref{f:MazeBase} represents the geometric configurations of the grid world and the objects in it, see also Sec.~\ref{box:MDPs}. 
Finally, recall that in the definition of an MDP, the set of initial states of the agent, $\Sinit$, is set to be the set of states starting from which the agent can reach the goal; in this example starting the agent in $\Sinit$ means that the location of the agent, and the status of the keys, doors, and the goal are such that the puzzle is indeed solvable.

\setlength{\tabcolsep}{2pt}
\begin{table}[b]
\caption{Parameters in the MazeBase+ example}
\label{t:param-MazeBase}
\begin{center}
\begin{tabular}{|c|c|c|c|c|c|c|c|c|c|c|c|c|c|c|c|c|}
\cline{2-17}
\multicolumn{1}{l|}{}                                                                                                          &  $x_1$     & $x_2$ & $x_3$  & $y_1$ &$y_2$ &$y_3$  & $s_{\key_1}$& $s_{\key_2}$& $s_{\key_3}$& $s_{\door_1}$& $s_{\door_2}$& $s_{\door_3}$& $s_{\goals}$& $p_s$ & $R_0$ & $r_0$ \\ \hline
\multicolumn{1}{|c|}{\begin{tabular}[c]{@{}c@{}}$\MDP_{3,1}$\end{tabular}}       &$1$      &$15$   & $10$ & $1$   &  $8$ &   $4$ & $(1,3)$ & $(1,1)$ & $(1,8)$ & $(10,2)$  & $(9,4)$ & $(10,5)$  & $(15,8)$ & $0.9$ & $10^4$ & $-10$\\ \hline
\multicolumn{1}{|c|}{\begin{tabular}[c]{@{}c@{}}$\MDP'_{3,1}$\end{tabular}}       &$1$      &$15$   & $10$ & $1$   &  $8$ &   $4$ & $(1,1)$ & $(15,1)$ & $(15,8)$ & $(10,3)$  & $(11,4)$ & $(10,5)$  & $(1,8)$ & $0.9$ & $10^4$ & $-10$\\ \hline
\multicolumn{1}{|c|}{\begin{tabular}[c]{@{}c@{}}$\MDP''_{3,1}$\end{tabular}}       &$1$      &$15$   & $10$ & $1$   &  $8$ &   $4$ & $(1,3)$ & $(1,1)$ & $(2,1)$ & $(10,2)$  & $(9,4)$ & $(10,5)$  & $(15,8)$ & $0.9$ & $10^4$ & $-10$\\ \hline
\end{tabular}
\end{center}
\end{table}

\subsubsection{Formal definitions of the MDPs in the curriculum}

\label{s:MazeBaseMDPs}
The formal definitions of these MDPs are as follows, with the definition of $\MDP_{1,1}$, which is exactly $\MDP_{\text{dense}}^{\text{nav}}$ in the example of navigation and transportation with traffic jams, postponed to the forthcoming App.~\ref{s:MDPs-traffic} when we introduce the example of navigation and transportation with traffic jams.
$\MDP_{2,1}:=(\cS_{2,1},\Sinit_{2,1},\Sterm_{2,1},\cA_{2,1},P_{2,1},R_{2,1},\Gamma_{2,1})$ models navigation through $\gridworld$ while avoiding blocks assuming all the doors are open:
\begin{equation}
\begin{aligned}
\label{e:MDP21} 
&\cS_{2,1}		:=\{(\cur,s_{\dest}):\cur,s_{\dest}\in\noBlock\}=\noBlock\times\noBlock\,,\\
&\Sinit_{2,1}	:=\{(\cur,s_{\dest})\in \cS_{2,1}:\cur\neq\dests\}\,,\\
&\Sterm_{2,1}	:=\{(\cur,s_{\dest})\in \cS_{2,1}:\cur=\dests\}\,,\\
&\cA_{2,1}		:=\cA_{\dir}\cup\{\endactionfactor\}\,,\\
& R_{2,1}((\cur,s_{\dest}),a,(\cur',s'_{\dest}))	 :=R_0\mult\indic_{\{s'_{\dest}\}}(\cur')+r_0\mult[1-\indic_{\{s'_{\dest}\}}(\cur')]\,,\\
&\Gamma_{2,1}((\cur,s_{\dest}),a,(\cur',s'_{\dest})):=1\,,
\end{aligned}
\end{equation}
\vspace{-\abovedisplayskip}
\vspace{-\abovedisplayskip}
\begin{align*}
P_{2,1}((\cur,s_{\dest}),a,(\cur',s'_{\dest}))	& :=\indic_{\{s_{\dest}\}}(s'_{\dest}) \mult\big[[1-\indic_{\noblock}(\cur+a)]\mult\indic_{\{\cur\}}(\cur')\\
							&+\indic_{\noblock}(\cur+a)\mult[p_s\mult\indic_{\{\cur+a\}}(\cur')+(1-p_s)\mult\indic_{\{\cur\}}(\cur')]\big]\,.
\end{align*}
In the above, the teacher sets the following quantities: $0<p_s\leq 1$ (set here to $0.9$), the probability for an action to succeed in the intended movement; $R_0>0$ (set here to $10000$), a large positive reward; $r_0<0$ (set here to $-10$), a small negative reward. Also, for brevity, we do not guarantee the functions (such as transition probabilities, rewards, discount factors, policies, and their variants) written down are defined and correct at $s\in\Sterm$ or $s\notin\Sterm, a\in \endactionset$ unless specified. These apply to both examples. 
$\MDP_{2,2}:=(\cS_{2,2},\Sinit_{2,2},\Sterm_{2,2},\cA_{2,2},P_{2,2},R_{2,2},\Gamma_{2,2})$ models picking up a key and opening the door: 
\begin{equation*}
\begin{aligned} 
\cS_{2,2} 		&:= \{(\agent,s_\pick,s_\open): \agent\in\gridworld_{\room}, s_\pick,s_\open\in\{0,1\}\}=\gridworld_{\room}\times\{0,1\}^2\,,\\
\Sinit_{2,2} 	&:= \{(\agent,s_\pick,s_\open)\in \cS_{2,2}:s_\open=0\}\,,\quad
\Sterm_{2,2} 	:= \{(\agent, s_\pick,s_\open)\in \cS_{2,2}:s_\open=1\}\,,\\
\cA_{2,2} 		&:= \cA_{\dir}\cup\{\pick,\open,\endactionfactor\}\,,\\
R_{2,2}&((\agent,s_\pick,s_\open),a,(\agentnext,s'_\pick,s'_\open)) := R_0\mult\indic_{\{1\}}(s'_\open)+r_0\mult\indic_{\{0\}}(s'_\open)\,,\\
\Gamma_{2,2}&((\agent,s_\pick,s_\open),a,(\agentnext,s'_\pick,s'_\open)):=1\,,
\end{aligned} 
\end{equation*}
\vspace{-\abovedisplayskip}
\vspace{-\abovedisplayskip}
\begin{align*}
P_{2,2}&((\agent,s_\pick,s_\open),a,(\agentnext,s'_\pick,s'_\open))\\&:=
\begin{dcases}   
\begin{aligned}
&\indic_{\{s_\pick\}}(s'_\pick)\mult \indic_{\{s_\open\}}(s'_\open)\mult \{p_s\mult\indic_{\gridworld_{\room}}(\agent+a) \mult\indic_{\{\agent+a\}}(\agentnext)\\
&\quad+[1-p_s\mult\indic_{\gridworld_{\room}}(\agent+a)]\mult \indic_{\{\agent\}}(\agentnext)\} 
\end{aligned} &, \text{if}\  a\in\cA_{\dir}\\
\begin{aligned}
& \indic_{\{\agent\}}(\agentnext)  \mult \{\indic_{\{\open\}}(a)\mult\indic_{\{s_\pick\}}(s'_\pick)\\
&\quad\mult \big[p_s\mult\indic_{\neardoor(s_\door)}(\agent)\mult\indic_{\{1\}}(s_\pick)\mult\indic_{\{1\}}(s'_\open)\\
&\quad+[1-p_s\mult\indic_{\neardoor(s_\door)}(\agent)\mult\indic_{\{1\}}(s_\pick)]\mult\indic_{\{s_\open\}}(s'_\open)\big]\\
&\quad+\indic_{\{\pick\}}(a)\mult\indic_{\{s_\open\}}(s'_\open) \mult\big[p_s\mult\indic_{\{s_\key\}}(\agent)\mult\indic_{\{1\}}(s'_\pick)\\
&\quad+[1-p_s\mult\indic_{\{s_\key\}}(\agent)]\mult\indic_{\{s_\pick\}}(s'_\pick)\big]\} 
\end{aligned} &, \text{otherwise.}
\end{dcases}
\end{align*}
Now we move to the target MDP of difficulty $3$. $\MDP_{3,1}:=(\cS_{3,1},\Sinit_{3,1},\Sterm_{3,1},\cA_{3,1},P_{3,1},R_{3,1},\Gamma_{3,1})$ models picking up the goal ($s=(\agent,\keypickvec,\dooropenvec,\goalpick)$, and similarly for $s'$):
\begin{equation}  \label{e:MDP31-mazebase}
\begin{aligned}
\cS_{3,1}&:=\{s\}\,,\\
\Sinit_{3,1}&:=\{s\in \cS_{3,1}: \agent\notin\gridworld_{\room_2}, \goalpick=0\} \cup \{s\in \cS_{3,1}: s_{\open_1} = 1, \goalpick=0\}\,,\\
\Sterm_{3,1}&:=\{s\in \cS_{3,1}:\goalpick=1\}\,,\\
\cA_{3,1}&:=\cA_{\dir}\cup\{\pick,\open, \endactionfactor\}\,,\\
&R_{3,1}(s,a,s'):=R_0\mult\indic_{\{1\}}(\goalpicknext)+r_0\mult\indic_{\{0\}}(\goalpicknext)\,,\\
&\Gamma_{3,1}(s,a,s'):=1\,,
\end{aligned}
\end{equation}
\vspace{-\abovedisplayskip}
\vspace{-\abovedisplayskip}
\begin{align*}
P_{3,1}(s,a,s')&:=
\begin{dcases}
\begin{aligned}   
&\indic_{\{\keypickvec\}}(\keypickvecnext)\mult\indic_{\{\dooropenvec\}}(\dooropenvecnext)\mult\indic_{\{\goalpick\}}(\goalpicknext)\\ 
&\quad\mult [p_s\mult\indic_{\Omega(s)}(\agent+a)\mult\indic_{\{\agent+a\}}(\agentnext)\\ 
&\quad+\big(1-p_s\mult\indic_{\Omega(s)}(\agent+a)\big)\mult\indic_{\{\agent\}}(\agentnext)] 
\end{aligned} &,\text{if}\  a\in \cA_{\dir}\\
\begin{aligned}
&\indic_{\{\agent\}}(\agentnext)\mult\indic_{\{\keypickvec\}}(\keypickvecnext)\mult\indic_{\{\goalpick\}}(\goalpicknext)\mult \{(1-p_s)\mult\indic_{\{\dooropenvec\}}(\dooropenvecnext)\\
&\quad+p_s\mult\prod_{i=1}^3\big[\indic_{\neardoor(s_{\door_i})}(\agent)\mult\indic_{\{1\}}(s_{\pick_i})\mult\indic_{\{1\}}(s'_{\open_i})\\
&\quad+[1-\indic_{\neardoor(s_{\door_i})}(\agent)\mult\indic_{\{1\}}(s_{\pick_i})]\mult\indic_{\{s_{\open_i}\}}(s'_{\open_i})\big]\} 
\end{aligned} &,\text{if}\  a=\open\\
\begin{aligned}
&\indic_{\{\agent\}}(\agentnext)\mult\indic_{\{\dooropenvec\}}(\dooropenvecnext) \mult \{(1-p_s)\mult\indic_{\{\keypickvec\}}(\keypickvecnext)\mult\indic_{\{\goalpick\}}(\goalpicknext)\\
&\quad+p_s\mult\prod_{i=1}^3\big[\indic_{\{s_{\key_i}\}}(\agent)\mult\indic_{\{1\}}(s'_{\pick_i})\\
&\quad+\indic_{\{s_{\key_i}\}^c}(\agent)\mult\indic_{\{s_{\pick_i}\}}(s'_{\pick_i})\big]\\ 
&\quad\mult \big[\indic_{\{\goal\}}(\agent)\mult\indic_{\{1\}}(\goalpicknext)+\indic_{\{\goal\}^c}(\agent)\mult\indic_{\{\goalpick\}}(\goalpicknext)\big]\} 
\end{aligned} &,\text{if}\  a=\pick.
\end{dcases}
\end{align*}
We use $\curs,\dest$ in Fig.~\ref{f:MazeBase} instead of $\cur,\dests$ for simplicity, and similarly for other objects, here and elsewhere.
The teacher has the role to set all the parameters of the problems, including the reward for retrieving $\goals$ or the negative reward for taking one extra step without making progress.

\subsubsection{Compressed MDPs}
\label{s:compressedMDPs-mazebase}

For $\MDP_{2,1}$, the student constructs the second level MDP $\MDP_{2,1}\levidx{2}=(\cS_{2,1},\Sinit_{2,1},\Sterm_{2,1},\overline{\Pi_{2,1}\levidx{1}},P_{2,1}\levidx{2},R_{2,1}\levidx{2},$ $\Gamma_{2,1}\levidx{2})$, with $P_{2,1}\levidx{2}$, $R_{2,1}\levidx{2}$, $\Gamma_{2,1}\levidx{2}$ as follows: 
\begin{equation}  \label{e:compressed-MDP21-MazeBase} 
\begin{aligned}
R_{2,1}\levidx{2}(s,(\pi_{2,1}\levidx{1})_\theta,s') &=
r_{2,1}\levidx{1}+r_0\mult\big[\mathbb{E}[X\mid X \sim \mathrm{NB} ((d_{2,1}\levidx{1})_{\theta}(s,s'),p_s)]+(d_{2,1}\levidx{1})_{\theta}(s,s')\big]\,,\\
\Gamma_{2,1}\levidx{2}(s,(\pi_{2,1}\levidx{1})_\theta,s')&=1\,,\\
P_{2,1}\levidx{2}(s,(\pi_{2,1}\levidx{1})_\theta,s') &= \indic_{\{(d_{2,1}\levidx{1})_{\theta}(s,0)\}}((d_{2,1}\levidx{1})_{\theta}(s,s'))\,,
\end{aligned}
\end{equation}
where $r_{2,1}\levidx{1}<0$ is a negative reward set by the teacher (here equal to $-10$); $(d_{2,1}\levidx{1})_{\theta}:\{(s,s'):s,s'\in\cS_{2,1}\}\rightarrow \mathbb{N}\cup\{+\infty\}$, defined similarly to $(d_{2,2}\levidx{1})_{\theta}$, is a parametric family of distance functions parametrized by $\theta\in \Theta_{2,1}\levidx{1}=\{\door_1,\door_2,\door_3,\dest\}$. 
Given the parameter $\theta$, the agent's starting state $s$ and current state $s'$, the value of $(d_{2,1}\levidx{1})_{\theta}(s,s')$ incorporates the information of $(\pi_{2,1}\levidx{1})_{\theta}$ by defining the infinite sequence $\{s_i\}_{i=0}^\infty$ inductively starting from $s_0 = s$: for $i\in\mathbb{N}$, there exists a unique action $a\in\cA_{2,1}\levidx{1}$ such that $(\pi_{2,1}\levidx{1})_{\theta}(s_i,a)=1$ because $\overline{\pi}^{\mathrm{nav}}_{\text{dense}}$ is assumed to be deterministic, and we let 
\begin{equation*}
s_{i+1}:=
\begin{cases}   
\sup_{s\in\cS_{2,1}-\{s_i\}}P_{2,1}(s_i,a,s) &,\text{if}\  P_{2,1}(s_i,a,s_i)<1\\[0.3cm]
s_i &,\text{otherwise.}
\end{cases}
\end{equation*}
Then, we set $(d_{2,1}\levidx{1})_{\theta}(s,s')$ to be the smallest $i\in\mathbb{N}$ such that $s_i=s'$ if there exists such an $i$, and set it to be $+\infty$ otherwise. Consequently, we denote $(d_{2,1}\levidx{1})_{\theta}(s,0):=\sup\{(d_{2,1}\levidx{1})_{\theta}(s,s'):s'\in\cS_{2,1}, (d_{2,1}\levidx{1})_{\theta}(s,s')<+\infty\}$. 

For $\MDP_{2,2}$, the student constructs the second level MDP $\MDP_{2,2}\levidx{2}=(\cS_{2,2},\Sinit_{2,2}$, $\Sterm_{2,2}$, $\overline{\Pi_{2,2}\levidx{1}}$, $P_{2,2}\levidx{2}$, $R_{2,2}\levidx{2}$, $\Gamma_{2,2}\levidx{2})$, with $P_{2,2}\levidx{2}$, $R_{2,2}\levidx{2}$, $\Gamma_{2,2}\levidx{2}$ as follows: 
\begin{equation} \label{e:compressed-MDP22-MazeBase} 
\begin{aligned}
R_{2,2}\levidx{2}(s,(\pi_{2,2}\levidx{1})^\lambda_\theta,s')=&
\begin{dcases}   
R_0\mult\indic_{\{1\}}(s'_\open)+r_0\mult[1-\indic_{\{1\}}(s'_\open)] &\,,\text{if}\  \lambda=\alpha
\\
r_{2,2}\levidx{1}+r_0\mult\big[\mathbb{E}[X\mid X \sim \mathrm{NB} ((d_{2,2}\levidx{1})_{\theta}(s,s'),p_s)]+(d_{2,2}\levidx{1})_{\theta}(s,s')\big] &\,, \text{if}\  \lambda=\beta,
\end{dcases}\\
\Gamma_{2,2}\levidx{2}(s,(\pi_{2,2}\levidx{1})^\lambda_\theta,s')&=1\,,
\end{aligned}
\end{equation}
\vspace{-\abovedisplayskip}
\vspace{-\abovedisplayskip}
\begin{align*}
P_{2,2}\levidx{2}(s,(\pi_{2,2}\levidx{1})^\lambda_\theta,s')=&
\begin{dcases}
\begin{aligned}   
&\indic_{\{\agent\}}(\agentnext) \mult \{\indic_{\{\open\}}(\theta)\mult\indic_{\{s_\pick\}}(s'_\pick) \\ 
&\quad \mult \big[p_s\mult\indic_{\neardoor(s_\door)}(\agent)\mult\indic_{\{1\}}(s_\pick)\mult\indic_{\{1\}}(s'_\open)\\
&\quad+[1-p_s\mult\indic_{\neardoor(s_\door)}(\agent)\mult\indic_{\{1\}}(s_\pick)]\mult\indic_{\{s_\open\}}(s'_\open)\big]\\
&\quad +\indic_{\{\pick\}}(\theta)\mult\indic_{\{s_\open\}}(s'_\open) \mult\big[p_s\mult\indic_{\{s_\key\}}(\agent)\mult\indic_{\{1\}}(s'_\pick)\\
&\quad +[1-p_s\mult\indic_{\{s_\key\}}(\agent)]\mult\indic_{\{s_\pick\}}(s'_\pick)\big]\} 
\end{aligned}
&\,, \text{if}\  \lambda=\alpha\\
\begin{aligned}
 &\indic_{\{(d_{2,2}\levidx{1})_{\theta}(s,0)\}}((d_{2,2}\levidx{1})_{\theta}(s,s')) 
 \end{aligned}
 &\,,\text{if}\  \lambda=\beta,
\end{dcases}
\end{align*}
where $s=(\agent,s_\pick,s_\open), s'=(\agentnext,s'_\pick,s'_\open)$; $r_{2,2}\levidx{1}<0$ is a negative reward set by the teacher, and here it is set to $-10$; $X \sim \mathrm{NB} (r,p)(r\in\mathbb{Z}^+,0<p\leq 1)$ means $X$ follows negative binomial (or Pascal) distribution with $r$ successes and success probability $p$, with the probability mass function $f_{nb}(k;r,p):=\big(\begin{matrix}k+r-1\\ k\end{matrix}\big)(1-p)^{k}p^r (k\geq 0)$, so $ \mathbb{E}[X\mid X \sim \mathrm{NB} (r,p)]= \frac{(1-p)\mult r}{p}$; $(d_{2,2}\levidx{1})_{\theta}:\{(s,s'):s,s'\in\cS_{2,2}\}\rightarrow \mathbb{N}\cup\{+\infty\}$, is a parametric family of distance functions parametrized by $\theta\in (\Theta_{2,2}\levidx{1})_{\beta}=\{\key,\door\}$. Given the parameter $\theta$, the agent's starting state $s$ and current state $s'$, the value of $(d_{2,2}\levidx{1})_{\theta}(s,s')$ incorporates the information of $(\pi_{2,2}\levidx{1})^\beta_{\theta}$ by defining the infinite sequence $\{s_i\}_{i=0}^\infty$ inductively starting from $s_0 = s$: for $i\in\mathbb{N}$, there exists a unique action $a\in\cA_{2,2}\levidx{1}$ such that $(\pi_{2,2}\levidx{1})_{\theta}(s_i,a)=1$ because $\overline{\pi}^{\mathrm{nav}}$ is assumed to be deterministic, and we let  $s_{i+1}:=\sup_{s\in\cS_{2,2}-\{s_i\}}P_{2,2}(s_i,a,s)$ when $P_{2,2}(s_i,a,s_i)<1$, and $s_{i+1}:=s_i$ otherwise.
Then, we set $(d_{2,2}\levidx{1})_{\theta}(s,s')$ to be the smallest $i\in\mathbb{N}$ such that $s_i=s'$ if there exists such an $i$, and set it to be $+\infty$ otherwise. Consequently, we denote $(d_{2,2}\levidx{1})_{\theta}(s,0):=\sup\{(d_{2,2}\levidx{1})_{\theta}(s,s'):s'\in\cS_{2,2}, (d_{2,2}\levidx{1})_{\theta}(s,s')<+\infty\}$. Notice that if $\overline{\pi}^{\mathrm{nav}}$ is not deterministic, then the sequence defined here is stochastic, making the calculations slightly more complicated.

The student then constructs $\MDP_{3,1}\levidx{2}=(\cS_{3,1}$, $\Sinit_{3,1}$, $\Sterm_{3,1}$, $\overline{\Pi_{3,1}\levidx{1}}$, $P_{3,1}\levidx{2}$, $R_{3,1}\levidx{2}$, $\Gamma_{3,1}\levidx{2})$, with
\begin{equation}
\label{e:compressed-MDP312-MazeBase} 
\begin{aligned}
R_{3,1}\levidx{2}(s,(\pi_{3,1}\levidx{1})^\lambda_{\theta},s')&=
\begin{dcases}    
R_0\mult\indic_{\{1\}}(\goalpicknext)+r_0\mult\indic_{\{0\}}(\goalpicknext) &\,,\text{if}\  \lambda=\alpha\\
r_{3,1}\levidx{1}+r_0\mult\big[\mathbb{E}[X\mid X \sim \mathrm{NB} ((d_{3,1}\levidx{1})_{\theta}(s,s'),p_s)]+(d_{3,1}\levidx{1})_{\theta}(s,s')\big] &\,,\text{if}\  \lambda=\beta,
\end{dcases}
\\
\Gamma_{3,1}\levidx{2}(s,(\pi_{3,1}\levidx{1})^\lambda_{\theta},s')&=1\,,
\end{aligned}
\end{equation}
\vspace{-1.5\abovedisplayskip}
\begin{equation*}
\begin{aligned}  
P_{3,1}\levidx{2}(s,(\pi_{3,1}\levidx{1})^\lambda_{\theta},s')&=
\begin{dcases}
\begin{aligned}   
& \indic_{\{\agent\}}(\agentnext) \mult\indic_{\{\keypickvec\}}(\keypickvecnext)\mult\indic_{\{\goalpick\}}(\goalpicknext)\\
&\quad\mult \{(1-p_s)\mult\indic_{\{\dooropenvec\}}(\dooropenvecnext)\\
&\quad+p_s\mult\prod_{i=1}^3\big[\indic_{\neardoor(s_{\door_i})}(\agent)\mult\indic_{\{1\}}(s_{\pick_i})\mult\indic_{\{1\}}(s'_{\open_i})\\
&\quad+[1-\indic_{\neardoor(s_{\door_i})}(\agent)\mult\indic_{\{1\}}(s_{\pick_i})]\mult\indic_{\{s_{\open_i}\}}(s'_{\open_i})\big]\} 
\end{aligned} &\,,\text{if}\  \theta=\open\\
\begin{aligned}
& \indic_{\{\agent\}}(\agentnext)\mult\indic_{\{\dooropenvec\}}(\dooropenvecnext) \\
&\quad\mult \{(1-p_s)\mult\indic_{\{\keypickvec\}}(\keypickvecnext)\mult\indic_{\{\goalpick\}}(\goalpicknext)\\
&\quad + p_s\mult\prod_{i=1}^3\big[\indic_{\{s_{\key_i}\}}(\agent)\mult\indic_{\{1\}}(s'_{\pick_i}) \\
&\quad+\indic_{\{s_{\key_i}\}^c}(\agent)\mult\indic_{\{s_{\pick_i}\}}(s'_{\pick_i})\big]\\ 
&\quad\mult \big[\indic_{\{\goal\}}(\agent)\mult\indic_{\{1\}}(\goalpicknext)\\
&\quad+\indic_{\{\goal\}^c}(\agent)\mult\indic_{\{\goalpick\}}(\goalpicknext)\big]\} 
\end{aligned} &\,,\text{if}\  \theta=\pick\\
\begin{aligned}
&   \indic_{\{(d_{3,1}\levidx{1})_{\theta}(s,0)\}}((d_{3,1}\levidx{1})_{\theta}(s,s')) 
\end{aligned}
&\,,\text{if}\  \lambda = \beta,
\end{dcases}
\end{aligned}
\end{equation*}
where $(d_{3,1}\levidx{1})_{\theta}$ is defined similarly to $(d_{2,2}\levidx{1})$; $r_{3,1}\levidx{1}<0$ is some negative reward set by the teacher and here it is set to $-10$.

Then, the student constructs the third-level MDP $\MDP_{3,1}\levidx{3}=(\cS_{3,1},\Sinit_{3,1},\Sterm_{3,1},\overline{\Pi_{3,1}\levidx{2}},P_{3,1}\levidx{3},R_{3,1}\levidx{3},\Gamma_{3,1}\levidx{3})$, where $P_{3,1}\levidx{3},R_{3,1}\levidx{3},\Gamma_{3,1}\levidx{3}$ are as follows:
\begin{equation} \label{e:compressed-MDP313-MazeBase} 
\begin{aligned} 
R_{3,1}\levidx{3}(s,((\pi_{3,1}\levidx{2})_{\theta})_{\{\alpha,\beta\}},s')&= R_0\mult\indic_{\{1\}}(\goalpicknext) +r_{3,1}\levidx{2}\\& \quad+r_{3,1}\levidx{1}\mult[\indic_{(0,+\infty)}(((d_{3,1}\levidx{2})_{\theta}(s,s'))_1)+\indic_{(0,+\infty)}(((d_{3,1}\levidx{2})_{\theta}(s,s'))_3)]\\
&\quad+r_0\mult\big[\mathbb{E}[X\mid X \sim \mathrm{NB} (\sum_{i=1}^4((d_{3,1}\levidx{2})_{\theta}(s,s'))_i,p_s)]+\sum_{i=1}^4((d_{3,1}\levidx{2})_{\theta}(s,s'))_i\big]\,,\\
\Gamma_{3,1}\levidx{3}(s,((\pi_{3,1}\levidx{2})_{\theta})_{\{\alpha,\beta\}},s')&=1\,,\\
\end{aligned}
\end{equation}
\vspace{-1.5\abovedisplayskip}
\begin{align*}
P_{3,1}\levidx{3}(s,((\pi_{3,1}\levidx{2})_{\theta})_{\{\alpha,\beta\}},s')&= \indic_{\{(d_{3,1}\levidx{2})_{\theta}(s,0)\}}((d_{3,1}\levidx{2})_{\theta}(s,s'))\,,
\end{align*}
where $r_{3,1}\levidx{2}<0$ is some negative reward set by the teacher, and here it is set to $-10$; $(d_{3,1}\levidx{2})_{\theta}:\{(s,s'):s,s'\in\cS_{3,1}\}\rightarrow \mathbb{N}^4\cup\{+\infty\}$ generalizing $(d_{3,1}\levidx{1})_{\theta}$ is a parametric family of vector distance functions parametrized by $\theta\in \Theta_{3,1}\levidx{2}=\{(\key_1,\door_1),(\key_2,\door_2),(\key_3,\door_3),(\goals,\goals)\}$. Given the parameter $\theta$, the agent's starting state $s$ and current state $s'$, the value of $(d_{3,1}\levidx{2})_{\theta}(s,s')$ incorporates the information of $((\pi_{3,1}\levidx{2})_{\theta})_{\{\alpha,\beta\}}\,\, (\theta\in \Theta_{3,1}\levidx{2})$ by defining the infinite sequence $\{s_i\}_{i=0}^\infty$ inductively starting from $s_0 := s$: for $i\in\mathbb{N}$, there exists a unique action $\pi_i\in\cA_{3,1}\levidx{2}$ such that $((\pi_{3,1}\levidx{2})_{\theta})_{\{\alpha,\beta\}}(s_i,\pi_i)=1$, and then there exists a unique action $a_i\in\cA_{3,1}\levidx{1}$ such that $\pi_i(s_i,a_i)=1$ because $\overline{\pi}^{\mathrm{nav}}$ is assumed to be deterministic, ($a_i=\endactionfactor$ if $\pi_i=\endaction$), and we let 
\begin{equation*}
s_{i+1}:=
\begin{cases}   
\sup_{s\in\cS_{3,1}-\{s_i\}}P_{3,1}(s_i,a_i,s) &,  \text{if}\  P_{3,1}(s_i,a_i,s)<1\\
s_i &,\text{otherwise.}
\end{cases}
\end{equation*}
Then, we find the smallest $i_0\in\mathbb{N}$ such that $s_{i_0}=s'$ (we set $(d_{3,1}\levidx{2})_{\theta}(s,s')$ to be $+\infty$ if there does not exist such an $i_0$), and we set $(d_{3,1}\levidx{2})_{\theta}(s,s')$ to be $(\sum_{i=0}^{i_0-1}\indic_{\{\pi_{\key_{i'}}\}}(\pi_i),\sum_{i=0}^{i_0-1}\indic_{\{\pi_{\pick}\}}(\pi_i),$\\$\sum_{i=0}^{i_0-1}\indic_{\{\pi_{\door_{i'}}\}}(\pi_i),\sum_{i=0}^{i_0-1}\indic_{\{\pi_{\open}\}}(\pi_i))$ for $\theta =(\key_{i'},\door_{i'}) (i'=1,2,3)$ and we set $(d_{3,1}\levidx{2})_{\theta}(s,s')$ to be $(\sum_{i=0}^{i_0-1}\indic_{\{\pi_{\goals}\}}(\pi_i),\sum_{i=0}^{i_0-1}\indic_{\{\pi_{\pick}\}}(\pi_i),0,\sum_{i=0}^{i_0-1}\indic_{\{\pi_{\open}\}}(\pi_i))$ for $\theta =(\goals,\goals)$. Then, since for any fixed $s$, there is a natural ordering between any two elements in $\{(d_{3,1}\levidx{2})_{\theta}(s,s'):s'\in\cS_{3,1}, (d_{3,1}\levidx{2})_{\theta}(s,s')<+\infty\}$ induced by ordering on $\mathbb{N}$, we still denote $(d_{3,1}\levidx{2})_{\theta}(s,0):=\sup\{(d_{3,1}\levidx{2})_{\theta}(s,s'):s'\in\cS_{3,1}, (d_{3,1}\levidx{2})_{\theta}(s,s')\neq +\infty\}$.

\subsubsection{(Partial) policies and (partial) policy generators}
\label{s:policies-mazebase}

\begin{enumerate}[leftmargin=0cm,itemsep=0pt]
\item[$\bullet$] For $\MDP_{2,1}$, with $\Theta_{2,1}\levidx{1}=\{\door_1,\door_2,\door_3,\dest\}$, $(e_{2,1}^1)_\theta$ defined in \eqref{e:E211-mazebase}, and with the basic navigation skill $\overline{\pi}^{\text{nav}}_{\text{dense}}$ extracted from $\MDP_{1,1}$ as in $(\pi_{2,1}\levidx{1})_\theta((\agent,\dests),a)$\\=$(\overline{\pi}^{\text{nav}}_{\text{dense}}\circ(e_{2,1}^1)_\theta)((\agent,\dests),a))=\overline{\pi}^{\mathrm{nav}}_{\text{dense}}(\agent,s_{\theta},a)$, 
\begin{equation}
\label{e:g21-mazebase} 
\begin{aligned} 
g_{2,1}\levidx{1}: \Theta_{2,1}\levidx{1}&\rightarrow \{(\pi_{2,1}\levidx{1})_\theta:\theta\in\Theta_{2,1}\levidx{1}\}\\
\theta&\mapsto(\pi_{2,1}\levidx{1})_\theta\quad:\cSA_{2,1}\levidx{1}\rightarrow [0,1] \,\,(\theta\in\Theta_{2,1}\levidx{1})\,.
\end{aligned} 
\end{equation}

\item[$\bullet$] For $\MDP_{2,2}$, with $(\Theta_{2,2}\levidx{1})_{\alpha}= \{\pick,\open\}$, and with
\begin{equation} \label{e:pialphatheta}
(\pi_{2,2}\levidx{1})^\alpha_\theta((\agent,s_\pick,s_\open),a):=\indic_{\{\theta\}}(a)\,,
\end{equation}
\begin{equation*}
\begin{aligned} 
(g_{2,2}\levidx{1})_{\alpha}: (\Theta_{2,2}\levidx{1})_{\alpha}&\rightarrow \{(\pi_{2,2}\levidx{1})^{\alpha}_\theta:\theta\in(\Theta_{2,2}\levidx{1})_{\alpha}\}\\
\theta&\mapsto(\pi_{2,2}\levidx{1})^{\alpha}_\theta\quad:\cSA_{2,2}\levidx{1}\rightarrow [0,1] \,\,(\theta\in(\Theta_{2,2}\levidx{1})_{\alpha})\,.
\end{aligned} 
\end{equation*}
With $(\Theta_{2,2}\levidx{1})_{\beta}=\{\key,\door\}$, and with the navigation skill $\overline{\pi}^{\mathrm{nav}}$ extracted from $\MDP_{2,1}$ as in
\begin{align} \label{e:def-generator-mazebase} 
& \nonumber (\pi_{2,2}\levidx{1})^\beta_\theta((\agent,s_\pick,s_\open),a)\\
&\qquad=\begin{cases}   
\overline{\pi}^{\mathrm{nav}}(\agent,s_{\theta},a) &, \text{if}\  \overline{\pi}^{\mathrm{nav}}(\agent,s_\theta,s_\door-\agent) = 0\ \mathrm{and}\ a\in\cA_{\dir}\cup\{\endactionfactor\} \\
  \indic_{\{\endactionfactor\}}(a) &,\text{otherwise,}
\end{cases}
\end{align}
\begin{equation*}
\begin{aligned} 
(g_{2,2}\levidx{1})_{\beta}: (\Theta_{2,2}\levidx{1})_{\beta}&\rightarrow \{(\pi_{2,2}\levidx{1})^{\beta}_\theta:\theta\in(\Theta_{2,2}\levidx{1})_{\beta}\}\\
\theta&\mapsto(\pi_{2,2}\levidx{1})^{\beta}_\theta\quad:\cSA_{2,2}\levidx{1}\rightarrow [0,1] \,\,(\theta\in(\Theta_{2,2}\levidx{1})_{\beta})\,.
\end{aligned} 
\end{equation*}

The optimal policy is
\begin{align}   \label{e:second-level-policy-eqn-optimization-mazebase}
\pi_{2,2,\ast}\levidx{2}(s,(\pi_{2,2}\levidx{1})_{\theta}^{\lambda})=&
\indic_{\{(\theta_{0,\ast}(s),\lambda_{0,\ast}(s))\}}((\theta,\lambda))\,,
\end{align}
with 
\begin{align} \label{e:second-level-policy-eqn-optimization-second-mazebase}
(\theta_{0,\ast}(s), \lambda_{0,\ast}(s))&\in \argmax_{(\theta_0,\lambda_0)}\max_{(\theta_1,\lambda_1),\cdots, (\theta_{\tau-1},\lambda_{\tau-1})}\\ 
\nonumber&\qquad\mathbb{E}_{\tau,S_{1:\tau}}[(R_{2,2}\levidx{2})_{0,\tau}  | S_0=s,A_t = (\pi_{2,2}\levidx{1})^{\lambda_t}_{\theta_t} \text{ for any } 0\leq t\leq \tau-1]\,.
\end{align}
Therefore,
\begin{align}  \label{e:second-level-policy-eqn_mazebase}
\pi_{2,2,\ast}\levidx{2}&((\agent,s_\pick,s_\open),a) = \nonumber\\  \nonumber
&\quad\indic_{\{1\}}(s_\open)\mult\indic_{\{\endaction\}}(a)+\indic_{\{0\}}(s_\open)\mult\indic_{\{0\}}(s_\pick)\mult\big[\indic_{\{s_\key\}}(\agent)\mult\indic_{\{(\pi^1_{2,2})^\alpha_{\pick}\}}(a)\\
&\quad+\indic_{\{s_\key\}^c}(\agent)\mult\indic_{\{(\pi^1_{2,2})^\beta_{\key}\}}(a)\big]+\indic_{\{0\}}(s_\open)\mult\indic_{\{1\}}(s_\pick)\\  
\nonumber&\qquad\mult\big[\indic_{\neardoor(s_\door)}(\agent)\mult\indic_{\{(\pi^1_{2,2})^\alpha_{\open}\}}(a)+\indic_{\neardoor(s_\door)^c}(\agent)\mult\indic_{\{(\pi^1_{2,2})^\beta_{\door}\}}(a)\big]\,.
\end{align}

\begin{equation}
\label{e:first-level-policy-eqn-mazebase}
\begin{aligned}  
\pi_{2,2,\ast}&((\agent,s_\pick,s_\open),a)\\&=
\begin{dcases}
\begin{aligned}   
&\indic_{\{0\}}(s_\open)\mult\big[\indic_{\{0\}}(s_\pick)\mult\indic_{\{s_\key\}^c}(\agent)\mult\overline{\pi}^{\mathrm{nav}}(\agent,s_\key,a)\\&\quad+\indic_{\{1\}}(s_\pick)\mult\indic_{\neardoor(s_\door)^c}(\agent)\mult\overline{\pi}^{\mathrm{nav}}(\agent,s_\door,a)\big] 
\end{aligned}
&,\text{if}\  a\in\cA_{\dir}\\
\begin{aligned}
&\indic_{\{1\}}(s_\open)\mult\indic_{\{\endactionfactor\}}(a)+\indic_{\{0\}}(s_\open)\\ &\quad\mult\big[\indic_{\{0\}}(s_\pick)\mult\indic_{\{s_\key\}}(\agent)\mult\indic_{\{\pick\}}(a)\\&\quad+\indic_{\{1\}}(s_\pick)\mult\indic_{\neardoor(s_\door)}(\agent)\mult\indic_{\{\open\}}(a)\big] 
\end{aligned} &,\text{otherwise.}
\end{dcases}
\end{aligned}
\end{equation}

\item[$\bullet$]  For $\MDP_{3,1}$,
with $(\Theta_{3,1}\levidx{1})_\alpha=\{\pick,\open\}$, $(e_{3,1}\levidx{1})^\alpha_{\theta}$ defined in \eqref{e:E311alpha}, $(\Theta_{3,1}\levidx{1})_\beta =\{\key_1,\key_2,\key_3,$\\$\door_1,\door_2,\door_3,\goals\}$,
\begin{equation}
\label{e:g311alpha} 
\begin{aligned} 
(g_{3,1}\levidx{1})_\alpha: (\Theta_{3,1}\levidx{1})_\alpha&\rightarrow \{(\pi_{3,1}\levidx{1})^\alpha_\theta:\theta\in(\Theta_{3,1}\levidx{1})_\alpha\}\\
\theta&\mapsto(\pi_{3,1}\levidx{1})^\alpha_\theta=\mathrm{id}\circ (e_{3,1}\levidx{1})^\alpha_{\theta}=(e_{3,1}\levidx{1})^\alpha_{\theta}\,,
\end{aligned}
\end{equation}
\begin{equation}
\label{e:g311beta} 
\begin{aligned}  
(g_{3,1}\levidx{1})_\beta: (\Theta_{3,1}\levidx{1})_\beta &\rightarrow \{(\pi_{3,1}\levidx{1})^\beta_\theta:\theta\in(\Theta_{3,1}\levidx{1})_\beta\} \\
\theta &\mapsto(\pi_{3,1}\levidx{1})^\beta_\theta\quad:\quad \cSA_{3,1}\levidx{1}\rightarrow [0,1]\,,
\end{aligned}
\end{equation}
where with $(e_{3,1}^1)^\beta_\theta$ defined in \eqref{e:E311beta}, for $\theta \in \{\key_1,\key_2,\key_3,\goals\}$, 
\begin{align*} 
(\pi_{3,1}\levidx{1})^\beta_{\theta}((\agent,\keypickvec,\dooropenvec,\goalpick),a)=&(\overline{\pi}^{\mathrm{nav}}\circ(e_{3,1}^1)^\beta_\theta)((\agent,s_\pick,s_\open),a)\\=&
\begin{cases}   
\overline{\pi}^{\mathrm{nav}}(\agent,s_\theta,a) &,\text{if}\  a\in \cA_{\dir}\cup\{\endactionfactor\} \\
 0 &,\text{otherwise,}
\end{cases}
\end{align*} 
and for $\theta \in \{\door_1,\door_2,\door_3\}$,
\begin{align*} 
(\pi_{3,1}\levidx{1})^\beta_{\theta}&((\agent,\keypickvec,\dooropenvec,\goalpick),a) = (\overline{\pi}^{\mathrm{nav}}\circ(e_{3,1}^1)^\beta_\theta)((\agent,s_\pick,s_\open),a)\\
&=\begin{cases} 
\overline{\pi}^{\mathrm{nav}}(\agent,s_\theta,a) &,\text{if}\  \overline{\pi}^{\mathrm{nav}}(\agent,s_\theta,s_\theta-\agent) = 0\ \mathrm{and}\ a\in\cA_{\dir}\cup\{\endactionfactor\} \\
\indic_{\{\endactionfactor\}}(a) &,\text{otherwise.}
\end{cases}
\end{align*}
At level 2, with $\Theta_{3,1}\levidx{2}=\{(\key_1,\door_1),(\key_2,\door_2),(\key_3,\door_3),(\goals,\goals)\}$,
\begin{equation}
\label{e:g312}
\begin{aligned}  
(g_{3,1}\levidx{2})_{\{\alpha, \beta\}}: \Theta_{3,1}\levidx{2}&\rightarrow \{((\pi_{3,1}\levidx{2})_\theta)_{\{\alpha, \beta\}}:\theta\in\Theta_{3,1}\levidx{2}\} \\
\theta&\mapsto ((\pi_{3,1}\levidx{2})_\theta)_{\{\alpha, \beta\}}\quad:\quad\cSA_{3,1}\levidx{2}\rightarrow [0,1]\,,
\end{aligned} 
\end{equation}
where with $(e_{3,1}^2)_\theta$ defined in \eqref{e:E312}, and with the concatenation skill $\overline{\pi}^{\mathrm{concat}}$ extracted from $\MDP_{2,2}$, for $\theta = (\key_i,\door_i)$, $i = 1,2,3$,
\begin{align*} 
&((\pi_{3,1}\levidx{2})_{\theta})_{\{\alpha,\beta\}}((\agent,\keypickvec,\dooropenvec,\goalpick),a)=(\overline{\pi}^{\mathrm{concat}}\circ(e_{3,1}^2)_\theta)((\agent,\keypickvec,\dooropenvec,\goalpick),a)\\ 
&=\begin{dcases}
\overline{\pi}^{\mathrm{concat}}(\indic_{\{s_{\key_i}\}}(\agent),\indic_{\{1\}}(s_{\pick_i})\mult\indic_{\neardoor(s_{\door_i})}(\agent),s_{\pick_i},s_{\open_i},\pi_\theta) &\,,\text{if}\  a=(\pi_{3,1}\levidx{1})^\alpha_{\theta} \\
\overline{\pi}^{\mathrm{concat}}(\indic_{\{s_{\key_i}\}}(\agent),\indic_{\{1\}}(s_{\pick_i})\mult\indic_{\neardoor(s_{\door_i})}(\agent),s_{\pick_i},s_{\open_i},\pi_{\key}) &\,,\text{if}\  a=(\pi_{3,1}\levidx{1})^\beta_{{\key}_{i}}\\
\overline{\pi}^{\mathrm{concat}}(\indic_{\{s_{\key_i}\}}(\agent),\indic_{\{1\}}(s_{\pick_i})\mult\indic_{\neardoor(s_{\door_i})}(\agent),s_{\pick_i},s_{\open_i},\pi_{\door}) &\,,\text{if}\  a=(\pi_{3,1}\levidx{1})^\beta_{{\door}_{i}} \\
\overline{\pi}^{\mathrm{concat}}(\indic_{\{s_{\key_i}\}}(\agent),\indic_{\{1\}}(s_{\pick_i})\mult\indic_{\neardoor(s_{\door_i})}(\agent),s_{\pick_i},s_{\open_i},a) &\,,\text{if}\  a=\endaction \\
 0 &\,,\text{otherwise,}
\end{dcases}
\end{align*}
and for $\theta = (\goals,\goals)$
\begin{align*} 
&((\pi_{3,1}\levidx{2})_\theta)_{\{\alpha, \beta\}}((\agent,\keypickvec,\dooropenvec,\goalpick),a)=(\overline{\pi}^{\mathrm{concat}}\circ(e_{3,1}^2)_\theta)((\agent,\keypickvec,\dooropenvec,\goalpick),a)\\ 
&=\begin{cases}
\overline{\pi}^{\mathrm{concat}}(\indic_{\{\goal\}}(\agent),\indic_{\{1\}}(\goalpick)\mult\indic_{\{\goal\}}(\agent),\goalpick,\goalpick,\pi_{\theta}) &\,,\text{if}\  a=(\pi_{3,1}\levidx{1})^\alpha_{\theta} \\
\overline{\pi}^{\mathrm{concat}}(\indic_{\{\goal\}}(\agent),\indic_{\{1\}}(\goalpick)\mult\indic_{\{\goal\}}(\agent),\goalpick,\goalpick,\pi_{\key}) &\,,\text{if}\  a=(\pi_{3,1}\levidx{1})^\beta_{\goals}\\
\overline{\pi}^{\mathrm{concat}}(\indic_{\{\goal\}}(\agent),\indic_{\{1\}}(\goalpick)\mult\indic_{\{\goal\}}(\agent),\goalpick,\goalpick,a) &\,,\text{if}\  a=\endaction \\
 0 &\,,\text{otherwise.}
\end{cases}
\end{align*}
At level $3$,
\begin{align} \label{e:pi313}  \nonumber
\pi_{3,1,\ast}\levidx{3}&((\agent,\keypickvec,\dooropenvec,\goalpick),a)=\indic_{\{1\}}(\goalpick)\mult\indic_{\{\endactionfactor\}}(a)\\ 
\nonumber&+[\indic_{\gridworld_{\room_4}\cup\{\door_3\}}(\agent)+\indic_{\gridworld_{\room_3}\cup\{\door_2\}}(\agent)\mult\indic_{\{1\}}(s_{\open_3})\\
&+\indic_{\gridworld_{\room_1}\cup \gridworld_{\room_2}\cup\{\door_1\}}(\agent)\mult\indic_{\{1\}}(s_{\open_2})\mult\indic_{\{1\}}(s_{\open_3})]\mult\indic_{\{(\pi\levidx{2}_{3,1})_{\goals}\}}(a)\\
\nonumber &+[\indic_{\gridworld_{\room_3}\cup\{\door_2\}}(\agent)+\indic_{\gridworld_{\room_1}\cup \gridworld_{\room_2}\cup\{\door_1\}}(\agent)\mult\indic_{\{1\}}(s_{\open_2})]\mult\indic_{\{0\}}(s_{\open_3})\mult\indic_{\{(\pi\levidx{2}_{3,1})_{\door_3}\}}(a)\\
\nonumber &+\indic_{\gridworld_{\room_1}\cup \gridworld_{\room_2}\cup\{\door_1\}}(\agent)\mult\indic_{\{0\}}(s_{\open_2})\mult\indic_{\{(\pi\levidx{2}_{3,1})_{\door_2}\}}(a)\,.
\end{align}
\begin{align}  \label{e:pi312} \nonumber
\pi_{3,1,\ast}\levidx{2}&(s,a)=\indic_{\{1\}}(\goalpick)\mult\indic_{\{\endaction\}}(a)\\ \nonumber
&\quad+[\indic_{\gridworld_{\room_4}\cup\{\door_3\}}(\agent)+\indic_{\gridworld_{\room_3}\cup\{\door_2\}}(\agent)\mult\indic_{\{1\}}(s_{\open_3})\\ 
&\quad+\indic_{\gridworld_{\room_1}\cup \gridworld_{\room_2}\cup\{\door_1\}}(\agent)\mult\indic_{\{1\}}(s_{\open_2})\mult\indic_{\{1\}}(s_{\open_3})]\mult(\pi\levidx{2}_{3,1})_{\goals}(s,a)\\ 
\nonumber &\quad+[\indic_{\gridworld_{\room_3}\cup\{\door_2\}}(\agent)+\indic_{\gridworld_{\room_1}\cup \gridworld_{\room_2}\cup\{\door_1\}}(\agent)\mult\indic_{\{1\}}(s_{\open_2})]\\ \nonumber
&\quad\mult\indic_{\{0\}}(s_{\open_3})\mult(\pi\levidx{2}_{3,1})_{\door_3}(s,a)\\ \nonumber&\quad+\indic_{\gridworld_{\room_1}\cup \gridworld_{\room_2}\cup\{\door_1\}}(\agent)\mult\indic_{\{0\}}(s_{\open_2})\mult(\pi\levidx{2}_{3,1})_{\door_2}(s,a)\,.
\end{align}
\begin{align}  \label{e:pi311} \nonumber
\pi_{3,1,\ast}&(s,a)=\indic_{\{1\}}(\goalpick)\mult\indic_{\{\endactionfactor\}}(a)\\   
\nonumber &\quad+[\indic_{\gridworld_{\room_4}\cup\{\door_3\}}(\agent)+\indic_{\gridworld_{\room_3}\cup\{\door_2\}}(\agent)\mult\indic_{\{1\}}(s_{\open_3})\\ 
&\quad+\indic_{\gridworld_{\room_1}\cup \gridworld_{\room_2}\cup\{\door_1\}}(\agent)\mult\indic_{\{1\}}(s_{\open_2})\mult\indic_{\{1\}}(s_{\open_3})]\mult\sum_{\pi\in \Pi_{3,1}\levidx{1}}(\pi\levidx{2}_{3,1})_{\goals}(s,\pi)\mult\pi(s,a)\\ 
\nonumber&\quad+[\indic_{\gridworld_{\room_3}\cup\{\door_2\}}(\agent)+\indic_{\gridworld_{\room_1}\cup \gridworld_{\room_2}\cup\{\door_1\}}(\agent)\mult\indic_{\{1\}}(s_{\open_2})]\\ 
\nonumber &\quad\quad\mult\indic_{\{0\}}(s_{\open_3})\mult\sum_{\pi\in \Pi_{3,1}\levidx{1}}(\pi\levidx{2}_{3,1})_{\door_3}(s,\pi)\mult\pi(s,a)\\ 
\nonumber &\quad+\indic_{\gridworld_{\room_1}\cup \gridworld_{\room_2}\cup\{\door_1\}}(\agent)\mult\indic_{\{0\}}(s_{\open_2})\mult\sum_{\pi\in \Pi_{3,1}\levidx{1}}(\pi\levidx{2}_{3,1})_{\door_2}(s,\pi)\mult\pi(s,a)\,.
\end{align}

\item[$\bullet$] For $\MDP'_{3,1}$:
\begin{align} \label{e:policy-transfer-mazebase}  \nonumber
(\pi_{3,1,\ast}\levidx{3})'&((\agent,\keypickvec,\dooropenvec,\goalpick),a)=\indic_{\{1\}}(\goalpick)\mult\indic_{\{\endactionfactor\}}(a)\\ 
\nonumber&\quad+[\indic_{\gridworld_{\room_3}\cup\{\door_3\}}(\agent)+\indic_{\gridworld_{\room_4}\cup\{\door_2\}}(\agent)\mult\indic_{\{1\}}(s_{\open_3})\\ 
\nonumber&\quad+\indic_{\gridworld_{\room_2}\cup\{\door_1\}}(\agent)\mult\indic_{\{1\}}(s_{\open_2})\mult\indic_{\{1\}}(s_{\open_3})\\
&\quad+\indic_{\gridworld_{\room_1}}(\agent)\mult\indic_{\{1\}}(s_{\open_1})\mult\indic_{\{1\}}(s_{\open_2})\mult\indic_{\{1\}}(s_{\open_3})]\mult\indic_{\{(\pi\levidx{2}_{3,1})'_{\goals}\}}(a)\\ 
\nonumber&\quad+[\indic_{\gridworld_{\room_4}\cup\{\door_2\}}(\agent)+\indic_{\gridworld_{\room_2}\cup\{\door_1\}}(\agent)\mult\indic_{\{1\}}(s_{\open_2})\\ 
\nonumber&\quad+\indic_{\gridworld_{\room_1}}(\agent)\mult\indic_{\{1\}}(s_{\open_1})\mult\indic_{\{1\}}(s_{\open_2})]\mult\indic_{\{0\}}(s_{\open_3})\mult\indic_{\{(\pi\levidx{2}_{3,1})'_{\door_3}\}}(a)\\ 
\nonumber&\quad+[\indic_{\gridworld_{\room_2}\cup\{\door_1\}}(\agent)+\indic_{\gridworld_{\room_1}}(\agent)\mult\indic_{\{1\}}(s_{\open_1})]\mult\indic_{\{0\}}(s_{\open_2})\mult\indic_{\{(\pi\levidx{2}_{3,1})'_{\door_2}\}}(a)\\ 
\nonumber&\quad+\indic_{\gridworld_{\room_1}}(\agent)\mult\indic_{\{0\}}(s_{\open_1})\mult\indic_{\{(\pi\levidx{2}_{3,1})'_{\door_1}\}}(a)\,.
\end{align}
\end{enumerate}

\subsubsection{Embeddings, embedding generators and skills}

\label{s:skills-mazebase}

\begin{enumerate}[leftmargin=0cm,itemsep=0pt]
\item[$\bullet$] For $\MDP_{2,1}$,
$E_{2,1}\levidx{1}:\Theta_{2,1}\levidx{1}\rightarrow \{(e_{2,1}\levidx{1})_\theta:\theta\in\Theta_{2,1}\levidx{1}\}$ is defined as
\begin{equation} \label{e:E211-mazebase} 
\begin{aligned}
E_{2,1}\levidx{1}(\theta):=(e_{2,1}\levidx{1})_\theta\quad:\quad &\cSA_{2,1}\levidx{1}&\rightarrow&\, \noBlock\times\noBlock\times(\cA_{\dir}\cup\{\endactionfactor\}) \\
&((\agent,\dests),a)&\mapsto &\,(\agent,s_{\theta}, a)\,.
\end{aligned}
\end{equation}

\begin{equation} \label{e:E211-decomp-mazebase} 
\begin{aligned}
(e_{\decomp})_{2,1}\levidx{1}: &\cSA_{2,1}\levidx{1}&\rightarrow&\, \noBlock\times\noBlock\times(\cA_{\dir}\cup\{\endactionfactor\})\\
&((\agent,\dests),a)&\mapsto&\,(\agent,\dests,a)\,.
\end{aligned}
\end{equation} 

\begin{equation} \label{e:pi-nav-mazebase} 
\begin{aligned}
\overline{\pi}^{\mathrm{nav}}:&(e_{\decomp})_{2,1}\levidx{1}(\cSA_{2,1}\levidx{1})&\rightarrow&\, [0,1]\\
&(\agent,\dests,a)&\mapsto&\,\pi_{2,1,\ast}\levidx{1}((\agent,\dests),a)\,.
\end{aligned}
\end{equation}

\item[$\bullet$] 
For $\MDP_{2,2}$, with $(e_{2,2}\levidx{1})^\alpha_\theta:\cSA_{2,2}\levidx{1}\rightarrow \{0,1\} \,\,(\theta\in(\Theta_{2,2}\levidx{1})_{\alpha})$ being defined as
\begin{equation} \label{e:ealphatheta}
(e_{2,2}\levidx{1})^\alpha_\theta((\agent,s_\pick,s_\open),a):=\indic_{\{\theta\}}(a)\,,
\end{equation}
\begin{equation*}
\begin{aligned} 
(E_{2,2}\levidx{1})_{\alpha}: (\Theta_{2,2}\levidx{1})_{\alpha}&\rightarrow \{(e_{2,2}\levidx{1})^{\alpha}_\theta:\theta\in(\Theta_{2,2}\levidx{1})_{\alpha}\}\\
\theta&\mapsto(e_{2,2}\levidx{1})^{\alpha}_\theta\,,
\end{aligned} 
\end{equation*}
and with $(e_{2,2}\levidx{1})^\beta_{\theta}:\{((\agent,s_\pick,s_\open),a)\in\cS_{2,2}\times(\cA_{\dir}\cup\{\endactionfactor\}):\agent+a\neq s_\door\}\subseteq \cSA_{2,2}\levidx{1}\rightarrow \{(\cur,\goal,a):\cur,\goal\in\noBlock,a\in\cA_{\dir}\cup\{\endactionfactor\}\} \,\,(\theta\in(\Theta_{2,2}\levidx{1})_{\beta})$ being defined as 
\begin{equation}  \label{e:ebetatheta}
(e_{2,2}\levidx{1})^\beta_{\theta}((\agent,s_\pick,s_\open),a):=(\agent,s_{\theta},a)\,,
\end{equation}
\begin{equation*}
\begin{aligned} 
(E_{2,2}\levidx{1})_{\beta}: (\Theta_{2,2}\levidx{1})_{\beta}&\rightarrow \{(e_{2,2}\levidx{1})^{\beta}_\theta:\theta\in(\Theta_{2,2}\levidx{1})_{\beta}\}\\
\theta&\mapsto(e_{2,2}\levidx{1})^{\beta}_\theta\,.
\end{aligned} 
\end{equation*}

$(e_\decomp)_{2,2}\levidx{2}: \cSA_{2,2}\levidx{2}\rightarrow \{(e_{\agent=s_\key}, $ $e_{\|\agent-s_\door\|_1=1}, s_\pick,s_\open,a)\}\subseteq$ $\{0,1\}^4\times\{\pi_{\pick},\pi_{\open},\pi_{\key},\pi_{\door},$\\$\endaction\}$ 
is defined as:
\begin{align}  \label{e:embedding-concat} \nonumber
(e_\decomp)_{2,2}\levidx{2}&((\agent,s_\key,s_\door,s_\pick,s_\open),a)\\&:=
\begin{dcases}
(\indic_{\{s_\key\}}(\agent), \indic_{\{1\}}(s_\pick)\mult\indic_{\neardoor(s_\door)}(\agent), s_\pick,s_\open,\pi_{\theta}) &,\text{if}\  a=(\pi\levidx{1}_{2,2})^\lambda_{\theta} \\[0.3cm]
(\indic_{\{s_\key\}}(\agent), \indic_{\{1\}}(s_\pick)\mult\indic_{\neardoor(s_\door)}(\agent), s_\pick,s_\open,a) &,\text{otherwise,}  
\end{dcases}
\end{align}

and the concatenation skill $\overline{\pi}^{\mathrm{concat}}:\{0,1\}^4\times\{\pi_{\pick}, \pi_{\open},\pi_{\key},\pi_{\door},\endaction\}\rightarrow [0,1]$ is
\begin{align} \label{e:skill-concat} 
\overline{\pi}^{\mathrm{concat}}&(e_{\agent=s_\key}, e_{\|\agent-s_\door\|_1=1}, s_\pick,s_\open,a)=\indic_{\{1\}}(s_\open)\mult\indic_{\{\endaction\}}(a)+\indic_{\{0\}}(s_\open)\mult\indic_{\{0\}}(s_\pick)\\  
\nonumber&\mult\big[e_{\agent=s_\key}\mult\indic_{\{\pi_{\pick}\}}(a)+(1-e_{\agent=s_\key})\mult\indic_{\{\pi_{\key}\}}(a)\big]\\  \nonumber&+\indic_{\{0\}}(s_\open)\mult\big[e_{\|\agent-s_\door\|_1=1}\mult\indic_{\{\pi_{\open}\}}(a)+(\indic_{\{1\}}(s_\pick)-e_{\|\agent-s_\door\|_1=1})\mult\indic_{\{\pi_{\door}\}}(a)\big]\,.
\end{align}

\item[$\bullet$] 
For $\MDP_{3,1}$, with $(e_{3,1}\levidx{1})^\alpha_\theta:\cSA_{3,1}\levidx{1}\rightarrow \{0,1\} \,\,(\theta\in(\Theta_{3,1}\levidx{1})_\alpha)$ being defined as
\begin{equation*}
(e_{3,1}\levidx{1})^\alpha_\theta((\agent,\keypickvec,\dooropenvec,\goalpick),a):=\indic_{\{\theta\}}(a)\,,
\end{equation*}
\begin{equation}
\label{e:E311alpha} 
\begin{aligned} 
(E_{3,1}\levidx{1})_{\alpha}:(\Theta_{3,1}\levidx{1})_\alpha &\rightarrow \{(e_{3,1}\levidx{1})^\alpha_\theta:\theta\in(\Theta_{3,1}\levidx{1})_\alpha\} \\
\theta&\mapsto(e_{3,1}\levidx{1})^\alpha_\theta\,, 
\end{aligned}
\end{equation}
and
\begin{equation} \label{e:E311beta}  
\begin{aligned}
(E_{3,1}\levidx{1})_\beta:(\Theta_{3,1}\levidx{1})_\beta&\rightarrow \{(e_{3,1}\levidx{1})^\beta_\theta:\theta\in(\Theta_{3,1}\levidx{1})_\beta\} \\
\theta &\mapsto (e_{3,1}\levidx{1})^\beta_{\theta}\,,
\end{aligned}
\end{equation} 
where for $\theta=\key_1,\key_2,\key_3,\goals$, $(e_{3,1}\levidx{1})^\beta_{\theta}: \cS_{3,1}\times(\cA_{\dir}\cup\{\endactionfactor\})\subseteq \cSA_{3,1}\levidx{1}\rightarrow \noBlock\times\noBlock\times(\cA_{\dir}\cup\{\endactionfactor\})$ is defined as
\begin{equation*}
(e_{3,1}\levidx{1})^\beta_{\theta}((\agent,\keypickvec,\dooropenvec,\goalpick),a):=(\agent,s_\theta,a)\,,
\end{equation*} 
and for $\theta=\door_1,\door_2,\door_3$, 
$(e_{3,1}\levidx{1})^\beta_{\theta}:\{((\agent,\keypickvec,\dooropenvec,\goalpick),a)\in\cS_{3,1}\times(\cA_{\dir}\cup\{\endactionfactor\}):\agent+a\neq s_\theta\}\subseteq \cSA_{3,1}\levidx{1}
\rightarrow \noBlock\times\noBlock\times(\cA_{\dir}\cup\{\endactionfactor\})$ is defined as
\begin{equation*}
(e_{3,1}\levidx{1})^\beta_{\theta}((\agent,\keypickvec,\dooropenvec,\goalpick),a):=(\agent,s_\theta,a)\,.
\end{equation*}

At level $2$,
\begin{equation} \label{e:E312} 
\begin{aligned}
E_{3,1}\levidx{2}:\Theta_{3,1}\levidx{2}&\rightarrow \{(e_{3,1}\levidx{2})_\theta:\theta=(\theta_{\key},\theta_{\door})\in\Theta_{3,1}\levidx{2}\}\\
\theta&\mapsto(e_{3,1}\levidx{2})_\theta\,,
\end{aligned}
\end{equation}
where for $\theta = (\key_i,\door_i)\,\, (i = 1,2,3)$, $(e_{3,1}\levidx{2})_\theta:\cS_{3,1}\times(\{(\pi_{3,1}\levidx{1})^\alpha_{\pick},(\pi_{3,1}\levidx{1})^\alpha_{\open},(\pi_{3,1}\levidx{1})^\beta_{\key_i},(\pi_{3,1}\levidx{1})^\beta_{\door_i},$ $\endaction\})$ $\subseteq \cSA_{3,1}\levidx{2}\rightarrow \{0,1\}^4\times\{\pi_{\pick},\pi_{\open},\pi_{\key},\pi_{\door},\endaction\}\}$ is defined as 
\begin{align*} 
(e_{3,1}\levidx{2})_\theta&((\agent,\keypickvec,\dooropenvec,\goalpick),a)\\ 
&:=
\begin{cases}   
(\indic_{\{s_{\key_i}\}}(\agent),\indic_{\{1\}}(s_{\pick_i})\mult\indic_{\neardoor(s_{\door_i})}(\agent),s_{\pick_i},s_{\open_i},\pi_{\theta}) &\,,\text{if}\  a=(\pi_{3,1}\levidx{1})^\alpha_{\theta}\\ 
(\indic_{\{s_{\key_i}\}}(\agent),\indic_{\{1\}}(s_{\pick_i})\mult\indic_{\neardoor(s_{\door_i})}(\agent),s_{\pick_i},s_{\open_i},\pi_{\key}) &\,,\text{if}\  a=(\pi_{3,1}\levidx{1})^\beta_{{\key}_{i}} \\
(\indic_{\{s_{\key_i}\}}(\agent),\indic_{\{1\}}(s_{\pick_i})\mult\indic_{\neardoor(s_{\door_i})}(\agent),s_{\pick_i},s_{\open_i},\pi_{\door}) &\,,\text{if}\  a=(\pi_{3,1}\levidx{1})^\beta_{{\door}_{i}}\\ 
(\indic_{\{s_{\key_i}\}}(\agent),\indic_{\{1\}}(s_{\pick_i})\mult\indic_{\neardoor(s_{\door_i})}(\agent),s_{\pick_i},s_{\open_i},a) &\,,\text{otherwise,} 
\end{cases}
\end{align*}
and for $\theta = (\goals,\goals)$, $(e_{3,1}\levidx{2})_\theta:\cS_{3,1}\times(\{(\pi_{3,1}\levidx{1})^\alpha_{\pick},(\pi_{3,1}\levidx{1})^\alpha_{\open},(\pi_{3,1}\levidx{1})^\beta_{\goals},\endaction\})\subseteq \cSA_{3,1}\levidx{2}\rightarrow \{0,1\}^4\times\{\pi_{\pick},\pi_{\open},\pi_{\key},\endaction\}\}$ is defined as 
\begin{align*} 
&(e_{3,1}\levidx{2})_\theta((\agent,\keypickvec,\dooropenvec,\goalpick),a)\\ 
&:=
\begin{cases}   
(\indic_{\{\goal\}}(\agent), \indic_{\{1\}}(\goalpick)\mult\indic_{\{\goal\}}(\agent),\goalpick,\goalpick,\pi_\theta) &\,,\text{if}\  a=(\pi_{3,1}\levidx{1})^\alpha_{\theta} \\
(\indic_{\{\goal\}}(\agent), \indic_{\{1\}}(\goalpick)\mult\indic_{\{\goal\}}(\agent),\goalpick,\goalpick,\pi_{\key}) &\,,\text{if}\  a=(\pi_{3,1}\levidx{1})^\beta_{\goals}\\
(\indic_{\{\goal\}}(\agent), \indic_{\{1\}}(\goalpick)\mult\indic_{\{\goal\}}(\agent),\goalpick,\goalpick,a) &\,,\text{otherwise.}\
\end{cases}
\end{align*}
\end{enumerate}

\subsubsection{Extended discussion and comparison of our work with \cite{Sukhbaatar2018LearningGE}}

\label{s: MazeBase-literature}

This MazeBase+ example shows sufficiently how planning like humans is realized by our framework. 
When trying to solve a difficult problem of picking up a goal, we first consider it at high level: we need to open door A, open door B, and then we pick up the goal. Then, we focus on each subtask and consider it more carefully: to open a door, we need to travel to a key, pick it up, travel to the door, and then open it; to pick up the goal, we need to travel to the goal, and then pick it up. 
These two logics are similar, so we extract out a single higher-order function $\overline{\pi}^{\mathrm{concat}}$.
Then, we also notice that we need to go to another location in $\gridworld$ for multiple times while avoiding blocks assuming all the doors are open, so we extract out a single navigation skill $\overline{\pi}^{\mathrm{nav}}$. Finally, we go down to the finest level: to travel to a certain destination while avoiding the blocks, we should first go to door A, then to doors B,C, ... before going directly to that destination. All these processes are similarly about navigation within a single room while avoiding blocks, so we extract a single basic navigation skill $\overline{\pi}^{\mathrm{nav}}_{\mathrm{dense}}$, which we may also transfer from solved problems in a different curriculum, such as here in the example of navigation and transportation with traffic jams.  

This is in some sense a mathematical realization of the idea in \cite{Sukhbaatar2018LearningGE}, and actually a generalization because \cite{Sukhbaatar2018LearningGE} only considers MDPs of difficulty up to $2$, where the student Bob and the manager Charlie are analogous to the student at level $1$ and $2$ respectively.
However, note that if \cite{Sukhbaatar2018LearningGE} were generalized na\"ively to multiple levels, the higher levels would not consist of independent MDPs, and the agent would still need to go through all the finer levels when learning at each level, which is computationally heavy.
This is not the case for our framework, at least in the case of value iteration; in the case of Q-learning we do need to estimate transition probabilities, rewards and discount factors from lower-level MDPs, albeit even in this case our framework can still enable us to do so with significant savings in terms of exploration and computational cost.

Another main advantage of our framework is that thanks to the introduction of partial policy generators, embedding generators and skills, we allow for very flexible transfer learning opportunities, not restricted to only concatenation as in \cite{Sukhbaatar2018LearningGE}. 
In particular, our embeddings take the current state, which often contains the information of goal state, as well as action coordinates as inputs, rather than only taking the goal state as input while keeping fixed the high-dimensional current state as in \cite{Sukhbaatar2018LearningGE}, giving us the opportunity to reduce the dimension and generate abstractions to a greater extent, as well as utilize the policy structure better. 
In this way, we encapsulate the knowledge from the policies we have already learned within the skills and focus on learning the unknown parts of the policies in new MDPs. 
Therefore, we avoid learning repeatedly similar subtasks for multiple times, such as traveling to certain destinations while avoiding blocks assuming all the doors are open, which greatly reduces computational cost. 
Last but not least, the framework in \cite{Sukhbaatar2018LearningGE} is unsupervised, whereas our framework is constructed for solving the target MDPs, such as $\MDP_{3,1}$ here, so it is supervised.

\subsection{Navigation and transportation with traffic jams}
\label{s:all-traffic}

\subsubsection{Formal definitions of the MDPs in the curriculum}
\label{s:MDPs-traffic}

The formal definition of the family of target MDPs is as follows: $\MDP_\kappa :=(\cS,\Sinit,\Sterm,\cA,P,R_\kappa,\Gamma)$, for $\kappa$ in some finite set $\mathcal{K}$ ($|\mathcal{K}|=6$), with 
\begin{equation}  \label{e:MDPs-traffic}
\begin{aligned}
\cS&:=\{(\cur,\dests):\cur,\dests\in\gridworld\}=\gridworld\times\gridworld\,,\\
\Sinit&:=\{(\cur,\dests)\in \cS:\cur\neq\dests\}\,\,,\,\,
\Sterm:=\{(\cur,\dests)\in \cS:\cur=\dests\}\,,\\
\cA&:=\{(a_\dir,a_\means):a_\dir\in \cA_{\dir}\cup\{\endactionfactor\}\,,\, a_\means\in \cA_{\means}\cup\{\endactionfactor\}\}\\&\,=(\cA_{\dir}\cup\{\endactionfactor\})\times (\cA_{\means}\cup\{\endactionfactor\})\,,\\
R_\kappa&((\cur,\dests),(a_\dir,a_\means),(\cur',\destnext))&\\
&:= R_{\dest}\mult\indic_{\{\destnext\}}(\cur')
+[(R_{\jam}+\frac{r_0}{v_{\motor}})\mult\indic_{\{\motor\}}(a_\means)+\frac{r_0}{\kappa}\mult\indic_{\{\car\}}(a_\means)]\\ 
&\quad\quad\mult[1-\indic_{\nojams}(\cur)\mult\indic_{\nojams}(\cur')]\\
&\quad\quad+[\frac{r_0}{v_{\motor}}\mult\indic_{\{\motor\}}(a_\means)+\frac{r_0}{v_{\car}}\mult\indic_{\{\car\}}(a_\means)]\mult\indic_{\nojams}(\cur)\mult\indic_{\nojams}(\cur')\,,\\
\Gamma&((\cur,\dests),(a_\dir,a_\means),(\cur',\destnext)):=\gamma_0\,,\\
P&((\cur,\dests),(a_\dir,a_\means),(\cur',\destnext)) \\&:= \indic_{\{\dests\}}(\destnext) \mult  \big[[1-\indic_{\gridworld}(\cur+a_{\dir})]\mult\indic_{\{\cur\}}(\cur')\\
&\quad+\indic_{\gridworld}(\cur+a_{\dir})\mult[p_s\mult\indic_{\{\cur+a_{\dir}\}}(\cur')+(1-p_s)\mult\indic_{\{\cur\}}(\cur')]\big]\,.
\end{aligned}
\end{equation}
For clarification, the ``+'' signs here in expressions such as $\cur+a_{\dir}$ are just vector sums in the grid world.  
Also, for the purpose of simplicity only, we set $x + \endactionfactor = \endactionfactor$ for any numerical value $x$ throughout, such as in the transition probabilities here. The teacher has the role to set all the parameters of the problems: $0<p_s\leq 1$, is the probability for any action to succeed, and here we set $p_s=0.9$; $R_{\dest}>0$ is some large positive reward (here $R_{\dest}=10^4$) set for reaching $\dests$; $R_{\jam}<\frac{r_0}{\kappa}-\frac{r_0}{v_{\motor}}$ is some large negative reward (here $R_{\jam}=-10^3$) if the agent drives along, enters, or leaves $\jams$ using $\motor$; $0<\gamma_0<1$ is some large discount factor (here $\gamma_0=0.999$). $1/\kappa = 2.4,2.8,3.2,3.6,4,4.4$ respectively for $\{\MDP_{\kappa}\}_{\kappa\in \mathcal{K}}$, modeling the dependence of the speed of the car in traffic regions as a function of the heaviness of traffic. As a summary, Table~\ref{t:param-nav} lists all the values of the parameters of the MDPs in this example.

\setlength{\tabcolsep}{2pt}
\begin{table}[H]
\caption{Parameters in the example of navigation and transportation with traffic jams}
\centering
\begin{tabular}{|c|c|c|c|c|c|c|c|c|c|}
\cline{2-10}
\multicolumn{1}{l|}{}                                                                                                        & $\jams$& $p_s$ & $v_{\motor}$ & $v_{\car}$ & $\kappa$ & $R_{\dest}$ & $R_{\jam}$ & $r_0$ & $\gamma_0$\\ \hline
\multicolumn{1}{|c|}{\begin{tabular}[c]{@{}c@{}}$\MDP_{2,1}$\end{tabular}}       &$[1,15]\times\{6\}\cup\{5\}\times [1,8]$ & $0.9$ & $1$ & $0.6$  & $1/2.4$ & $10^4$  & $-10^3$ & $-10$ & $0.999$\\ \hline
\multicolumn{1}{|c|}{\begin{tabular}[c]{@{}c@{}}$\MDP_{2,2}$\end{tabular}}       &$[1,15]\times\{6\}\cup\{5\}\times [1,8]$ & $0.9$ & $1$ & $0.6$  & $1/2.8$ & $10^4$  & $-10^3$ & $-10$ & $0.999$\\ \hline
\multicolumn{1}{|c|}{\begin{tabular}[c]{@{}c@{}}$\MDP_{2,3}$\end{tabular}}      &$[1,15]\times\{6\}\cup\{5\}\times [1,8]$ & $0.9$ & $1$ & $0.6$  & $1/3.2$ & $10^4$  & $-10^3$ & $-10$ & $0.999$\\ \hline
\multicolumn{1}{|c|}{\begin{tabular}[c]{@{}c@{}}$\MDP_{2,4}$\end{tabular}}       &$[1,15]\times\{6\}\cup\{5\}\times [1,8]$ & $0.9$ & $1$ & $0.6$  & $1/3.6$ & $10^4$  & $-10^3$ & $-10$ & $0.999$\\ \hline
\multicolumn{1}{|c|}{\begin{tabular}[c]{@{}c@{}}$\MDP_{2,5}$\end{tabular}}      &$[1,15]\times\{6\}\cup\{5\}\times [1,8]$ & $0.9$ & $1$ & $0.6$  & $1/4$ & $10^4$  & $-10^3$ & $-10$ & $0.999$\\ \hline
\multicolumn{1}{|c|}{\begin{tabular}[c]{@{}c@{}}$\MDP_{2,6}$\end{tabular}}       &$[1,15]\times\{6\}\cup\{5\}\times [1,8]$ & $0.9$ & $1$ & $0.6$  & $1/4.4$ & $10^4$  & $-10^3$ & $-10$ & $0.999$\\ \hline
\multicolumn{1}{|c|}{\begin{tabular}[c]{@{}c@{}}$\MDP_{2,7}$\end{tabular}}  &$[1,15]\times\{1,4,7\}\cup\{1,4,7,10,13\}\times [1,8]$ & $0.9$ & $1$ & $1/1.05$  & $1/1.1$ & $10^4$  & $-10^3$ & $-10$ & $0.999$\\ \hline
\multicolumn{1}{|c|}{\begin{tabular}[c]{@{}c@{}}$\MDP_{1,1}$\end{tabular}}      &$[1,15]\times\{6\}\cup\{5\}\times [1,8]$ & $0.9$ & $1$ & $0.6$  & $1/2.5$ & $10^4$  & $-10^3$ & $-10$ & $0.999$\\ \hline
\multicolumn{1}{|c|}{\begin{tabular}[c]{@{}c@{}}$\MDP_{1,2}$\end{tabular}}     &$[1,15]\times\{6\}\cup\{5\}\times [1,8]$ & $0.9$ & $1$ & $0.6$  & $1/4$ & $10^4$  & $-10^3$ & $-10$ & $0.999$\\ \hline
\multicolumn{1}{|c|}{\begin{tabular}[c]{@{}c@{}}$\MDP_{1,3}$\end{tabular}}      &$[1,15]\times\{1,4,7\}\cup\{1,4,7,10,13\}\times [1,8]$ & $0.9$ & $1$ & $1/1.05$  & $1/1.1$ & $10^4$  & $-10^3$ & $-10$ & $0.999$\\ \hline
\end{tabular}
\label{t:param-nav}
\end{table}

The following is the detailed definition of $\MDP_{1,n}:=(\cS,\Sinit,\Sterm,\cA_1,P_1,R_{1,n},\Gamma_{1}) (1\leq n\leq n_1=2)$, with $\cS,\Sinit,\Sterm$ defined in the same manner as in $\MDP_\kappa$: 
\begin{equation} \label{e:MDP-basic-traffic}
\begin{aligned}
\cA_{1}&:=\cA_{\dir}\cup\{{\endactionfactor}\}\,, \\
R_{1,n}((\cur,\dests),a,(\cur',\destnext))&:=R_{\dest}\mult\indic_{\{\destnext\}}(\cur')+\frac{r_0}{\kappa}\mult[1-\indic_{\nojams}(\cur)\mult\indic_{\nojams}(\cur')]\\
&\qquad+\frac{r_0}{v_{\motor}}\mult\indic_{\nojams}(\cur)\mult\indic_{\nojams}(\cur')\,,\\
\Gamma_{1}((\cur,\dests),a,(\cur',\destnext))&:=\gamma_0\,,\\
P_{1}((\cur,\dests),a,(\cur',\destnext))&:=P((\cur,\dests),(a,\motor),(\cur',\destnext))\,,
\end{aligned}
\end{equation}
where $1/\kappa = 2.5,4$ in $\MDP_{1,1}, \MDP_{1,2}$, respectively.

\subsubsection{(Partial) policies for target MDPs}
\label{s:policies-traffic}

\begin{equation} 
(\pi_{\theta'})_{\means}((\cur,\dests),(0,a_{\means})) := \indic_{\{\theta'\}}(a_{\means})\,.
\label{eq:pithetameans}
\end{equation}

\begin{align}   \label{e:second-level-policy-eqn}
\pi_{\kappa,\ast}\levidx{2}&(s,(\pi^{\mathrm{nav}_{\kappa'}}_{\theta})_{\dir}\otimes (\pi_{\theta'})_{\means})=
\indic_{\{\theta_{0,\ast}(s)\}}(\theta)\mult\indic_{\{\kappa\}}(\kappa')\mult\sum_{a_\dir\in\cA_{\dir}}(\pi^{\mathrm{nav}_{\kappa}}_{\theta_{0,\ast}(s)})_{\dir}(s,(a_\dir,0))\\ \nonumber
&\quad \mult [\indic_{\{\car\}}(\theta')\mult\indic_{\djams}(s,(a_\dir,0))+\indic_{\{\motor\}}(\theta')\mult\indic_{\dnojams}(s,(a_\dir,0))]\,,
\end{align}
with 
\begin{align} \label{e:second-level-policy}
\theta_{0,\ast}(s)\in \argmin_{\theta_0}\min_{\theta_1,\cdots, \theta_{\tau-1},,\theta_0',\theta'_1,\cdots, \theta'_{\tau-1}}\mathbb{E}_{\tau,S_{1:\tau}}[(R_{\kappa}\levidx{2})_{0,\tau}  |&S_0=s,A_t = (\pi^{\mathrm{nav}_{\kappa}}_{\theta_t})_{\dir}\otimes (\pi_{\theta'_t})_{\means} \\ \nonumber &\text{for any } 0\leq t\leq \tau-1]\,.
\end{align}

\begin{align}  \label{e:first-level-policy-eqn}
\pi_{\kappa}(s,(a_\dir,a_\means))=&({\pi}^{\mathrm{nav}_{\kappa}})_\dir(s,(a_\dir,0))\mult [\indic_{\{\car\}}(a_\means)\mult\indic_{\djams}((s,(a_\dir,0)))\\ \nonumber&+\indic_{\{\motor\}}(a_\means)\mult\indic_{\dnojams}((s,(a_\dir,0)))]\,.
\end{align}

\subsubsection{Embeddings, embedding generators and skills}
\label{s:skills-traffic}

\begin{enumerate}[leftmargin=0cm,itemsep=0pt]
\item[$\bullet$] For $\MDP_{1,n}\, (1\leq n\leq n_1=2)$ of difficulty 1:
\begin{equation}
\label{e:e11decomp-traffic}
\begin{aligned}
(e_{\decomp})_{1}\levidx{1}:\cSA_{1}\levidx{1}&\rightarrow \{(\cur,\dests,a_{\dir}):\cur,\dests\in\gridworld,a_{\dir}\in\cA_{\dir}\cup\{\endactionfactor\}\}\\
((\cur,\dests),a_{\dir})&\mapsto(\cur,\dests,a_{\dir})\,.
\end{aligned}
\end{equation}
\begin{equation}
\label{e:navskill-traffic}
\begin{aligned}
\overline{\pi}^{\mathrm{nav}_{n}}_{\mathrm{obstacles}}:(e_{\decomp})_{1}\levidx{1}(\cSA_{1}\levidx{1})&\rightarrow [0,1]\\
(\cur,\dests,a_{\dir})&\mapsto\pi_{1,n,\ast}\levidx{1}((\cur,\dests),a_{\dir})\,.
\end{aligned}
\end{equation}

\item[$\bullet$] For target $\MDP_\kappa \, (\kappa\in \mathcal{K})$ of difficulty 2, with $\Theta_{\dir}=\{\changemeans,\nochangemeans\}$, and with $(e^1)\levidx{\dir}_{\theta}:\theta \rightarrow \gridworld\times\gridworld\times(\cA_{\dir}\cup\{\endactionfactor\}) (\theta\in\Theta_{\dir})$ being defined as
\begin{equation*}
(e^1)\levidx{\dir}_{\theta}(((\cur,\dests),(a_\dir,0))):= (\cur,\dests,a_\dir)\,,
\end{equation*}
\begin{equation}
\label{e:E1alpha}
\begin{aligned}
E\levidx{1}_{\alpha}:\Theta_{\dir} &\rightarrow \{(e^1)\levidx{\dir}_{\theta}:\theta\in\Theta_{\dir}\}\\
\theta&\mapsto (e^1)\levidx{\dir}_{\theta}\,.
\end{aligned}
\end{equation}
With $(e^1)\levidx{\means}_{\theta}:\cSA\levidx{1}_{\means}\rightarrow \{0,1\} (\theta\in\cA_{\means})$ being defined as
\begin{equation*}
(e^1)\levidx{\means}_{\theta}(((\cur,\dests),(0,a_\means))):= \indic_{\{\theta\}}(a_\means)\,,
\end{equation*}
which is $(\pi_{\theta})_{\means}$ as in \eqref{eq:pithetameans},
we have 
\begin{equation}
\label{e:E1beta}
\begin{aligned}
E\levidx{1}_{\beta}:\cA_{\means}&\rightarrow \{(e^1)\levidx{\means}_{\theta}:\theta\in\cA_{\means}\} \\
\theta&\mapsto (e^1)\levidx{\means}_{\theta}\,.
\end{aligned}
\end{equation}
And finally,
\begin{align} \label{e:embedding-transfer-traffic} \nonumber
&(e_{\decomp})_{2,1}\levidx{2}((\cur,\dests),((\pi^{\mathrm{nav}_{\kappa_1}}_{\theta})_{\dir}\otimes (\pi_{\theta'})_{\means}))\\
&\quad:=(\{\indic_{\theta}(((\cur,\dests),(a_\dir,0)))\}_{a_\dir\in\cA_\dir}, \{\indic_{\djams}(((\cur,\dests),(a_\dir,0)))\}_{a_{\dir}\in\cA_{\dir}}\,,
\\ \nonumber &\qquad\quad\{({\pi}^{\mathrm{nav}_{\kappa_1}})_\dir(((\cur,\dests),a_\dir))\}_{a_{\dir}\in\cA_{\dir}}\,,\quad\theta')\,.
\end{align}

The higher-order function for selecting the index of the navigation policy and the means of transportation $\overline{\pi}^{\mathrm{transport}}:\{(e_{\mathtt{pause}},e_{\mathtt{jams}}, {e}^{\mathrm{nav}},a_{\means})\}\rightarrow [0,1]$ is as follows:
\begin{align} \label{e:transport-skill} \nonumber
\overline{\pi}^{\mathrm{transport}}&(e_{\mathtt{pause}},e_{\mathtt{jams}}, {e}^{\mathrm{nav}},a_{\means})\\
=&\sum_{a_\dir\in\cA_{\dir}}e^{\mathrm{nav}}(a_\dir)\mult e_{\mathtt{pause}}(a_\dir) \mult [\indic_{\{\car\}}(a_{\means})\mult e_{\mathtt{jams}}(a_\dir)\\ 
\nonumber &\quad+\indic_{\{\motor\}}(a_{\means})\mult \big(1-e_{\mathtt{jams}}(a_\dir)\big)]\,.
\end{align}
\end{enumerate}

\section{Numerical experiments on the theoretical analysis in Sec.~\ref{s:theory}}
\label{s:numexptightnessbounds}

We conclude the appendix with numerical experiments on the theoretical analysis in Sec.~\ref{s:theory}. We consider the example of navigation and transportation with traffic jams for illustrating our theoretical results, in particular Thm.~\ref{t:transfer-3}.

\begin{figure}
\centering
\includegraphics[width=\textwidth]{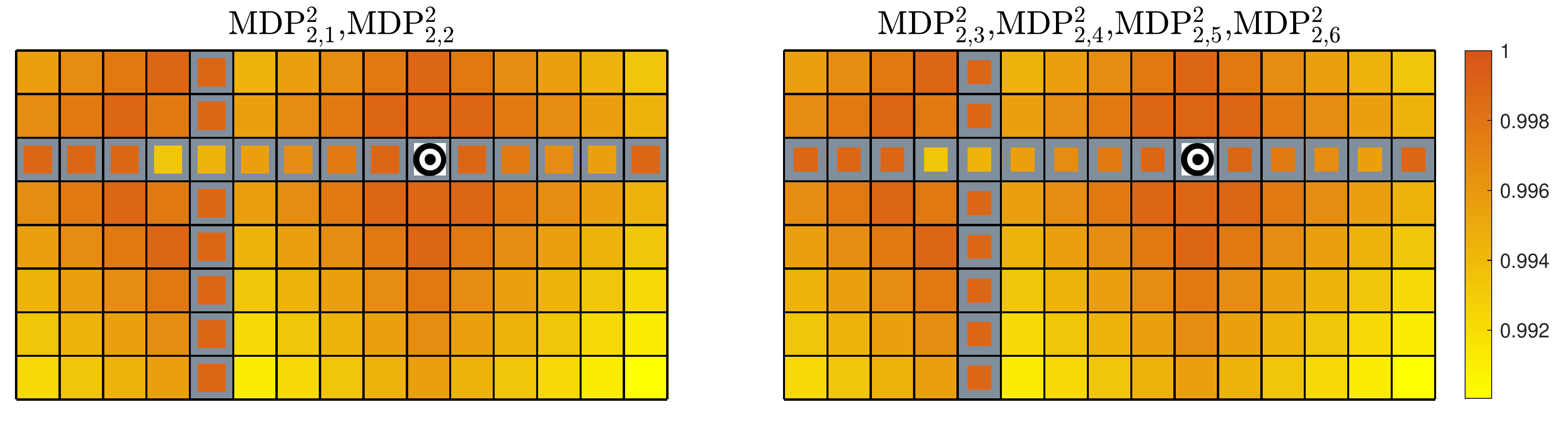}
\caption{The discount factors, represented by heat maps, at the actions selected by the optimal policy for $\{\MDP\levidx{2}_{2,n}\}_{n=1}^{n_2}$ in the example of navigation and transportation with traffic jams (It happens to be the case that the agent will reach the next state with probability one at all the states when selecting the actions according to the optimal policy). The regions with traffic jams and the destination are marked in this figure similar to Fig.~\ref{f:traffic}. These heat maps share similar color patterns with the ones representing value functions in Fig.~\ref{f:traffic}, as explained by Thm.~\ref{t:transfer-3}. The main message of this figure is on the relative values of discount factors at different locations, not the absolute magnitude, because the discount factors are set to be really close to $1$ in the original MDPs. In examples where the discount factors are smaller, they will have a more important role in convergence of the value function during value iteration.
} \label{f:navigation_discount} 
\end{figure}

\begin{figure}
\centering
\includegraphics[width=\textwidth]{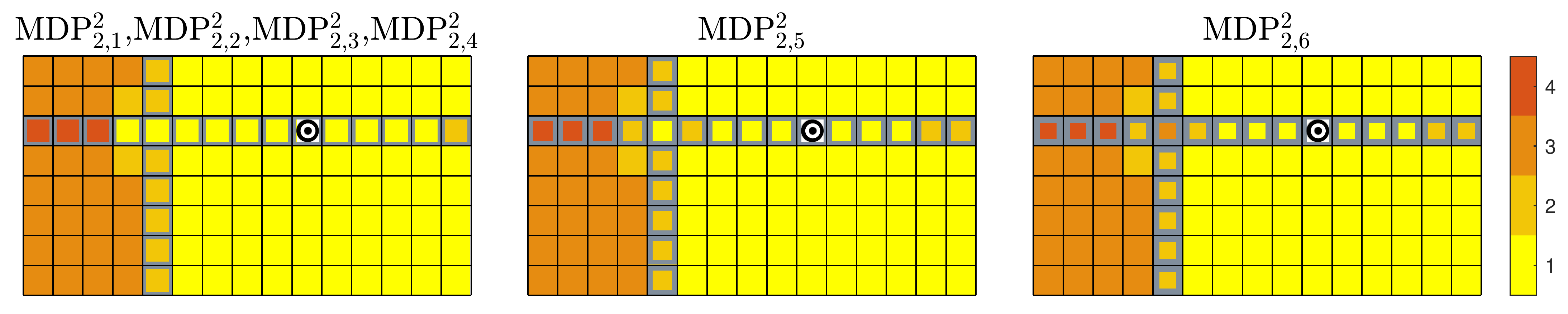}
\caption{The times of convergence, represented by heat maps, to the optimal value functions for $\{\MDP\levidx{2}_{2,n}\}_{n=1}^{n_2}$ in the example of navigation and transportation with traffic jams. The regions with traffic jams and the destination are marked in this figure similar to Fig.~\ref{f:traffic}. As expected, the more traffic roads the agent needs to pass, and the heavier the traffic, the longer the convergence time. These heat maps share similar color patterns, upon reversing the color bar, with the ones representing value functions in Fig.~\ref{f:traffic}, as explained by Thm.~\ref{t:transfer-3}. 
} \label{f:navigation_time} 
\end{figure}

\begin{figure}
\centering
\includegraphics[width=\textwidth]{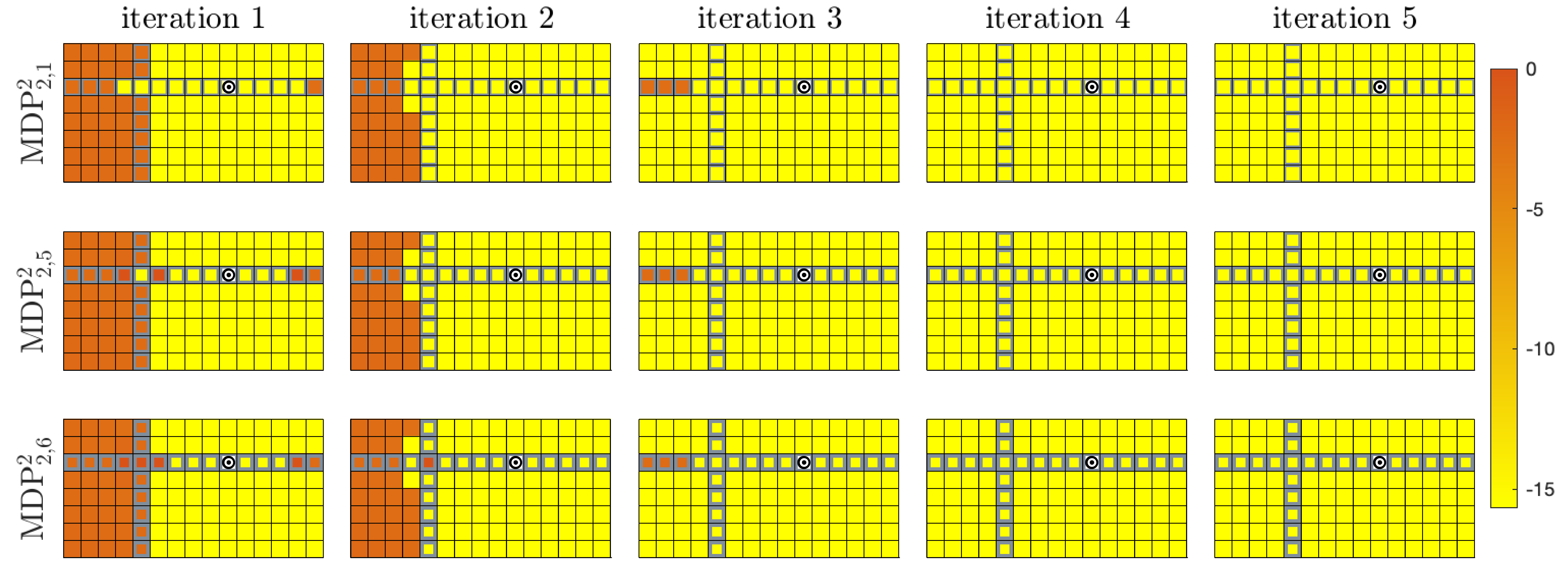}
\caption{Examination of the tightness of the bounds in Thm.~\ref{t:transfer-3} using $\{\MDP\levidx{2}_{2,n}\}_{n=1}^{n_2}$ in the example of navigation and transportation with traffic jams, by plotting the relative errors in the logarithmic scale $\log(\frac{\mathrm{upper\_bound}-\mathrm{lower\_bound}}{\mathrm{true\_value}})$, represented by heat maps. Here, for each location, and for each iteration of the value iteration algorithm, $\mathrm{true\_value}$ is the value of the value function at the current iteration. The heat maps of this quantity for $\MDP\levidx{2}_{2,1}, \MDP\levidx{2}_{2,2}, \MDP\levidx{2}_{2,3}$, and $\MDP\levidx{2}_{2,4}$ are similar, so we only plot the relative errors for $\MDP\levidx{2}_{2,1}, \MDP\levidx{2}_{2,5}$, and $\MDP\levidx{2}_{2,6}$ across different iterations until convergence to avoid redundancy. The regions with traffic jams and the destination are marked in this figure similar to Fig.~\ref{f:traffic}. As expected, the relative error quickly decays to zero as soon as the value function at the current iteration is reasonably ``close'' to the true value function. Actually, the color patterns of these heat maps at beginning iterations match pretty nicely with the ones representing convergence times in Fig.~\ref{f:navigation_time}.
} \label{f:sanity_check} 
\end{figure}

\let\oldbibliography\thebibliography
\renewcommand{\thebibliography}[1]{
  \oldbibliography{#1}
  \setlength{\itemsep}{0pt}
}
\bibliography{all_refs}

@inproceedings{AbbeelNg2004Apprenticeship,
  author    = {Pieter Abbeel and Andrew Y. Ng},
  title     = {Apprenticeship Learning via Inverse Reinforcement Learning},
  booktitle = {Proceedings of the 21st International Conference on Machine Learning},
  year      = {2004},
  publisher = {ACM},
  doi       = {10.1145/1015330.1015430},
  url       = {https://dl.acm.org/doi/10.1145/1015330.1015430}
}

@article{Adams2022IRLSurvey,
  author    = {Stephen Adams and Tyler Cody and Philip A. Beling},
  title     = {A Survey of Inverse Reinforcement Learning},
  journal   = {Artificial Intelligence Review},
  year      = {2022},
  volume    = {55},
  number    = {6},
  pages     = {4307--4346},
  doi       = {10.1007/s10462-021-10108-x},
  url       = {https://link.springer.com/article/10.1007/s10462-021-10108-x}
}

@inproceedings{Andreas2017PolicySketches,
  author    = {Jacob Andreas and Dan Klein and Sergey Levine},
  title     = {Modular Multitask Reinforcement Learning with Policy Sketches},
  booktitle = {Proceedings of the 34th International Conference on Machine Learning},
  year      = {2017},
  volume    = {70},
  series    = {Proceedings of Machine Learning Research},
  pages     = {166--175},
  publisher = {PMLR},
  url       = {https://proceedings.mlr.press/v70/andreas17a/andreas17a.pdf}
}

@article{BaharFrohmHachtelMaciiPardoSomenzi1997ADD,
  author  = {R. Iris Bahar and Erica A. Frohm and Charles M. Gaona and Gary D. Hachtel and Enrico Macii and Abelardo Pardo and Fabio Somenzi},
  title   = {Algebraic Decision Diagrams and Their Applications},
  journal = {Formal Methods in System Design},
  year    = {1997},
  volume  = {10},
  number  = {2-3},
  pages   = {171--206},
  doi     = {10.1023/A:1008699807402}
}

@inproceedings{Bansal2019HOList,
  author    = {Bansal, Kshitij and Loos, Sarah and Rabe, Markus and Szegedy, Christian and Wilcox, Stewart},
  title     = {{HOList}: An Environment for Machine Learning of Higher Order Logic Theorem Proving},
  booktitle = {Proceedings of the 36th International Conference on Machine Learning (ICML)},
  year      = {2019},
  series    = {Proceedings of Machine Learning Research},
  volume    = {97}
}

@inproceedings{Barreto2017SF,
  author    = {Andr{\'e} Barreto and Will Dabney and R{\'e}mi Munos and Jonathan J. Hunt and Tom Schaul and Hado van Hasselt and David Silver},
  title     = {Successor Features for Transfer in Reinforcement Learning},
  booktitle = {Advances in Neural Information Processing Systems 30},
  year      = {2017},
  url       = {https://papers.neurips.cc/paper/6994-successor-features-for-transfer-in-reinforcement-learning.pdf}
}

@inproceedings{Bengio2009Curriculum,
  author    = {Yoshua Bengio and J{\'e}r{\^o}me Louradour and Ronan Collobert and Jason Weston},
  title     = {Curriculum Learning},
  booktitle = {Proceedings of the 26th International Conference on Machine Learning},
  year      = {2009},
  pages     = {41--48},
  publisher = {ACM},
  url       = {https://dl.acm.org/doi/10.1145/1553374.1553380}
}

@inproceedings{Boutilier1997CorrelatedActionEffects,
  author    = {Craig Boutilier},
  title     = {Correlated Action Effects in Decision Theoretic Regression},
  booktitle = {Proceedings of the Thirteenth Conference on Uncertainty in Artificial Intelligence (UAI)},
  year      = {1997},
  pages     = {30--37}
}

@article{BoutilierDeanHanks1999DTP,
  author  = {Craig Boutilier and Thomas Dean and Steve Hanks},
  title   = {Decision-Theoretic Planning: Structural Assumptions and Computational Leverage},
  journal = {Journal of Artificial Intelligence Research},
  year    = {1999},
  volume  = {11},
  pages   = {1--94},
  doi     = {10.1613/jair.555}
}

@inproceedings{BoutilierDeardenGoldszmidt1995,
  author    = {Craig Boutilier and Richard Dearden and Mois{\'e}s Goldszmidt},
  title     = {Exploiting Structure in Policy Construction},
  booktitle = {Proceedings of the Fourteenth International Joint Conference on Artificial Intelligence (IJCAI)},
  year      = {1995},
  pages     = {1104--1111}
}

@article{BoutilierDeardenGoldszmidt2000SDPFactored,
  author  = {Craig Boutilier and Richard Dearden and Mois{\'e}s Goldszmidt},
  title   = {Stochastic Dynamic Programming with Factored Representations},
  journal = {Artificial Intelligence},
  year    = {2000},
  volume  = {121},
  number  = {1-2},
  pages   = {49--107},
  doi     = {10.1016/S0004-3702(00)00033-9}
}

@inproceedings{BoutilierFriedmanGoldszmidtKoller1996CSI,
  author    = {Craig Boutilier and Nir Friedman and Mois{\'e}s Goldszmidt and Daphne Koller},
  title     = {Context-Specific Independence in {B}ayesian Networks},
  booktitle = {Proceedings of the Twelfth Conference on Uncertainty in Artificial Intelligence (UAI)},
  year      = {1996},
  pages     = {115--123}
}

@article{BrafmanTennenholtz2002Rmax,
  author  = {Brafman, Ronen I. and Tennenholtz, Moshe},
  title   = {R-MAX---A General Polynomial Time Algorithm for Near-Optimal Reinforcement Learning},
  journal = {Journal of Machine Learning Research},
  year    = {2002},
  volume  = {3},
  pages   = {213--231}
}

@inproceedings{Chandak2019ActionRepr,
  author    = {Chandak, Yash and Theocharous, Georgios and Kostas, James and Jordan, Scott and Thomas, Philip},
  title     = {Learning Action Representations for Reinforcement Learning},
  booktitle = {Proceedings of the 36th International Conference on Machine Learning},
  series    = {Proceedings of Machine Learning Research},
  volume    = {97},
  pages     = {941--950},
  year      = {2019},
  editor    = {Chaudhuri, Kamalika and Salakhutdinov, Ruslan},
  publisher = {PMLR},
  url       = {https://proceedings.mlr.press/v97/chandak19a.html}
}

@inproceedings{ChenWangJiang2021FMDPBF,
  author    = {Yudong Chen and Rui Wang and Nan Jiang},
  title     = {Model-Free Reinforcement Learning in Factored {MDP}s with Unknown Structure},
  booktitle = {International Conference on Learning Representations (ICLR)},
  year      = {2021},
  note      = {Poster},
  url       = {https://openreview.net/forum?id=fmtSg8591Q}
}

@inproceedings{ChitnisSilverVeness2020CAMPs,
  author    = {Rohan Chitnis and Tom Silver and Joel Veness},
  title     = {{CAMPs}: Learning Context-Specific Abstractions for Efficient Planning in Factored {MDP}s},
  booktitle = {Conference on Robot Learning (CoRL)},
  year      = {2020}
}

@inproceedings{Cui2015CompositionalActions,
  author    = {Yiran Cui and Chunlin Wu and Qiang Liu},
  title     = {Compositional Reinforcement Learning with Factored Action Spaces},
  booktitle = {Proceedings of the Twenty-Ninth Conference on Neural Information Processing Systems (NeurIPS)},
  year      = {2015}
}

@inproceedings{DayanHinton1993,
  author    = {Peter Dayan and Geoffrey E. Hinton},
  title     = {Feudal Reinforcement Learning},
  booktitle = {Advances in Neural Information Processing Systems 5},
  year      = {1993},
  pages     = {271--278},
  publisher = {Morgan Kaufmann},
  url       = {https://papers.nips.cc/paper/1992/file/d14220ee66aeec73c49038385428ec4c-Paper.pdf}
}

@inproceedings{DeanGivan1997,
  author    = {Thomas Dean and Robert Givan},
  title     = {Model Minimization in Markov Decision Processes},
  booktitle = {Proceedings of the AAAI Conference on Artificial Intelligence},
  year      = {1997},
  pages     = {106--111},
  url       = {https://cdn.aaai.org/AAAI/1997/AAAI97-017.pdf}
}

@incollection{Degris2013FMDPChapter,
  author    = {Thomas Degris and Olivier Sigaud and Pierrick Wuillemin},
  title     = {Factored Markov Decision Processes},
  booktitle = {Markov Decision Processes in Artificial Intelligence},
  publisher = {Wiley},
  year      = {2013},
  doi       = {10.1002/9781118557426.ch4}
}

@article{Dempster1977EM,
  author  = {Dempster, A. P. and Laird, N. M. and Rubin, D. B.},
  title   = {Maximum Likelihood from Incomplete Data via the {EM} Algorithm},
  journal = {Journal of the Royal Statistical Society: Series B (Methodological)},
  year    = {1977},
  volume  = {39},
  number  = {1},
  pages   = {1--38},
  doi     = {10.1111/j.2517-6161.1977.tb01600.x}
}

@article{Duan2016RL2,
  author    = {Yan Duan and John Schulman and Xi Chen and Peter L. Bartlett and Ilya Sutskever and Pieter Abbeel},
  title     = {{RL}$^{2}$: Fast Reinforcement Learning via Slow Reinforcement Learning},
  journal   = {arXiv preprint arXiv:1611.02779},
  year      = {2016},
  url       = {https://arxiv.org/abs/1611.02779}
}

@article{Dwiel2019GoalSpace,
  author    = {Zach Dwiel and Madhavun Candadai and Mariano Phielipp and Arjun K. Bansal},
  title     = {Hierarchical Policy Learning is Sensitive to Goal Space Design},
  journal   = {arXiv preprint arXiv:1905.01537},
  year      = {2019},
  url       = {https://arxiv.org/abs/1905.01537}
}

@article{Ecoffet2021GoExplore,
  author    = {Adrien Ecoffet and Joost Huizinga and Joel Lehman and Kenneth O. Stanley and Jeff Clune},
  title     = {First return, then explore},
  journal   = {Nature},
  year      = {2021},
  volume    = {590},
  number    = {7847},
  pages     = {580--586},
  doi       = {10.1038/s41586-020-03157-9},
  url       = {https://www.nature.com/articles/s41586-020-03157-9}
}

@inproceedings{Ellis2021DreamCoder,
  author    = {Ellis, Kevin and Wong, Catherine and Nye, Maxwell and Sabl{\'e}-Meyer, Mathias and Cary, Lucas and Morales, Lucy and Hewitt, Luke and Solar-Lezama, Armando and Tenenbaum, Joshua B.},
  title     = {DreamCoder: Bootstrapping Inductive Program Synthesis with Wake-Sleep Library Learning},
  booktitle = {Proceedings of the 42nd ACM SIGPLAN International Conference on Programming Language Design and Implementation (PLDI)},
  year      = {2021},
  doi       = {10.1145/3453483.3454080}
}

@inproceedings{EllisRitchieSolarLezamaTenenbaum2018Graphics,
  author    = {Ellis, Kevin and Ritchie, Daniel and Solar-Lezama, Armando and Tenenbaum, Joshua B.},
  title     = {Learning to Infer Graphics Programs from Hand-Drawn Images},
  booktitle = {Advances in Neural Information Processing Systems (NeurIPS)},
  year      = {2018}
}

@inproceedings{Eysenbach2019DIAYN,
  author    = {Benjamin Eysenbach and Abhishek Gupta and Julian Ibarz and Sergey Levine},
  title     = {Diversity is All You Need: Learning Skills without a Reward Function},
  booktitle = {International Conference on Learning Representations},
  year      = {2019},
  url       = {https://openreview.net/forum?id=SJx63jRqFm}
}

@inproceedings{FengHansen2002sLAO,
  author    = {Zhe Feng and Eric A. Hansen},
  title     = {Symbolic {LAO}$^\ast$ Search for Factored Markov Decision Processes},
  booktitle = {Proceedings of the Eighteenth National Conference on Artificial Intelligence (AAAI)},
  year      = {2002},
  pages     = {123--128}
}

@inproceedings{FengHansenZilberstein2003sRTDP,
  author    = {Zhe Feng and Eric A. Hansen and Shlomo Zilberstein},
  title     = {Symbolic Generalization for On-line Planning},
  booktitle = {Proceedings of the Nineteenth Conference on Uncertainty in Artificial Intelligence (UAI)},
  year      = {2003},
  pages     = {210--217}
}

@inproceedings{Finn2016GCL,
  author    = {Chelsea Finn and Sergey Levine and Pieter Abbeel},
  title     = {Guided Cost Learning: Deep Inverse Optimal Control via Policy Optimization},
  booktitle = {Proceedings of the 33rd International Conference on Machine Learning},
  year      = {2016},
  volume    = {48},
  series    = {Proceedings of Machine Learning Research},
  pages     = {49--58},
  publisher = {PMLR},
  url       = {https://proceedings.mlr.press/v48/finn16.pdf}
}

@inproceedings{Finn2017MAML,
  author    = {Chelsea Finn and Pieter Abbeel and Sergey Levine},
  title     = {Model{-}Agnostic Meta{-}Learning for Fast Adaptation of Deep Networks},
  booktitle = {Proceedings of the 34th International Conference on Machine Learning},
  year      = {2017},
  volume    = {70},
  series    = {Proceedings of Machine Learning Research},
  pages     = {1126--1135},
  publisher = {PMLR},
  url       = {https://proceedings.mlr.press/v70/finn17a/finn17a.pdf}
}

@inproceedings{Florensa2017RCG,
  author    = {Carlos Florensa and David Held and Markus Wulfmeier and Michael Zhang and Pieter Abbeel},
  title     = {Reverse Curriculum Generation for Reinforcement Learning},
  booktitle = {Conference on Robot Learning},
  series    = {Proceedings of Machine Learning Research},
  volume    = {78},
  year      = {2017},
  pages     = {482--495},
  publisher = {PMLR},
  url       = {https://proceedings.mlr.press/v78/florensa17a/florensa17a.pdf}
}

@inproceedings{Frans2018MLSH,
  author    = {Kevin Frans and Jonathan Ho and Xi Chen and Pieter Abbeel and John Schulman},
  title     = {Meta Learning Shared Hierarchies},
  booktitle = {International Conference on Learning Representations},
  year      = {2018},
  url       = {https://openreview.net/forum?id=SyX0IeWAW}
}

@inproceedings{Fu2018AIRL,
  author    = {Justin Fu and Katie Luo and Sergey Levine},
  title     = {Learning Robust Rewards with Adversarial Inverse Reinforcement Learning},
  booktitle = {International Conference on Learning Representations},
  year      = {2018},
  url       = {https://openreview.net/forum?id=rkHywl-A-}
}

@article{GivanLeachDean2000BPMDP,
  author  = {Robert Givan and Sonia M. Leach and Thomas Dean},
  title   = {Bounded-Parameter Markov Decision Processes},
  journal = {Artificial Intelligence},
  year    = {2000},
  volume  = {122},
  number  = {1-2},
  pages   = {71--109},
  doi     = {10.1016/S0004-3702(00)00048-0}
}

@article{Gregor2016VIC,
  author    = {Karol Gregor and Danilo J. Rezende and Daan Wierstra},
  title     = {Variational Intrinsic Control},
  journal   = {arXiv preprint arXiv:1611.07507},
  year      = {2016},
  url       = {https://arxiv.org/abs/1611.07507}
}

@article{Grigorescu2020AutonomousDrivingSurvey,
  author  = {Grigorescu, Sorin Mihai and Tr\u{a}snea, Bogdan and Cocias, Tiberiu and M\u{a}ce\c{s}anu, Gigel},
  title   = {A survey of deep learning techniques for autonomous driving},
  journal = {Journal of Field Robotics},
  year    = {2020},
  volume  = {37},
  number  = {3},
  pages   = {362--386},
  doi     = {10.1002/rob.21918}
}

@inproceedings{GuestrinKollerParr2001MaxNorm,
  author    = {Carlos Guestrin and Daphne Koller and Ronald Parr},
  title     = {Max-norm Projections for Factored {MDP}s},
  booktitle = {Proceedings of the Seventeenth International Joint Conference on Artificial Intelligence (IJCAI)},
  year      = {2001},
  pages     = {673--682}
}

@article{GuestrinKollerParrVenkataraman2003JAIR,
  author  = {Carlos Guestrin and Daphne Koller and Ronald Parr and Shobha Venkataraman},
  title   = {Efficient Solution Algorithms for Factored {MDP}s},
  journal = {Journal of Artificial Intelligence Research},
  year    = {2003},
  volume  = {19},
  pages   = {399--468},
  doi     = {10.1613/jair.1000}
}

@inproceedings{GuestrinPatrascuSchuurmans2002Exploration,
  author    = {Carlos Guestrin and Relu Patrascu and Dale Schuurmans},
  title     = {Algorithm-Directed Exploration for Model-Based Reinforcement Learning in Factored {MDP}s},
  booktitle = {Proceedings of the Nineteenth International Conference on Machine Learning (ICML)},
  year      = {2002},
  pages     = {235--242}
}

@inproceedings{GuoWeiLuo2021OracleEfficientUnknownStructure,
  author    = {Siyuan Guo and Chen-Yu Wei and Haipeng Luo},
  title     = {Oracle-Efficient Reinforcement Learning in Factored {MDP}s with Unknown Structure},
  booktitle = {Advances in Neural Information Processing Systems (NeurIPS)},
  year      = {2021}
}

@inproceedings{HewittTenenbaum2020MWS,
  author    = {Hewitt, Luke B. and Tenenbaum, Joshua B.},
  title     = {A New Approach to Learning Compact, Interpretable Representations},
  booktitle = {Proceedings of the Thirty-Sixth Conference on Uncertainty in Artificial Intelligence (UAI)},
  year      = {2020},
  series    = {Proceedings of Machine Learning Research},
  volume    = {124}
}

@inproceedings{HoErmon2016GAIL,
  author    = {Jonathan Ho and Stefano Ermon},
  title     = {Generative Adversarial Imitation Learning},
  booktitle = {Advances in Neural Information Processing Systems 29},
  year      = {2016},
  pages     = {4565--4573},
  url       = {https://papers.neurips.cc/paper/6391-generative-adversarial-imitation-learning.pdf}
}

@inproceedings{Hoey1999SPUDD,
  author    = {Jesse Hoey and Robert St-Aubin and Alan Hu and Craig Boutilier},
  title     = {{SPUDD}: Stochastic Planning using Decision Diagrams},
  booktitle = {Proceedings of the Fifteenth Conference on Uncertainty in Artificial Intelligence (UAI)},
  year      = {1999},
  pages     = {279--288}
}

@article{HuangMalhameCaines2006MFG,
  author  = {Huang, Minyi and Malham{\'e}, Roland P. and Caines, Peter E.},
  title   = {Large Population Stochastic Dynamic Games: Closed-Loop {M}c{K}ean--{V}lasov Systems and the Nash Certainty Equivalence Principle},
  journal = {Communications in Information and Systems},
  year    = {2006},
  volume  = {6},
  number  = {3},
  pages   = {221--252}
}

@inproceedings{Ichter2023SayCan,
  author    = {Ichter, Brian and Brohan, Anthony and Chebotar, Yevgen and Finn, Chelsea and Kalashnikov, Dmitry and Levine, Sergey and Quillen, Deirdre and Schneider, Jonas and Shah, Avinash and Sermanet, Pierre and others},
  title     = {Do As {I} Can, Not As {I} Say: Grounding Language in Robotic Affordances},
  booktitle = {Conference on Robot Learning (CoRL)},
  year      = {2022},
  series    = {Proceedings of Machine Learning Research},
  volume    = {205},
  pages     = {287--318}
}

@inproceedings{KearnsKoller1999EfficientRLFMDP,
  author    = {Michael Kearns and Daphne Koller},
  title     = {Efficient Reinforcement Learning in Factored {MDP}s},
  booktitle = {Proceedings of the Sixteenth International Conference on Machine Learning (ICML)},
  year      = {1999},
  pages     = {240--247}
}

@inproceedings{KearnsKoller1999FactoredRL,
  author    = {Kearns, Michael and Koller, Daphne},
  title     = {Efficient Reinforcement Learning in Factored {MDP}s},
  booktitle = {Proceedings of the Sixteenth International Joint Conference on Artificial Intelligence (IJCAI)},
  year      = {1999},
  pages     = {740--747}
}

@inproceedings{Khardon2012Learning,
  author    = {Roni Khardon and Hema Raghavan and Saeed Salman and Yaron Schapire},
  title     = {Learning to Take Actions: From Factored Action Representations to Factored Policies},
  booktitle = {Proceedings of the Twenty-Eighth Conference on Uncertainty in Artificial Intelligence (UAI)},
  year      = {2012}
}

@article{KimDean2003NonHomogeneousPartitions,
  author  = {Keeyoung Kim and Thomas Dean},
  title   = {Solving Factored {MDP}s Using Non-Homogeneous Partitions},
  journal = {Artificial Intelligence},
  year    = {2003},
  volume  = {147},
  number  = {1-2},
  pages   = {225--251},
  doi     = {10.1016/S0004-3702(03)00038-3}
}

@article{KoberBagnellPeters2013IJRR,
  author  = {Kober, Jens and Bagnell, J. Andrew and Peters, Jan},
  title   = {Reinforcement learning in robotics: A survey},
  journal = {The International Journal of Robotics Research},
  year    = {2013},
  volume  = {32},
  number  = {11},
  pages   = {1238--1274},
  doi     = {10.1177/0278364913495721}
}

@inproceedings{KollerParr1999FactoredValue,
  author    = {Daphne Koller and Ronald Parr},
  title     = {Computing Factored Value Functions for Policies in Structured {MDP}s},
  booktitle = {Proceedings of the Sixteenth International Joint Conference on Artificial Intelligence (IJCAI)},
  year      = {1999},
  pages     = {1332--1338}
}

@inproceedings{KollerParr2000PolicyIterationFactored,
  author    = {Daphne Koller and Ronald Parr},
  title     = {Policy Iteration for Factored {MDP}s},
  booktitle = {Proceedings of the Sixteenth Conference on Uncertainty in Artificial Intelligence (UAI 2000)},
  year      = {2000},
  pages     = {326--334},
  publisher = {Morgan Kaufmann}
}

@article{Krishnan2016HIRL,
  author    = {Sanjay Krishnan and Animesh Garg and Richard Liaw and Lauren Miller and Florian T. Pokorny and Ken Goldberg},
  title     = {{HIRL}: Hierarchical Inverse Reinforcement Learning for Long{-}Horizon Tasks with Delayed Rewards},
  journal   = {arXiv preprint arXiv:1604.06508},
  year      = {2016},
  url       = {https://arxiv.org/abs/1604.06508}
}

@article{LakeBaroni2023Nature,
  author  = {Brenden M. Lake and Marco Baroni},
  title   = {Human-like systematic generalization through a meta-learning neural network},
  journal = {Nature},
  year    = {2023},
  volume  = {623},
  number  = {7985},
  pages   = {115--121},
  doi     = {10.1038/s41586-023-06668-3},
  url     = {https://www.nature.com/articles/s41586-023-06668-3}
}

@article{LakeSalakhutdinovTenenbaum2015PPI,
  author  = {Lake, Brenden M. and Salakhutdinov, Ruslan and Tenenbaum, Joshua B.},
  title   = {Human-level concept learning through probabilistic program induction},
  journal = {Science},
  year    = {2015},
  volume  = {350},
  number  = {6266},
  pages   = {1332--1338},
  doi     = {10.1126/science.aab3050}
}

@article{LasryLions2007MFG,
  author  = {Lasry, Jean-Michel and Lions, Pierre-Louis},
  title   = {Mean field games},
  journal = {Japanese Journal of Mathematics},
  year    = {2007},
  volume  = {2},
  number  = {1},
  pages   = {229--260}
}

@inproceedings{Levy2019HAC,
  author    = {Andrew Levy and George Konidaris and Robert Platt and Kate Saenko},
  title     = {Learning Multi{-}Level Hierarchies with Hindsight},
  booktitle = {International Conference on Learning Representations},
  year      = {2019},
  url       = {https://openreview.net/forum?id=ryzECoAcY7}
}

@inproceedings{LiWalshLittman2006,
  author    = {Lihong Li and Thomas J. Walsh and Michael L. Littman},
  title     = {Towards a Unified Theory of State Abstraction for {MDP}s},
  booktitle = {International Symposium on Artificial Intelligence and Mathematics},
  year      = {2006},
  url       = {https://rbr.cs.umass.edu/aimath06/proceedings/P21.pdf}
}

@inproceedings{Machado2017Laplacian,
  author    = {Marlos C. Machado and Marc G. Bellemare and Michael Bowling},
  title     = {A Laplacian Framework for Option Discovery in Reinforcement Learning},
  booktitle = {Proceedings of the 34th International Conference on Machine Learning},
  year      = {2017},
  volume    = {70},
  series    = {Proceedings of Machine Learning Research},
  pages     = {2295--2304},
  publisher = {PMLR},
  url       = {https://proceedings.mlr.press/v70/machado17a/machado17a.pdf}
}

@inproceedings{Nachum2018HIRO,
  author    = {Ofir Nachum and Shixiang Shane Gu and Honglak Lee and Sergey Levine},
  title     = {Data{-}Efficient Hierarchical Reinforcement Learning},
  booktitle = {Advances in Neural Information Processing Systems 31},
  year      = {2018},
  url       = {https://papers.neurips.cc/paper/7591-data-efficient-hierarchical-reinforcement-learning.pdf}
}

@inproceedings{NgRussell2000IRL,
  author    = {Andrew Y. Ng and Stuart J. Russell},
  title     = {Algorithms for Inverse Reinforcement Learning},
  booktitle = {Proceedings of the 17th International Conference on Machine Learning},
  year      = {2000},
  pages     = {663--670},
  publisher = {Morgan Kaufmann},
  url       = {https://ai.stanford.edu/~ang/papers/icml00-irl.pdf}
}

@inproceedings{OsbandVanRoy2014NearOptimal,
  author    = {Ian Osband and Benjamin {Van Roy}},
  title     = {Near-Optimal Reinforcement Learning in Factored {MDP}s},
  booktitle = {Advances in Neural Information Processing Systems (NeurIPS)},
  year      = {2014}
}

@inproceedings{ParrRussell1997,
  author    = {Ronald Parr and Stuart J. Russell},
  title     = {Reinforcement Learning with Hierarchies of Machines},
  booktitle = {Advances in Neural Information Processing Systems 10},
  year      = {1997},
  pages     = {1043--1049},
  publisher = {MIT Press},
  url       = {https://papers.neurips.cc/paper/1384-reinforcement-learning-with-hierarchies-of-machines.pdf}
}

@article{PoluSutskever2020GPTf,
  author  = {Polu, Stanislas and Sutskever, Ilya},
  title   = {Generative Language Modeling for Automated Theorem Proving},
  journal = {CoRR},
  year    = {2020},
  volume  = {abs/2009.03393}
}

@inproceedings{PoupartBoutilierPatrascuSchuurmans2002PiecewiseLinear,
  author    = {Pascal Poupart and Craig Boutilier and Relu Patrascu and Dale Schuurmans},
  title     = {Piecewise Linear Value Function Approximation for Factored {MDP}s},
  booktitle = {Proceedings of the Eighteenth National Conference on Artificial Intelligence (AAAI)},
  year      = {2002},
  pages     = {292--299}
}

@inproceedings{Raghavan2013Embedded,
  author    = {Hema Raghavan and Roni Khardon and Saeed Salman and Yaron Schapire},
  title     = {Embedded Feature Selection for Factored Reinforcement Learning},
  booktitle = {Advances in Neural Information Processing Systems (NeurIPS)},
  year      = {2013}
}

@inproceedings{Rakelly2019PEARL,
  author    = {Kate Rakelly and Aurick Zhou and Deirdre Quillen and Chelsea Finn and Sergey Levine},
  title     = {{PEARL}: Probabilistic Embeddings for Actor{-}Critic Reinforcement Learning},
  booktitle = {Proceedings of the 36th International Conference on Machine Learning},
  year      = {2019},
  volume    = {97},
  series    = {Proceedings of Machine Learning Research},
  pages     = {4171--4180},
  publisher = {PMLR},
  url       = {https://proceedings.mlr.press/v97/rakelly19a/rakelly19a.pdf}
}

@inproceedings{ReedDeFreitas2016NPI,
  author    = {Reed, Scott and de Freitas, Nando},
  title     = {Neural Programmer-Interpreters},
  booktitle = {International Conference on Learning Representations (ICLR)},
  year      = {2016}
}

@article{Rusu2016ProgressiveNets,
  author    = {Andrei A. Rusu and Neil C. Rabinowitz and Guillaume Desjardins and Hubert Soyer and James Kirkpatrick and Koray Kavukcuoglu and Razvan Pascanu and Raia Hadsell},
  title     = {Progressive Neural Networks},
  journal   = {arXiv preprint arXiv:1606.04671},
  year      = {2016},
  url       = {https://arxiv.org/abs/1606.04671}
}

@inproceedings{SchuurmansPatrascu2001DirectValueApprox,
  author    = {Dale Schuurmans and Relu Patrascu},
  title     = {Direct Value-Approximation for Factored {MDP}s},
  booktitle = {Advances in Neural Information Processing Systems 15 (NeurIPS)},
  year      = {2002},
  pages     = {1579--1586}
}

@inproceedings{StaubinHoeyBoutilier2000APRICODD,
  author    = {Robert St-Aubin and Jesse Hoey and Craig Boutilier},
  title     = {APRICODD: Approximate Policy Construction using Decision Diagrams},
  booktitle = {Advances in Neural Information Processing Systems 13 (NeurIPS)},
  year      = {2000},
  pages     = {1089--1095}
}

@inproceedings{StrehlDiukLittman2007StructureLearning,
  author    = {Alexander L. Strehl and Carlos Diuk and Michael L. Littman},
  title     = {Efficient Structure Learning in Factored-State {MDP}s},
  booktitle = {Proceedings of the Twenty-Second National Conference on Artificial Intelligence (AAAI)},
  year      = {2007}
}

@article{Sukhbaatar2017ASP,
  author    = {Sainbayar Sukhbaatar and Zita Marinho and Ilya Kostrikov and Gabriel Synnaeve and Arthur Szlam and Rob Fergus},
  title     = {Intrinsic Motivation and Automatic Curricula via Asymmetric Self{-}Play},
  journal   = {arXiv preprint arXiv:1703.05407},
  year      = {2017},
  url       = {https://arxiv.org/abs/1703.05407}
}

@article{Sukhbaatar2018HSP,
  author    = {Sainbayar Sukhbaatar and Zeming Lin and Ilya Kostrikov and Gabriel Synnaeve and Arthur Szlam and Rob Fergus},
  title     = {Learning Goal Embeddings via Self{-}Play for Hierarchical Reinforcement Learning},
  journal   = {arXiv preprint arXiv:1811.09083},
  year      = {2018},
  url       = {https://arxiv.org/abs/1811.09083}
}

@inproceedings{Sutton1990Dyna,
  author    = {Sutton, Richard S.},
  title     = {Integrated Architectures for Learning, Planning, and Reacting Based on Approximating Dynamic Programming},
  booktitle = {Proceedings of the Seventh International Conference on Machine Learning (ICML)},
  year      = {1990},
  pages     = {216--224}
}

@inproceedings{TianEllisTenenbaum2020Drawing,
  author    = {Tian, Jiayuan and Ellis, Kevin and Tenenbaum, Joshua B.},
  title     = {Learning Abstract Structure for Drawing by Efficient Motor Program Induction},
  booktitle = {Advances in Neural Information Processing Systems (NeurIPS)},
  year      = {2020}
}

@inproceedings{TianQianSra2020MinimaxFactoredRL,
  author    = {Tian Tian and Nianyu Qian and Suvrit Sra},
  title     = {Towards Minimax Optimal Reinforcement Learning in Factored {MDP}s},
  booktitle = {Advances in Neural Information Processing Systems (NeurIPS)},
  year      = {2020}
}

@inproceedings{Vezhnevets2017FeUdal,
  author    = {Alexander Sasha Vezhnevets and Simon Osindero and Tom Schaul and Nicolas Heess and Max Jaderberg and David Silver and Koray Kavukcuoglu},
  title     = {Fe{U}dal Networks for Hierarchical Reinforcement Learning},
  booktitle = {Proceedings of the 34th International Conference on Machine Learning},
  year      = {2017},
  volume    = {70},
  series    = {Proceedings of Machine Learning Research},
  pages     = {3540--3549},
  publisher = {PMLR},
  url       = {https://proceedings.mlr.press/v70/vezhnevets17a/vezhnevets17a.pdf}
}

@article{WatkinsDayan1992Qlearning,
  author  = {Watkins, Christopher J. C. H. and Dayan, Peter},
  title   = {Q-learning},
  journal = {Machine Learning},
  year    = {1992},
  volume  = {8},
  number  = {3--4},
  pages   = {279--292}
}

@inproceedings{WhiteMartinezRudolph2010RP,
  author    = {White, Spencer K. and Martinez, Tony and Rudolph, George},
  title     = {Generating a Novel Sort Algorithm Using Reinforcement Programming},
  booktitle = {IEEE Congress on Evolutionary Computation (CEC)},
  year      = {2010},
  pages     = {1--8},
  doi       = {10.1109/CEC.2010.5586457}
}

@article{WolpertMacready1997NFL,
  author  = {Wolpert, David H. and Macready, William G.},
  title   = {No Free Lunch Theorems for Optimization},
  journal = {IEEE Transactions on Evolutionary Computation},
  year    = {1997},
  volume  = {1},
  number  = {1},
  pages   = {67--82},
  month   = apr,
  doi     = {10.1109/4235.585893}
}

@inproceedings{Wu2021INT,
  author    = {Wu, Yuhuai and Jiang, Albert Qiaochu and Ba, Jimmy and Grosse, Roger B.},
  title     = {INT: An Inequality Benchmark for Evaluating Generalization in Theorem Proving},
  booktitle = {International Conference on Learning Representations (ICLR)},
  year      = {2021}
}

@inproceedings{XuTewari2020OracleEfficient,
  author    = {Ziping Xu and Ambuj Tewari},
  title     = {Near-Optimal Reinforcement Learning in Factored {MDP}s: Oracle-Efficient and Adaptive Algorithms for the Non-Episodic Setting},
  booktitle = {Advances in Neural Information Processing Systems (NeurIPS)},
  year      = {2020}
}

@inproceedings{XuTewari2020RLFactoredUnknown,
  author    = {Yihuai Xu and Ambuj Tewari},
  title     = {Reinforcement Learning in Factored Markov Decision Processes with Unknown Structure},
  booktitle = {Advances in Neural Information Processing Systems (NeurIPS)},
  year      = {2020}
}

@inproceedings{Yang2018MeanFieldMARL,
  author    = {Yang, Yaodong and Luo, Rui and Li, Ming and Zhou, Ming and Zhang, Weinan and Wang, Jun},
  title     = {Mean Field Multi-Agent Reinforcement Learning},
  booktitle = {Proceedings of the 35th International Conference on Machine Learning (ICML)},
  year      = {2018},
  series    = {Proceedings of Machine Learning Research},
  volume    = {80},
  pages     = {5571--5580}
}

@inproceedings{Yang2023LeanDojo,
  author    = {Yang, Kaiyu and Swope, Aidan and Gu, Alex and Chalamala, Rahul and Song, Peiyang and Yu, Shixing and Godil, Saad and Prenger, Ryan and Anandkumar, Anima},
  title     = {LeanDojo: Theorem Proving with Retrieval-Augmented Language Models},
  booktitle = {Advances in Neural Information Processing Systems (NeurIPS)},
  year      = {2023}
}

@inproceedings{Ziebart2008MaxEntIRL,
  author    = {Brian D. Ziebart and Andrew Maas and J. Andrew Bagnell and Anind K. Dey},
  title     = {Maximum Entropy Inverse Reinforcement Learning},
  booktitle = {Proceedings of the AAAI Conference on Artificial Intelligence},
  year      = {2008},
  pages     = {1433--1438},
  url       = {https://aaai.org/Papers/AAAI/2008/AAAI08-227.pdf}
}

@article{deFariasVanRoy2004ConstraintSampling,
  author  = {D. P. {de Farias} and Benjamin {Van Roy}},
  title   = {On Constraint Sampling in the Linear Programming Approach to Approximate Dynamic Programming},
  journal = {Mathematics of Operations Research},
  year    = {2004},
  volume  = {29},
  number  = {3},
  pages   = {462--478},
  doi     = {10.1287/moor.1040.0093}
}

@inproceedings{Bacon2017OptionCritic,
  author    = {Pierre{-}Luc Bacon and Jean Harb and Doina Precup},
  title     = {The Option{-}Critic Architecture},
  booktitle = {Proceedings of the Thirty-First AAAI Conference on Artificial Intelligence (AAAI-17)},
  year      = {2017},
  pages     = {1726--1734},
  url       = {https://ojs.aaai.org/index.php/AAAI/article/view/10916}
}

@inproceedings{RavindranBarto2003,
  author    = {Balaraman Ravindran and Andrew G. Barto},
  title     = {SMDP Homomorphisms: An Algebraic Approach to Abstraction in Semi-Markov Decision Processes},
  booktitle = {Proceedings of the Eighteenth International Joint Conference on Artificial Intelligence (IJCAI-03)},
  year      = {2003},
  pages     = {1011--1016},
  url       = {https://www.ijcai.org/Proceedings/03/Papers/145.pdf}
}

@article{Matiisen2017TSCL,
  author  = {Tambet Matiisen and Avital Oliver and Taco Cohen and John Schulman},
  title   = {Teacher--Student Curriculum Learning},
  journal = {IEEE Transactions on Neural Networks and Learning Systems},
  year    = {2020},
  volume  = {31},
  number  = {9},
  pages   = {3732--3740},
  doi     = {10.1109/TNNLS.2019.2934906},
  note    = {First posted as arXiv:1707.0183}
}

@inproceedings{Barreto2018DeepSF,
  author    = {Andr{'e} Barreto and Diana Borsa and John Quan and Tom Schaul and David Silver and Matteo Hessel and Daniel Mankowitz and Augustin {
{Z}}{'i}dek and R{'e}mi Munos},
  title     = {Transfer in Deep Reinforcement Learning Using Successor Features and Generalised Policy Improvement},
  booktitle = {Proceedings of the 35th International Conference on Machine Learning},
  series    = {Proceedings of Machine Learning Research},
  volume    = {80},
  pages     = {501--510},
  year      = {2018},
  editor    = {Jennifer Dy and Andreas Krause},
  publisher = {PMLR},
  url       = {https://proceedings.mlr.press/v80/barreto18a.html}
}

@article{deFariasVanRoy2003LPADP,
  author  = {D. P. de Farias and Benjamin Van Roy},
  title   = {The Linear Programming Approach to Approximate Dynamic Programming},
  journal = {Operations Research},
  year    = {2003},
  volume  = {51},
  number  = {6},
  pages   = {850--865},
  doi     = {10.1287/opre.51.6.850.24925}
}

@article {BartoHRLReview,
   author = {Barto, A. G. and Mahadevan, S.},
   title = {Recent Advances in Hierarchical Reinforcement Learning},
   journal = {Discrete Event Dynamic Systems: Theory and Applications},
   publisher = {Kluwer Netherlands},
   pages = {341-379},
   volume = {13},
   issue = {4},
   year = {2003}
}

@article {DietterichHRL,
   author = {Dietterich, T. G.},
   title = {Hierarchical reinforcement learning with the {MAXQ} value function decomposition},
   journal = {Journal of Artificial Intelligence Research},
   pages = {227-303},
   volume = {13},
   year = {2000}
}

@ARTICLE{MahadevanMaggioni:JMLR:07,
  author = {S. Mahadevan and M. Maggioni},
  title = {Proto-Value Functions: A Laplacian Framework for Learning Representation and Control in {M}arkov Decision Processes},
  journal = {Journal of Machine Learning Research},
  volume = {8},
  pages = {2169--2231},
  year = {2007},
}

@book{Puterman,
    author = {M. L. Puterman},
    publisher = {Wiley},
    title = {Markov Decision Processes},
    year = {1994}
}

@article {SuttonOptions,
   author = {Sutton, R. S. and Precup, D. and Singh, S. },
   title = {Between {MDP}s and semi-{MDP}s: A Framework for Temporal Abstraction in Reinforcement Learning},
   journal = {Artificial Intelligence},
   pages = {181-211},
   volume = {112},
   year = {1999}
}

@book{10.5555/1396348,
author = {Bertsekas, Dimitri P.},
title = {Dynamic Programming and Optimal Control, Vol. II},
year = {2007},
isbn = {1886529302},
publisher = {Athena Scientific},
edition = {3rd}
}

@book{10.5555/528623,
author = {Puterman, Martin L.},
title = {Markov Decision Processes: Discrete Stochastic Dynamic Programming},
year = {1994},
isbn = {0471619779},
publisher = {John Wiley \& Sons, Inc.},
address = {USA},
edition = {1st}
}

@article{JMLR:v21:20-212,
  author  = {Sanmit Narvekar and Bei Peng and Matteo Leonetti and Jivko Sinapov and Matthew E. Taylor and Peter Stone},
  title   = {Curriculum Learning for Reinforcement Learning Domains: A Framework and Survey},
  journal = {Journal of Machine Learning Research},
  year    = {2020},
  volume  = {21},
  number  = {181},
  pages   = {1--50},
  url     = {http://jmlr.org/papers/v21/20-212.html}
}

@article{Pozharskiy2020ManifoldLF,
  title={Manifold learning for accelerating coarse-grained optimization},
  author={Dmitry Pozharskiy and Noah J. Wichrowski and Andrew B. Duncan and Grigorios A. Pavliotis and Ioannis G. Kevrekidis},
  journal={Journal of Computational Dynamics},
  year={2020},
  url={https://api.semanticscholar.org/CorpusID:210156978}
}

@inproceedings{
sukhbaatar2018intrinsic,
title={Intrinsic Motivation and Automatic Curricula via Asymmetric Self-Play},
author={Sainbayar Sukhbaatar and Zeming Lin and Ilya Kostrikov and Gabriel Synnaeve and Arthur Szlam and Rob Fergus},
booktitle={International Conference on Learning Representations},
year={2018},
url={https://openreview.net/forum?id=SkT5Yg-RZ},
}

@INPROCEEDINGS{5586457,
  author={White, Spencer K. and Martinez, Tony and Rudolph, George},
  booktitle={IEEE Congress on Evolutionary Computation}, 
  title={Generating a novel sort algorithm using Reinforcement Programming}, 
  year={2010},
  volume={},
  number={},
  pages={1-8},
  doi={10.1109/CEC.2010.5586457}}

@article{Sukhbaatar2018LearningGE,
  title={Learning Goal Embeddings via Self-Play for Hierarchical Reinforcement Learning},
  author={Sainbayar Sukhbaatar and Emily L. Denton and Arthur D. Szlam and R. Fergus},
  journal={ArXiv},
  year={2018},
  volume={abs/1811.09083}
}

\end{document}